\DeclareMathOperator*{\argmax}{arg\,max}
\DeclareMathOperator*{\argmin}{arg\,min}
\newtheorem{assu}{Assumption}
\newtheorem{pro}{Proposition}
\newtheorem{defi}{Definition}
\newtheorem{coro}{Corollary}
\newtheorem{lem}{Lemma}
\newcommand{\bR}{\mathbb{R}}
\newcommand{\gcal}{\mathcal{g}}
\newcommand{\Gcal}{\mathcal{G}}
\newcommand{\dcal}{\mathcal{d}}
\newcommand{\mcal}{\mathcal{m}}
\newcommand{\wcal}{\mathcal{w}}
\newcommand{\norm}[1]{\Vert{#1}\Vert}
\def\rom#1{\uppercase\expandafter{\romannumeral#1\relax}}
\newtheorem{assprime}{Assumption}
\title{The non-overlapping statistical approximation to overlapping group lasso}
\author{
  Mingyu \ Qi \\
  Department of Statistics \\
  University of Virginia\\
  Charlottesville, VA 22904-4135, USA\\
  \texttt{mq3sq@virginia.edu} \\
  %% examples of more authors
   \And
   Tianxi \ Li \\
   School of Statistics \\
    University of Minnesota, Twin Cities\\
   Minneapolis, MN 55455, USA\\
   \texttt{tianxili@umn.edu} \\
  %% \AND
  %% Coauthor \\
  %% Affiliation \\
  %% Address \\
  %% \texttt{email} \\
  %% \And
  %% Coauthor \\
  %% Affiliation \\
  %% Address \\
  %% \texttt{email} \\
  %% \And
  %% Coauthor \\
  %% Affiliation \\
  %% Address \\
  %% \texttt{email} \\
}
\begin{document}
\maketitle

\begin{abstract}
   The group lasso penalty is widely used to introduce structured sparsity in statistical learning, characterized by its ability to eliminate predefined groups of parameters automatically. However, when the groups are overlapping, solving the group lasso problem can be time-consuming in high-dimensional settings because of the non-separability induced by the groups. This difficulty has significantly limited the penalty's applicability in cutting-edge computational areas, such as gene pathway selection and graphical model estimation. This paper introduces a non-overlapping and separable penalty to efficiently approximate the overlapping group lasso penalty. The approximation substantially improves the computational efficiency in optimization, especially for large-scale and high-dimensional problems. We show that the proposed penalty is the tightest separable relaxation of the overlapping group lasso norm within the family of $\ell_{q_1}/\ell_{q_2}$ norms. Furthermore, the estimators based on our proposed norm are statistically equivalent to those derived from the overlapping group lasso in terms of estimation error, support recovery, and minimax rate, under the squared loss. The effectiveness of the method is demonstrated through extensive simulation examples and a predictive task of cancer tumors. 
 \end{abstract}

\keywords{overlapping group lasso \and separable approximation \and  computational efficiency \and statistical error bound \and support recovery \and high-dimensional regression}

\section{Introduction}

Grouping patterns of variables are commonly observed in real-world applications. For example, in regression modeling, explanatory variables might belong to different groups with the expectation that the variables are highly correlated within the groups. In this context, variable selection or model regularization should also consider the grouping patterns, and one may prefer to either include the whole group of variables in the selection or completely rule out the group. Group lasso \citep{yuan2006model} is one popular method designed for this group selection task via adding $\ell_1/\ell_2$ regularization, as a broader class for group selection \citep{glc,seol,tglf,ravikumar2009sparse,tcap,danaher2014joint,loh2014high,basu2015network,XIANG201528,campbell2017within,tank2017efficient,hvs,austin2020new,yang2020estimating}.

While the original group lasso penalty \citep{yuan2006model} focuses on regularizing disjoint parameter groups,  overlapping groups appear frequently in many applications such as tumor metastasis analysis \citep{ogl, tcap, ES, chen2012smoothing} and structured model seelction problems \citep{ mohan2014node, hmgm,yu2017learning, tarzanagh2018estimation}.  For example, in tumor metastasis analysis, scientists usually aim to select a small number of tumor-related genes. Biological theory indicates that rather than functioning in isolation, genes act in groups to perform biological functions. Hence, the gene selection is more meaningful if co-functioning groups of genes are selected together \citep{ma2010detection}. In particular, gene pathways, in the form of overlapping groups of genes, render mechanistic insights into the co-functioning pattern. Applying group lasso with these overlapping groups is then a natural way to incorporate the prior group information into tumor metastasis analysis. For another example, graphical models have been widely used to represent conditional dependency structures among variables. \citet{hmgm} developed a mixed graphical model for high-dimensional data with both continuous and discrete variables. In their model, the groups are naturally determined by groups of parameters corresponding to each edge, and these groups overlap because edges share common nodes. Selecting the graph structures under this class of models requires eliminating groups of parameters, which is achieved by the overlapping group lasso penalty.

The optimization involving the group lasso penalty with non-overlapping groups is efficient \citep{anotg,ebda, afua}. However, the overlapping group lasso problems present more complex challenges despite their convex nature. This is because the non-separability between groups intrinsically increases the problem's dimensionality compared with the non-overlapping situation, as revealed in the study of \cite{hvs}. Proposed methods for such optimization problems include the second-order cone program method SLasso \citep{svsw}, the ADMM-based methods  \citep{ADMM, ADM}, and their smoothed improvement, FoGLasso, introduced by \cite{ES}. Nevertheless, these exact solvers of the problem involve expensive gradient calculations when the overlapping becomes severe, which may limit the applicability of the overlapping group lasso penalty in many large-scale applications such as genomewide association studies  \citep{yang2010identifying,lee2012leveraging,ESS} or graphical model fitting problems \citep{hmgm}.  For instance, \citet{hmgm} showed that overlapping group lasso, though a natural choice for the problem, is infeasible even for moderate-size graphs, and they used a fast lasso approach \citep{tibshirani1996regression} to solve the graph estimation problem without theory. As we introduce later, our proposed solution includes the method of \citet{hmgm} as a special case, but our method is more general and comes with theoretical guarantees.
 
In this paper, we propose a non-overlapping approximation alternative to the overlapping group lasso penalty. The approximation is formulated as a weighted non-overlapping group lasso penalty that respects the original overlapping group patterns, making optimization significantly easier. The proposed penalty is shown to be the tightest separable relaxation of the original overlapping group lasso penalty within a broad family of penalties. Our analysis reveals that the estimator derived from our method is statistically equivalent to the original overlapping group lasso estimator in terms of estimator error and support recovery. The practical utility of our proposed method is exemplified through simulation examples and its application in a predictive task involving a breast cancer gene dataset. As a high-level summary, our major contribution to the paper is the design of a novel approximation penalty to the overlapping group lasso penalty, which enjoys substantially better computational efficiency in optimization while maintaining equivalent statistical properties as the original penalty.

The remainder of this paper is organized as follows: Section \ref{sec:methods} introduces the overlapping group lasso problem and the proposed approximation method. We also establish the optimality of the proposed penalty from the optimization perspective. Section \ref{sec:results} details the statistical properties of the penalized estimator based on the proposed penalty. Comparisons between our estimator and the original overlapping group lasso estimator are made to show that they are statistically equivalent with respect to estimation errors and variable selection performance. Empirical evaluations using simulated and real breast cancer gene expression data are presented in Sections \ref{sec:simu} and \ref{sec:data}, respectively. Finally, Section \ref{sec:disc} concludes the paper with additional discussions.
\section{Methodology}
\label{sec:methods}
\paragraph{Notation and Preliminaries.} Throughout this paper,  for an integer $z$, the notation $[z] $ is used to denote the index set $\{1, \cdots, z\} $. Given two sequences $\{a_n\} $ and $\{b_n\} $, we denote $a_n \lesssim b_n$ or $a_n = O(b_n)$ if $a_n \leqslant Cb_n $ for a sufficiently large $n $ and a universal constant $C > 0 $. We write $a_n\ll b_n$ or $a_n = o(b_n)$ if $a_n/b_n \to 0$. Furthermore, $a_n \asymp b_n $ if both $a_n \lesssim b_n $ and $a_n \gtrsim b_n $ hold. Given a set $T$,  $|T| $ represents the cardinality of $T$.  
When referring to a matrix $A $, $A_{T} $ denotes the sub-matrix consisting of columns indexed by $T $, and $A_{T,T} $ denotes the sub-matrix induced by both rows and columns indexed by $T $. Additionally, for a vector $x \in \mathbb{R}^p$, we define $\|x\|_a = \left(|x_1|^a + |x_2|^a + \ldots + |x_p|^a\right)^{\frac{1}{a}}$. Recall the operator norm definition: $\|A\|_{a, b} = \sup_{\|u\|_a \leq 1}\|A u\|_b $. When $A $ is a symmetric matrix, $\gamma_{\min}(A) $ and $\gamma_{\max}(A) $ denote its smallest and largest eigenvalues, respectively. We will introduce other notations within the text as needed. Table~\ref{tab:notations} in Appendix~\ref{sec:notations} lists all the notations in the paper.

\subsection{Overlapping Group Lasso}
Suppose in a statistical learning problem, the parameters are represented by a vector $\beta \in \mathbb{R}^p$, where $\beta_j$ denotes the $j$-th element of $\beta$. Let $G = \{G_1, \cdots, G_m\}$ be the $m$ predefined groups for the $p$ parameters, with each group $G_g$ being a subset of $[p]$, and $\cup_{g \in [m]} G_g = [p]$. For each group $G_g$, $d^G_g = |G_g|$ denoted the group size, and $d^G_{\max} = \max\limits_{g \in [m]}d^G_g$ . For any set $T \subset [p]$, $\beta_{T}$ denotes the subvector of $\beta$ indexed by $T$.  Let  $w = \{w_1, \cdots, w_m\}$ be the user-defined positive weights associated with the groups. The group lasso penalty \citep{yuan2006model} is defined as 
\begin{equation}
\label{eq:glnorm}
    \phi^G(\beta) = \sum\limits_{g \in [m]}{w_g \left\|\beta_{G_g}\right\|_2}.
\end{equation}
 We will omit $G$ in all notations when the group structure is clearly given.

In statistical estimation problems involving group selection, the group lasso norm is combined with a convex empirical loss function $L_n$, and the estimator is determined by solving the following M-estimation problem:
\begin{equation}
\label{glreg}
    \text{minimize}_{\beta \in \mathbb{R}^p} \left\{ L_n(\beta) + \lambda_n \phi(\beta) \right\}.
\end{equation}

 If the groups are disjoint, then the group lasso penalty will select and eliminate variables by groups. When the groups overlap, the above estimation enforces an  ``all-out'' pattern by simultaneously setting all variables in certain groups to be zero, thus the zero-out variables are form a union of a subset of the groups  \citep{svsw}. Such a pattern is desirable in many problems, such as graphical models, multi-task learning and gene analysis  \citep{ogl, tcap, mohan2014node, hmgm, tarzanagh2018estimation}. Another generalization of the group lasso for overlapping groups is the latent overlapping group lasso \citep{ogl, mairal2013supervised}, following an ``all-in'' pattern by keeping the nonzero patterns as a union of groups. As noted in \cite{hvs}, the decision to use an ``all-in'' or ``all-out'' strategy depends on the problem and the corresponding scientific interpretations. The comparison between these two strategies is not our objective in this paper. However, both methods suffer from computational difficulties. We focus on introducing an approximation method for the overlapping group lasso penalty \eqref{eq:glnorm} and will leave the computational improvement of the latent overlapping group lasso for future work.

%%%%%%%%%%%%%%%%%%%%%
% P1: Introduce dual method
%%%%%%%%%%%%%%%%%%%%%

Problem~\eqref{glreg} is a non-smooth convex optimization problem \citep{svsw,chen2012smoothing}, and the proximal gradient method \citep{beck2009fast,nesterov2013gradient} is one of the most general yet efficient strategies to solve it. Intuitively, proximal gradient descent minimizes the objective iteratively by applying the proximal operator of $\lambda_n\phi(\beta)$ at each step.

The proximal operator associated with group lasso penalty in \eqref{eq:glnorm} is defined as
\begin{equation}
\label{op}
    \text{prox}_{\lambda_n}(\mu)=\mathop\mathrm{argmin}_{\beta\in \mathbb{R}^p}\frac{1}{2}\| \mu - \beta \|^2 + \lambda_n \phi(\beta).
\end{equation}
whose dual problem is shown to be the following by \citet{jenatton2011proximal}:
\begin{equation}
\label{opp}
    \mathop\mathrm{minimize}_{\{\xi^g\in \mathbb{R}^p\}_{g \in [m]}}\left(\frac{1}{2}\| \mu - \sum_{g=1}^m \xi^g\|_2^2 \right), \hspace{0.5cm} \text{s.t.} \hspace{0.2cm}\|\xi^g\|_2 \leq \lambda_n w_g, \hspace{0.2cm} \text{and} \hspace{0.2cm} \xi^g_j = 0 \hspace{0.2cm}\text{if} \hspace{0.2cm} j \notin G_g.
\end{equation}
The proximal operator \eqref{op} and its dual can be computed using a block coordinate descent (BCD) algorithm, as studied by \citet{jenatton2011proximal}. We list the procedure in Algorithm~\ref{alg1} for readers' information. The convergence of Algorithm~\ref{alg1} is guaranteed by Proposition 2.7.1 of \cite{bertsekas1997nonlinear}.

 {\tiny
   \begin{algorithm}[hbt]
    \caption{BCD algorithm for the proximal operator of the overlapping group lasso}
    \label{alg1}
    \hspace*{\algorithmicindent} \textbf{Input: $\mathbf{G},$ $\left\{w_g\right\}_{g=1}^m$,  $u$, $\lambda_n$, } \\
    \hspace*{\algorithmicindent} \textbf{Requirement:}  $\mathbf{G}$, $\left\{w_g\right\}_{g=1}^m > 0$,  $\lambda_n > 0$.\\
    \hspace*{\algorithmicindent} \textbf{Initialization: Set $\left\{\xi^{g}\right\}_{g=1}^m = 0 \in \mathbb{R}^p.$ } \\
    \hspace*{\algorithmicindent} \textbf{Output: $\beta^\ast$ } 
     \begin{algorithmic}[1]
     \While {stopping criterion not reached} 
         \ForAll{$g\in \{1, \cdots,m \}$}
         \State Calculate $r^g = \mu - \sum_{h \neq g} \xi^h$.
      \If{$||r^g||_2 \leqslant \lambda_n  w_g$}
      $\xi_j^g  = \begin{cases*}
                    0 & if  $j \notin G_g$  \\
                   r_j^g & if  $j \in G_g$ 
                 \end{cases*} $
    \Else  \hspace{0.1cm} $\xi_j^g  = \begin{cases*}
                    0 & if  $j \notin G_g$  \\
                   \frac{\lambda  w_gr_j^{g}}{||r^{g}||_2} & if  $j \in G_g$ 
                 \end{cases*} $
    \EndIf
  \EndFor
	     	\EndWhile
    \State $\beta^\ast = u - \sum\limits_{g = 1}^{m}\xi^{g}.$
    \end{algorithmic}
\end{algorithm}
}

 Although additional techniques employing smoothing techniques have been developed to improve the optimization \citep{ES,chen2012smoothing},  \eqref{op} and \eqref{opp} continue to offer crucial insights into the computational bottlenecks caused by overlapping groups. Notably, the duality between \eqref{op} and \eqref{opp} reveals that the overlapping group lasso problem has an intrinsic dimension equal to a $\sum_{g \in [m]} d_g$-dimensional separable problem. When the groups have a nontrivial proportion of overlapping variables, the computation of the overlapping group lasso becomes substantially more diﬃcult, eventually prohibitive on large-scale problems. This issue significantly limits the applicability of the overlapping group lasso penalty. Next, we introduce our non-overlapping approximation to rectify this challenge.

\subsection{The Non-overlapping Approximation of the Overlapping Group Lasso}

%%%%%%%%%%%%%%%%%%%%%
% P1: Introduce the method by an example
%%%%%%%%%%%%%%%%%%%%%

The fundamental challenge in solving overlapping group lasso problems stems from the non-separability of the penalty. To enhance computational efficiency, our approach hinges on introducing separable operators. As a starting point, we will illustrate this concept using a toy example of an interlocking group structure as a special case. In this structure, the groups are arranged sequentially, with each group overlapping with its adjacent neighbors (Figure~\ref{fig_OG}). For simplicity, we consider a uniform weight scenario where $w_g \equiv 1$ for all groups.

\begin{figure}[H]
\centering
     \begin{subfigure}[t]{0.75\textwidth}
         \centering
         \includegraphics[width=\textwidth]{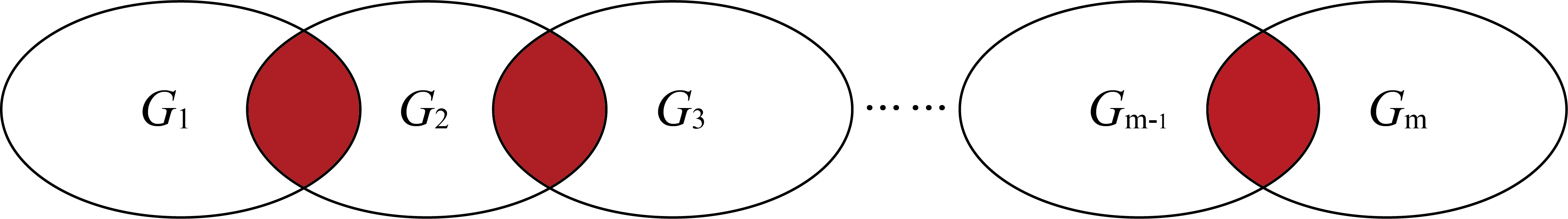}
         \caption{Interlocking group structure.}
         \label{fig_OG}
     \end{subfigure}
     
     \vspace*{1cm}
     \begin{subfigure}[t]{0.75\textwidth}
         \centering
         \includegraphics[width=\textwidth]{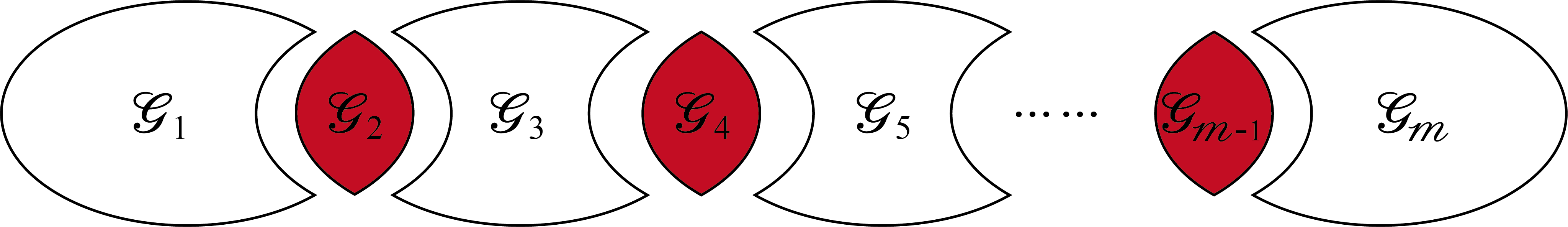}
         \caption{Partitioned group structure.}
         \label{fig_nonOG}
     \end{subfigure}
\caption{Illustration of proposed group partition in an interlocking group structure. Red regions are the overlapping variables in the original group structure. }
\label{figillu}
\end{figure}

We now partition the original overlapping groups in Figure~\ref{fig_nonOG} into smaller groups as in  Figure~\ref{fig_nonOG}. This partition identifies intersections as individual groups. We define these new groups as $\mathcal{G} = \{\mathcal{G}_1, \cdots, \mathcal{G}_\mathcal{m}\}$, where, in this specific instance, $\mathcal{m} = 2m - 1$. Taking $G_1$ as an example. We have $G_1 = \mathcal{G}_1\cup \mathcal{G}_2$ and by the triangular inequality,
$$\|\beta_{G_1}\|_2 \leq \|\beta_{\mathcal{G}_1}\|_2 + \|\beta_{\mathcal{G}_2}\|_2.$$ 
Extending this principle to each group, the norm of the overlapping group lasso based on $G$ can be bounded by a reweighted non-overlapping group norm based on $\mathcal{G}$:
\begin{equation}\label{eq:demo-ineq}
\sum_{g \in [m]}{\|\beta_{G_g}\|_2} \leq \sum_{\mathcal{g} \in [\mathcal{M}]} h_{\mathcal{g}}\|\beta_{\mathcal{G}_{\mathcal{g}}}\|_2,
\end{equation}
where $h_{\mathcal{g}}$ equals $1$ for odd $\mathcal{g}$ and $2$ for even $\mathcal{g}$. Consequently, controlling the sum on the right-hand side of \eqref{eq:demo-ineq} effectively controls the overlapping group norm on the left-hand side. The key advantage of this approach is the separability of the right-hand side norm, which substantially simplifies and enhances the efficiency of optimization.

%%%%%%%%%%%%%%%%%%%%%
% P2: Introduce the method formally
%%%%%%%%%%%%%%%%%%%%%

While this example is about the interlocking group structures, the whole idea is applicable to any general overlapping pattern, as introduced in the two steps below.

\paragraph{Step 1: overlapping-induced partition construction.} Our method starts from constructing a new non-overlapping group structure $\mathcal{G}$ from $G$, following Algorithm~\ref{alg2}. We represent the initial group structure $G$ by an $m \times p$ binary matrix $\mathbf{G}$, where $\mathbf{G}_{gj} = 1$ if and only if the $j$-th variable is a member of the $g$-th group, and  $\mathbf{G}_{gj} = 0$ otherwise. To clearly differentiate the original group structure $G$ and the derived non-overlapping structure $\mathcal{G}$, we employ standard letters, such as $\{g, d, m, w, G\}$, to represent quantities about the original group structure, while calligraphic letters, like $\{\mathcal{g}, \mathcal{d}, \mathcal{m}, \mathcal{w}, \mathcal{G}\}$, are used for quantities about $\Gcal$. For instance, $\mathcal{m}$ denotes the number of groups in $\mathcal{G}$, and $\mathcal{g} \in [\mathcal{m}]$ serves as the index for groups within $\mathcal{G}$.

{\tiny
   \begin{algorithm}[H]
    \caption{Algorithm to construct the overlapping-induced partition $\mathcal{G}$}
    \label{alg2}
    \hspace*{\algorithmicindent} \textbf{Input: Binary matrix $\mathbf{G}$.} \\
    \hspace*{\algorithmicindent} \textbf{Output: New group structure $\mathcal{G}$.} 
     \begin{algorithmic}[1]
     \State Initialize the column index set as $C = \{1, \ldots, p\}$.
     \State Initialize $k = 1$.
     \While {$C$ is not empty}
         \State Choose the first column index $j$ in $C$, and set $I$ to be the set of all column indices in $G$ identical to $G_{,j}$: $I = \{j' \in C, G_{,j'}=G_{,j}\}$.
         \State Set $\mathcal{G}_k = I$, and remove $I$ from $C$: $C \leftarrow C\setminus I$.
         \State $k = k+1$.
     \EndWhile
     \State Return $\mathcal{G} \gets \{\mathcal{G}_1, \mathcal{G}_2, \ldots\}$ where each $\mathcal{G}_k$ represents a group.
    \end{algorithmic}
\end{algorithm}
}

\paragraph{Step 2: overlapping-based group weights calculation.} Note that each group within $\mathcal{G}$ is a subset of at least one of the original groups in $G$. Conversely, each group in $G$ can be reconstructed as the union of groups in $\mathcal{G}$. We introduce the following mappings:
$$F(\mathcal{g}) = \{g: g\in [m], \Gcal_{\gcal} \subset G_g \} ~~~~\text{and}~~~~F^{-1}(g) = \{\gcal: \gcal\in [\mcal], \Gcal_{\gcal} \subset G_g\}.
$$
Given positive weights $w$ of $G$, we set the weights $\mathcal{w}$ of $\mathcal{G}$ as:
\begin{equation}
    \label{eq:weight}
    \mathcal{w}_\mathcal{g} =  \sum_{g \in F(\mathcal{g})}w_g, \quad \mathcal{g} \in [\mathcal{m}].
\end{equation}

With the new partition $\mathcal{G}$ and the new weights $\mathcal{w}$ from the previous two steps, we define the following norm as the proposed alternative to the original overlapping group lasso norm:
\begin{equation}\label{eq:our-norm}
\psi^{\mathcal{G}}(\beta) = \sum\limits_{\mathcal{g}=1}^{\mathcal{m}}{\mathcal{w}_{\mathcal{g}}\left\|\beta_{\mathcal{G}_\mathcal{g}}\right\|_2}.
\end{equation}
In general, by triangular inequality, the proposed norm is always an upper bound of the original group lasso norm:
\begin{equation}\label{eq:general-ineq}
\phi^G(\beta) = \sum\limits_{g=1}^{m}{w_g\left\|\beta_{G_g}\right\|_2} \leqslant \sum\limits_{\mathcal{g}=1}^{\mathcal{m}}{\mathcal{w}_{\mathcal{g}}\left\|\beta_{\mathcal{G}_g}\right\|_2} = \psi^{\Gcal}(\beta).
\end{equation}
Our proposed penalty is essentially a weighted non-overlapping group lasso on $\Gcal$. For illustration, Figure \ref{cof} shows the unit ball of these two norms based on $G_1 = \{\beta_1, \beta_2 \}$ and $G_2 = \{\beta_1 ,\beta_2, \beta_3 \}$ in a three dimensional problem. All singular points of the $\phi^G$-ball (where exactly zero happens in \eqref{glreg}) are also singular points of the $\psi^{\Gcal}$-ball. 

Readers may observe that the inequality in \eqref{eq:general-ineq} can also hold for other separable norms. For instance, consider partitioning all $p$ variables into individual groups and employing a weighted lasso norm as another upper bound for $\phi^G$, represented by:
\begin{equation}\label{eq:weighted-lasso}
 \sum\limits_{j=1}^{p}{\Big(\sum_{\{g|\beta_j \in G_g\} }w_g\Big)\left|\beta_j\right|}.
\end{equation}
This approach was taken by \cite{hmgm}.  So what is special about our proposed norm in \eqref{eq:our-norm}? 

Intuitively, as illustrated by our construction process for $\mathcal{G}$ or Figure~\ref{cof}, our method introduces additional singular points in the norm only when it is necessary to achieve separability. Unlike the lasso upper bound, this process avoids adding redundancy. As such, our approximation is expected to maintain a certain level of tightness. We now formally substantiate this intuition. Given any group structure $G$ and weights $w$, following \citet{sglo}, we define the $\ell_{q_1}/\ell_{q_2}$ norm of $\beta$ for any $0 \leqslant q_1,q_2 \leqslant \infty$ as 
\begin{equation}\label{gnorm}
||\beta_{\{G,w\}}||_{q_1,q_2} = \Big(\sum\limits_{g \in [m]}w_g||\beta_{G_g}||_{q_2}^{q_1}\Big)^\frac{1}{q_1}.
\end{equation}
\begin{figure}[H]
     \centering    
              \vspace{-3cm}
        \includegraphics[width=0.6\textwidth]{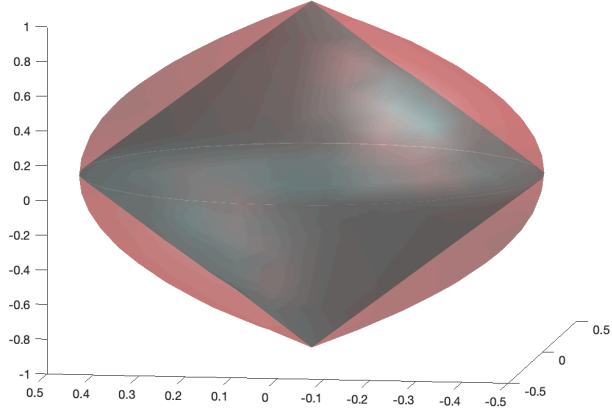}
         \vspace{-3cm}
\caption{Illustration of two norms in $\bR^3$: the outer region depicts the unit ball of the overlapping group lasso norm defined by $\{ \beta: \phi^G(\beta)  \leqslant 1 \}$; the inner region represents the unit ball of our proposed separable norm $\{ \beta: \psi^{\mathcal{G}}(\beta) \leqslant 1 \}$.}
        \label{cof}
\end{figure}

This general class of norms potentially includes most commonly used penalties, including the weighted lasso penalty. The subsequent theorem shows that the proposed $\psi^{\mathcal{G}}(\beta)$ is the \emph{tightest separable relaxation} of the original overlapping group lasso norm among all separable $\ell_{q_1}/\ell_{q_2}$ norms.

\begin{theorem}
\label{addtheo}
Let  $\mathbb{G}$ represent the set of all possible partitions of $[p]$. Given the original groups $G$ and their weights $w$, there does not exist $0 \leqslant q_1,q_2 \leqslant \infty, \tilde{G} \in \mathbb{G}, \tilde{w} \in (0, \infty)^p$ such that:
\begin{equation}\label{twocondi}
    \begin{cases}
 \phi^G(\beta) \leqslant ||\beta_{\{\tilde{G},\tilde{w}\}}||_{q_1,q_2} \leqslant \psi^{\mathcal{G}}(\beta)  &  \text{ for all $\beta \in \mathbb{R}^p$} \\
  ||\beta_{\{\tilde{G},\tilde{w}\}}||_{q_1,q_2} < \psi^{\mathcal{G}}(\beta)   &  \text{ for some $\beta \in \mathbb{R}^p$}
\end{cases}.
\end{equation}
\end{theorem}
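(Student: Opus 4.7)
The plan is to argue by contradiction. Suppose $(q_1, q_2, \tilde G, \tilde w)$ satisfying~\eqref{twocondi} exist; the aim is to use a short sequence of test vectors to force $\|\cdot\|_{\{\tilde G, \tilde w\}} \equiv \psi^{\mathcal{G}}$, contradicting the strict-inequality requirement on the second line of~\eqref{twocondi}. The key observation driving the argument is that $\phi^G$ and $\psi^{\mathcal{G}}$ coincide on many simple directions --- in particular on every vector supported in a single $\mathcal{G}_{\mathcal{g}}$ --- and matching the $\ell_{q_1}/\ell_{q_2}$ form to those shared values almost completely pins down the parameters.

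First, I would plug in the scaled canonical basis vector $\alpha e_j$. By Algorithm~\ref{alg2}, every variable in $\mathcal{G}_{\mathcal{g}(j)}$ shares the same column of $\mathbf{G}$, i.e.\ the same group-membership set $\{g : j \in G_g\} = F(\mathcal{g}(j))$, so $\phi^G(\alpha e_j) = \psi^{\mathcal{G}}(\alpha e_j) = |\alpha|\,\mathcal{w}_{\mathcal{g}(j)}$. The sandwich then forces $\tilde w_{\tilde g(j)}^{1/q_1} = \mathcal{w}_{\mathcal{g}(j)}$ for every $j\in[p]$, and in particular any two variables sharing a $\tilde G$-group must share a $\mathcal{G}$-weight. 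Next, on two-variable vectors supported inside a single $\mathcal{G}_{\mathcal{g}}$ of size at least two, identical membership gives $\phi^G(\beta) = \psi^{\mathcal{G}}(\beta) = \mathcal{w}_{\mathcal{g}}\|\beta\|_2$, which is of pure $\ell_2$ form; matching this against the $\ell_{q_1}/\ell_{q_2}$ form in each sub-case (same or different $\tilde G$-group) forces $q_1 = q_2 = 2$. Finally, for $j \in \mathcal{G}_{\mathcal{g}}$, $j' \in \mathcal{G}_{\mathcal{g}'}$ with $\mathcal{g} \neq \mathcal{g}'$: if $\tilde G$ merges them into one $\tilde G$-group, the sandwich norm on $\{j, j'\}$ becomes $\mathcal{w}_{\mathcal{g}}\sqrt{\beta_j^2 + \beta_{j'}^2}$, and plugging $\beta_j = \beta_{j'} = 1$ yields an inequality of the form $\sqrt{2}(A+B) \geq A\sqrt{2} + 2B$ (where $A$ sums the weights of $G$-groups shared between $\mathcal{G}_{\mathcal{g}}$ and $\mathcal{G}_{\mathcal{g}'}$, and $B$ the unshared), which fails whenever $F(\mathcal{g}) \triangle F(\mathcal{g}')$ is nonempty --- as it must be by distinctness of $\mathcal{g}, \mathcal{g}'$ --- violating the lower bound $\phi^G \le \|\cdot\|_{\{\tilde G, \tilde w\}}$. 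Hence $\tilde G$ must be a refinement of $\mathcal{G}$, and combined with the pinned weights and $q_1 = q_2 = 2$, the sandwich norm reduces to $\mathcal{w}_{\mathcal{g}}\|\beta_{\mathcal{G}_{\mathcal{g}}}\|_2$ on every $\mathcal{G}$-supported piece and to the additive sum $\sum_{\mathcal{g}} \mathcal{w}_{\mathcal{g}}\|\beta_{\mathcal{G}_{\mathcal{g}}}\|_2 = \psi^{\mathcal{G}}(\beta)$ overall, which contradicts the assumed strict inequality.

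The main obstacle I anticipate is the degenerate regime in which every $\mathcal{G}_{\mathcal{g}}$ has size one, so $\psi^{\mathcal{G}}$ collapses to a weighted $\ell_1$ norm and the within-group $\ell_2$-matching test above is vacuous. In this case $q_1$ and $q_2$ must instead be pinned from two-singleton supports $\{j, j'\}$ that co-occur in a shared parent $G_g$: one compares $\phi^G(\beta) \geq w_g\sqrt{\beta_j^2 + \beta_{j'}^2}$ (the $\ell_2$ contribution from the shared group, plus non-negative remainders) against the $\ell_{q_1}$ aggregation the sandwich norm provides at $\beta_j = \beta_{j'} = 1$; a short calculation shows that any $q_1 > 1$ makes the sandwich norm fall strictly below $\phi^G$, forcing $q_1 = 1$, whence $\|\cdot\|_{\{\tilde G, \tilde w\}}$ becomes a weighted $\ell_1$ norm with weights $\mathcal{w}_{\mathcal{g}(j)}$ by Step~1, i.e.\ coincides with $\psi^{\mathcal{G}}$, closing the contradiction.
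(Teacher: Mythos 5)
Your overall plan is the same as the paper's: use test vectors to force $\|\cdot\|_{\{\tilde G,\tilde w\}}$ to coincide with $\psi^{\mathcal{G}}$ and contradict the strict inequality in \eqref{twocondi}, ruling out the merging of distinct $\mathcal{G}$-groups along the way (the paper packages that last part as a separate lemma showing $\tilde G=\mathcal{G}$). However, there is a genuine gap in how you pin down $q_1$, and it breaks the final step. Your within-group tests establish less than you claim: if a pair $j,k\in\mathcal{G}_{\mathcal{g}}$ lies in a single $\tilde G$-group, the restricted norm is $\tilde w_{\tilde g}^{1/q_1}\,\|\beta\|_{q_2}$, which matches $\mathcal{w}_{\mathcal{g}}\|\beta\|_2$ for \emph{every} $q_1$ once $q_2=2$ and the weight is renormalized; if the pair is split across $\tilde G$-groups, matching forces $q_1=2$ instead. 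Neither subcase yields $q_1=1$, yet your concluding sentence asserts that the norm aggregates \emph{additively} across groups to give $\sum_{\mathcal{g}}\mathcal{w}_{\mathcal{g}}\|\beta_{\mathcal{G}_{\mathcal{g}}}\|_2=\psi^{\mathcal{G}}$ --- which is precisely the statement $q_1=1$. The claims ``$q_1=q_2=2$'' and ``the norm is the additive sum'' are mutually inconsistent: with $q_1=2$ and $\tilde w_{\mathcal{g}}^{1/2}=\mathcal{w}_{\mathcal{g}}$ the candidate is $\bigl(\sum_{\mathcal{g}}\mathcal{w}_{\mathcal{g}}^2\|\beta_{\mathcal{G}_{\mathcal{g}}}\|_2^2\bigr)^{1/2}$, a norm strictly smaller than $\psi^{\mathcal{G}}$ on multi-group supports, so no contradiction with the second line of \eqref{twocondi} follows from your argument as written.

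To close the gap you must rule out $q_1>1$ directly, which requires a test vector spanning two \emph{different} $\mathcal{G}$-groups sharing a common parent $G_a$ and a verification that the sub-additive $\ell_{q_1}$ aggregation violates the lower bound $\phi^G\leqslant\|\cdot\|_{\{\tilde G,\tilde w\}}$. Be aware that the symmetric point $\beta_j=\beta_{j'}=1$ you invoke in the degenerate (all-singleton) case does not suffice: writing $A$ for the total weight of shared parents and $B,C$ for the unshared weights on each side, one can have $\bigl((A+B)^{q_1}+(A+C)^{q_1}\bigr)^{1/q_1}>A\sqrt{2}+B+C$ for $q_1$ slightly above $1$ (e.g.\ $A=B=1$, $C=0$, $q_1=3/2$). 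An asymmetric point such as $\beta_j=1$, $\beta_{j'}=t$ with $t\to\infty$ does work, since for $q_1>1$ the candidate behaves like $(A+C)t+o(1)$ while $\phi^G=A\sqrt{1+t^2}+B+Ct$ behaves like $(A+C)t+B$, and $B+C>0$ because distinct $\mathcal{G}$-groups have distinct membership sets. With that computation added (and the symmetric test point replaced), your argument becomes a correct variant of the paper's proof; without it, $q_1$ is not determined and the contradiction is not reached.
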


\section{Statistical Properties}
\label{sec:results}
Incorporating the proposed norm $\psi_{\Gcal}$ into an M-estimation procedure leads to the following optimization problem:
\begin{equation}
\label{glreg-alt}
    \text{minimize}_{\beta \in \mathbb{R}^p} \left\{ L_n(\beta) + \lambda_n \psi_{\Gcal} \right\},
\end{equation}
which is different but related to \eqref{glreg}. In this section, by studying the statistical properties of the regularized estimator based on $\psi_{\Gcal}$ and the estimator based on $\phi_G$, we show that $\psi_{\Gcal}$ could be used as an alternative to $\phi_G$. Following previous group lasso studies \citep{tbog,oiao,chen2012smoothing,aufh,aebf}, our analysis will focus on high-dimensional linear models. Specifically, define the linear model as
\begin{equation}\label{eq:linear-model}
    Y=X\beta^{\ast}+\varepsilon,
\end{equation}
where $Y \in \mathbb{R}^{n \times 1}$ is the response vector,  $X \in \mathbb{R}^{n \times p}$ is the covariate matrix,  and $\varepsilon \in \mathbb{R}^{n \times 1}$ is a random noise vector. The overlapping group lasso coefficient estimator under the linear regression model is defined by a solution of \eqref{glreg} under the squared loss:
\begin{equation}\label{eq:OGL-est}
\hat{\beta}^G \in \argmin_{\beta \in \bR^p}~~\frac{1}{2n}\norm{Y-X\beta}_2^2 + \lambda_n  \phi_G(\beta). 
\end{equation}
Correspondingly, we define the regularized estimator by our approximation norm as
\begin{equation}\label{eq:Our-est}
\hat{\beta}^\Gcal \in \argmin_{\beta \in \bR^p}~~\frac{1}{2n}\norm{Y-X\beta}_2^2 + \mathcal{\lambda}_n \psi_{\Gcal}(\beta).
\end{equation}

The solution uniqueness of \eqref{eq:OGL-est} and \eqref{eq:Our-est} has been studied by \cite{svsw}, and we include their results in Appendix~\ref{app:uniqueness} for completeness. However, our study only requires the estimator to be one solution to the problem, as in \cite{svsw,aufh,wainwright2019high}. So we will not specifically worry about the uniqueness in our discussion.

As a remark, our objective is \textbf{not} to present \eqref{eq:Our-est} as an approximate optimization problem of \eqref{eq:OGL-est}. Rather, we focus on the statistical equivalence of the two classes of estimators defined by \eqref{eq:OGL-est} and \eqref{eq:Our-est} in terms of their statistical properties under sparse regression models when appropriate values of $\lambda_n$ are chosen (which may differ for each estimator). Our theoretical analysis focuses on three aspects. In Section~\ref{sec:l2-bound}, we establish that under reasonable assumptions, the $\ell_2$ estimation error bound for \eqref{eq:Our-est} is no larger than that for \eqref{eq:OGL-est}. In Section~\ref{sec:minimax}, we present the minimax error rate for the overlapping sparse group regression problem, showing that both \eqref{eq:OGL-est} and \eqref{eq:Our-est} are minimax optimal under additional requirements of the group structures. Lastly,  in Section~\ref{sec:support}, we demonstrate that both estimators consistently recover the support of the sparse $\beta^*$ with high probability under similar sample size requirements.

\subsection{Estimation Error Bounds}\label{sec:l2-bound}
We start by introducing additional quantities.  Define the overlapping degree $h^G_j$ as the number of groups in $G$ containing $\beta_j$, and $h^G_{\max} = \max h_j$. Given a group index set  $I \subseteq [m]$, we use $G_{I}$ to denote the union $\mathop{\bigcup}_{g\in I}G_g$. Given $G$ and $I$,  following \cite{wainwright2019high}, we define two parameter spaces: 
\begin{align*}
M(I) &= \left\{\beta\in{\mathbb{R}^p} \mid \beta_j=0 \text{ for all } j \in (G_{I})^c\right\}, \\
M^{\perp}(I) &= \left\{\beta\in \mathbb{R}^p \mid \beta_j=0 \text{ for all } j\in G_{I}\right\},
\end{align*}
and we further use $\beta_{M(I)}$ to denote the projection of $\beta$ onto $M(I)$.
 
Given any set $T \subseteq [p]$, we define the a set of groups $\mathsf{G}_{T}=\left\{g\in [m] \mid G_g  \cap T \neq \emptyset \right\}$. Notice that $(G_{\mathsf{G}_{T}})^c$ is called the hull of $T$ in \cite{svsw}. Let  $\textit{supp}(\beta)=\{j\in [p] \mid \beta_j\neq 0\}$  denotes the support set. We define the group support set $S^G(\beta) = \mathsf{G}_{\textit{supp}(\beta)}$, and the augmented group support $\overline{S^G(\beta)}=\{g \in [m] \mid G_g\cap G_{S(\beta)}\neq \varnothing\}$. Furthermore, define $s = |\textit{supp}(\beta)|$, $s_g = |S(\beta)|$, and $\overline{s_g} = |\overline{S(\beta)}|$.  We omit the subscript $G$ in notations when $G$ is clearly given in context. Now we introduce additional assumptions under the regression model \eqref{eq:linear-model}.

\begin{assu}[Sub-Gaussian noise for the response variable]
\label{ass:distribution_sub_noise}
 The coordinates of $\varepsilon$ are i.i.d. zero-mean sub-Gaussian with parameter $\sigma$. Specifically, there exists $\sigma > 0$ such that $\mathbb{E}[\exp(t\varepsilon)] \leqslant \exp(\sigma^2t^2/2)$ for all $t \in \mathbb{R}$.
\end{assu}
Our theoretical studies also hold for a fixed design of $X$, with trivial modifications. We prefer to introduce the random design here to make the statements more concise and interpretable, especially for the comparison in Section~\ref{sec:support}.
\begin{assu}[Normal random design for covariates]
\label{ass:distribution_normal_rd}
The rows of the data matrix $X$ are i.i.d. from $N(0,\Theta)$, where \(1/c_1 \leqslant \gamma_{\min}(\Theta) \leqslant \gamma_{\max}(\Theta) \leqslant c_1\) for some constant \(c_1>0\).
\end{assu}
Lastly, we need some mild constraints on the group dimensions.
\begin{assu}[Dimension of the group structure]
\label{ass:distribution_groupstructure}
The predefined group structure $G$ satisfies \(d_{\max} \leqslant c_2n\) for some constant \(c_2>0\). In addition, we assume \(\log{m}\ll n\). 
\end{assu}

The following theorem establishes the $\ell_2$ estimation error bounds for the two estimators.
\begin{theorem}
\label{thm:two-bounds}
Given $G$ and its induced $\Gcal$ according to Algorithm~\ref{alg2}, define $h_{\min}^{g} = \min\limits_{j \in G_g}h_j$, $h_{\max}^{g} = \max\limits_{j \in G_g}h_j$. Let $\delta \in (0,1)$ be a scalar that might depend on $n$. Under Assumptions~\ref{ass:distribution_sub_noise}, \ref{ass:distribution_normal_rd} and \ref{ass:distribution_groupstructure}, for $\hat{\beta}^G$ and $\hat{\beta}^{\Gcal}$ defined in \eqref{eq:OGL-est} and \eqref{eq:Our-est}, we have the following results:
\begin{enumerate}
    \item Suppose that  $\beta^*$ satisfies the group sparsity condition
    \begin{equation}\label{eq:orig-sparsity}
    \overline{s_g}(\beta^*) \lesssim
    \frac{n}{\log m + d_{\max}}
    \cdot
    \frac{\min\limits_{g\in[m]}(w_g^2 h_{\min}^{g})}{\max\limits_{g\in \overline{S}} (w_g^2 h_{\max}^{g})}.  
    \end{equation}
    When $\lambda_n = \frac{c' \sigma}{\min\limits_{g\in[m]}\left(w_g^2 h_{\min}^{g}\right)}\sqrt{\frac{d_{\max}}{n} + 
    \frac{\log{m}}{n} + \delta}$ for some constant $c'>0$, we have
    \begin{equation}\label{eq:ell2bound}
        \left\|\hat{\beta}^{G}-\beta^{\ast}\right\|^2_2 \lesssim \sigma^2\cdot\frac{\Big(\sum\limits_{g\in\overline{S}}{w_g}^2\Big)\cdot h^{G_{\overline{S}}}_{\max}}{\min\limits_{g\in[m]}\left(w_g^2 h_{\min}^{g}\right)}
        \cdot\left(\frac{d_{\max}{}}{n} + \frac{\log{m}}{n} + \delta\right).
    \end{equation}
    with probability at least $1-e^{-c_3n\delta}$ for constant $c_3>0$. 
    \item Suppose $\beta^*$ satisfies the group sparsity condition
    \begin{equation}\label{eq:new-sparsity}
    \overline{s_\gcal}(\beta^*) \lesssim
    \frac{n}{\log \mcal + \dcal_{\max}}
    \cdot
    \frac{\min\limits_{\gcal\in[\mcal]}(\wcal_\gcal^2)}{\max\limits_{\gcal\in S}(\wcal_\gcal^2)}.  
    \end{equation}
    When $\lambda_n = \frac{c' \sigma}{\min\limits_{\gcal\in[\mcal]}\wcal_g}
    \sqrt{\frac{\dcal_{\max}}{n} + \frac{\log{\mcal}}{n} + \delta}$ for some constant $c'>0$, we have
    \begin{equation}\label{eq:parell2bound}
       \left\|\hat{\beta}^{\Gcal}-\beta^{\ast}\right\|_2^2 \lesssim \sigma^2\cdot\frac{\sum\limits_{\gcal\in \{F^{-1}(g)\}_{g \in S} }{\wcal_\gcal}^2}{\min\limits_{\gcal\in[\mcal]}\left(\wcal_\gcal^2\right)}
       \cdot\left(\frac{\dcal_{\max}{}}{n} + \frac{\log{\mcal}}{n} + \delta\right).
    \end{equation}
    with probability at least $1-e^{-c_4n\delta}$ for constant $c_4>0$.
\end{enumerate}
\end{theorem}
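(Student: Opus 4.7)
Both bounds fit the standard decomposable regularizer framework of Negahban et al./Wainwright, and I would treat the two parts in parallel, writing $R$ for the generic regularizer ($\phi^G$ in part 1, $\psi^{\Gcal}$ in part 2) and $M,M^{\perp}$ for the subspaces associated with the (augmented) group support. The starting point is the basic inequality: optimality of $\hat\beta$ together with $Y=X\beta^*+\varepsilon$ and $\Delta=\hat\beta-\beta^*$ gives
\[
\frac{1}{2n}\|X\Delta\|_2^2 \;\le\; \frac{1}{n}\bigl|\langle X^{\!\top}\!\varepsilon,\Delta\rangle\bigr| + \lambda_n\bigl\{R(\beta^*)-R(\hat\beta)\bigr\}.
\]
By decomposability of the group norm with respect to $(M,M^{\perp})$ (for $\phi^G$ this uses $\beta^{*}\in M(\overline{S})$; for $\psi^{\Gcal}$ one uses the analogous $\Gcal$-support), we obtain $R(\beta^*)-R(\hat\beta)\le R(\Delta_M)-R(\Delta_{M^{\perp}})$. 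Combining with Hölder's inequality $|\langle X^{\!\top}\!\varepsilon/n,\Delta\rangle|\le R^{*}(X^{\!\top}\!\varepsilon/n)\,R(\Delta)$ and choosing $\lambda_n\ge 2R^{*}(X^{\!\top}\!\varepsilon/n)$ yields the cone condition $R(\Delta_{M^{\perp}})\le 3R(\Delta_M)$.

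The next ingredient is a tail bound on the dual norm $R^{*}(X^{\!\top}\!\varepsilon/n)$. For $\phi^G$ one has $R^{*}(z)=\max_{g}\|z_{G_g}\|_2/w_g$; using sub-Gaussianity of $\varepsilon$ (Assumption~\ref{ass:distribution_sub_noise}), $\chi^2$-concentration of $\|X_{G_g}^{\!\top}\!\varepsilon/n\|_2$ conditional on $X$, control of the block operator norm via Assumption~\ref{ass:distribution_normal_rd}, and a union bound over the $m$ groups, one obtains $R^{*}(X^{\!\top}\!\varepsilon/n)\lesssim \sigma\sqrt{(d_{\max}+\log m+n\delta)/n}/\min_g w_g$ with probability $\ge 1-e^{-c n\delta}$. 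The factor $\min_g(w_g^{2}h^g_{\min})$ appearing in the statement's $\lambda_n$ comes from converting between $R$ and the $\ell_2$ metric when translating $R$-level bounds on $\Delta_M$ into $\ell_2$-bounds; the analogous calculation for $\psi^{\Gcal}$ is cleaner because the induced groups are disjoint, so $h_{\gcal}\equiv 1$ and only $\mcal,\dcal_{\max}$ enter.

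The third ingredient is restricted strong convexity: on the cone $\{R(\Delta_{M^{\perp}})\le 3R(\Delta_M)\}$ we need $\tfrac{1}{n}\|X\Delta\|_2^2\ge \kappa\|\Delta\|_2^2$. Under Assumption~\ref{ass:distribution_normal_rd} this follows from a Gaussian comparison argument (Gordon's inequality, as in Wainwright 2019, Ch.~9): the RSC constant degrades by a term proportional to the Gaussian complexity of the cone's unit $R$-ball, which is of order $(\sum_{g\in \overline{S}}w_g^{2}h_{\max}^{g})\cdot (d_{\max}+\log m)/n$. The group sparsity hypothesis \eqref{eq:orig-sparsity} (resp.\ \eqref{eq:new-sparsity}) is exactly what is needed so that this penalty is dominated by $\kappa$, i.e., RSC survives. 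This verification is the main technical obstacle of the proof, since it must be done for a weighted, potentially overlapping group structure.

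Finally I would close the argument as follows: from RSC and the basic inequality with the cone condition,
\[
\kappa\|\Delta\|_2^2 \;\lesssim\; \lambda_n R(\Delta_M) \;\lesssim\; \lambda_n\,\Psi(\overline{S})\,\|\Delta\|_2,
\]
where $\Psi(\overline{S})^2=\sum_{g\in\overline{S}}w_g^{2}\cdot h^{G_{\overline{S}}}_{\max}/\min_g(w_g^{2}h^g_{\min})$ for part 1, and the $\Gcal$-analogue for part 2 (using that $R(v_M)\le \Psi(\overline{S})\|v\|_2$ by Cauchy--Schwarz applied group-wise). Solving the resulting quadratic inequality in $\|\Delta\|_2$ and inserting the prescribed $\lambda_n$ yields \eqref{eq:ell2bound} and \eqref{eq:parell2bound} respectively. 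The success probabilities $1-e^{-c_3 n\delta}$ and $1-e^{-c_4 n\delta}$ come from intersecting the high-probability events for the dual norm bound and for the RSC event.
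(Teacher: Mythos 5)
Your proposal is correct and follows essentially the same route as the paper: the paper likewise runs the decomposable-regularizer machinery (basic inequality, decomposability of $\phi$ over $(M(\overline{S}),M^{\perp}(\overline{S}))$, a dual-norm tail bound giving the choice of $\lambda_n$, the cone condition, and a group-wise Cauchy--Schwarz step to solve the quadratic in $\|\Delta\|_2$), and verifies restricted strong convexity under the random design via Gordon's comparison inequality exactly as you sketch, merely organizing it as a fixed-design theorem plus a separate high-probability verification of its hypotheses. The only minor bookkeeping difference is that the factor $\min_g(w_g^2 h_{\min}^{g})$ enters the paper's $\lambda_n$ through the sharpened dual-norm upper bound $\max_g \|(H\beta)_{G_g}\|_2/w_g$ with $H=\mathrm{diag}(1/h_j)$, rather than through the $R$-to-$\ell_2$ conversion as you attribute it.
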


The error bound in \eqref{eq:ell2bound} subsumes the non-overlapping group lasso error bound as a particular instance. When the groups in $G$ are disjoint, the reduced form of \eqref{eq:ell2bound} matches the bounds studied in \cite{tbog,oiao,aufh,wainwright2019high}. The main difference in the context of overlapping groups is the necessity to account for the overlapping degree and the extension of sparsity requirements to augmented groups. The conditions specified in \eqref{eq:orig-sparsity} and \eqref{eq:new-sparsity}  relate to the cardinality of the augmented group support set (the number of non-zero groups in non-overlapping group structure). Although the conditions in \eqref{eq:orig-sparsity} and \eqref{eq:new-sparsity} may initially appear distinct, they generally converge to a similar requirement in many typical cases, which can lead to an informative comparison between the two bounds in \eqref{eq:ell2bound} and \eqref{eq:parell2bound}. The following results can characterize this.

\begin{assu}\label{ass:compare-bound}
   Assume the predefined group structure $G$ and its induced group structure $\Gcal$ satisfy $\max\{d_{\max},m \}\asymp \max\{\dcal_{\max},\mcal\}$.
\end{assu}

\begin{proposition}
\label{cor1}
 Suppose that $\max\limits_{g \in \overline{S}} |F^{-1}(g)|$ is bounded by a constant. Under Assumption~\ref{ass:compare-bound}, the following inequality holds:
\begin{equation*}
    \frac{\sum\limits_{\mathcal{g}\in F^{-1}(S) }{\mathcal{w}_{\mathcal{g}}}^2}{\min\limits_{\mathcal{g}\in[m]}\left(w_\mathcal{g}^2 \right)}
    \cdot\left(\frac{\mathcal{d}_{\max}{}}{n}+\frac{\log{\mathcal{m}}}{n}+\delta\right) \lesssim \frac{\big(\sum\limits_{g\in\overline{S}}{w_g}^2\big)\cdot h^{G_{\overline{S}}}_{\max}}{\min\limits_{g\in[m]}\left(w_g^2 h_{\min}^{g}\right)}
    \cdot\left(\frac{d_{\max}{}}{n}+\frac{\log{m}}{n}+\delta\right).
\end{equation*}
This implies that the error bound for the estimator $\hat{\beta}^{G}$ in \eqref{eq:ell2bound} also serves as an upper bound for the error associated with the estimator $\hat{\beta}^{\Gcal}$. 
\end{proposition}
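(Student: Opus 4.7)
The plan is to prove the inequality factor by factor. Each side is a product of a ``weight-ratio'' term with a ``penalty-size'' term ($\dcal_{\max}/n+\log\mcal/n+\delta$ on the left versus $d_{\max}/n+\log m/n+\delta$ on the right), so I will establish three sub-bounds and multiply them: (i) $\sum_{\gcal\in F^{-1}(S)}\wcal_\gcal^{\,2} \lesssim h^{G_{\overline{S}}}_{\max}\sum_{g\in\overline{S}}w_g^2$; (ii) $\min_{\gcal\in[\mcal]}\wcal_\gcal^{\,2} \geq \min_{g\in[m]}(w_g^2 h_{\min}^{g})$; and (iii) $\dcal_{\max}/n+\log\mcal/n+\delta \lesssim d_{\max}/n+\log m/n+\delta$.

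For (i), the first observation is that whenever $\gcal\in F^{-1}(S)$, the whole set $F(\gcal)$ lies in $\overline{S}$: if $\Gcal_\gcal\subset G_{g'}$ for some $g'\in S$ and $g\in F(\gcal)$, then $G_g\cap G_{g'}\supset\Gcal_\gcal\neq\varnothing$, so $g\in\overline{S}$ by definition of the augmented support. Applying Cauchy--Schwarz to $\wcal_\gcal=\sum_{g\in F(\gcal)}w_g$ yields $\wcal_\gcal^{\,2}\leq|F(\gcal)|\sum_{g\in F(\gcal)}w_g^2$, and swapping the order of summation gives $\sum_{\gcal\in F^{-1}(S)}\wcal_\gcal^{\,2}\leq \sum_{g\in\overline{S}} w_g^2 \sum_{\gcal\in F^{-1}(g)\cap F^{-1}(S)}|F(\gcal)|$, where the restriction to $g\in\overline{S}$ uses the first observation. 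The inner sum is at most $|F^{-1}(g)|\cdot\max_{\gcal\in F^{-1}(g)}|F(\gcal)|$: the first factor is $\leq K$ by the boundedness hypothesis, while for any $\gcal\in F^{-1}(g)$ the quantity $|F(\gcal)|=h_j$ is constant across $j\in\Gcal_\gcal$ (by construction of $\Gcal$, which collects variables sharing identical group memberships) and is bounded by $h^{g}_{\max}\leq h^{G_{\overline{S}}}_{\max}$.

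For (ii), for any $\gcal$ I would set $g^\ast=\argmin_{g\in F(\gcal)}w_g$; because $\Gcal_\gcal\subset G_{g^\ast}$, every $j\in\Gcal_\gcal$ satisfies $h_j\geq h_{\min}^{g^\ast}$, hence $|F(\gcal)|\geq h_{\min}^{g^\ast}$, and therefore $\wcal_\gcal\geq|F(\gcal)|\,w_{g^\ast}\geq h_{\min}^{g^\ast}w_{g^\ast}$; squaring and using $h_{\min}^{g^\ast}\geq 1$ gives $\wcal_\gcal^{\,2}\geq h_{\min}^{g^\ast}w_{g^\ast}^2\geq\min_{g\in[m]}(w_g^2 h_{\min}^{g})$. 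For (iii), the inclusion $\Gcal_\gcal\subset G_g$ for some $g$ immediately gives $\dcal_{\max}\leq d_{\max}$, and Assumption~\ref{ass:compare-bound} together with the trivial inequality $\log x\leq x$ gives $\log\mcal\lesssim\log\max\{d_{\max},m\}\leq \log d_{\max}+\log m\leq d_{\max}+\log m$, which finishes (iii). Multiplying (i)/(ii) with (iii) delivers the claimed bound. The hard part is the bookkeeping in (i)---recognising that the double sum only really charges pairs $(\gcal,g)$ with $g\in\overline{S}$, which is what lets the hypothesis $\max_{g\in\overline{S}}|F^{-1}(g)|\leq K$ control the double-counting; steps (ii) and (iii) are essentially mechanical once the definitions of $F$, $F^{-1}$, $h_{\min}^g$, $h_{\max}^g$ and the constancy of $h_j$ on each $\Gcal_\gcal$ are unfolded.
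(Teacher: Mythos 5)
Your proof is correct and follows essentially the same route as the paper's: Cauchy--Schwarz on $\wcal_\gcal=\sum_{g\in F(\gcal)}w_g$ with the identity $|F(\gcal)|=\hcal_\gcal$, a sum-swap that charges each $g\in\overline{S}$ at most $|F^{-1}(g)|\cdot h^{G_{\overline{S}}}_{\max}$ times, the lower bound $\wcal_\gcal^2\geq\min_g(w_g^2h_{\min}^g)$ via $h_{\min}^{g^\ast}\geq1$, and Assumption~\ref{ass:compare-bound} for the dimension factor. Your treatment is in fact slightly more careful than the paper's in two spots --- justifying explicitly that the double sum only involves $g\in\overline{S}$, and spelling out $\dcal_{\max}\leq d_{\max}$ together with $\log\mcal\lesssim d_{\max}+\log m$ rather than asserting the equivalence of the dimension terms --- but the substance is identical.
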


The quantity $|F^{-1}(g)|$ is the number of groups in $\Gcal$ that has intersect with $G_g$. Proposition~\ref{cor1} requires that every $G_g$ such that $G_g \cap\textit{supp}(\beta^*) \neq \emptyset$ is partitioned into bounded number of non-overlapping groups. On the other hand, Assumption \ref{ass:compare-bound} requires that the maximum of two quantities --- the maximum group size and the number of groups in the given group structure $G$ --- should have the same order as those in the induced structure $\mathcal{G}$. The above requirement always holds for interlocking groups with similar groups and overlap sizes (see Figure~\ref{figillu}). \emph{More importantly, we can always assess the assumption directly on data} by calculating the group sizes and numbers for both $G$ and $\mathcal{G}$. In Section 4.3, we evaluate five group structures from real-world gene pathways and examine the ratio of the maximum of two quantities from each  $G$ and $\mathcal{G}$. Assumption~\ref{ass:compare-bound} looks reasonable in all of these real-world grouping structures. See details in Table \ref{tab:pathways}.

%For nested groups where \(G_g \subset G_{g+1}\) and \(d_g = g \times d\) for a constant $d$, we have $m = \mathcal{m}$ ,  $d_{\max} = m \times d $, and $\mathcal{d}_{\max} = d$. Therefore, the condition is still satisfied. 

\subsection{Lower Bound of Estimation Error}\label{sec:minimax}

Proposition~\ref{cor1} compares the two estimators' upper bounds of estimation errors. While the comparison gives intuitive ideas, it does not rigorously establish the statistical equivalence without the tightness of the error bounds. To strengthen our findings, we now investigate the minimax estimation error rate in linear regression models characterized by overlapping group sparsity. We will focus on the following class of group-wise sparse vectors:
\begin{equation}
\label{class}
    \Omega(G,s_g) = \bigg\{\beta : \sum\limits_{G_g \in G}\mathbbm{1}_{\{\norm{\beta_{G_g}}_2 \neq 0\} } \leqslant s_g\bigg\}
\end{equation}
Following the assumption of \cite{sglo}, we focus on the special case of equal-size groups.
\begin{assu}[Equal size groups]
\label{ass:lowerbound}
The $m$ predefined groups of $G$ come with equal group size $d$, $m \ll p, d \ll \log(p).$ 
     %\TL{The notation $\ll$ is not defined so far. If you want to use them, add it to the notation paragraph in Section 2, together with other similar notations.}
\end{assu}

\begin{theorem}
\label{the:lowerbound}
	(Lower bound of estimation error)·    
	Under Assumptions~\ref{ass:distribution_sub_noise},\ref{ass:distribution_normal_rd} and \ref{ass:lowerbound},  we have
	\begin{equation}\label{eq:minimax}
	\inf\limits_{\hat{\beta}}  \sup\limits_{\beta \in \Omega(G,s_g)} E\|\hat{\beta}-\beta\|_{2}^{2} \gtrsim \frac{\sigma^2\left(s_g(d +\log (\frac{m}{s_g}))\right)}{n}. 
	\end{equation}
\end{theorem}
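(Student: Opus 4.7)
The plan is a standard two-part minimax lower bound via Fano's inequality. The two terms of the rate correspond to (i) a \emph{dense} cost of order $\sigma^2 s_g d/n$ from estimating $s_g d$ real-valued coefficients in known active groups and (ii) a \emph{combinatorial} cost of order $\sigma^2 s_g \log(m/s_g)/n$ from identifying which $s_g$ out of $m$ groups are active; since $\max\{a,b\}\geqslant (a+b)/2$, lower bounding each separately and taking the maximum yields the claimed rate. Under Assumption~\ref{ass:distribution_normal_rd}, $(X,Y)$ is jointly Gaussian with
\[
\mathrm{KL}(P_{\beta_1}\|P_{\beta_2}) \;=\; \tfrac{n}{2\sigma^2}(\beta_1-\beta_2)^T\Theta(\beta_1-\beta_2) \;\leqslant\; \tfrac{c_1 n}{2\sigma^2}\|\beta_1-\beta_2\|_2^2.
\]
For any packing $\{\beta^{(i)}\}_{i=1}^M\subset \Omega(G,s_g)$ with pairwise separation at least $\delta$ and maximum pairwise distance at most $R$, Fano's inequality then gives
\[
\inf_{\hat\beta}\sup_{\beta\in \Omega(G,s_g)}E\|\hat\beta-\beta\|_2^2 \;\gtrsim\; \delta^2\Big(1-\tfrac{c_1 nR^2/(2\sigma^2)+\log 2}{\log M}\Big),
\]
so the task reduces to constructing two packings and calibrating $\delta\asymp R$ with $nR^2/\sigma^2\ll \log M$.

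For the dense term, fix any $S^\ast\subset[m]$ with $|S^\ast|=s_g$ and let $T=\bigcup_{g\in S^\ast}G_g$, which satisfies $|T|\leqslant s_g d$ and whose supported vectors all lie in $\Omega(G,s_g)$. A Varshamov--Gilbert construction on $\{-1,+1\}^{|T|}$, rescaled by $\delta$, produces $M\geqslant 2^{|T|/8}$ vectors with pairwise separation $\gtrsim \delta\sqrt{|T|}$ and ambient radius $\lesssim \delta\sqrt{|T|}$. Tuning $\delta^2\asymp \sigma^2/(c_1 n)$ makes the Fano ratio a constant and gives the lower bound $\sigma^2 s_g d/n$.

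For the combinatorial term, vary the group supports instead. By Gilbert--Varshamov for set systems, there exist $M\geqslant \exp(c' s_g\log(m/s_g))$ subsets $S_1,\ldots,S_M\subset[m]$ of size $s_g$ with $|S_i\triangle S_j|\geqslant s_g/8$. For each $S_i$, place a single value $a$ on one ``private'' coordinate (a coordinate shared with no other group) of each $g\in S_i$ and $0$ elsewhere, so that $\beta^{(i)}\in \Omega(G,s_g)$ and $\|\beta^{(i)}-\beta^{(j)}\|_2^2 = a^2|S_i\triangle S_j|$. Assumption~\ref{ass:lowerbound} ($m\ll p$, $d\ll\log p$) ensures that enough groups possess private coordinates, so we may restrict the combinatorial packing to this sub-collection without affecting the rate. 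Choosing $a^2\asymp \sigma^2\log(m/s_g)/n$ balances KL divergence against $\log M$ and yields the lower bound $\sigma^2 s_g\log(m/s_g)/n$.

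The chief technical obstacle lies in the combinatorial step: ensuring that the packing vectors remain in $\Omega(G,s_g)$ despite overlapping groups, because activating a coordinate can inadvertently switch on multiple groups and inflate $\sum_g\mathbbm{1}\{\|\beta_{G_g}\|_2\neq 0\}$. The private-coordinate construction resolves the generic case; pathological overlap patterns can be handled by inflating $s_g$ by a constant factor that absorbs the incidentally activated groups and by invoking the density $m\ll p$ from Assumption~\ref{ass:lowerbound}. The remaining ingredients---Varshamov--Gilbert packings, KL control, and Fano's inequality, together with a Markov-style conversion from probability to expectation---are standard.
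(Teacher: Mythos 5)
Your overall strategy is a legitimate alternative to the paper's: you run Fano twice with two separate packings (a dense one over coordinates within a fixed group support, and a combinatorial one over group supports) and take the maximum, whereas the paper builds a single combined packing (its Lemmas~\ref{lem:lowerbound1} and \ref{lem:lowerbound2}) whose log-cardinality is already of order $s_g\bigl(d+\log(m/s_g)\bigr)$, and applies the generalized Fano lemma once. The KL computation, the calibration of the perturbation scale against $\log M$, and the final rate are the same in both routes, so if your two packings were valid the argument would go through.

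There is, however, a genuine gap in the membership claims. For the dense term you assert that every vector supported on $T=\bigcup_{g\in S^\ast}G_g$ lies in $\Omega(G,s_g)$; this is false for overlapping groups, since a coordinate of $T$ may belong to a group $g'\notin S^\ast$, so such a vector can activate strictly more than $s_g$ groups. The clean fix --- and the one the paper uses --- is to force $\beta$ to vanish on the \emph{entire} union $\bigcup_{g\notin S^\ast}G_g$ rather than merely outside $T$; this guarantees $\sum_g\mathbbm{1}\{\|\beta_{G_g}\|_2\neq 0\}\leqslant s_g$ for free, but the number of remaining free coordinates (the paper's $k_a$) can then drop below $s_g d$, which is why the paper explicitly reduces to the non-overlapping configuration where $k_a=s_gd$ to extract the $d$ term. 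Your combinatorial construction has the analogous problem: the existence of a ``private'' coordinate in each group does not follow from Assumption~\ref{ass:lowerbound} ($m\ll p$, $d\ll\log p$ say nothing about the overlap degree $h_j$), and the proposed repair of ``inflating $s_g$ by a constant factor'' fails whenever a single coordinate belongs to unboundedly many groups. So as written, both of your packings may leave $\Omega(G,s_g)$; replacing ``support contained in $T$'' by ``zero on the union of the inactive groups'' repairs membership in both steps and reduces your argument to essentially the paper's, at the cost of the same implicit restriction on how severe the overlaps can be.
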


Combining Theorem~\ref{thm:two-bounds} and Theorem~\ref{the:lowerbound}, we can see that both estimators attain the minimax error rate and are statistically equivalent, as demonstrated by the following corollary.
\begin{coro}
\label{cor2}
Under Assumptions~\ref{ass:distribution_sub_noise}--\ref{ass:compare-bound}, if $h^{G_{\overline{S}}}_{\max} \asymp 1$, both $\hat{\beta}^G$ and $\hat{\beta}^{\Gcal}$ attain the minimax estimation rate specified in \eqref{eq:minimax}.
\end{coro}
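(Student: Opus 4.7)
The argument combines the upper bound in Theorem~\ref{thm:two-bounds}(1), the comparison in Proposition~\ref{cor1}, and the lower bound in Theorem~\ref{the:lowerbound}. The plan is to show that the right-hand side of \eqref{eq:ell2bound} collapses, under the additional condition $h^{G_{\overline{S}}}_{\max}\asymp 1$ and the equal-size group setting of Assumption~\ref{ass:lowerbound} implicit in the statement of \eqref{eq:minimax}, to a constant multiple of $\sigma^{2} s_g(d+\log(m/s_g))/n$, and then transfer the conclusion to $\hat\beta^{\Gcal}$ via Proposition~\ref{cor1}.

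\textbf{Step 1: Simplify the upper bound for $\hat\beta^{G}$.} Assumption~\ref{ass:lowerbound} gives $d_{\max}=d$. Using $w_g\equiv 1$ (or uniformly bounded weights), the hypothesis $h^{G_{\overline{S}}}_{\max}\asymp 1$ forces $h^{g}_{\min},h^{g}_{\max}\asymp 1$ for every $g\in\overline{S}$ and $\min_{g\in[m]}(w_g^{2}h^{g}_{\min})\asymp 1$. Since $h^{G_{\overline{S}}}_{\max}\asymp 1$ also means each variable of $G_{\overline{S}}$ belongs to only $O(1)$ original groups, the augmented support satisfies $\overline{s_g}\asymp s_g$. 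Hence the multiplicative factor in \eqref{eq:ell2bound} becomes $\asymp |\overline{S}|\asymp s_g$, so for the specified $\lambda_n$ we obtain, with probability at least $1-e^{-c_{3}n\delta}$,
\[
\|\hat\beta^{G}-\beta^{\ast}\|_{2}^{2}\;\lesssim\;\sigma^{2}\,s_g\Bigl(\tfrac{d}{n}+\tfrac{\log m}{n}+\delta\Bigr).
\]

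\textbf{Step 2: From tail to expectation.} The lower bound \eqref{eq:minimax} is stated in expectation, so I would integrate the sub-exponential tail. Writing $\delta=t/n$ and using the standard identity $E[Z]=\int_{0}^{\infty}P(Z\geq u)\,du$ for the nonnegative random variable $Z=\|\hat\beta^{G}-\beta^{\ast}\|_{2}^{2}-C\sigma^{2}s_g(d+\log m)/n$, the tail $e^{-c_{3}t}$ integrates to a constant, yielding
\[
E\|\hat\beta^{G}-\beta^{\ast}\|_{2}^{2}\;\lesssim\;\sigma^{2}\,s_g\cdot\frac{d+\log m}{n}.
\]
In the sparse regime implicit in Assumption~\ref{ass:lowerbound} ($s_g$ polynomially smaller than $m$, which must hold anyway for the sparsity condition \eqref{eq:orig-sparsity} to be meaningful) one has $\log m\asymp\log(m/s_g)$, so the upper bound matches the minimax lower bound \eqref{eq:minimax} up to a constant, establishing minimax optimality of $\hat\beta^{G}$.

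\textbf{Step 3: Transfer to $\hat\beta^{\Gcal}$.} Proposition~\ref{cor1} (whose hypotheses are active thanks to Assumption~\ref{ass:compare-bound} and the fact that $h^{G_{\overline{S}}}_{\max}\asymp 1$ implies $\max_{g\in\overline{S}}|F^{-1}(g)|=O(1)$) shows that the right-hand side of \eqref{eq:parell2bound} is dominated by the right-hand side of \eqref{eq:ell2bound}. Repeating Step~2 with the bound \eqref{eq:parell2bound} in place of \eqref{eq:ell2bound} gives the identical expected-error bound, and Theorem~\ref{the:lowerbound} then certifies minimax optimality of $\hat\beta^{\Gcal}$.

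\textbf{Main obstacle.} The routine part is the tail-to-expectation conversion. The genuinely delicate step is Step~1: verifying that $h^{G_{\overline{S}}}_{\max}\asymp 1$ is strong enough to collapse the seemingly intricate ratio $(\sum_{g\in\overline{S}}w_g^{2})h^{G_{\overline{S}}}_{\max}/\min_{g}(w_g^{2}h^{g}_{\min})$ to order $s_g$ and to guarantee $\overline{s_g}\asymp s_g$. This relies on counting adjacent groups carefully: each $g\in S$ contributes at most $\sum_{j\in G_g}(h_{j}-1)=O(d)$ neighbors, but together with $|S|\leq s_g$ and the constant overlap degree, one obtains a bound proportional to $s_g$ rather than $s_g d$ once the argument is tightened using that distinct variables of $G_g$ typically share the same neighbor group; alternatively, an inspection of the sum $\sum_{g\in\overline{S}}w_g^{2}$ alone (without inflating it through augmentation) already yields the desired $O(s_g)$ scaling once $h^{G_{\overline{S}}}_{\max}\asymp 1$ is invoked.
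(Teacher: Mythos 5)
Your overall strategy---combine the high-probability upper bound of Theorem~\ref{thm:two-bounds} with the lower bound of Theorem~\ref{the:lowerbound}, then pass the conclusion to $\hat\beta^{\Gcal}$ via Proposition~\ref{cor1}---is exactly the route the paper takes; in fact the paper offers no separate argument for this corollary beyond the sentence preceding it, so your proposal is more explicit than the source. The difficulty is that your Step~1 contains a genuine gap, one you partly flag yourself in the ``main obstacle'' paragraph but do not close. The hypothesis $h^{G_{\overline{S}}}_{\max}\asymp 1$ does \emph{not} imply $\overline{s_g}\asymp s_g$. Bounded overlap degree only gives $\overline{s_g}\lesssim s_g\, d\, h^{G_{\overline{S}}}_{\max}$: each of the up-to-$d$ variables in a support group can pull in a \emph{distinct} neighboring group. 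A hub-type structure makes this sharp---take $G_1$ of size $d$ and $d$ further equal-size groups each sharing exactly one variable with $G_1$ and otherwise disjoint; then $h_{\max}=2$ but $S=\{1\}$ yields $\overline{s_g}=d+1=s_g(d+1)$, and the upper bound \eqref{eq:ell2bound} is off from \eqref{eq:minimax} by a factor of $d$. Your proposed repair (``distinct variables of $G_g$ typically share the same neighbor group'') is an unproven structural assumption, not a consequence of the stated hypotheses; without it, Step~1 does not deliver the $O(s_g)$ prefactor. Either $\overline{s_g}\asymp s_g$ must be added as an explicit hypothesis (it holds for interlocking structures like Figure~\ref{figillu}, which is presumably what the authors have in mind), or the claim should be restricted to such structures.

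Two secondary points. First, your tail-to-expectation conversion in Step~2 is not quite licensed as written: the probability statement in Theorem~\ref{thm:two-bounds} is given only for $\delta\in(0,1)$, and $\|\hat\beta^{G}-\beta^{\ast}\|_2^2$ has no a priori deterministic bound, so the identity $E[Z]=\int_0^\infty P(Z\geq u)\,du$ needs either an extension of the tail bound to all $\delta>0$ or a truncation/crude-bound argument on the complementary event. (The paper silently ignores the expectation-versus-high-probability mismatch, so you are at least being more careful than the source.) Second, matching $\log m$ in the upper bound to $\log(m/s_g)$ in the lower bound requires $s_g\leq m^{1-\epsilon}$ or $d\gtrsim \log s_g$; you note this, and it is a reasonable reading of the sparsity regime, but it too is an unstated assumption that should be made explicit.
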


\subsection{Support Recovery Consistency}\label{sec:support}

We now proceed to analyze the support recovery consistency of \( \hat{\beta}^G \) and  \( \hat{\beta}^{\Gcal} \). We begin by introducing more quantities for our analysis. For any \(\beta \in \mathbb{R}^p\), we define the  mapping \(r^G(\beta): \mathbb{R}^p \to \mathbb{R}^p\) as:
\begin{align}
     r^G(\beta)_j = \begin{cases}
       \beta_j  \sum\limits_{g \in \mathsf{G}_{\textit{supp}(\beta)}, G_g \cap j \neq \emptyset} \frac{w_g}{\|\beta_{G_g \cap \textit{supp}(\beta)}\|_2}, & \text{if } j \in \textit{supp}(\beta), \\
       0, & \text{if } j \notin \textit{supp}(\beta).
     \end{cases}
\end{align}
$r^G(\beta)$ is closely related to subgradients of the penalty and is used for determining optimality conditions. In the lasso case, $r^G(\beta)$ is the sign vector, which is exactly the lasso penalty. When focusing on $\beta^*$, we write  \(\mathbf{S} = \textit{supp}(\beta^*)\), \(\mathbf{r}^G = r^G(\beta^*)\), and  $\beta^*_{\min}=\min \left\{|\beta_j^*| ; \beta_j^* \neq 0\right\}$.

Our analysis essentially follows the strategy in \cite{svsw}. The major difference is that we study the problem with a more tailored setup for the random design rather than the fixed design as in \cite{svsw}. Using random designs, as discussed before, is helpful to compare the two estimators \( \hat{\beta}^G \) and  \( \hat{\beta}^{\Gcal} \) directly.  We now introduce additional assumptions used to study the pattern consistency, which can be seen as the population-level counterpart of the assumptions in \cite{svsw}.

\begin{assprime}[Gaussian noise for the response variable]\label{ass:distribution_normal noise}
 Under model \eqref{eq:linear-model}, the coordinates of $\varepsilon$ are i.i.d  from  $N(0,\sigma^2)$.
\end{assprime}

\begin{assu}[Irrepresentable condition]
\label{ass:Irrepresentable_condition}
For any $\beta \in \mathbb{R}^{p}$, define
$$\phi_{\mathbf{S}}^c(\beta_{\mathbf{S}^c}) = \sum_{g \in [m] \setminus \mathsf{G}_{\mathbf{S}}} w_g \|\beta_{ \mathbf{S}^c \cap G_g}\|_2,$$ and its dual norm $$(\phi_{\mathbf{S}}^c)^*[u] = \sup_{\phi_{\mathbf{S}}^c(\beta_{\mathbf{S}^c}) \leq 1} \beta_{\mathbf{S}^c}^\top u.$$ 
Assume that there exists $\tau \in (0,\frac{2}{3}]$, such that  
\begin{equation}
\label{condition: irre}
    (\phi_{\mathbf{S}}^c)^*[\Theta_{\mathbf{S}^c\mathbf{S}} \Theta_{\mathbf{S}\mathbf{S}}^{-1} \mathbf{r}_\mathbf{S}] \leqslant 1 - \frac{3\tau}{2}.
\end{equation}
 \end{assu}

Assumption~\ref{ass:distribution_normal noise} is widely used to study support recovery consistency of linear regression. For example, in addition to \cite{svsw}, it is also used in \cite{zhao2006model,wainwright2009sharp,wainwright2019high}. Assumption~\ref{ass:Irrepresentable_condition} is the population-level version of the irrepresentable condition as discussed in \citet{zhao2006model} and \citet{wainwright2019high}. 

\begin{theorem}
\label{pattern}
Suppose Assumption \ref{ass:distribution_normal noise}, Assumption~\ref{ass:distribution_normal_rd} and Assumption~\ref{ass:Irrepresentable_condition} hold. Under model~\eqref{eq:linear-model},  assume the support of $\beta^*$ is compatible with the overlapping group lasso penalty, such that the zero positions are given by an exact union of groups in $G$. Mathematically, that means
    \begin{equation}
    \label{condition:hull}
        [p] \setminus \big\{\bigcup_{G_g \cap \mathbf{S} = \emptyset} G_g\big\} = \mathbf{S}.
    \end{equation}

\begin{enumerate}
    \item \label{the6part1}  If  
    $$\log(p-|\mathbf{S}|) \geqslant |\mathbf{S}|,$$ 
    \begin{equation}
\label{condition: reg}
       \lambda_n|\mathbf{S}|^{\frac{1}{2}} \lesssim \min\Big\{\frac{\beta_{\min}^* }{A_{\mathbf{S}}},\frac{\beta_{\min}^*a_{\mathbf{S^c}} }{A_{\mathbf{S}}\sum\limits_{g\in \mathsf{G}_{\mathbf{S}}} w_g \sqrt{\left|G_g \cap \mathbf{S}\right|}}\Big\},
\end{equation}
\begin{equation}
\label{condition: sample}
   n \gtrsim \max\Big\{\frac{\sigma^2\log(p-|\mathbf{S}|)}{a^2_{\mathbf{S}^c}\lambda^2_n},\frac{ \max\limits_{j \in \mathbf{S}}\{(\beta^*_j)^2\}\log(p-|\mathbf{S}|)}{a^2_{\mathbf{S}^c}\lambda^2_n}\Big\},
\end{equation}
where $ a_{\mathbf{S}} = \min\limits_{g \in \mathsf{G}_{\mathbf{S}}} \frac{w_g}{d_g}$,  $  a_{\mathbf{S}^c} = \min\limits_{g \in \mathsf{G}_{\mathbf{S^c}}} \frac{w_g}{d_g}$, and $ A_{\mathbf{S}} =  h_{\max}(\mathbf{G_{S}})\max\limits_{g \in \mathsf{G}_{\mathbf{S}}}w_g\|u\|_1$.

Then for the overlapping group lasso estimator $\hat{\beta}^G$, we have:
\begin{equation}
\label{pattern1}
\begin{aligned}
    \mathbb{P}\Big( \textit{supp}(\hat{\beta}^G) \neq \mathbf{S}\Big) 
    \leqslant & 8\exp\Big(-\frac{n}{2}\Big) +  \exp\Big( - \frac{n a_{\mathbf{S}}^2\tau^2 \gamma_{\min}(\Theta_{\mathbf{S}\mathbf{S}})}{4\left\|\mathbf{r}_\mathbf{S}\right\|^2_2  \gamma_{\max}\left(\Theta_{\mathbf{S}^c\mathbf{S}^c|\mathbf{S}}\right)} \Big)
  \\   & +\exp \Big(-\frac{n \lambda_n^2\tau^2 a_{\mathbf{S}^c}^2}{144 \sigma^2}\Big) + 2|\mathbf{S}| \exp \Big(-\frac{n c^2(\mathbf{S},G)}{2 \sigma^2}\Big)
\end{aligned}
\end{equation}
with $$c(\mathbf{S},G) \asymp  \min\Big\{\frac{\beta^*_{\min}}{ A_{\mathbf{S}}},\frac{\beta^*_{\min}  a_{\mathbf{S^c}}}{A_{\mathbf{S}} \sum\limits_{g\in \mathsf{G}_{\mathbf{S}}} w_g \sqrt{\left|G_g \cap \mathbf{S}\right|}}\Big\}.$$
    \item  \label{the6part2} Furthermore, if  $\max_{g \in \mathsf{G}_{\mathbf{S}}}F^{-1}(g) \asymp 1$, for the  proposed estimator \( \hat{\beta}^{\Gcal} \) and assuming $\max_{g \in \mathsf{G}_{\mathbf{S}}}F^{-1}(g) \asymp 1$,  the property holds:
\begin{equation}
\label{pattern2}
\begin{aligned}
    \mathbb{P}\Big( \textit{supp}(\hat{\beta}^\Gcal) \neq \mathbf{S}\Big) 
    \leqslant & 8\exp\Big(-\frac{n}{2}\Big) +  \exp\Big( - \frac{n a_{\mathbf{S}}^2\tau^2 \gamma_{\min}(\Theta_{\mathbf{S}\mathbf{S}})}{4\left\|\mathbf{r}^{\Gcal}_\mathbf{S}\right\|^2_2  \gamma_{\max}\left(\Theta_{\mathbf{S}^c\mathbf{S}^c|\mathbf{S}}\right)} \Big)
  \\   & +\exp \Big(-\frac{n \lambda_n^2\tau^2 a_{\mathbf{S}^c}^2}{144 \sigma^2}\Big) + 2|\mathbf{S}| \exp \Big(-\frac{n c^2(\mathbf{S},\Gcal)}{2 \sigma^2}\Big),
\end{aligned}
\end{equation}
with $$c(\mathbf{S},\Gcal) \asymp  \min\Big\{\frac{\beta_{\min}^* }{A_{\mathbf{S}}},\frac{\beta_{\min}^*a_{\mathbf{S^c}} }{A_{\mathbf{S}}\sum\limits_{\gcal \in \mathsf{\Gcal}_{\mathbf{S}}} w_\gcal \sqrt{\left|\Gcal_\gcal \cap \mathbf{S}\right|}}\Big\}.$$
\end{enumerate}

\end{theorem}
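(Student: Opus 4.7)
My plan is to prove both parts by a primal-dual witness (PDW) construction, adapted from the fixed-design analysis of \cite{svsw} to the Gaussian random-design setting. The four exponential terms in the bounds \eqref{pattern1} and \eqref{pattern2} foreshadow the four probabilistic events that must be controlled, and I would organize the argument around them.

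\textbf{Step 1 (PDW construction).} I would first construct a candidate solution $(\tilde{\beta}, \tilde{z})$ by restricting the problem to $\mathbf{S}$: set $\tilde{\beta}_{\mathbf{S}^c} = 0$ and let $\tilde{\beta}_\mathbf{S}$ minimize the quadratic loss plus $\lambda_n \phi_G(\beta)$ (resp.\ $\lambda_n\psi_\Gcal(\beta)$) over vectors supported on $\mathbf{S}$. Let $\tilde{z}_\mathbf{S}$ be any subgradient of the penalty at $\tilde{\beta}_\mathbf{S}$ inside $\mathbf{S}$; extend it to $\tilde{z}_{\mathbf{S}^c}$ via the KKT stationarity condition $\tfrac{1}{n}X^\top(Y - X\tilde{\beta}) = \lambda_n \tilde{z}$. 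Because the penalty is convex and the support condition \eqref{condition:hull} makes $\mathbf{S}$ compatible with the group structure, verifying strict dual feasibility $(\phi_{\mathbf{S}}^c)^*[\tilde{z}_{\mathbf{S}^c}] < 1$ implies that $(\tilde{\beta}, \tilde{z})$ is in fact the unique optimal solution with $\textit{supp}(\hat{\beta}^G) \subseteq \mathbf{S}$. A separate lower bound on $|\tilde{\beta}_j|$ for $j \in \mathbf{S}$ then gives $\textit{supp}(\hat{\beta}^G) = \mathbf{S}$.

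\textbf{Step 2 (decomposing the dual variable).} Plugging $Y = X\beta^* + \varepsilon$ into the stationarity condition and solving on $\mathbf{S}^c$ yields the standard decomposition
\[
\tilde{z}_{\mathbf{S}^c} = \underbrace{\tfrac{1}{n}X_{\mathbf{S}^c}^\top\!\bigl(I - X_\mathbf{S}(X_\mathbf{S}^\top X_\mathbf{S})^{-1}X_\mathbf{S}^\top\bigr)\varepsilon/\lambda_n}_{\text{noise term}} \;+\; \underbrace{\tfrac{1}{n}X_{\mathbf{S}^c}^\top X_\mathbf{S}(\tfrac{1}{n}X_\mathbf{S}^\top X_\mathbf{S})^{-1}\mathbf{r}_\mathbf{S}}_{\text{design term}}.
\]
Under Assumption~\ref{ass:distribution_normal_rd}, the design term concentrates around its population analogue $\Theta_{\mathbf{S}^c\mathbf{S}}\Theta_{\mathbf{S}\mathbf{S}}^{-1}\mathbf{r}_\mathbf{S}$. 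The irrepresentable condition \eqref{condition: irre} gives $(\phi_\mathbf{S}^c)^*$ of this population vector bounded by $1 - 3\tau/2$, so it suffices to show each term perturbs $(\phi_\mathbf{S}^c)^*$ by at most $\tau/2$. Here I would produce the $8\exp(-n/2)$ event by controlling the top eigenvalue of $\tfrac{1}{n}X_\mathbf{S}^\top X_\mathbf{S}$ (Bai--Yin / Davidson--Szarek for Gaussian matrices), and the second exponential by a Hanson--Wright/standard Gaussian chaos bound on the design-term perturbation, which naturally produces the ratio $a_\mathbf{S}^2 \gamma_{\min}(\Theta_{\mathbf{S}\mathbf{S}})/\bigl(\|\mathbf{r}_\mathbf{S}\|_2^2\,\gamma_{\max}(\Theta_{\mathbf{S}^c\mathbf{S}^c|\mathbf{S}})\bigr)$. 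The noise term, conditionally on $X$, is Gaussian with controllable variance $\sigma^2/n$; bounding its dual norm using the definition of $(\phi_\mathbf{S}^c)^*$ together with $a_{\mathbf{S}^c} = \min_{g\in\mathsf{G}_{\mathbf{S}^c}} w_g/d_g$ yields the third term $\exp(-n\lambda_n^2\tau^2 a_{\mathbf{S}^c}^2/(144\sigma^2))$.

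\textbf{Step 3 ($\ell_\infty$ control on $\mathbf{S}$).} For the fourth term, I would bound $\|\tilde{\beta}_\mathbf{S} - \beta^*_\mathbf{S}\|_\infty$ using the restricted normal equations:
\[
\tilde{\beta}_\mathbf{S} - \beta^*_\mathbf{S} = (\tfrac{1}{n}X_\mathbf{S}^\top X_\mathbf{S})^{-1}\bigl(\tfrac{1}{n}X_\mathbf{S}^\top\varepsilon - \lambda_n \tilde{z}_\mathbf{S}\bigr).
\]
The subgradient term contributes an $O(\lambda_n \|\mathbf{r}_\mathbf{S}\|_\infty/a_\mathbf{S}) \lesssim \lambda_n|\mathbf{S}|^{1/2}A_\mathbf{S}$ deterministic bias controlled by the first half of \eqref{condition: reg}, while the noise term is coordinatewise Gaussian with variance of order $\sigma^2/(n\gamma_{\min}(\Theta_{\mathbf{S}\mathbf{S}}))$ on the event from Step~2. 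Combined with the definition of $c(\mathbf{S},G)$ and the sample-size condition \eqref{condition: sample}, the union bound over $j\in\mathbf{S}$ produces the $2|\mathbf{S}|\exp(-nc^2/(2\sigma^2))$ term and ensures $\tilde{\beta}_j$ keeps the sign of $\beta^*_j$ for all $j\in\mathbf{S}$, completing the support recovery for $\hat{\beta}^G$.

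\textbf{Step 4 (adaptation for $\hat{\beta}^\Gcal$).} Because $\Gcal$ is a non-overlapping partition of $[p]$ that refines the set $\mathbf{S}$ (thanks to \eqref{condition:hull}), the PDW argument goes through verbatim with $\phi_G$ replaced by $\psi_\Gcal$, $\mathbf{r}$ by $\mathbf{r}^\Gcal$, and the dual norm replaced by its non-overlapping counterpart. The hypothesis $\max_{g\in\mathsf{G}_\mathbf{S}}|F^{-1}(g)|\asymp 1$ guarantees that the relevant weights $\wcal_\gcal$ aggregated from \eqref{eq:weight} are of the same order as $w_g$, so the quantities $a_\mathbf{S}, a_{\mathbf{S}^c}, A_\mathbf{S}$ governing $c(\mathbf{S},\Gcal)$ are of the same order as those for $\hat{\beta}^G$, and the irrepresentable condition is preserved under the refinement. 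The four exponential terms thus take the identical form, only with $\mathbf{r}^\Gcal$ in place of $\mathbf{r}$.

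\textbf{Main obstacle.} The trickiest part, in my view, is handling the dual norm $(\phi_\mathbf{S}^c)^*$ in the overlapping case: unlike standard $\ell_\infty$, it does not factorize coordinatewise, so the Gaussian-concentration step for the noise term in Step~2 requires expressing $(\phi_\mathbf{S}^c)^*$ through a variational representation over groups and then applying a discretization or Slepian-type argument to control $\max_g \|(X^\top\varepsilon)_{G_g\cap\mathbf{S}^c}\|_2/(w_g d_g)$ uniformly. Carrying the induced weights $\wcal_\gcal$ correctly through this variational bound is also where the assumption $\max_g |F^{-1}(g)|\asymp 1$ becomes essential in Step~4.
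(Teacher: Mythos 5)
Your overall route --- restrict the problem to $\mathbf{S}$, pad with zeros, verify the optimality conditions on $\mathbf{S}^c$ via the population irrepresentable condition plus Gaussian concentration, and separately lower-bound the entries on $\mathbf{S}$ --- is the same primal--dual witness strategy the paper follows (Parts II--V of its proof), and your four probabilistic events correspond to the paper's conditions. One cosmetic difference: for the design term the paper uses the conditional decomposition $X_j^\top=\Theta_{j\mathbf{S}}\Theta_{\mathbf{S}\mathbf{S}}^{-1}X_{\mathbf{S}}^\top+E_j^\top$ followed by Lipschitz concentration of Gaussian functions and a Sudakov--Fernique comparison, rather than a Hanson--Wright argument; either could be made to work.

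The genuine gap is in your Step 2. You write the design component of $\tilde z_{\mathbf{S}^c}$ as $\tfrac{1}{n}X_{\mathbf{S}^c}^\top X_{\mathbf{S}}(\tfrac{1}{n}X_{\mathbf{S}}^\top X_{\mathbf{S}})^{-1}\mathbf{r}_{\mathbf{S}}$ with $\mathbf{r}_{\mathbf{S}}=r(\beta^*)_{\mathbf{S}}$, but the stationarity condition of the restricted problem produces $r_{\mathbf{S}}(\tilde\beta_{\mathbf{S}})$, the subgradient of the group penalty at the \emph{witness}, not at the truth. For the overlapping group lasso the map $\beta\mapsto r(\beta)$ is genuinely nonlinear, while the irrepresentable condition \eqref{condition: irre} is stated at $\beta^*$; to invoke it you must control $\|r_{\mathbf{S}}(\tilde\beta_{\mathbf{S}})-r_{\mathbf{S}}(\beta^*_{\mathbf{S}})\|_1$ in terms of $\|\tilde\beta_{\mathbf{S}}-\beta^*_{\mathbf{S}}\|_{\infty}$. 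The paper does this with a Lipschitz-type perturbation bound on $r$ (Lemma 12 of \citet{svsw}), which yields the factor $\frac{6}{\beta^*_{\min}}\sum_{g\in\mathsf{G}_{\mathbf{S}}}w_g\sqrt{|G_g\cap\mathbf{S}|}$ --- this is precisely where the second argument of the minimum in \eqref{condition: reg} and in $c(\mathbf{S},G)$ originates, and your proposal gives no account of it. Without that step the dual-feasibility verification does not close. The remainder of your plan (the $\ell_\infty$ bound on $\mathbf{S}$, the eigenvalue and diagonal controls producing the $8\exp(-n/2)$ term, and the reduction of the $\Gcal$ case via $\max_{g}|F^{-1}(g)|\asymp 1$ together with the norm comparisons $\phi^G_{\mathbf{S}}\leqslant\phi^{\Gcal}_{\mathbf{S}}$) matches the paper.
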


The conditions involved in the above theorem can be seen as the population-level counterparts of those used in \cite{svsw} for the overlapping group lasso estimator under the fixed design. As an illustration of the conditions, in the lasso context, \eqref{condition: reg} and \eqref{condition: sample} reduce to the typical scaling of $n \approx \log p$ and $\lambda_n \approx \sigma(\log 
p/n)^{1/2}$. Together with the requirements on the sample size$|\mathbf{S}|\log(p-|\mathbf{S}|)$ and on $\beta^*_{\min}$, they match the requirements in \citet{wainwright2009sharp} for the support recovery by the lasso regression. For non-overlapping group lasso estimators, our assumptions align with the conditions outlined in Corollary 9.27 of \citet{wainwright2019high} under the random design. 

Theorem~\ref{pattern} shows that both estimators consistently identify the support of the group sparse regression coefficients. Compared to the previous study of the overlapping group lasso estimator of \cite{svsw}, we switch to the random design of $X$, because such a setting renders a common basis for the comparison of the two estimators directly. Specifically, comparing \eqref{pattern1} and \eqref{pattern2}, as well as the common conditions, we can see that the two estimators give comparable performance in support recovery with respect to the sampling complexity.

\section{Simulation}
\label{sec:simu}

In this section, we assess the performance of the proposed estimator to demonstrate our claimed properties. At a high level, we want to use the simulation experiments to show that the proposed estimator based on \eqref{eq:our-norm} gives similar statistical performance to the overlapping group lasso estimator while admitting much better computational efficiency. Our estimator achieves this primarily because of the tightest separable relaxation property of Theorem~\ref{addtheo}, which can be attributed to two designs of the norm \eqref{eq:our-norm}: the induced partition $\Gcal$ and the corresponding overlapping-based weights $\wcal$. Therefore, in our simulation experiments, we will also evaluate the effects of these two designs by comparing the proposed estimator with other benchmark estimators. In Sections~\ref{subsec:simu_int}--\ref{secsec:sim-gene}, we evaluate the performance of the proposed estimator and compare it with the weighted lasso estimator with overlapping-based weights, as discussed in \eqref{eq:weighted-lasso}, under various configurations. This sequence of experiments will demonstrate the importance of our proposed partition $\Gcal$.  In Section~\ref{secsec:sim-weight}, we compare the proposed estimator with two other group lasso estimators, using the same $\Gcal$ but overlapping-ignorant weights, under the same set of configurations. The results will demonstrate the importance of using the proposed overlapping-based weights $\wcal$.

Two MATLAB-based solvers for the overlapping group lasso problems are employed. The first solver \citep{ES} is from the SLEP package \citep{Liu:2009:SLEP:manual}. %FoGLasso
It can handle general overlapping group structures. The second solver is from the SPAM package \citep{mairal2014spams}, which is designed to solve the overlapping group lasso problem when the groups can be represented by tree structures, formally defined in Section~\ref{secsec:sim-tree}. Therefore, the SPAM solver is used only for the experiment in Section~\ref{secsec:sim-tree}. The SLEP solver is more general, but using the two solvers can provide a more thorough evaluation across multiple implementations. For a fair comparison, the SLEP and SPAM package solvers were also applied to solve lasso and non-overlapping group lasso estimators in our benchmark set to ensure that the timing comparison implementation is consistent.  

As an important side note, SLEP is widely acknowledged as one of the most efficient solvers for the overlapping group lasso problem \citep{ES,chen2012smoothing,hmgm}. Still, for non-overlapping group lasso problems, alternative solvers, such as \cite{afua}, may offer much better computational efficiency. For example, \citet{afua} reported that their solver is about 10--30 times faster than the SLEP package when solving non-overlapping group lasso problems. Such solvers are available because of the separability in non-overlapping groups and are not available for overlapping problems. For a fair comparison to avoid implementation bias, we use SLEP to solve for our estimator. Therefore, the computational advantage we demonstrate will be conservative. In practice, with the better solvers used, our method would enjoy an even more substantial computational advantage over the original overlapping group lasso than reported in the experiments.

\paragraph{Evaluation criterion.} For each configuration, we generate 50 independent replicates and report the average result.  The performance assessment is conducted in three aspects:
\begin{itemize}
\item \textbf{Regularization path computing time.} We begin by performing a line search to determine two pivotal values: \(\lambda_{\text{max}}\) and \(\lambda_{\text{min}}\). The search for \(\lambda_{\text{max}}\) starts at \(10^{8}\) and decreases progressively, multiplying by 0.9 at each iteration, until reaching the first value at which no variables are selected. In contrast, the determination of \(\lambda_{\text{min}}\) starts from \(10^{-8}\) and increases incrementally, multiplying by 1.1 each time, until the first value is found that retains the entire set of variables. Following this, we select 50 values in log-scale within the range $[\lambda_{\min}, \lambda_{\max}]$. Subsequently, We compute the entire regularization path using these $\lambda$ values and record the computation time associated with this process as a performance metric. The computing time evaluation mimics the most practical situation where the whole regularization path is solved for tuning purposes.

\item \textbf{Relative $\ell_2$ estimation error}: From the entire regularization path, we select the smallest relative estimation error, defined as $\norm{\hat{\beta}-\beta^{\ast}}_2/\norm{\beta^{\ast}}_2$, as the estimation error for the method. This serves as the measure of the ideally tuned performance.

\item \textbf{Support discrepancy}: From the entire regularization path, we select the smallest support discrepancy, defined as $|\{i \in [p]: |\text{sign}(\hat{\beta}_i)| \neq |\text{sign}(\beta^{\ast}_i)|\}|/p$. Such a (normalized) Hamming distance is commonly used as a performance metric for support recovery  \citep{grave2011trace,svsw} to quantify the accuracy of pattern selection.

\end{itemize}

\subsection{Interlocking group structure}\label{subsec:simu_int}

In the first set of experiments, we evaluate the performances based on interlocking group structure (Figure~\ref{fig_OG}). This group structure exhibits a relatively low degree of overlap and is frequently used for evaluating overlapping group lasso methods \citep{ES,chen2012smoothing}. Specifically, we set $m$ interlocked groups with $d$ variables in each group and  $0.2d$ variables in each intersection. For example, $G_1 = \{1,\cdots,10\}, G_2 = \{8, 9,\cdots,17\},\cdots,G_{10} = \{33, 34,\cdots,42\}$ when $m = 5$ and $d = 10$. In the experiment, we will vary $m$ and $d$ to evaluate their impacts on the performance.

Following the strategy of \cite{hvs}, we generate the data matrix $X$ from a Gaussian distribution $N(0, \Theta)$, where $\Theta$ is determined to match the correlations within the specified group structure. Initially, we construct a matrix $\tilde{\Theta}$ as follows:
\[
\tilde{\Theta}_{ij} =
\begin{cases}
1, & \text{if } i = j, \\
0, & \text{if $\beta_i$ and $\beta_j$ belong to different groups in $G$}, \\
0.6, & \text{if $\beta_i$ and $\beta_j$ are in the same group in $\Gcal$}, \\
0.36, & \text{if $\beta_i$ and $\beta_j$ are in the same group in $G$ but different groups in $\Gcal$},
\end{cases}
\]
and then $\Theta$ is derived as the projection of $\tilde{\Theta}$ onto the set of symmetric positive definite matrices with a minimum eigenvalue of $0.1$.  Such strong within-group correlation patterns have also been used in \citet{ogll,afua}.

We generate $\beta^{\ast}$ by first sampling its $p$ coordinates from the normal distribution $N(10, 16)$, then randomly flipping signs of the covariates and randomly setting $90\%$ of the groups to be zero. This setup aligns with the setting in \citet{glc,anotg,tbog}. The response variable $Y$ is generated from $Y = X\beta^{\ast} + \epsilon$, where $\epsilon$ follows a normal distribution with mean $0$ and variance $\sigma^2$, and we set $\sigma^2 = 3$ following \citet{afua}. The group weight in the overlapping group lasso problem is $w_g = \sqrt{d_g}$, as is usually used in practice. We used the absolute difference in function values between iterations for all methods as the stopping criterion, with a tolerance set at $10^{-5}$. 
%Our design was not tailored to favor any particular penalty. The complexity of these design choices reflects the intrinsic complication of the problem. In addition, the patterns observed in our results align well with findings from previous studies \citep{tbog,ES, ESS, afua}. 

\begin{figure}[H]
\centering
\begin{subfigure}[t]{\textwidth}
\centering
\includegraphics[width =0.32\textwidth]{./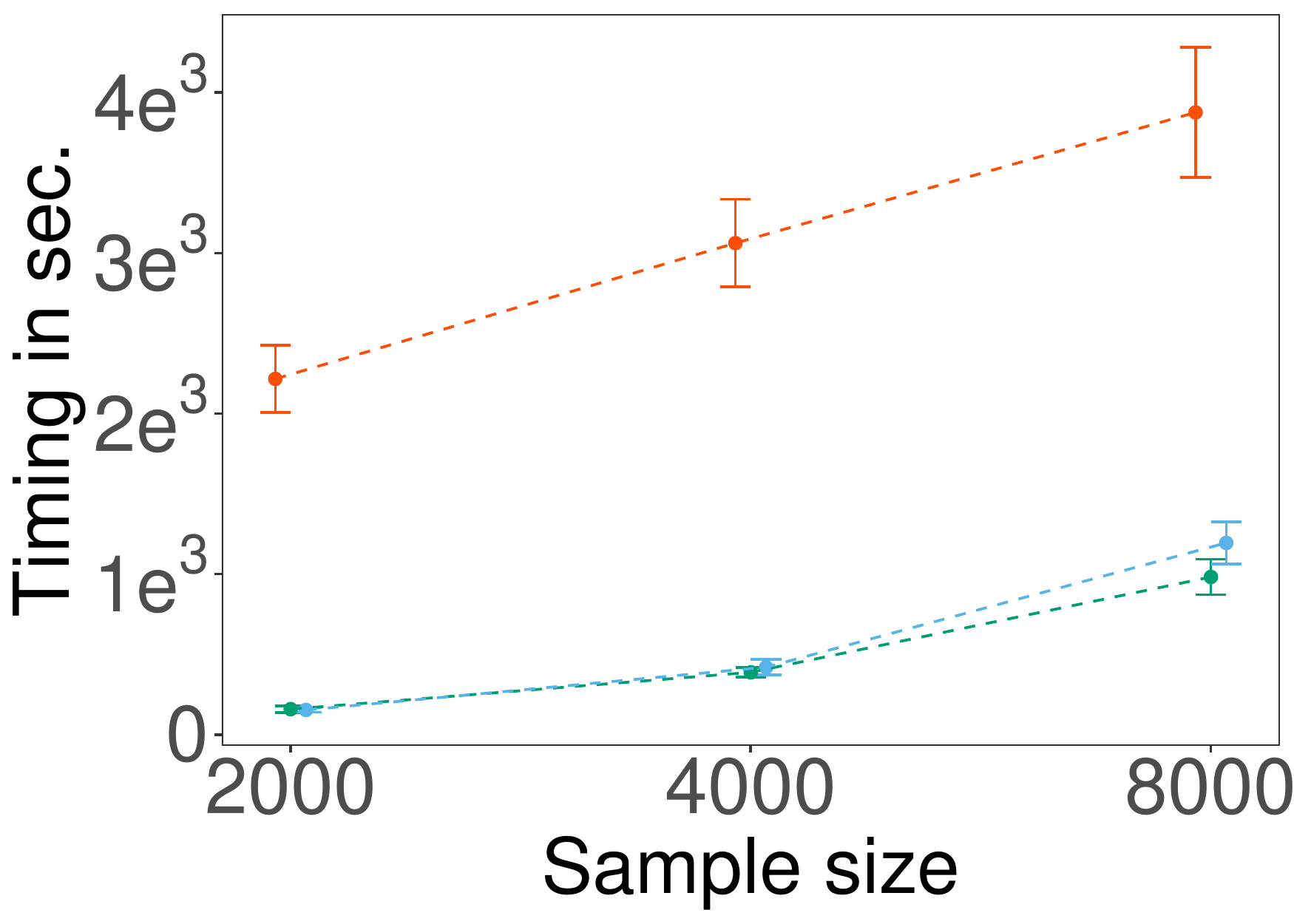}
\hfill
 \includegraphics[width=0.32\textwidth]{./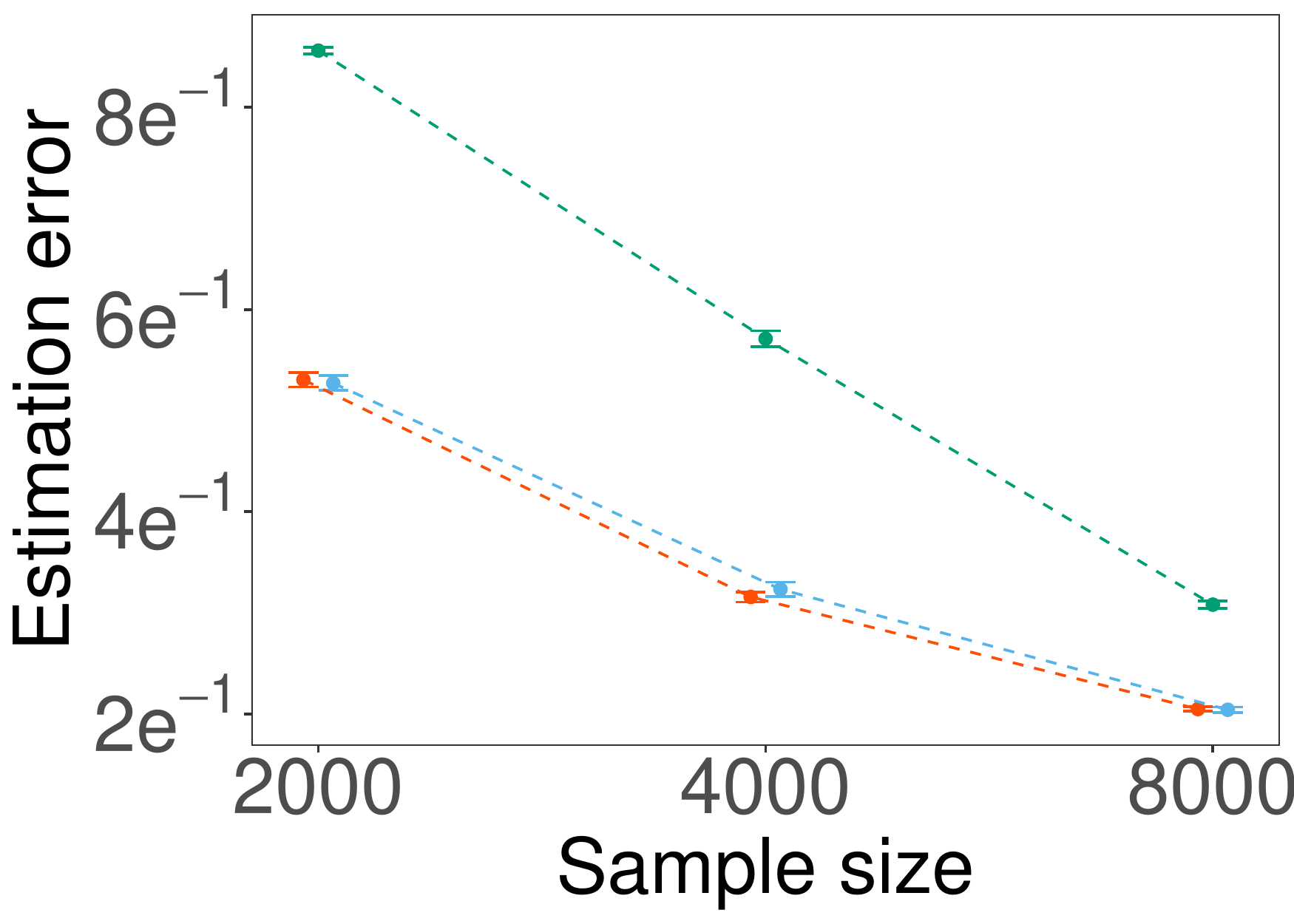}
  \includegraphics[width=0.32\textwidth]{./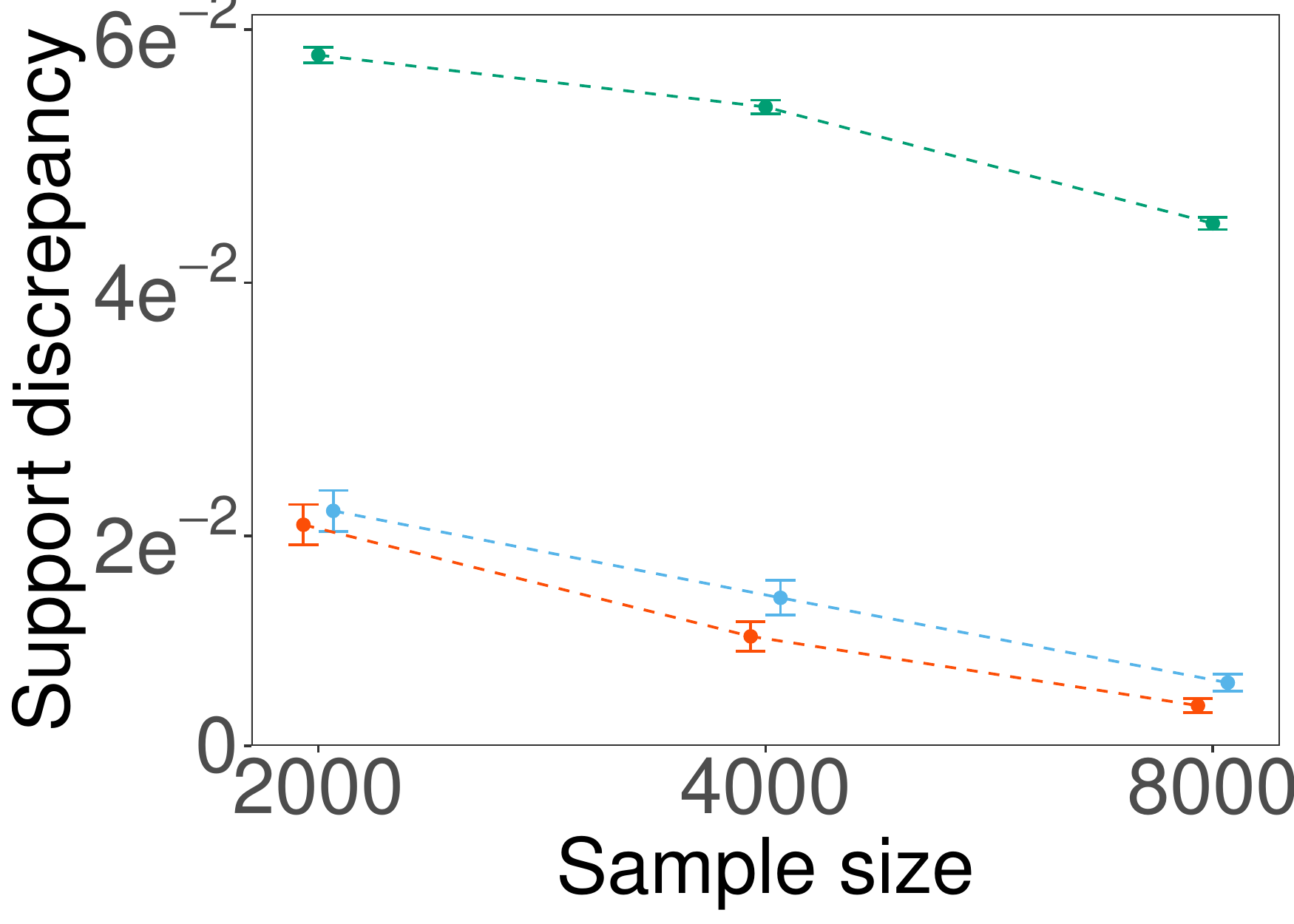}
\caption{Performance vs. Sample size}
\label{fig:tsam}
\end{subfigure}
\begin{subfigure}[t]{\textwidth}
\centering
\includegraphics[width =0.32\textwidth]{./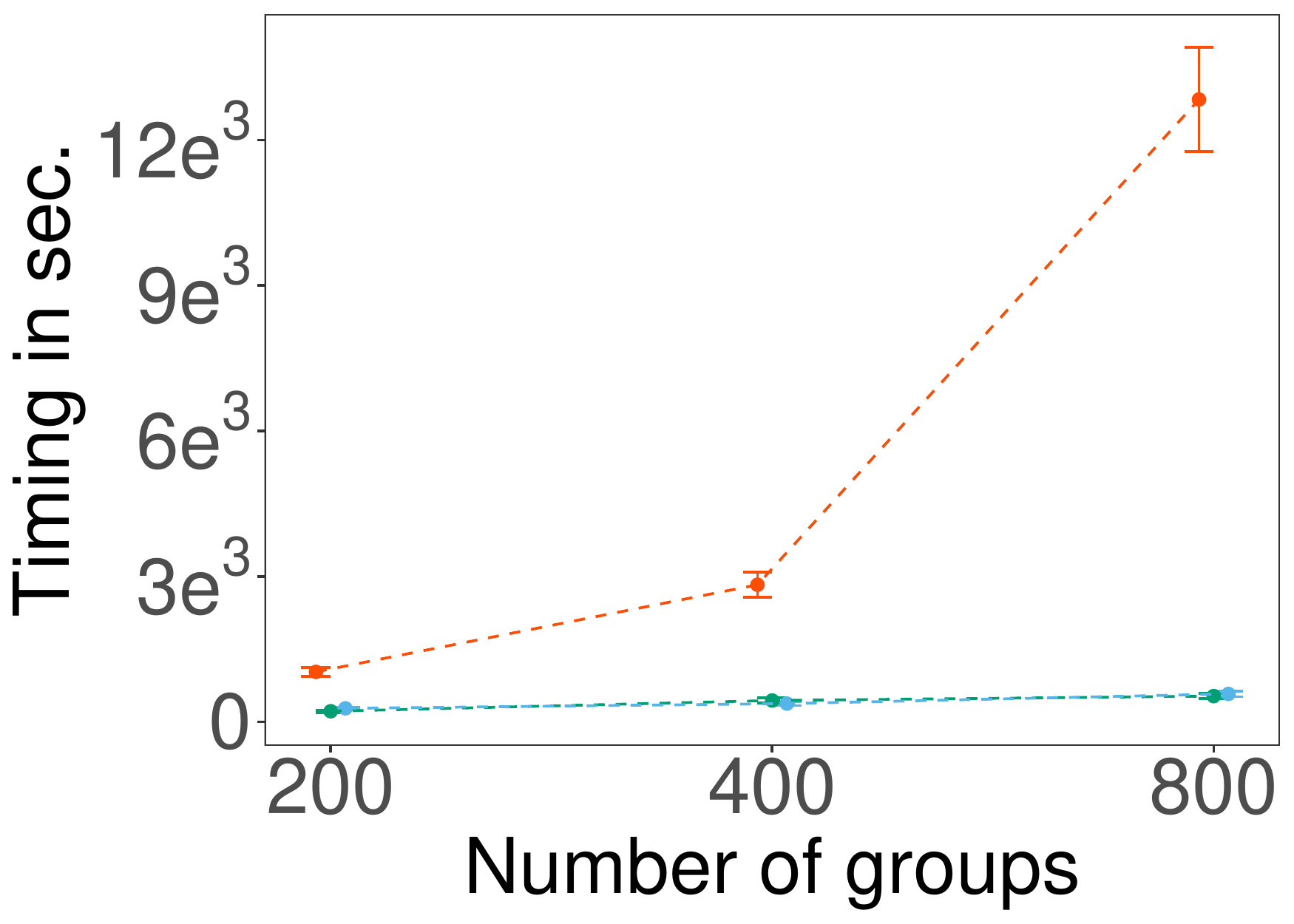}
\hfill
 \includegraphics[width=0.32\textwidth]{./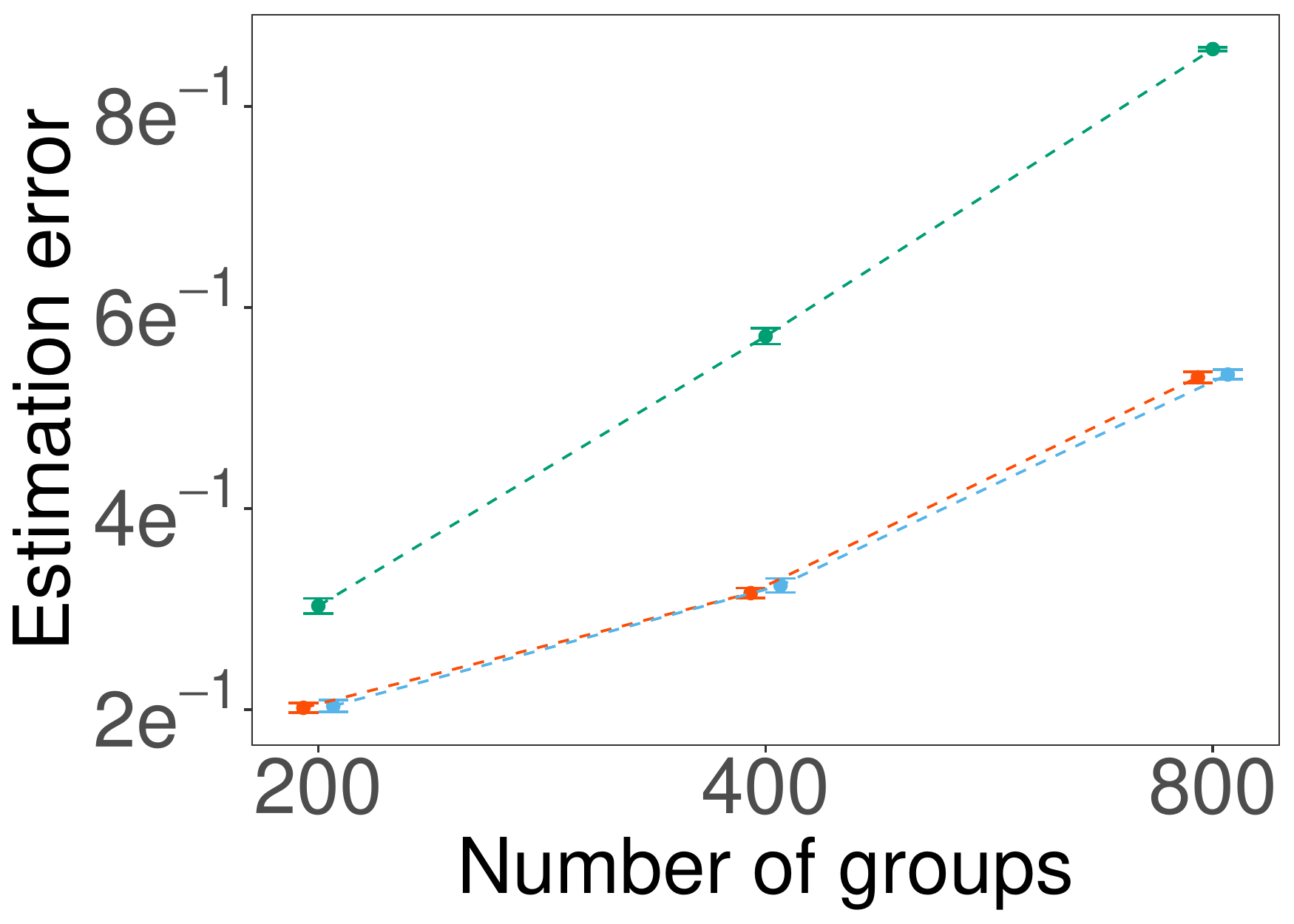}
  \includegraphics[width=0.32\textwidth]{./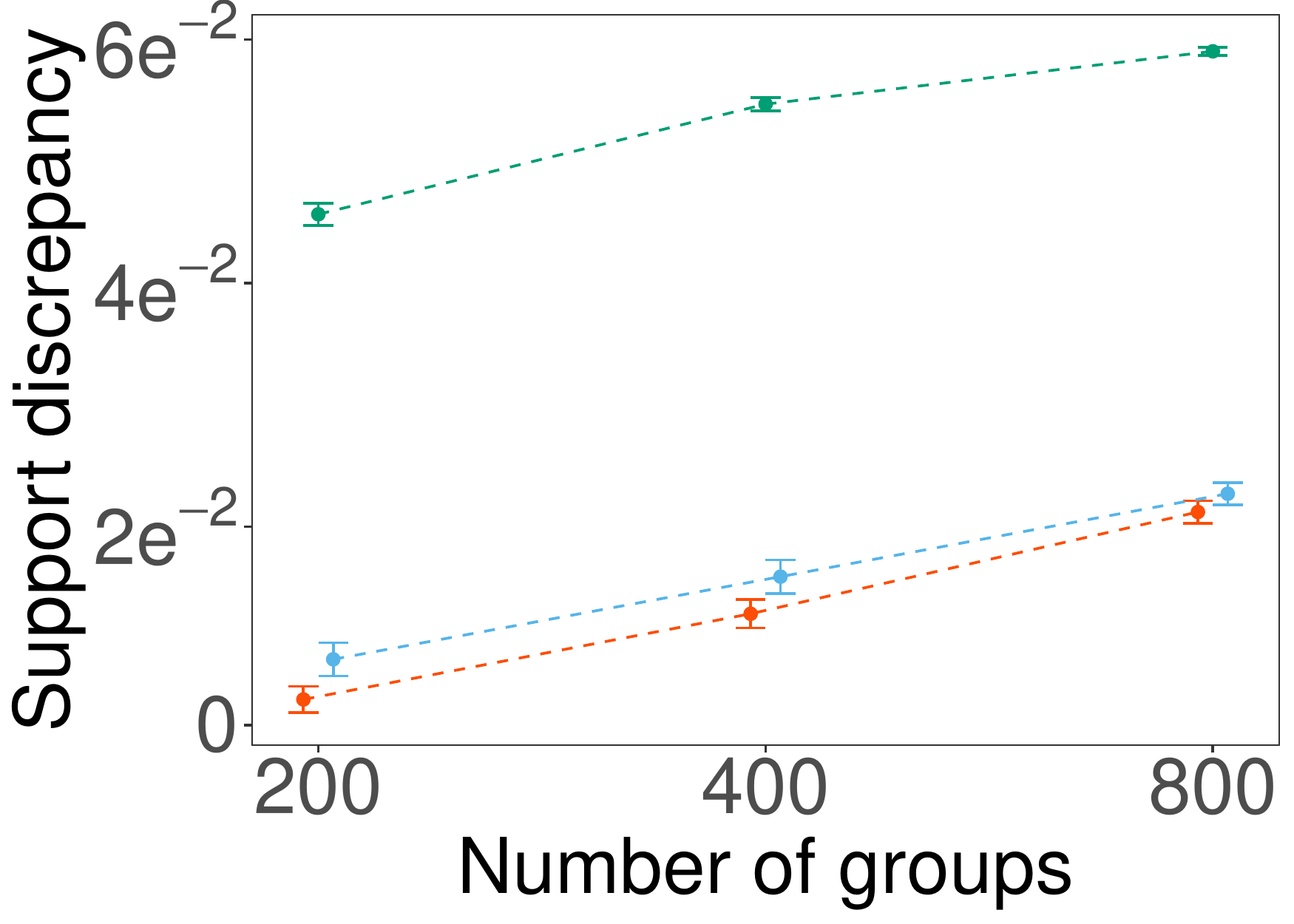}
   \caption{Performance vs. Number of groups}\label{fig:tnumg}
\end{subfigure}
\hfill
\begin{subfigure}[t]{\textwidth}
\centering
\includegraphics[width =0.32\textwidth]{./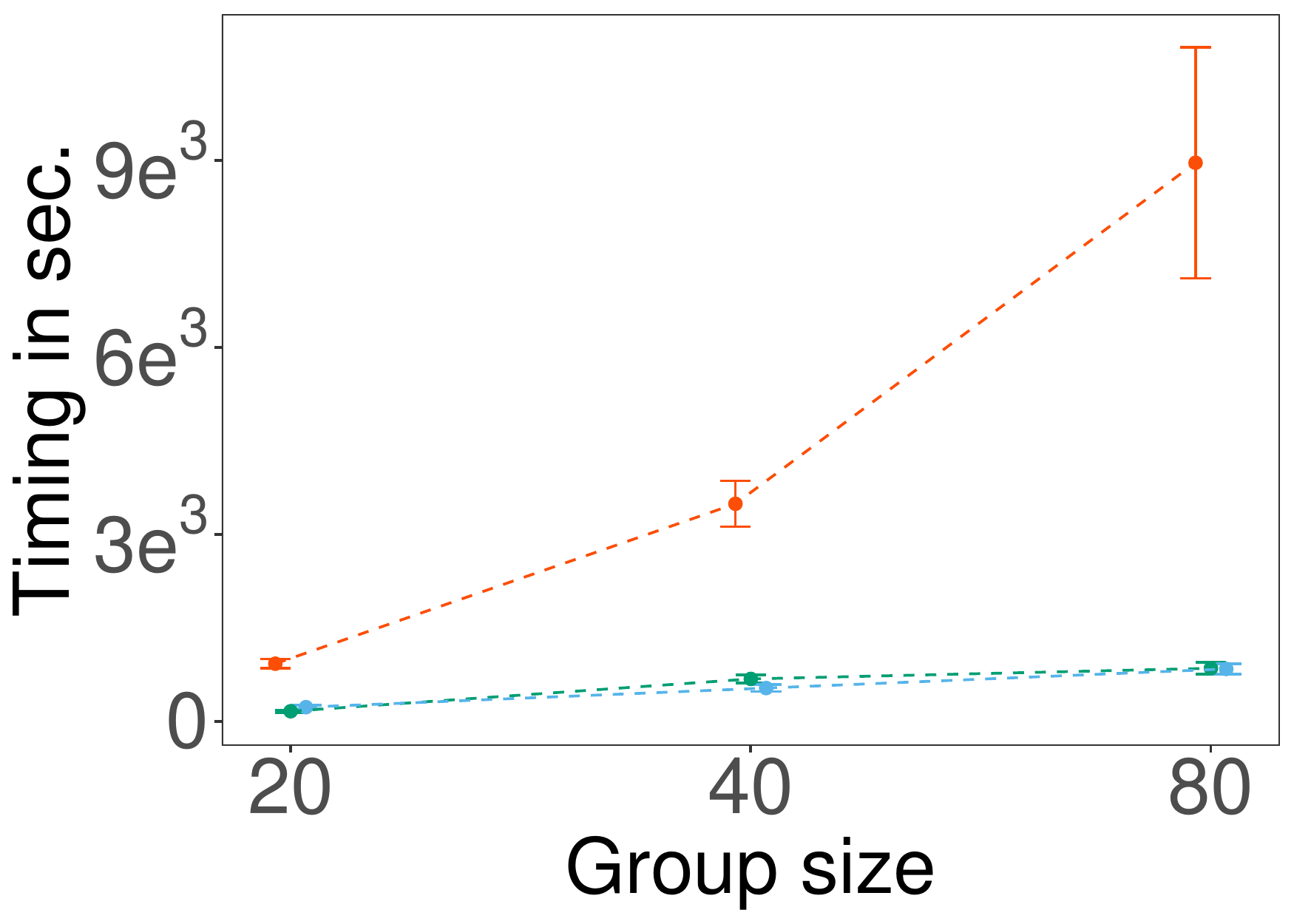}
\hfill
 \includegraphics[width=0.32\textwidth]{./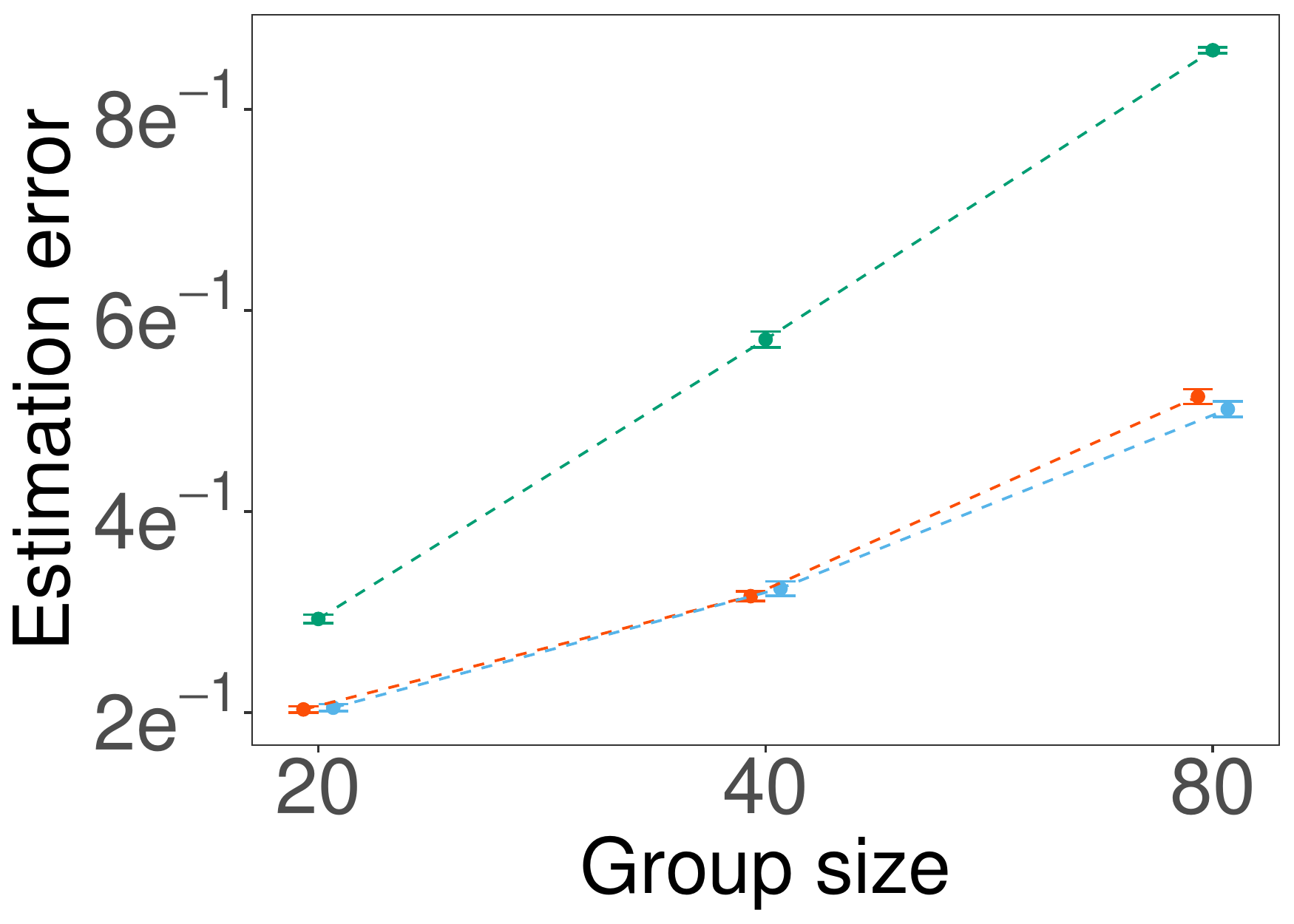}
  \includegraphics[width=0.32\textwidth]{./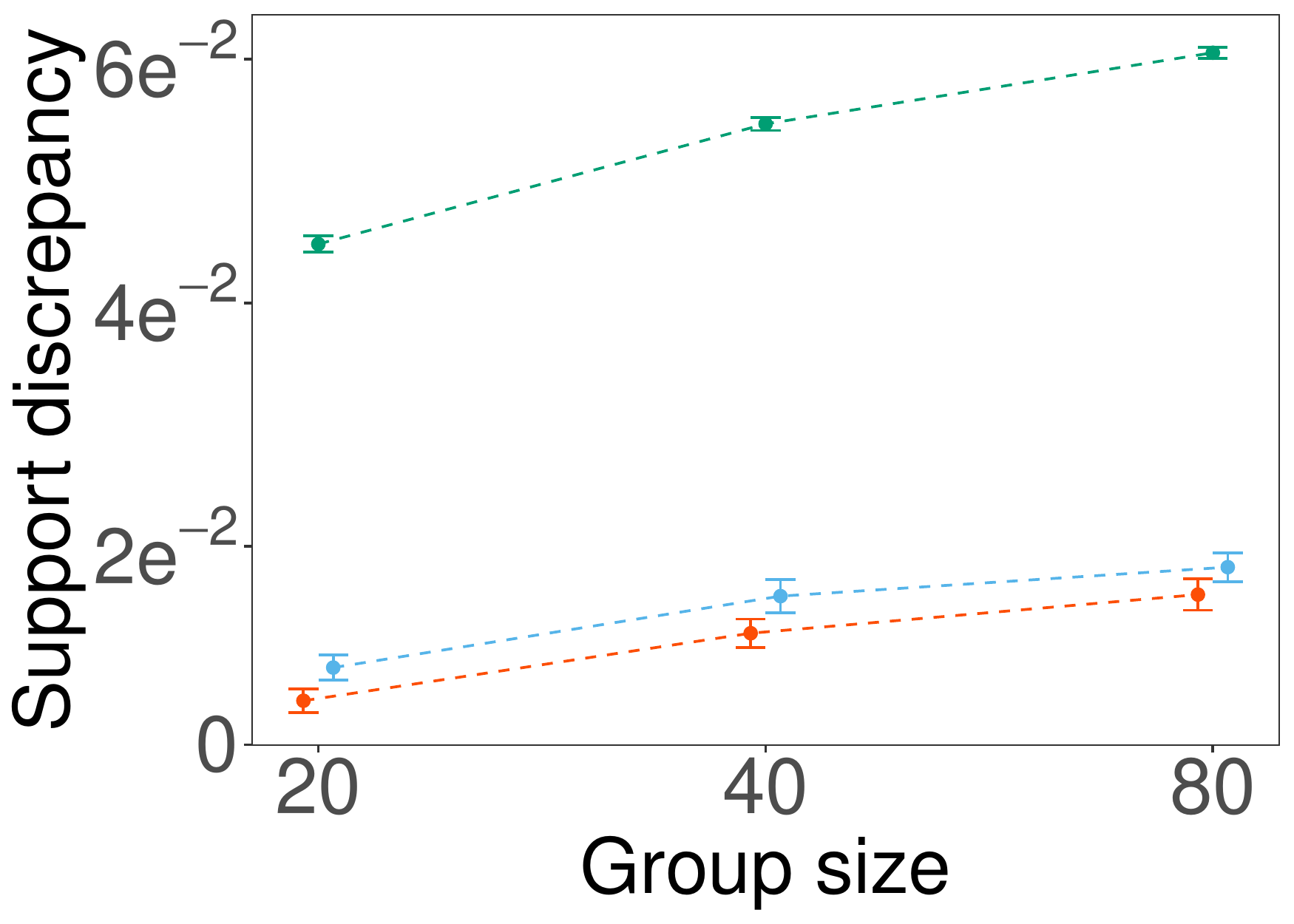}
   \caption{Performance vs. Groups size}\label{fig:tgrop}
   \includegraphics[width=\textwidth]{./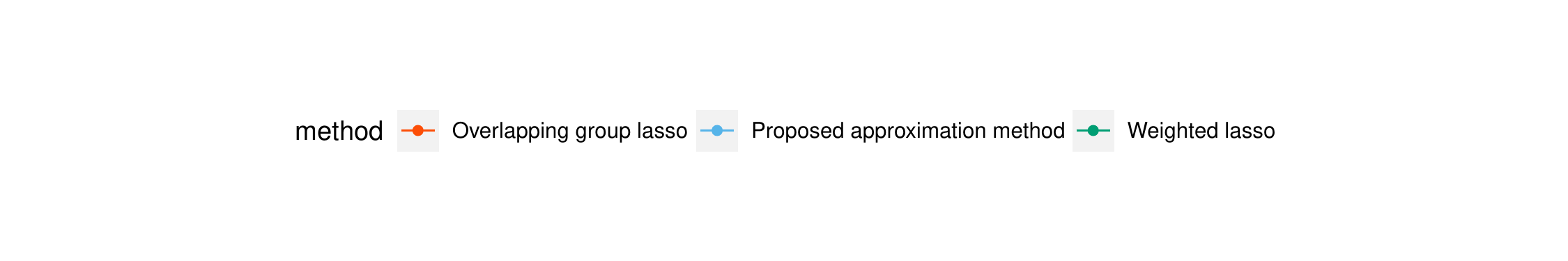}
\end{subfigure}
\vspace{-0.5cm}
\caption{Regularization path computing time, $\ell_2$ estimation error, and support discrepancy under different configurations of interlocking groups.  (a) Varying sample size $n$ when fixing $m = 400$ and $d = 40$ (p = 12808); (b) Varying number of groups $m$ when fixing $n = 4000$ and $d = 40$ ;  (c) Varying group size $d$ when fixing $n = 4000$ and $m = 400$. }
\label{fig:timecompare}
\end{figure}

Figure~\ref{fig:timecompare} presents the average computation times, estimation errors, and support discrepancy with 95\% confidence intervals (CIs). The result highlights the significant computational advantage of the proposed method over the original overlapping group lasso. Specifically, our method is 5--20 times faster than the original overlapping group lasso.

Even though the overlap is not severe within the interlocking group structure, solving the overlapping group lasso problem carries a more substantial computational burden due to the non-separable structure within its penalty term. The computational time increases with larger sample sizes, a greater number of variables, and larger group sizes, and the computational disadvantage of the overlapping group lasso is more substantial as the problem scales up. In contrast, our proposed method consistently achieves accuracy similar to the overlapping group lasso estimator in both the estimation error and support discrepancy. This consistency in performance, observed across a spectrum of configurations, serves as an empirical confirmation of the validity of our theoretical findings.

On the other hand, the weighted lasso approximation is slightly faster than our method. This is expected from the optimization perspective. However, the weighted lasso approximation exhibits much higher errors than the overlapping group lasso estimator and our estimator across all configurations, revealing that the weighted also gives a poor approximation to the overlapping group lasso. This is because the weighted lasso fails to leverage the group information, different from the induced groups $\Gcal$ used in our estimator.

In summary, our proposed estimator achieves comparable statistical performance to the original overlapping group lasso estimator while significantly enhancing computational efficiency. In contrast, although computationally efficient, the weighted lasso yields notably poor estimations, rendering it an uncompetitive alternative for approximating the original problems.

\subsection{Nested tree structure of overlapping groups}\label{secsec:sim-tree}

In this experiment, we evaluate the performance of the estimators under a configuration of the tree-group structures introduced in  \cite{jenatton2011proximal}, as below.
%The tree structure essentially means all groups are nested.

\begin{defi}\citep{jenatton2011proximal}
A set of groups $G = \{G_1,\cdots,G_m\}$ is said to be tree-structured in $[p]$ if $\cup_{g \in [m]}G_g = [p]$ and if for all $g,g' \in [m]$. $G_g \cap G_{g'} \neq \emptyset $ implies either $G_g \subset G_{g'}$ or $G_{g'} \subset G_g $.
\end{defi}
In particular, we consider the special case of the tree groups, the nested group structure where all groups are nested. This configuration is interesting as it represents an extreme setting of overlapping groups -- the overlapping degree is maximized in a certain sense and we hope to evaluate the methods in this extreme scenario. The nested group structure was also used in a few previous studies \citep{kim2012tree,nowakowski2023improving}. In this experiment, the SPAM solver, designed for the tree group structures, is also used to provide a more thorough evaluation across different implementations. We consider the following nested group configuration: \( 800 \) groups \( G = \{G_1, \ldots, G_{800}\} \) are established, where \( G_g \subset G_{g+1} \) and \( |G_g| = g \times 4 \), $g=1, \cdots, 800$ with $p=3200$ in total. The sample size varies from 600 to 2400. The data matrix \( X \) is generated from \( N(0, \Theta) \), where \( \Theta \) is generated by first constructing the matrix \( \tilde{\Theta} \) as
\[
\tilde{\Theta}_{ij} =
\begin{cases}
1, & \text{if } i = j, \\
0.6, & \text{if } \beta_i \text{ and } \beta_j \text{ belong to the same group in } \mathcal{G}, \\
0.36, & \text{if } \beta_i \text{ and } \beta_j \text{ are in the same group in } G \text{ but in different groups in } \mathcal{G},
\end{cases}.
\]
and then projecting  \( \tilde{\Theta} \) onto the set of symmetric positive definite matrices with minimum eigenvalue $0.1$. The generative process for $\beta^*$ and $y$ remains nearly identical as before,  where the only difference is that the first 90\%  of the groups are set to zero following the hierarchical structure. The group weights are set to \( w_g = 1/d_g \) as suggested \citep{nowakowski2023improving}. For a fair comparison of the two solvers, in this experiment, we adopt the stopping criterion provided in the SPAM package \citep{mairal2014spams} with a convergence tolerance $10^{-5}$.

\begin{figure}[H]
\centering
\begin{subfigure}[t]{\textwidth}
\centering
\includegraphics[width =0.32\textwidth]{./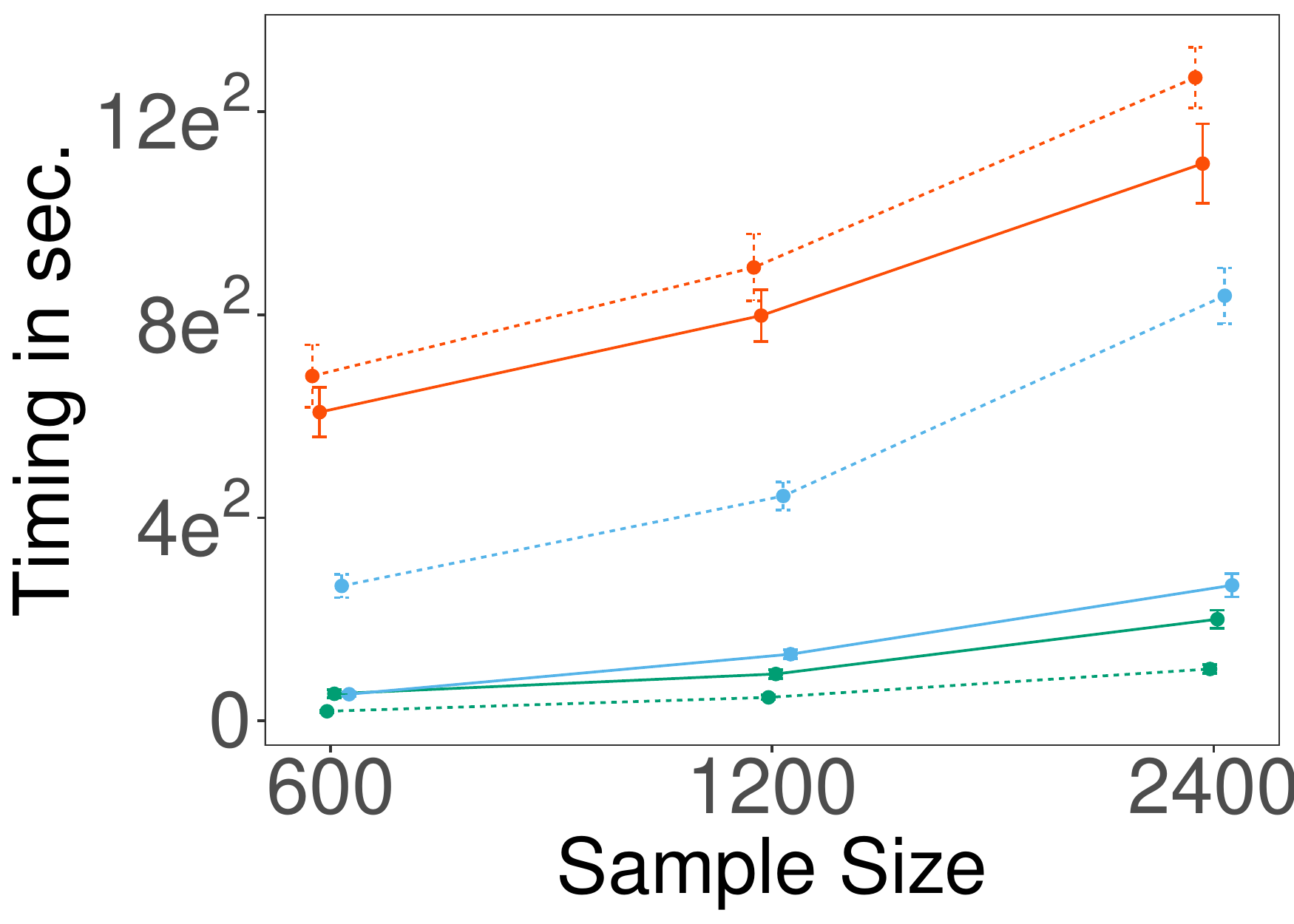}
\hfill
 \includegraphics[width=0.32\textwidth]{./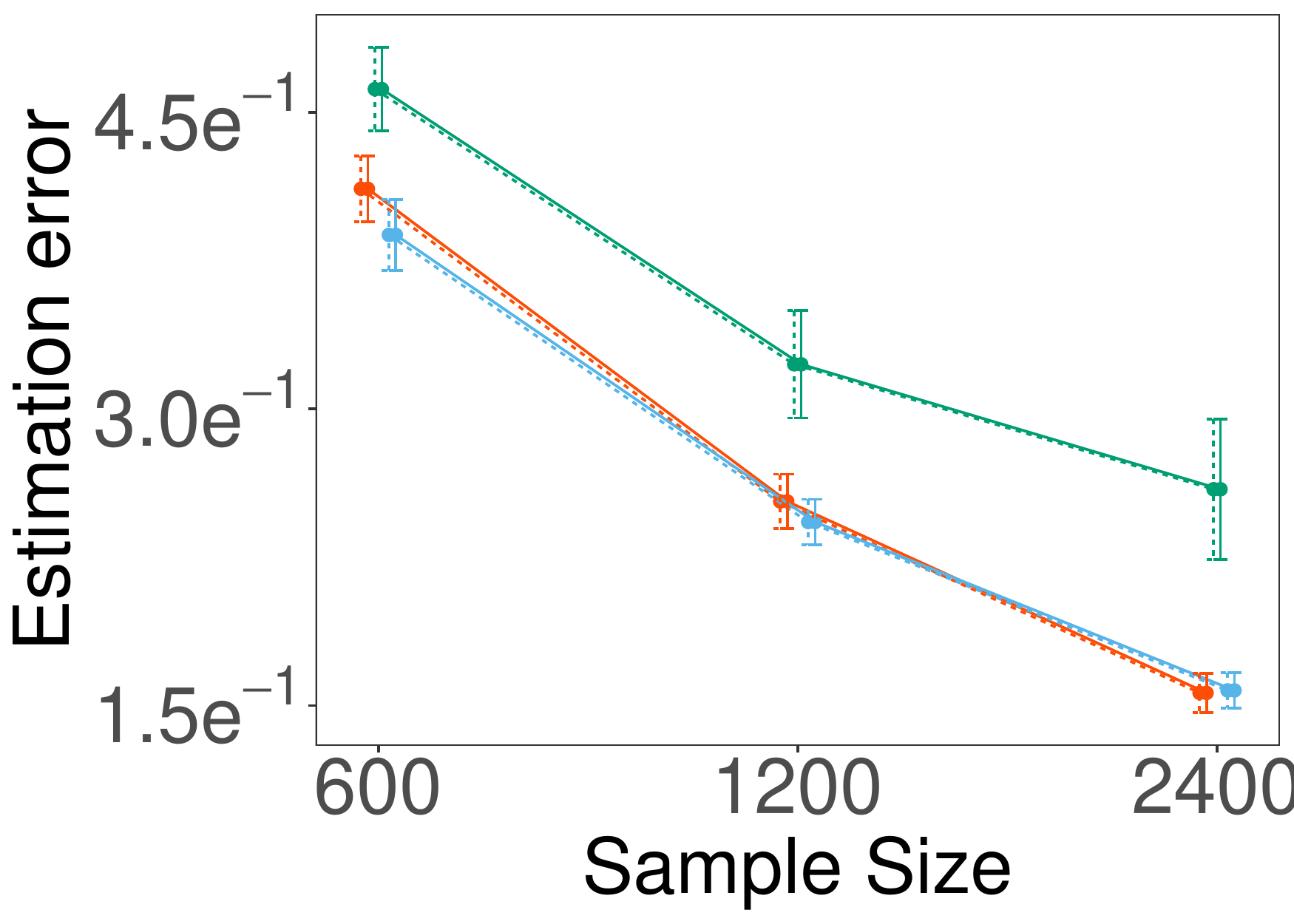}
  \includegraphics[width=0.32\textwidth]{./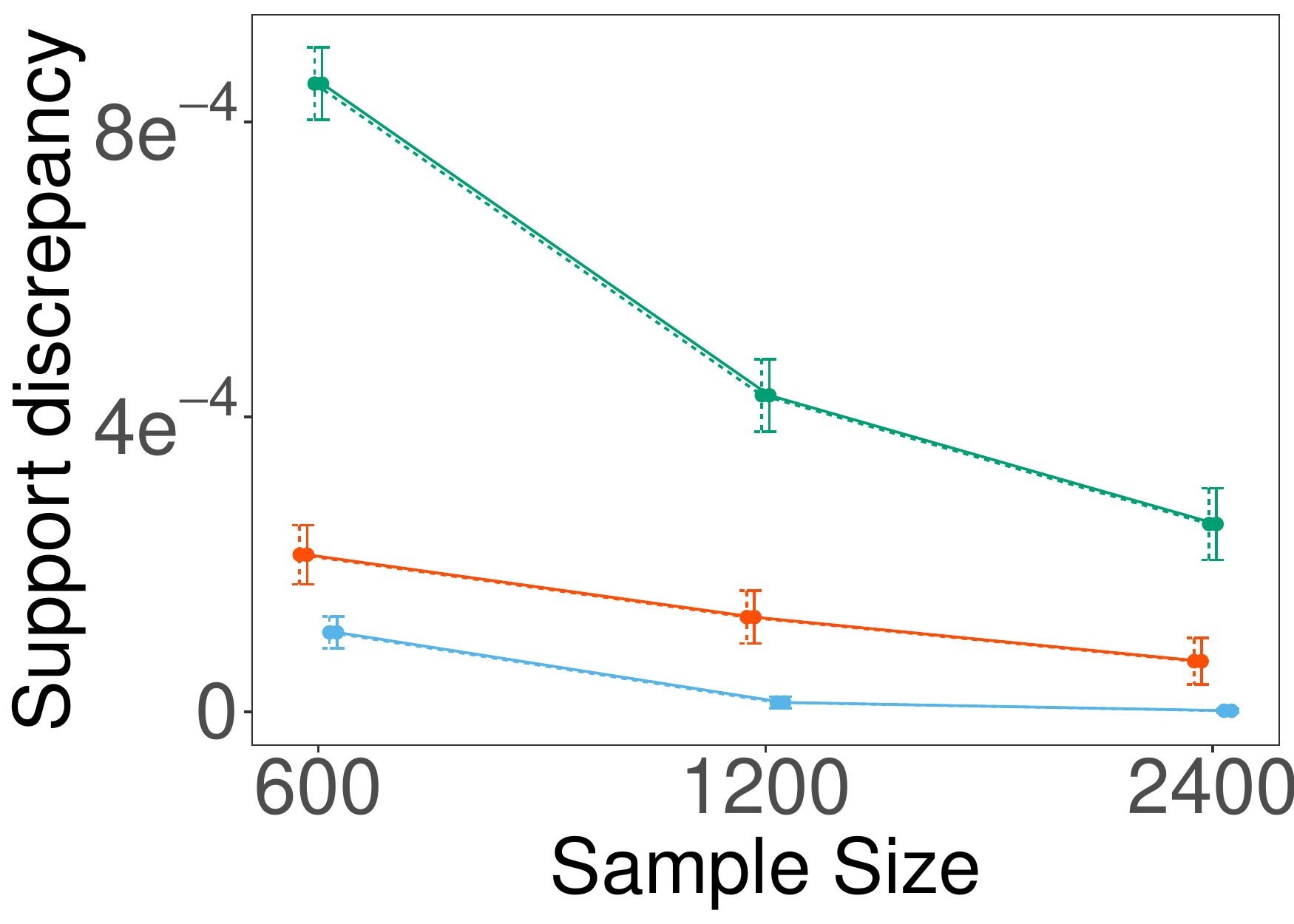}
\label{fig:tree}
   \includegraphics[width=\textwidth]{./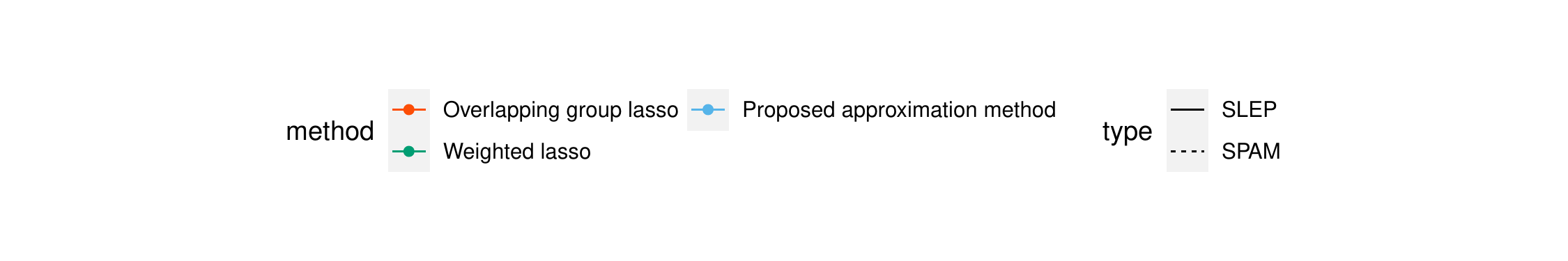}
\end{subfigure}
\vspace{-0.5cm}
\caption{Regularization path computing time, $\ell_2$ estimation error, and support discrepancy across various sample sizes under the nested tree group structure. }
\label{fig:timetreecompare}
\end{figure}

Figure~\ref{fig:timetreecompare} shows the performance of the three methods based on both solvers. SLEP is generally faster than SPAM, but the two solvers give consistent conclusions about the estimators. As studied by \cite{jenatton2011proximal}, solving the overlapping group lasso problem becomes highly efficient under such a nested group structure because, under a tree structure, a single iteration over all groups is adequate to obtain the exact solution of the proximal operator. Our timing results support this statement. Compared with the previous setting, the timing advantage of our method is reduced. However, our method is still at least twice as fast as the overlapping group lasso. When considering estimation error and support discrepancy, our proposed estimator consistently delivers similar results compared to the overlapping group lasso estimator. The comparison with the weighted lasso remains similar to the previous experiment; while the lasso estimator is also fast to compute, it delivers very poor approximation.

In summary, solving overlapping group lasso problems exhibits efficiency when applied to tree structures. However, even in such cases, our proposed estimator maintains reasonable computational advantage and similar statistical estimation performance compared to the original overlapping group lasso estimator. 

\subsection{Group structures based on real-world gene pathways}\label{secsec:sim-gene}

\begin{table}[!ht]
 \caption{\label{tab:pathways}Summary information for the gene pathways: the mean and standard deviation of both group size $(\bar{d}/\text{sd}(d))$ and the overlapping degree $(\bar{h}/\text{sd}(h))$, the number of genes ($p$), and the ratio required in Assumption~\ref{ass:compare-bound}.}
 %\TL{A bit more information, context, references. Again, all captions should be reasonably self-contained. You do not need the readers to go to main text to find what the table is roughly about.}}
 \centering
 % \resizebox{0.9\textwidth}{!}
 {
 \tabcolsep=5pt
 \begin{tabular}{|c|c|c|c|c|} 
  \hline
  Pathways                              & $\bar{d}/\mathrm{sd}(d)$ & $\bar{h}/\mathrm{sd}(h)$ & $p$    & \makecell[c]{$\max\{\mathcal{m},\mathcal{d}_{\max}\}/$ \\ $\max\{m,d_{\max}\}$}  \\ \hline
  BioCarta \citep{kong2006multivariate} & 15.4/ 8.71      & 3.25/ 5.56      & 1129 & 2.35  \\ \hline
  PID \citep{pid}                       & 38.51/ 19.59    & 3.28/ 5.09      & 2297 & 5.95  \\ \hline
  KEGG \citep{gkv1070}                  & 58.48/ 47.36    & 2.58/ 3.39      & 4207 & 3.61  \\ \hline
  WIKI \citep{wiki}                     & 38.17/ 44.10    & 4.35/ 7.70      & 6242 & 4.94  \\ \hline
  Reactome \citep{reac}                 & 45.31/ 54.10    & 8.78/ 13.26     & 8331 & 2.35 \\ \hline
\end{tabular}
\label{genesum}
}
\end{table}

The previous two sets of experiments are based on human-designed group structures. To reflect more realistic situations, in this set of experiments, we use five gene pathway sets from the Molecular Signatures Database \citep{msig} as group structures, summarized in Table~\ref{genesum}. Each gene pathway represents a collection of genes united by common biological characteristics. These pathways have been widely adopted in studies of cancer and biological mechanisms \citep{menashe2010pathway,ES,livshits2015pathway,chen2020kegg}.

In particular, this data set can be used to assess the empirical applicability of Assumption~\ref{ass:compare-bound} in our theory. The last column of Table~\ref{genesum} shows the ratio between $\max\{\mathcal{m},\mathcal{d}_{\max}\}$ and $\max\{m,d_{\max}\}$. All values are within the range of [2,6], indicating that the two terms can be treated as terms in the same order.

We use the gene expression data from  \cite{breast} as the covariate matrix $X$, which can be accessed through the R package \texttt{breastCancerNKI} \citep{bnki}. This design matrix has 295 observations and 24,481 genes. We perform gene filtering for each gene pathway set to exclude genes not defined within any pathways, a data processing step commonly used in similar studies \citep{ogl,ESS,chen2012smoothing}. The data-generating procedure for $\beta^*$ and $y$ remains almost the same as before, except that we use a much sparser model because of the smaller sample size of the data. Specifically, we randomly sample $0.05m$ active groups and set the coefficients in other groups to zero. The weights in overlapping group lasso are set to be $\sqrt{d_g}$.

\begin{table}[H]
\centering
\caption{Comparison of the average computing time (in seconds) and the corresponding 95\% confidence intervals for each pathway group structure.}
\tabcolsep=3pt
\begin{tabular}{|c|c|c|c|}
\hline
\makecell[c]{Group\\Structure} &      Overlapping group lasso       &          Weighted lasso      & The proposed approximation  \\ \hline
BioCarts        & \phantom{00}67.18 [\phantom{00}62.28, \phantom{00}72.08] & \phantom{00}6.22 [\phantom{00}5.99, \phantom{00}6.45] & \phantom{0}16.03 [\phantom{0}15.17, \phantom{0}16.89]   \\
KEGG            & \phantom{0}287.27 [\phantom{0}267.18, \phantom{0}307.36] & \phantom{0}28.77 [\phantom{0}26.42, \phantom{0}31.12] & \phantom{0}48.32 [\phantom{0}45.12, \phantom{0}51.52]   \\
PID             & \phantom{0}445.99 [\phantom{0}420.56, \phantom{0}471.42] & \phantom{0}10.27 [\phantom{00}9.74, \phantom{0}10.80] & \phantom{0}31.25 [\phantom{0}29.43, \phantom{0}33.07]   \\
WIKI            &           1279.22 [1214.34,                     1344.10] & \phantom{0}63.56 [\phantom{0}57.36, \phantom{0}69.76] & 132.79           [121.82,                     143.76] \\
Reactome        &           3739.97 [3569.27,                     3910.67] &           116.34 [106.32,                     126.36] & 194.61           [181.31,                     207.91] \\ \hline
\end{tabular}
\label{table:gene_time_comparison}
\end{table}

\begin{table}[ht]
\centering
\caption{Comparison of the relative $\ell_2$ estimation errors and the corresponding 95\% confidence intervals for each group structure. }
\tabcolsep=5pt
\begin{tabular}{|c|c|c|c|}
\hline
    Group Structure & Overlapping group lasso & Lasso & Proposed approximation \\ \hline
BioCarts & 0.22 [0.20, 0.24]           & 0.28           [0.24, 0.32] & 0.25 [0.22,           0.28] \\
KEGG     & 0.52 [0.47, 0.57]           & 0.80           [0.76, 0.84] & 0.54 [0.51,           0.57] \\
PID      & 0.23 [0.21, 0.25]           & 0.50           [0.44, 0.56] & 0.25 [0.23,           0.28] \\
WIKI     & 0.55 [0.49, 0.61]           & 0.65           [0.58, 0.72] & 0.55 [0.49,           0.61] \\
Reactome & 0.66 [0.63, 0.69]           & 0.85           [0.83, 0.87] & 0.65 [0.62,           0.68] \\ \hline
\end{tabular}
\label{table:gene_est_comparison}
\end{table}

\begin{table}[ht]
\centering
\caption{Comparison of the support discrepancy and the corresponding 95\% confidence intervals for each group structure. }
\tabcolsep=5pt
\begin{tabular}{|c|c|c|c|}
\hline
    Group Structure & Overlapping group lasso & Lasso & Proposed approximation \\ \hline
BioCarts & 0.041 [0.039, 0.043]           & 0.043           [0.040, 0.046] & 0.041 [0.039,           0.043] \\
KEGG     & 0.023 [0.021, 0.025]           & 0.026           [0.024, 0.028] & 0.023 [0.021,           0.025] \\
PID      & 0.033 [0.031, 0.035]              & 0.033 [0.031, 0.035] &  0.033 [0.031, 0.035] \\
WIKI     & 0.013 [0.012, 0.014]           &  0.013 [0.011, 0.015]  &  0.013 [0.012, 0.014]  \\
Reactome  & 0.012 [0.011, 0.013]           &  0.020 [0.019, 0.021]  &  0.012 [0.010, 0.014] \\ \hline
\end{tabular}
\label{table:gene_pat_comparison}
\end{table}

Table~\ref{table:gene_time_comparison} displays the computing time, and Table~\ref{table:gene_est_comparison}  displays the estimation error results for the five pathway group structures. The high-level message remains consistent. Both our proposed group lasso approximation and the lasso approximation could substantially reduce the computing time. Across all settings, the proposed method reduces the computation time by 4 - 20 times and is more than 10 times faster in all settings with higher dimensions. Meanwhile, the proposed estimator delivers statistical performance similar to that of the original overlapping group lasso estimator. In contrast, the lasso approximation fails to leverage the group information effectively and yields inferior estimation results.

\subsection{Comparison of the proposed weights against other weighting choices}\label{secsec:sim-weight}

In addition to the partitioned groups, the overlapping-based weight defined in \eqref{eq:weight} for each partitioned group $\mathcal{g}$ is another crucial component to ensure the tightness of \eqref{eq:our-norm}. We will demonstrate this aspect by experiments here to compare the proposed weights \eqref{eq:weight}  with two other commonly used choices of weights that do not consider the original overlapping pattern: the uniform weights and group size-dependent weights \citep{yuan2006model}, on the same induced groups $\Gcal$. Specifically, uniform weighting is the setting when all groups share the same weight while the size-dependent weighting uses the weight $\sqrt{\mathcal{d}_\mathcal{g}}$ if $w_g = \sqrt{d_g}$ (interlocking and gene pathway groups) and is $1/d_\mathcal{g}$ if $w_g = 1/d_g$ (nested groups). The comparative analysis is performed under all group structures in the previous simulations, maintaining consistent simulation settings.

\begin{figure}[H]
\centering
\begin{subfigure}[t]{\textwidth}
\centering
 \includegraphics[width=0.32\textwidth]{./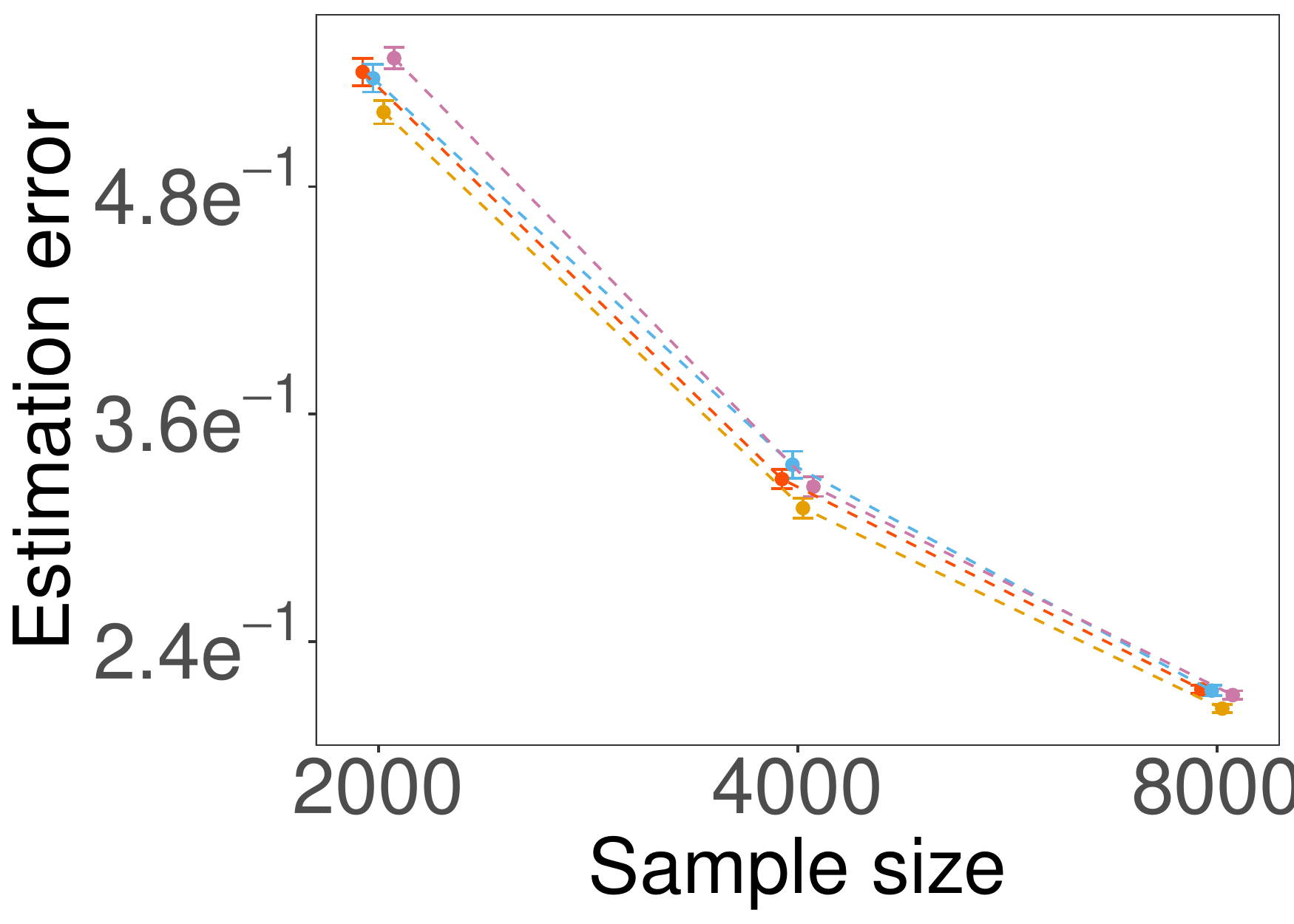}
  \includegraphics[width=0.32\textwidth]{./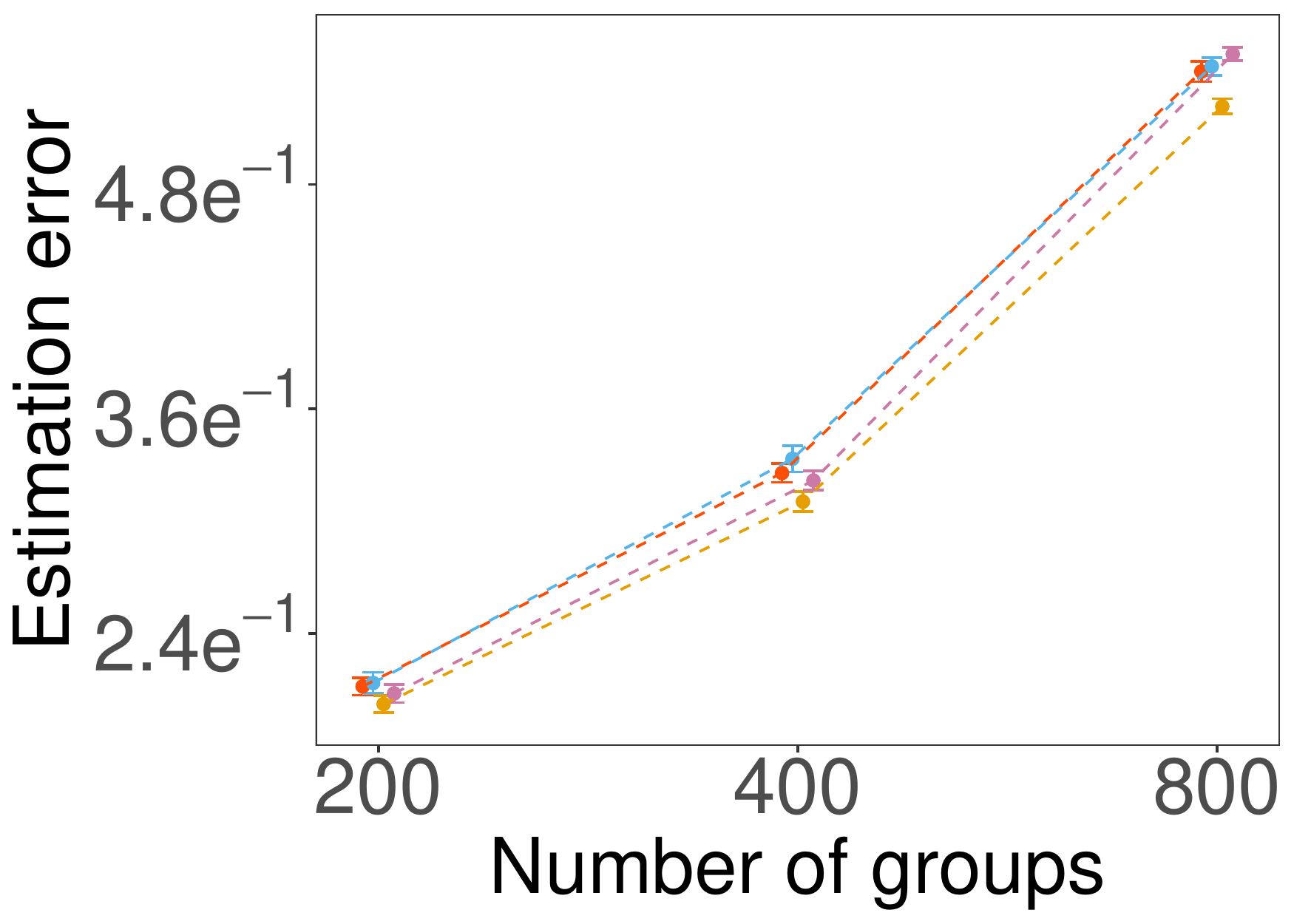}
   \includegraphics[width=0.32\textwidth]{./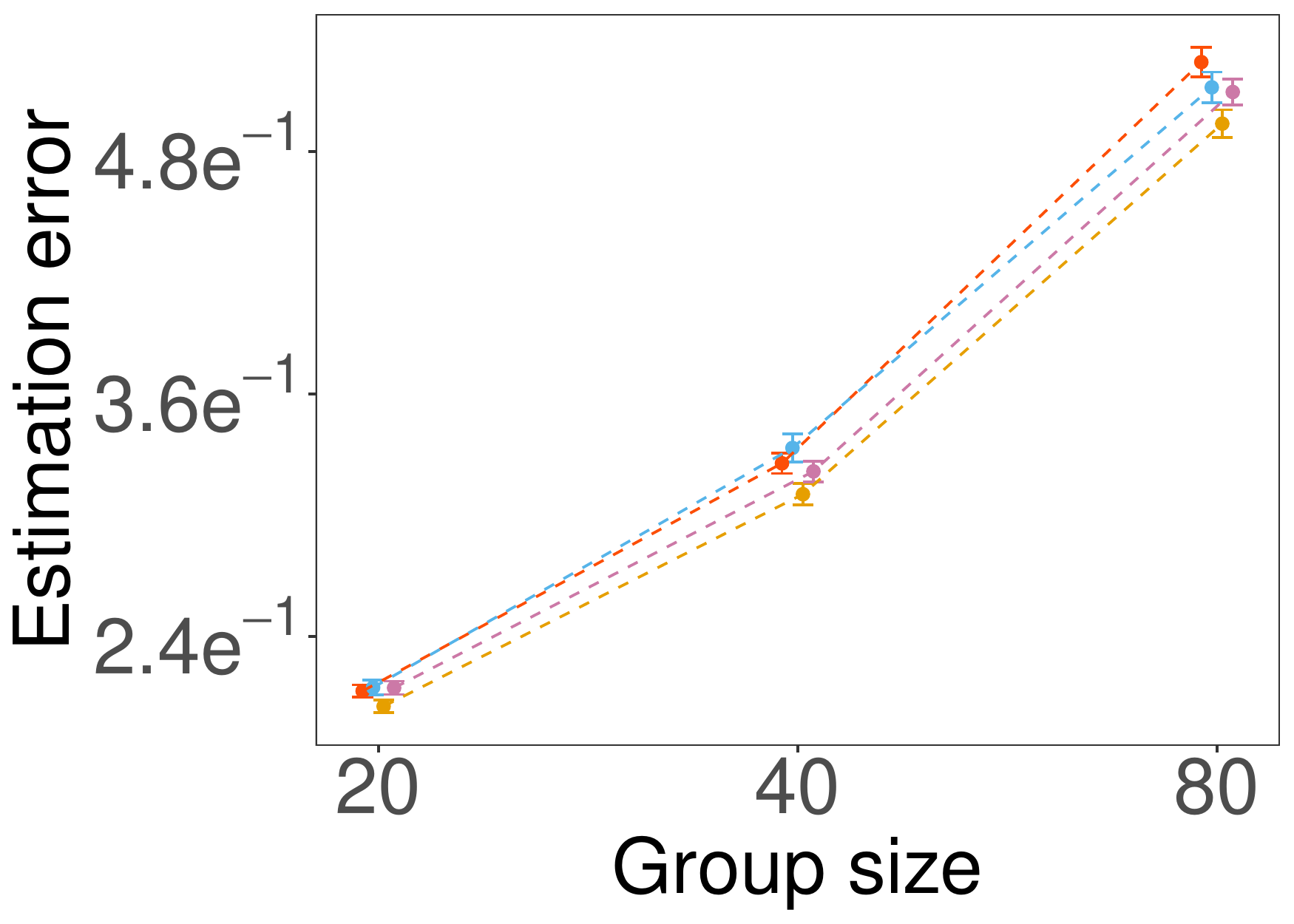}
\end{subfigure}
\begin{subfigure}[t]{\textwidth}
\centering
 \includegraphics[width=0.32\textwidth]{./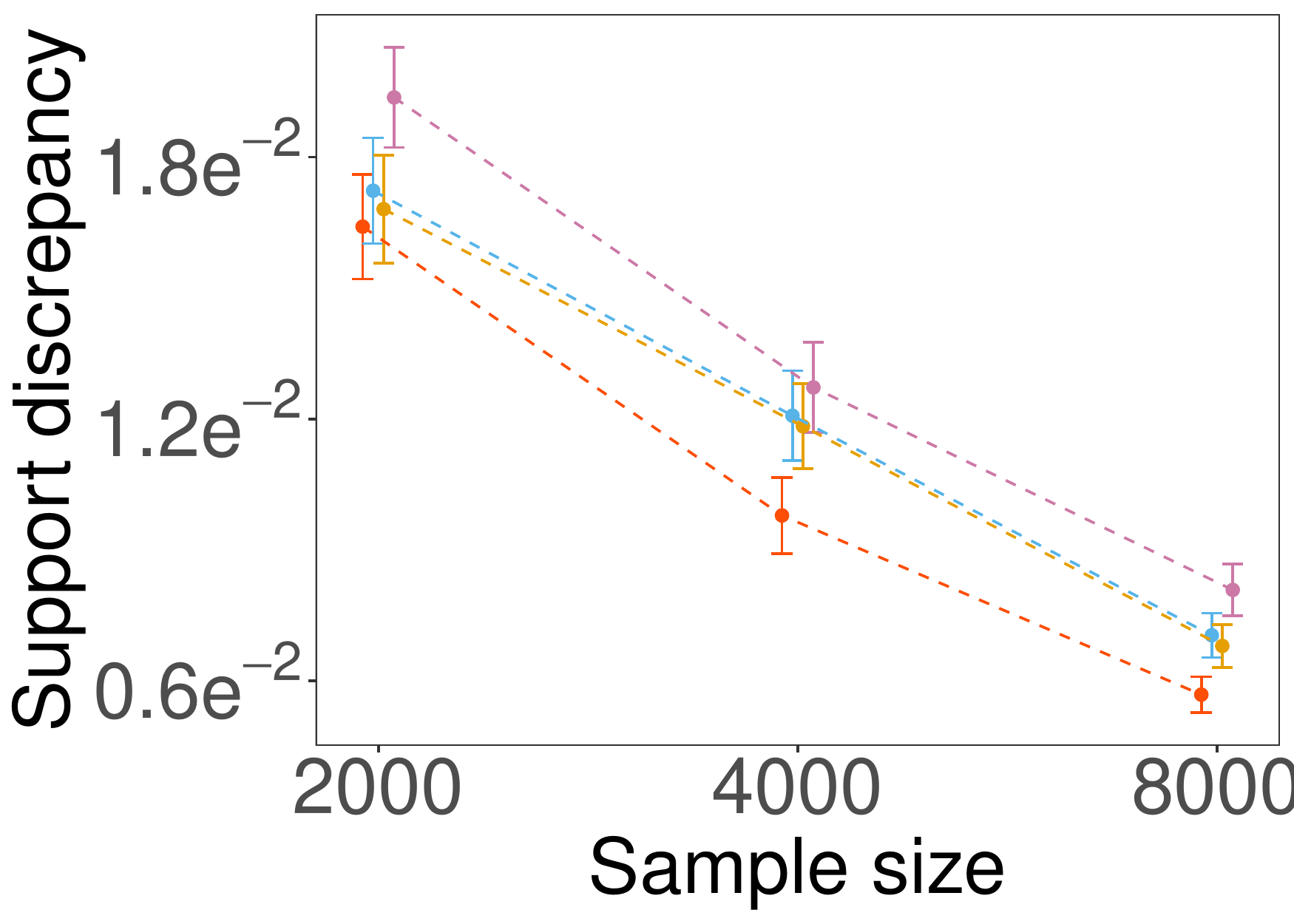}
  \includegraphics[width=0.32\textwidth]{./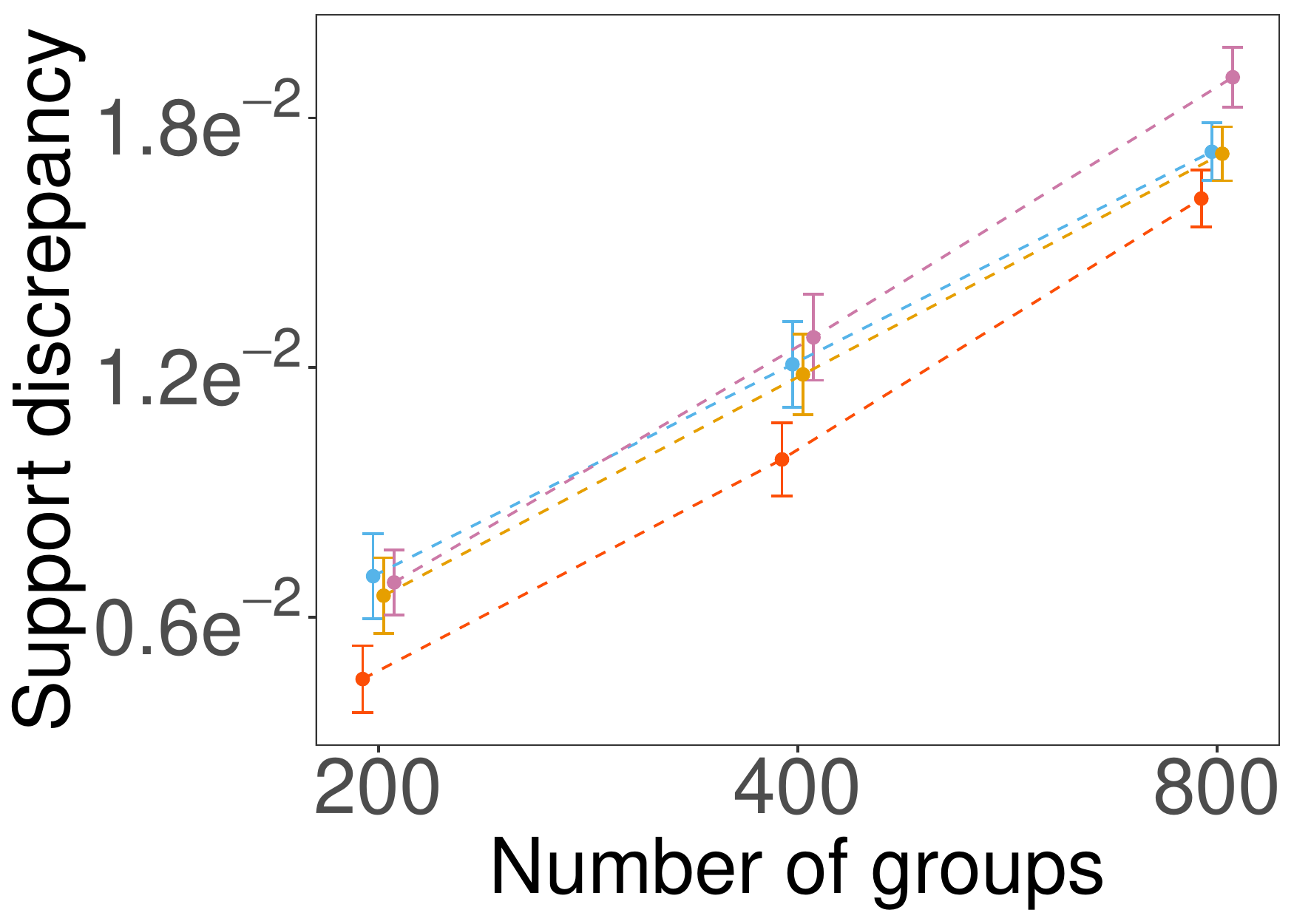}
  \includegraphics[width =0.32\textwidth]{./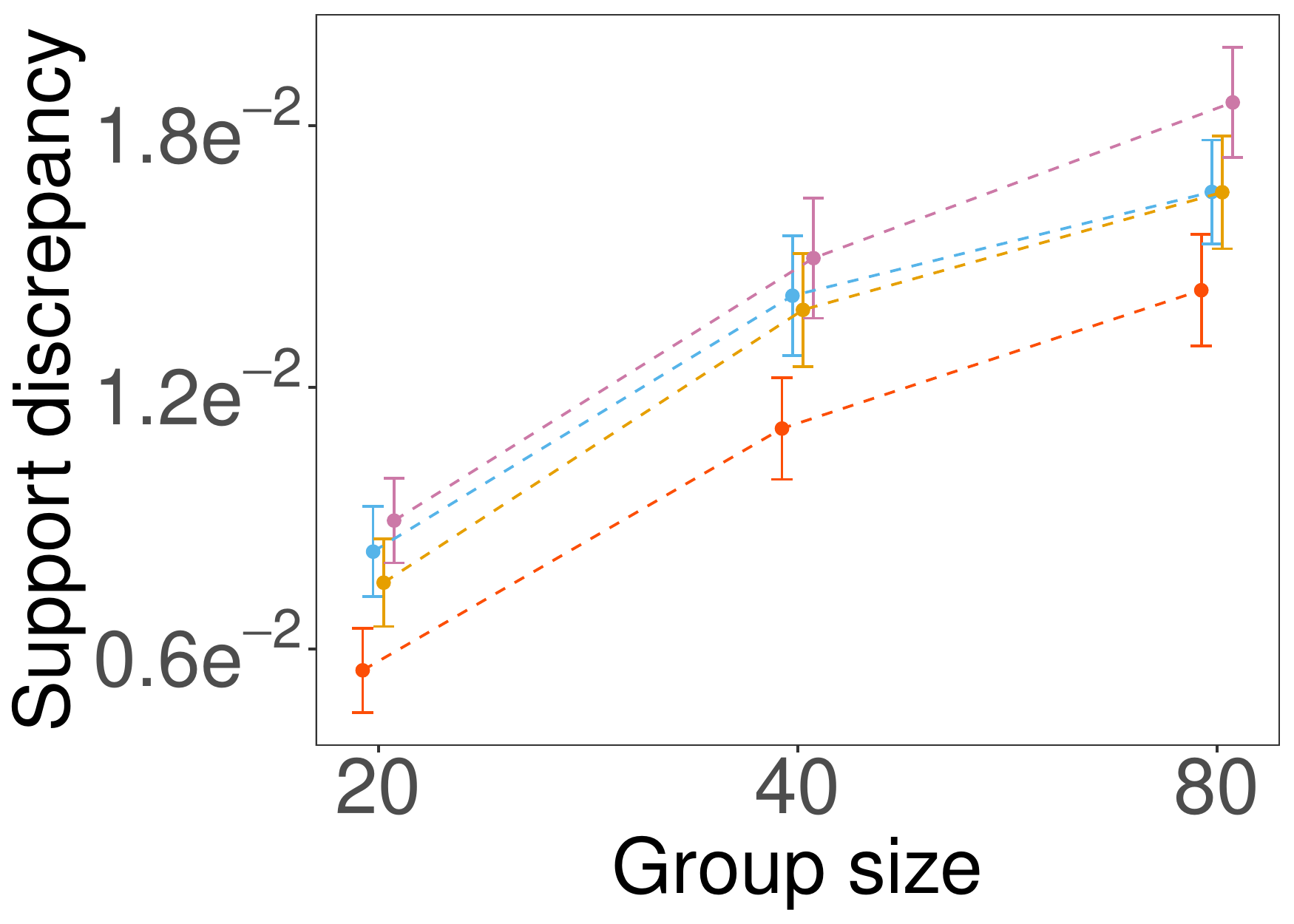}
  \caption{Performance  under interlocking group structure}
\label{fig:wtsam}
\end{subfigure}

\begin{subfigure}[t]{\textwidth}
\centering
 \includegraphics[width=0.32\textwidth]{./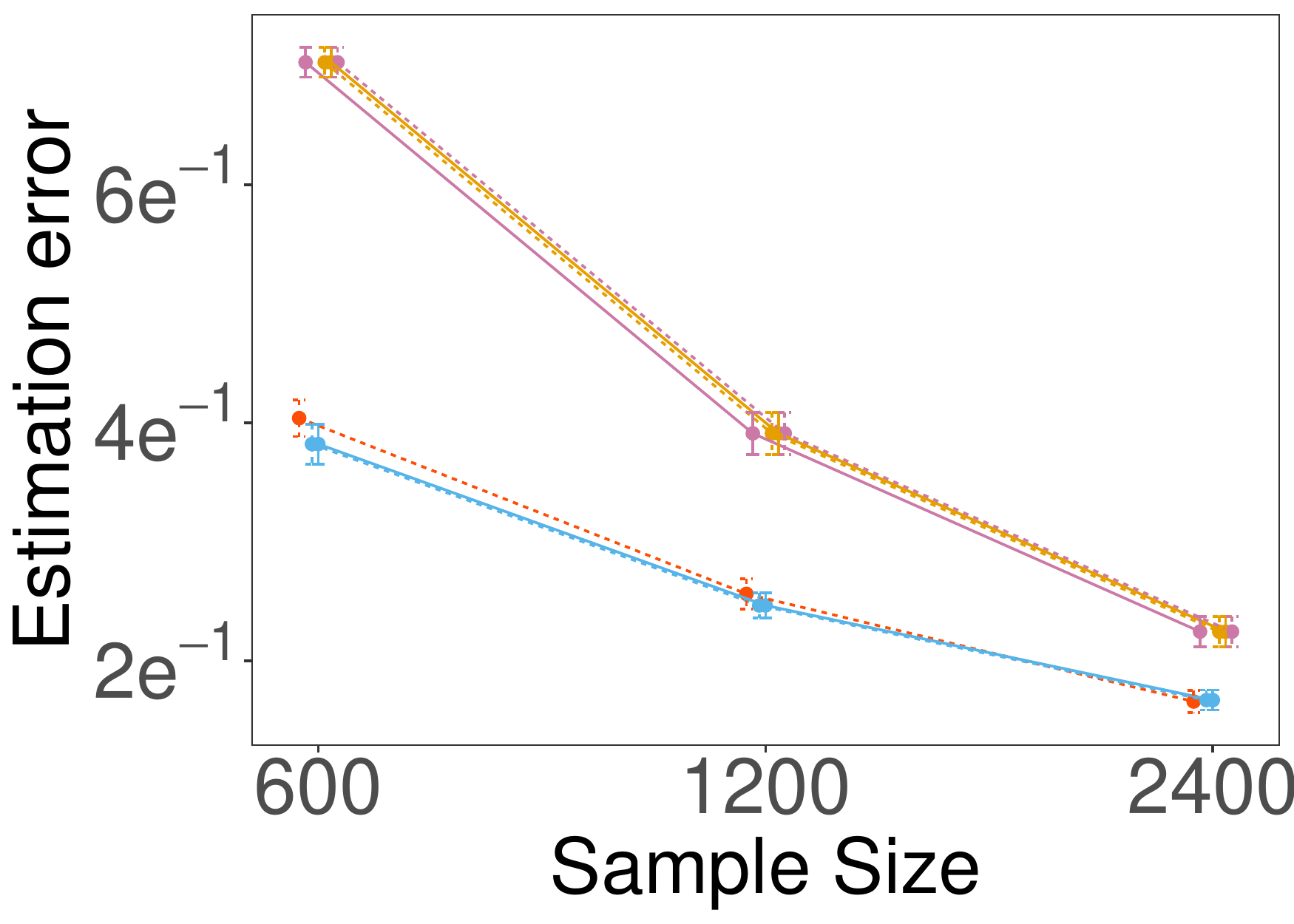}
  \includegraphics[width=0.32\textwidth]{./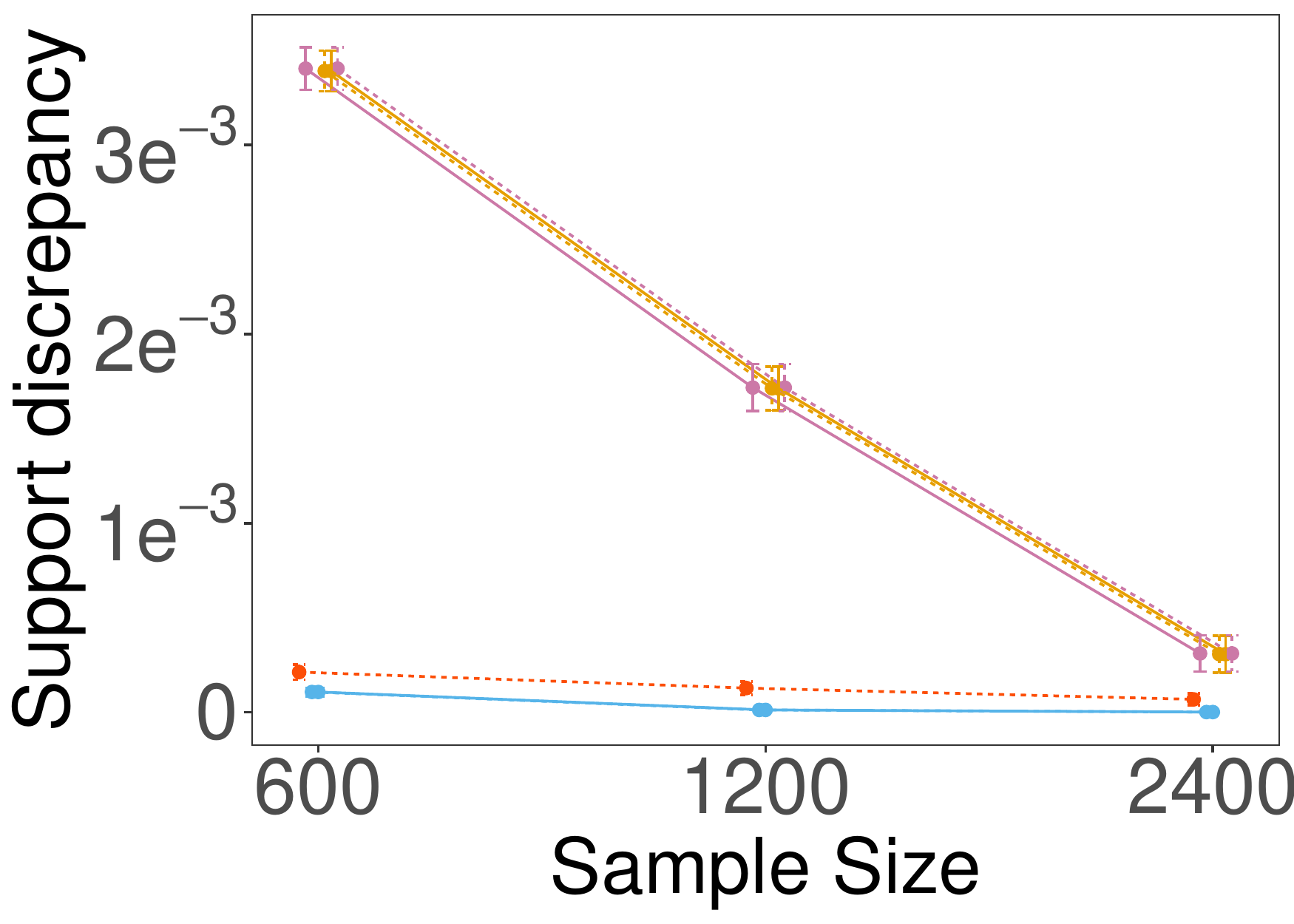}
\caption{Performance vs. Sample size under nested tree structure}
\label{fig:wtsamtree}
\end{subfigure}
\includegraphics[width=\textwidth]{./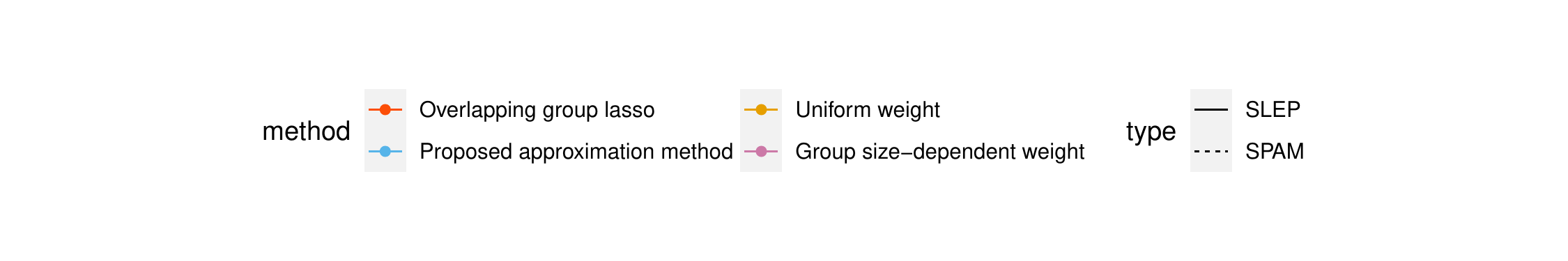}
\vspace{-0.5cm}
\caption{Regularization $\ell_2$ estimation error and support discrepancy of the proposed method using different choices of weights under interlocking group structure and nested tree structure.  Figure \ref{fig:wtsam} is an extension to Figure \ref{fig:timecompare}, and Figure \ref{fig:wtsamtree}  is an extension to Figure \ref{fig:timetreecompare}. }

\label{fig:weightcompare}
\end{figure}

Figure~\ref{fig:wtsam} and Figure~\ref{fig:wtsamtree} illustrate the weigh effects comparison in the settings of Figure~\ref{fig:timecompare} and Figure~\ref{fig:timetreecompare}, respectively. Under the interlocking group structure (Figure \ref{fig:wtsam}), three weighting themes deliver similar performance in terms of estimation errors. Still, the size-dependent weighting leads to a larger support discrepancy. This interlocking group structure is not very distinctive for the three weights themes because the overlapping degree is nearly uniform. The nested group structures (Figure \ref{fig:wtsamtree}) more effectively highlight the importance of the proposed weights. Our method significantly outperforms the other two weighting themes and aligns well with the original overlapping group lasso estimator. The weight design comparison on the gene pathway group structure is shown in Tables \ref{table:gene_west_comparison}--\ref{table:gene_wpat_comparison}. The proposed estimator gives a close approximation to the original overlapping group lasso, but the other two weighing designs lead to significantly different performances in several settings.

\begin{table}[ht]
\centering
\caption{Comparative analysis of average estimation errors and the corresponding 95\% confidence intervals for three weighting designs. The $*$ indicates that the error is statistically different from that of overlapping group lasso by a paired t-test.}
\tabcolsep=14pt
\begin{tabular}{|c|c|c|c|}
\hline
  \makecell[c]{Group\\Structure}  &   Proposed weight       &      Uniform weight    & \makecell[c]{Group size-\\dependent weight}  \\ \hline
BioCarts            & 0.25 [0.22, 0.28]           & 0.28          [0.26, 0.30]* & 0.35    [0.30,           0.40]* \\
KEGG        & 0.54 [0.51, 0.57]           & 0.80           [0.77, 0.83]* & 0.58 [0.51,           0.65]* \\
PID           & 0.25 [0.23, 0.27]           & 0.24           [0.21, 0.27]\phantom{0} & 0.39 [0.36,           0.42]* \\
WIKI      & 0.55 [0.49, 0.61]           & 0.83           [0.80, 0.86]* & 0.74 [0.67,           0.81]* \\
Reactome  & 0.65 [0.62, 0.68]           & 0.58          [0.55, 0.61]* & 0.69 [0.63,           0.75]\phantom{0}\\ \hline
\end{tabular}
\label{table:gene_west_comparison}
\end{table}

\begin{table}[ht]
\centering
\caption{Comparative analysis of average support discrepancy and the corresponding 95\% confidence intervals for three weighting designs. The $*$ indicates that the value is statistically different from that of overlapping group lasso by a paired t-test.}
\tabcolsep=11pt
\begin{tabular}{|c|c|c|c|c|}
\hline
  \makecell[c]{Group\\Structure}  &     Proposed weight       &      Uniform weight    & \makecell[c]{Group size-\\dependent weight}  \\ \hline
BioCarts   & 0.041 [0.039, 0.043]           & 0.045 [0.042, 0.048]* & 0.042 [0.039,           0.045]\phantom{0} \\
KEGG     &0.023 [0.021, 0.025]            & 0.059           [0.055, 0.063]* & 0.024 [0.022, 0.026]\phantom{0} \\
PID         & 0.033 [0.031, 0.035]  &   0.037 [0.035, 0.039]*    & 0.030 [0.027, 0.033]* \\
WIKI            &  0.013 [0.012, 0.014]  &  0.025 [0.023, 0.027]* &  0.013 [0.012, 0.014]\phantom{0}  \\
Reactome        &  0.012 [0.010, 0.014]  &  0.010 [0.008, 0.012]* &  0.022 [0.021, 0.023]* \\ \hline
\end{tabular}
\label{table:gene_wpat_comparison}
\end{table}

In summary, the experiments demonstrate that the weights designed in our penalty also serve as an indispensable part of a successful approximation to the overlapping group lasso estimation, which is another aspect of the tightest separable relaxation property in Theorem~\ref{addtheo}.

\section{Application Example: Pathway Analysis of Breast Cancer Data}
\label{sec:data}

In this section, we demonstrate the proposed method by predictive tasks on the breast cancer tumor data, as previously used in Section~\ref{secsec:sim-gene}. This time, unlike the previous simulation studies, we use the complete data set with tumor labels for each observation. Specifically, each observation is labeled according to the status of the breast cancer tumors, with 79 classified as metastatic and 216 as non-metastatic. These labels serve as the response variable for our analysis.

Gene pathways have been widely used to key gene groups in cancer studies. In particular, \cite{ES, chen2012smoothing,ESS} used the overlapping group lasso techniques to exclude less significant biological pathways in cancer prediction. As a detailed example, \cite{chen2012smoothing} leveraged the overlapping group lasso penalty to pinpoint biologically meaningful gene groups. Their analysis revealed multiple groups of genes associated with essential biological functions, such as protease activity, protease inhibitors, nicotine, and nicotinamide metabolism, which turned out to be important breast cancer markers \citep{ma2010detection}. This evidence highlights the potential of using the overlapping group lasso penalty in cancer analysis. On the other hand, another way to incorporate gene pathway information in such analysis is to retain genes by entire pathways. \cite{ogl} used the latent overlapping group lasso penalty to achieve this while \cite{mairal2013supervised} introduced an $\ell_{\infty}$ variant further. The success of all these previous studies reveals the potential of the gene pathway information in cancer prediction. They also show that the proper way to use the pathways (e.g., either eliminating-by-group, as in overlapping group lasso, or including-by-group, as in latent overlapping group lasso) highly depends on the data set and genes. 

In our analysis, we use regularized logistic regression to build a classifier with the overlapping group lasso penalty (OGL), our proposed group lasso approximation penalty (Proposed approximation), the standard lasso penalty, the latent overlapping group lasso penalty (LOG) \citep{ogl}, and the $\ell_{\infty}$ latent overlapping group lasso penalty of \citep{mairal2013supervised}.  As mentioned in previous sections, our focus is not on justifying the overlapping group lasso should be used. Instead, \textbf{our primary objective is to demonstrate that when an overlapping group lasso penalty is used, our method provides a good approximation to the overlapping group lasso (with a much faster computation) across various pathway sets} (Table~\ref{tab:pathways}), whether or not the overlapping group lasso penalty is the best option for the problem. 

Two additional aspects can also be evaluated as by-products of our analysis. First, as the lasso penalty does not consider the pathway information, comparing the performance of the group-based penalty and the lasso penalty in this problem would verify whether a specific gene pathway set contains predictive grouping information for breast cancer tumor type. Second, by assessing the predictive performances among the overlapping group lasso classifier and the latent overlapping group lasso classifiers, we can verify whether a specific gene pathway set is more suitable for eliminating-by-group or including-by-group strategies for prediction.

\begin{table}[H]
    \centering
   \caption{\label{table:time} Computing time (in seconds) under different pathway databases.}
    \setlength{\tabcolsep}{10pt}
    % \resizebox{\textwidth}{!}
    {%
    \begin{tabular}{|p{.15\textwidth}|c|c|c|}
        \hline
                \diagbox[innerwidth = .15\textwidth]{Database}{Method} & \makecell[l]{OGL} & Lasso  & \makecell[l]{Proposed approximation} \\
            \hline
            BioCarts & 732 & 26 & 75  \\
            \hline KEGG & 2468 & 102 & 225 \\
            \hline PID & 1231 & 41 & 107  \\
            \hline WIKI & 5172 & 170 & 395 \\
            \hline Reactome & 11356 & 321  & 1186 \\
        \hline
    \end{tabular}}
\end{table}

We adopt the evaluation procedure of \cite{ESS}, where we randomly split the data set into 200 training observations and 95 test observations. All methods are tuned by 5-fold cross-validation on the training data. We calculate the area under the receiver operating characteristic (AUC) curve, a commonly used metric for classifying accuracy \citep{hanley1982meaning}, on the test data. The total time for the entire cross-validation process is recorded as computation time. The experiment is repeated 100 times independently. Table~\ref{table:time} and Table~\ref{table:auc} show the average computing time and AUC, respectively.

The following can be summarized from the results:
\begin{itemize}
\item First and foremost, the proposed estimator acts as an effective and computationally efficient approximation for the overlapping group lasso estimator. The results evidently support this claim. The proposed estimator delivers predictive performance that is (the most) similar to the overlapping group lasso estimator across various pathway datasets while significantly reducing the computing time by roughly ten times. 
\item Second, the lasso classifier performs best only on the WIKI pathway set, suggesting that the pathways in the WIKI database might not be sufficiently informative for cancer prediction.  
%As noted in \cite{zhang2021identifying}, the physical and functional relationships between genes within KEGG pathways tend to be stronger and more reliable than those in other pathway sets. An interesting observation is that the lasso method performs less effectively on KEGG pathway sets compared to all other methods that consider group information. This suggests that pathway sets providing helpful group information for prediction might be due to the closer functional relationships among genes within these sets. Please note that we aim to examine, rather than to claim, the effectiveness of gene pathways.
\item Third, the superiority between the overlapping group lasso regularizations and the latent overlapping group lasso regularizations depends on the specific group information. Among the four pathway sets with useful group information, the overlapping group lasso delivers superior predictive performance for the Biocarts and PID databases, while the latent overlapping group lasso classifiers provide better predictions on the KEGG and Reactome databases.
\end{itemize}

\begin{table}[H]
    \centering
   \caption{\label{table:auc} Predictive AUC results of the three methods under different pathway databases. }
    % \resizebox{\textwidth}{!}
    {%
    \setlength{\tabcolsep}{10pt}    
    \begin{tabular}{|p{.15\textwidth}|c|c|c|c|c|}
        \hline
        \diagbox[innerwidth = .15\textwidth]{Database}{Method} & \makecell[l]{OGL} & Lasso  & \makecell[c]{Proposed \\ approximation} & \makecell[l]{LOG} &  \makecell[l]{LOG $\infty$}  \\
            \hline BioCarts & 0.7103\phantom{0} & 0.6989           & 0.7242\phantom{0}      & 0.6888           & 0.6995 \\
            \hline KEGG     & 0.7021\phantom{0} & 0.6862           & 0.7081\phantom{0} & 0.7390          & 0.7333 \\
            \hline PID      & 0.7475\phantom{0}      & 0.7004           & 0.7301\phantom{0} & 0.6881           & 0.6891 \\
            \hline WIKI     & 0.6862\phantom{0} & 0.7282           & 0.6893\phantom{0} & 0.7149           & 0.7207 \\
            \hline Reactome & 0.6921\phantom{0} & 0.7301           & 0.7053\phantom{0} & 0.7463           & 0.7438 \\
        \hline
    \end{tabular}}
\end{table}

As a remark, while our evaluation is based on prediction accuracy, it is not the only criterion to determine if a method is proper for the dataset. For example, \cite{mairal2013supervised} found that neither the overlapping group lasso model nor the latent overlapping group lasso model outperformed simple ridge regularization in prediction. The value of structured penalties also lies in their ability to identify potentially more interpretable genes, depending on the biological interpretations.

\section{Discussion}\label{sec:disc}
We have introduced a separable penalty as an approximation to the group lasso penalty when groups overlap. The penalty is designed by partitioning the original overlapping groups into disjoint subgroups and reweighing the new groups according to the original overlapping pattern. The penalty is the tightest separable relaxation of the overlapping group lasso among all $\ell_{q_1}/\ell_{q_2}$ norms. We have also shown that for linear problems, the proposed estimator is statistically equivalent to the original overlapping group lasso estimator but enjoys significantly faster computation for large-scale problems.

Several interesting directions could be considered for future research. The overlapping group lasso penalty presents a variable selection by eliminating variables by entire groups. A counterpart selection procedure can include variables by entire groups, which is achieved by the latent overlapping group lasso \citep{ogl}. This penalty also suffers from a non-separability computational bottleneck. It would be valuable to investigate whether a similar approximation strategy could be designed to boost the computational performance in this scenario. More generally, the introduced concept of ``tightest separable relaxation" might be a promising direction for optimizing non-separable functions. Studying the more general form and corresponding properties of this concept may generate fundamental insights about optimization.

% Acknowledgements should go at the end, before appendices and references

\acks{The work is supported in part by the NSF grant DMS-2015298 and the 3-Caveliers award from the University of Virginia. The authors acknowledge the Minnesota Supercomputing Institute (MSI) at the University of Minnesota and the Research Computing at The University of Virginia for providing resources that contributed to the research results reported within this paper. We appreciate the insightful feedback and comments from the editor and reviewer, which significantly improved the paper.}

% Manual newpage inserted to improve layout of sample file - not
% needed in general before appendices/bibliography.

\bibliography{sample}
\appendix
\section{Notation summary}\label{sec:notations}

\begin{table}[H]
{\small
\caption{\label{tab:notations}Mathematical notations in the paper.}
\begin{tabularx}{\textwidth}{p{0.22\textwidth}X}
    \toprule
  \multicolumn{2}{l}{{\noindent Indices:}}                                       \\
  $[z]$ & index set $\{1,...,z\}$ \\ 
   $G_g$ & index set  of $g^{th}$ group\\
    $G_S$ & collection of non-zero groups, $\mathop{\bigcup}_{g\in S(\beta)}G_g $\\
     $G_{\overline{S}}$ &  $\mathop{\bigcup}_{g\in \overline{S(\beta)}}G_g $\\
   $\beta_j$ & the $j^{th}$ element of $\beta$\\
    $\beta_{G_g}$ & sub-vector of $\beta$ indexed by $G_g$\\
    $\beta_{M(S)}$ & projection of $\beta$ onto $M(S)$\\
      $A_{,T}$ & sub-matrix  consisting of the columns indexed by  T \\[0.2in]
      \multicolumn{2}{l}{{\noindent Parameters:}}                                       \\
       $H$ &  a diagonal matrix, $diag(\frac{1}{h_1},\cdots,\frac{1}{h_p})$\\
  $\mathbf{G}$ &  group structure matrix, $\mathbf{G}_{gj} = 1$ iff $\beta_j \in G_g$\\
  $d_g$ & group size,  $d_g = \sum_{j\in[p]}\mathbf{G}_{gj}$ \\
  $d_{\max}$ &  maximum group size, $d_{\max} = \max_{g \in [m]}d_g$  \\
  $h_j$ & overlap degree, $h_j = \sum_{g \in [m]}\mathbf{G}_{gj}$ \\
  $h_{\max}^g$  & maximum overlap degree in $G_g$, $h_{\max}^g= \max_{j \in G_g}h_j$ \\
    $h_{\min}^{g}$ & minimum overlap degree in $G_g,$ $\min_{j \in G_g}h_j$\\
  $h_{\max}$ & maximum overlap degree, $h_{\max} = \max_{j \in [p]}h_j$\\
  $\mathcal{h}_{\mathcal{g}}$& overlap degree of $\mathcal{G}_\mathcal{g},$ $h_{\text{\{}j|j \in \mathcal{G}_{\mathcal{g}}\text{\}}}$\\
  $\sigma$   & parameter in the sub-Gaussian distribution  \\
  $s_g$ &        number of non-zero groups  $|S|$ \\
  $\overline{s_g}$ & number of groups in the argument group support set $|\overline{S}|$\\
 $\kappa$  & parameter controls convexity \\[0.2in]
  \multicolumn{2}{l}{{\noindent Definitions:}}                                       \\
  $\phi(\beta)$   & group lasso norm, $\sum\limits_{g \in [m]}w_g \left\|\beta_{G_g}\right\|_2,$  \\
    $\phi^{\ast}(\beta)$ & dual norm of $\phi(\beta),$ $\max\limits_{g\in[m]}\frac{1}{w_g}\left\|(H\beta)_{G_g}\right\|_2$\\
  $F(\mathcal{g}) \subseteq [m]$  &  overlapping groups which include the variables in $\mathcal{G}_\mathcal{g}$  \\
  $F^{-1}(g) \subseteq [\mathcal{m}]$  &   non-overlapping groups that were partitioned from $G_g$\\
  $||\beta_{\{G,w\}}||_{q_1,q_2}$ & $\ell_{q_1,q_2}$ norm, $ \left\{ \sum\limits_{g \in [m]} w_g\left( \sum\limits_{j \in G_g}|\beta_j|^{q_2}\right)^{\frac{q_1}{q_2}}\right\}^{\frac{1}{q_1}} $ \\
$\textit{supp}(\beta)$ & support set, $\{j\in\{1,\cdots,p\}|\beta_j\neq 0\}$\\
$S(\beta)$ &  group support set, $\left\{g\in \{1,\cdots,m\}|G_g \cap \textit{supp}(\beta)\neq 
\varnothing\right\}$\\
$\overline{S(\beta)}$ &  $\{g=\{1,\cdots,m\}|G_g\cap
{G_{S(\beta)}\neq \varnothing}\}$\\
$M(S)$ &$\left\{\beta\in{\mathbb{R}^p}|\beta_j=0 \text{ for all } j \in (G_{S})^c\right\}$\\
$M^{\perp}(S)$ & $\left\{\beta\in \mathbb{R}^p |\beta_j=0 \text{ for all } j\in G_{S}\right\}$\\ 
$\Omega(G,s_g)$ & $\big\{\beta : \sum\limits_{G_g \in G}\mathbbm{1}_{\{\norm{\beta_{G_g}}_2 \neq 0\} } \leqslant s_g\big\}$\\
$J_G(\beta) $ & $[p] \backslash \big\{\bigcup_{G_g \cap \textit{supp}(\beta) = \emptyset} G_g\big\}.$\\
$\mathsf{G}_{J_G(\beta)}$ & $\left\{g\in [m] \mid G_g  \cap J_G(\beta) \neq \emptyset \right\}$\\
$\mathsf{G}_{J_G(\beta)^c}$ & $\left\{g \in [m] \mid G_g \cap J_G(\beta)^c \neq \emptyset \right\}$\\
  \bottomrule
  \end{tabularx}
  }
\end{table}
\newpage

\section{Uniqueness of the overlapping group lasso problem}\label{app:uniqueness}

The group lasso penalization problems \eqref{eq:OGL-est} and \eqref{eq:Our-est} are generally convex, but may not be strictly convex. The uniqueness of these problems has been studied by \cite{svsw}. Here we introduce their results for completeness. Note that our theoretical properties in Section~\ref{sec:results} do not rely on such uniqueness. 

\begin{lem}(Proposition 1 of \cite{svsw})
\label{unique}
 If the gram matrix $Q = X^{\top}X/n$ is invertible, or if there exists $g \in [m]$ such that $G_g = [p]$, then the optimization problem specified in \eqref{eq:OGL-est}, with $\lambda_n > 0$, is guaranteed to have a unique solution. The same holds for problem \eqref{eq:Our-est} with $G$ replaced by $\Gcal$.
\end{lem}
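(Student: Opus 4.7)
The plan is to exploit the structure of the objective $f(\beta) = \frac{1}{2n}\|Y-X\beta\|_2^2 + \lambda_n \phi_G(\beta)$, which is a sum of a convex quadratic and a convex norm. Both problems \eqref{eq:OGL-est} and \eqref{eq:Our-est} share exactly the same structure, so it suffices to handle $\phi_G$; the argument transfers verbatim to $\psi_{\Gcal}$ by replacing $G$ with $\Gcal$. Since $f$ is convex, the set of minimizers is convex. I would suppose for contradiction that $\beta^{(1)} \ne \beta^{(2)}$ are both minimizers, set $\beta^{(t)} = t\beta^{(1)} + (1-t)\beta^{(2)}$ for $t \in (0,1)$, and note $\beta^{(t)}$ is also a minimizer. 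This forces the convexity inequality to be an equality for \emph{both} pieces of the objective simultaneously: the squared-error loss and the penalty.

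Equality in the loss piece uses strict convexity of the quadratic in the fitted values: $\frac{1}{2n}\|Y-X\beta^{(t)}\|_2^2 = t\cdot\frac{1}{2n}\|Y-X\beta^{(1)}\|_2^2 + (1-t)\cdot\frac{1}{2n}\|Y-X\beta^{(2)}\|_2^2$ requires $X\beta^{(1)} = X\beta^{(2)}$. This immediately settles the first case: if $Q = X^\top X/n$ is invertible, then $X$ has full column rank, so $X(\beta^{(1)}-\beta^{(2)}) = 0$ implies $\beta^{(1)} = \beta^{(2)}$, contradicting the assumption.

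For the second case, suppose some group $g^*$ satisfies $G_{g^*} = [p]$. Equality in the penalty piece, $\phi_G(\beta^{(t)}) = t\phi_G(\beta^{(1)}) + (1-t)\phi_G(\beta^{(2)})$, must hold termwise (since each summand is convex), and in particular the $g^*$-term reads
\begin{equation*}
\bigl\| t\beta^{(1)} + (1-t)\beta^{(2)} \bigr\|_2 = t\|\beta^{(1)}\|_2 + (1-t)\|\beta^{(2)}\|_2.
\end{equation*}
The equality condition for the $\ell_2$ triangle inequality forces $\beta^{(1)}$ and $\beta^{(2)}$ to be nonnegative scalar multiples of one another (with the degenerate case allowing one of them to vanish). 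I would then split on cases. If $\beta^{(1)} = 0$, the shared objective value is $\frac{1}{2n}\|Y\|_2^2$, and the combined constraints $X\beta^{(2)} = X\beta^{(1)} = 0$ and minimality reduce to $\lambda_n \phi_G(\beta^{(2)}) = 0$; since $\lambda_n > 0$ and the weights $w_g$ are positive, every group norm of $\beta^{(2)}$ vanishes, giving $\beta^{(2)} = 0$. If neither is zero, write $\beta^{(2)} = c\,\beta^{(1)}$ for some $c > 0$; then $X\beta^{(1)} = X\beta^{(2)}$ gives $(c-1)X\beta^{(1)} = 0$, so either $c = 1$ (hence $\beta^{(1)} = \beta^{(2)}$, contradiction) or $X\beta^{(1)} = 0$, in which case replacing $\beta^{(1)}$ by $0$ strictly decreases $f$ (the loss is unchanged and the penalty strictly drops because $\lambda_n \phi_G(\beta^{(1)}) > 0$), contradicting optimality of $\beta^{(1)}$.

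The only delicate step is the second case, where one must correctly invoke the rigid equality condition for the $\ell_2$ triangle inequality and then cleanly handle the degenerate subcases with $\lambda_n > 0$; the rest is a routine consequence of convexity. Because the proposed norm $\psi_{\Gcal}$ is a weighted sum of $\ell_2$-norms of sub-vectors with positive weights $\wcal_\gcal > 0$, the same two-case argument (full column rank; or some $\Gcal_{\gcal^*} = [p]$, which occurs precisely when some original $G_{g^*} = [p]$) yields uniqueness for \eqref{eq:Our-est}.
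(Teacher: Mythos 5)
The paper does not actually prove this lemma: it is imported verbatim as Proposition~1 of \cite{svsw}, so there is no in-paper proof to compare against. Judged on its own merits, your argument is the standard (and correct) one: convexity of the minimizer set, strict convexity of $u\mapsto\frac{1}{2n}\|Y-u\|_2^2$ forcing $X\beta^{(1)}=X\beta^{(2)}$, which settles the invertible-$Q$ case; and, when some $G_{g^*}=[p]$, termwise equality in the penalty plus the rigidity of the $\ell_2$ triangle inequality forcing $\beta^{(2)}=c\,\beta^{(1)}$, after which the degenerate subcases are correctly dispatched using $\lambda_n>0$ and the fact that $\phi^G(\beta)=0$ implies $\beta=0$ because the groups cover $[p]$. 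All of these steps are sound.

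The one genuine error is the final parenthetical claim that some $\Gcal_{\gcal^*}=[p]$ ``occurs precisely when some original $G_{g^*}=[p]$.'' Only one direction holds. If $\Gcal_{\gcal^*}=[p]$ then all columns of $\mathbf{G}$ are identical and every nonempty group equals $[p]$; but the converse fails: take $p=3$, $G_1=\{1,2,3\}$, $G_2=\{1,2\}$, for which Algorithm~2 yields $\Gcal=\{\{1,2\},\{3\}\}$ and no $\Gcal_\gcal$ equals $[p]$. This does not break your proof of the lemma as stated, since ``with $G$ replaced by $\Gcal$'' means the hypothesis for \eqref{eq:Our-est} is that some $\Gcal_\gcal=[p]$, and your case-2 argument applies verbatim to $\psi^{\Gcal}$ under that hypothesis. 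But you should delete the equivalence claim: the sufficient condition for uniqueness of $\hat\beta^{G}$ via a full group does \emph{not} transfer to $\hat\beta^{\Gcal}$, and pretending otherwise would overstate what the lemma gives you.
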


\section{Additional Theoretical Results}

To begin with, we introduce our proposed upper bound for the dual norm of the overlapping group lasso.

\begin{pro}
\label{pro1}
The sharp upper bound for $\phi^{\ast}$ (the dual norm of overlapping group lasso penalty in  \eqref{eq:glnorm}) is 
 \begin{equation*} 
     \max\limits_{g\in[m]}\frac{1}{w_g}\left\|(H\beta)_{G_g}\right\|_2,
 \end{equation*}
 where $H$ is a diagonal matrix with diagonals $(\frac{1}{h_1},\cdots,\frac{1}{h_p})$.
\end{pro}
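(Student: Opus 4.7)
The plan is to prove the proposition in two stages: establish that $\phi^\ast(\beta) \le \max_{g\in[m]} \|(H\beta)_{G_g}\|_2/w_g$ for every $\beta$, and then verify the bound is \emph{sharp} in the sense that equality holds for some $\beta$. The cornerstone of the upper-bound direction is the ``averaging decomposition'' $\beta = \sum_{g\in[m]} u^g$, where $u^g$ is the vector equal to $(H\beta)_{G_g}$ on the indices $G_g$ and to zero elsewhere. This identity is valid coordinate-wise because $\sum_{g:\,j\in G_g}(H\beta)_j = h_j\cdot\beta_j/h_j = \beta_j$.

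Starting from $\phi^\ast(\beta)=\sup_{\phi(\alpha)\le 1}\langle\alpha,\beta\rangle$, I would substitute the decomposition, apply Cauchy--Schwarz inside each group, multiply and divide by $w_g$, and then factor out the maximum of the ratio:
$$\langle \alpha,\beta\rangle = \sum_{g}\langle \alpha_{G_g},(H\beta)_{G_g}\rangle \le \sum_{g}\|\alpha_{G_g}\|_2\|(H\beta)_{G_g}\|_2 \le \Bigl(\max_{g}\tfrac{\|(H\beta)_{G_g}\|_2}{w_g}\Bigr)\sum_{g} w_g\|\alpha_{G_g}\|_2.$$
The last sum is $\phi(\alpha)\le 1$, so taking the sup over admissible $\alpha$ gives the claimed upper bound.

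For sharpness, I would exhibit a $\beta$ that makes the inequality chain collapse to equality. The cleanest instance is when some coordinate $j^\star$ belongs to a single group $g^\star$ (so $h_{j^\star}=1$): taking $\beta=e_{j^\star}$ kills $(H\beta)_{G_g}$ for every $g\ne g^\star$, which forces $\max_g\|(H\beta)_{G_g}\|_2/w_g = 1/w_{g^\star}$, and a direct computation with the maximizer $\alpha=e_{j^\star}/w_{g^\star}$ (the only group constraint active is $w_{g^\star}|\alpha_{j^\star}|\le 1$) shows $\phi^\ast(e_{j^\star})=1/w_{g^\star}$. For configurations without such a ``singleton-in-group'' coordinate, I would tune $\beta$ on a symmetric multi-group support so that (i) $\alpha_{G_g}\propto(H\beta)_{G_g}$ on each group (making the per-group Cauchy--Schwarz tight) and (ii) the ratios $\|(H\beta)_{G_g}\|_2/w_g$ are equal across all groups intersecting $\operatorname{supp}(\beta)$ (making the max-factorization tight), and produce a compatible $\alpha$ simultaneously satisfying both.

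The main obstacle is the sharpness step under genuine overlap with non-uniform weights: when every coordinate is shared among multiple groups, the proportionality constants implicit in condition (i) are forced to agree on any pair of groups sharing a support coordinate, so the freedom in choosing $\beta$ is more constrained than it first appears. A likely resolution is either to restrict $\beta$ to the ``private'' region $G_{g^\star}\setminus\bigcup_{g\ne g^\star}G_g$ when it is nonempty, or to rescale $\beta$ entrywise by factors involving $h_j$ so that the overlapping group norms line up. The upper-bound direction itself is a textbook decomposition-plus-Cauchy--Schwarz argument and poses no real difficulty.
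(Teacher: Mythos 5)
Your proposal is correct and follows essentially the same route as the paper: the upper bound comes from the identical averaging decomposition $\beta=\sum_g u^g$ with $u^g$ the restriction of $H\beta$ to $G_g$ (which is exactly the paper's rewriting $u^\top v=\sum_g \langle (Hv)_{G_g},u_{G_g}\rangle$), followed by per-group Cauchy--Schwarz and factoring out $\max_g \|(H\beta)_{G_g}\|_2/w_g$ against $\phi(\alpha)\le 1$. For sharpness the paper likewise only exhibits equality when the maximizing group has overlap degree one (it explicitly assumes $h^{g_0}_{\max}=1$ and builds the dual maximizer supported on that group), so your ``singleton-in-group'' witness and your candid acknowledgment that the genuinely overlapping case is more delicate match the paper's actual scope.
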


 \begin{assu}\label{ass:distr}
Under model \eqref{eq:linear-model}, we assume 
\begin{enumerate}
\item (Sub-Gaussian noises) The coordinates of $\varepsilon$ are i.i.d zero mean sub-Gaussian random variable denote with parameter $\sigma$, which means that there exist $\sigma > 0$ such that
$$ E[e^{t\varepsilon})] \leqslant \frac{e^{\sigma^2 t^2}}{2}, \hspace{0.2cm} \text{for all} \hspace{0.2cm} t \in \mathbb{R}.$$
\item (Group normalization condition) $ \sqrt{\gamma_{\max}(\frac{{X_{G_g}}^TX_{G_g}}{n} })\leqslant c$ for some constant c. 
\item (Restricted strong convexity condition) For some $\kappa >0$,  $$\frac{\left\|X\left(\bar{\beta}-\beta^{\ast}\right)\right\|_2^2}{n}\geqslant \kappa\left\|\bar{\beta}-\beta^{\ast}\right\|_2^2, \hspace{0.2cm}  \text{for all}\hspace{0.2cm} \bar{\beta}\in\left\{\beta \mid \phi \left(\left(\beta-\beta^{\ast}\right)_{M^{\perp}(\overline{S})}\right) \leqslant 3\phi\left(\left(\beta-\beta^{\ast}\right)_{M(\overline{S})}\right)\right\}.$$
\end{enumerate}
 \end{assu}

\textbf{Remark:} The assumption requires an upper bound for the quadratic form associated with each group. This type of assumption is commonly used for developing the upper estimation error bound for non-overlapping group lasso  \citep{oiao,tbog,aebf,aufh,wainwright2019high}. Additionally, the restricted curvature conditions have been well discussed by \citet{wainwright2019high}. The curvature $\kappa$ in Assumption~\ref{ass:distr} is a parameter measuring the convexity. Generally speaking, the restricted curvature conditions state the loss function is locally strongly convex in a neighborhood of ground truth and thus guarantees that a small distance between the estimate and the true parameter implies the closeness in the loss function.  However, such a strong convexity condition cannot hold in the high-dimensional setting. So, we focus on a restrictive set of estimates. Restricted curvature conditions are milder than the group-based RIP conditions used in \citep{tbog,aebf}, which require that all submatrices up to a certain size are close to isometries \citep{wainwright2019high}. Based on Assumption~\ref{ass:distr}, Theorem~\ref{thm:two-bound} gives $\ell_2$ norm estimation upper error bound for overlapping group lasso.

\begin{theorem}
\label{thm:two-bound}
Define $h_{\min}^{g} = \min\limits_{j \in G_g}h_j$, $d_{\max} = \max\limits_{g \in [m]}d_g$, and $\dcal_{\max} = \max\limits_{\gcal \in [\mcal]}\dcal_g$. Suppose Assumption~\ref{ass:distr} holds, for any $\delta \in [0,1]$,
\begin{enumerate}
\item   with $\lambda_n =  \frac{8c \sigma}{\min\limits_{g\in[m]}\left(w_g^2 h_{\min}^{g}\right)}\sqrt{\frac{d_{\max}\log{5}}{n}+ 
	\frac{\log{m}}{n}+\delta}$, the following  bound hold for $\hat{\beta}^G$ in \eqref{eq:OGL-est} 
\begin{equation}
\label{eq:el2bound}
    \left\|\hat{\beta}^{G}-\beta^{\ast}\right\|^2_2\lesssim\frac{ \sigma^2}{\kappa^2}\cdot\frac{\left(\sum\limits_{g\in\overline{S}}{w_g}^2\right)\cdot h^{G_{\overline{S}}}_{\max}}{\min\limits_{g\in[m]}\left(w_g^2 h_{\min}^{g}\right)}
\cdot\left(\frac{d_{\max}{\log{5}}}{n}+\frac{\log{m}}{n}+\delta\right).
\end{equation}
with probability at least $1-e^{-2n\delta}$.
\item  with $
\lambda_n =  \frac{8c\sigma}{\min\limits_{g\in[m]} \wcal_g}\sqrt{\frac{\dcal_{\max}\log{5}}{n}+ 
	\frac{\log{m}}{n}+\delta}$, the following  bound hold for $\hat{\beta}^{\Gcal}$ in \eqref{eq:Our-est}
\begin{equation}
\label{xxq1}
   \left\|\hat{\beta}^{\mathcal{G}}-\beta^{\ast}\right\|_2^2 \lesssim\frac{ \sigma^2}{\kappa^2}\cdot\frac{\sum\limits_{\mathcal{g}\in F^{-1}(S) }{\mathcal{w}_{\mathcal{g}}}^2}{\min\limits_{\mathcal{g}\in[\mathcal{m}]}\left(w_\mathcal{g}^2 \right)}
\cdot\left(\frac{\mathcal{d}_{\max}{\log{5}}}{n}+\frac{\log{\mathcal{m}}}{n}+\delta\right).
\end{equation}
\end{enumerate}
\end{theorem}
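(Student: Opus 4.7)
\textbf{Proof proposal for Theorem~\ref{thm:two-bound}.} The plan is to follow the standard high-dimensional $M$-estimation template of \citet{wainwright2019high}: a basic inequality from optimality, dual-norm control of the noise term, a decomposability-based cone inclusion, and the restricted strong convexity assumption to close the bound. Both parts use the same template; the second is essentially a specialization to the non-overlapping structure $\Gcal$. First, letting $\Delta = \hat\beta^G - \beta^*$ and substituting $Y = X\beta^* + \varepsilon$, optimality of $\hat\beta^G$ combined with the generalized H\"older inequality for dual norms gives
\[\frac{1}{2n}\|X\Delta\|_2^2 \leq \phi^*(X^\top\varepsilon/n)\,\phi(\Delta) + \lambda_n\bigl[\phi(\beta^*) - \phi(\hat\beta^G)\bigr].\]

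Next, I would show that the chosen $\lambda_n$ dominates $2\phi^*(X^\top\varepsilon/n)$ with the stated probability. By Proposition~\ref{pro1},
\[\phi^*(X^\top\varepsilon/n) = \max_{g \in [m]} \frac{1}{w_g}\bigl\|(H X^\top\varepsilon/n)_{G_g}\bigr\|_2,\]
and bounding the diagonal action of $H$ by $1/h_{\min}^g$ reduces the task to uniformly controlling the $m$ random vectors $X_{G_g}^\top\varepsilon/n$. Each $\|X_{G_g}^\top\varepsilon/n\|_2 = \sup_{u \in S^{d_g-1}} u^\top X_{G_g}^\top\varepsilon/n$ is the supremum of a sub-Gaussian process whose marginal parameter is bounded by $c\sigma/\sqrt n$ via the group normalization condition (Assumption~\ref{ass:distr}, part~2). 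A $1/4$-net on $S^{d_g-1}$ of cardinality at most $5^{d_g}$, the sub-Gaussian tail inequality, and a union bound over $g \in [m]$ with deviation $t = \log m/n + \delta$ deliver the desired probability estimate $1 - e^{-2n\delta}$.

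Third, I would exploit the decomposability identity $\phi(\beta) = \phi(\beta_{M(\overline{S})}) + \phi(\beta_{M^\perp(\overline{S})})$. This identity holds because groups indexed by $\overline{S}$ lie entirely inside $G_{\overline{S}}$ while groups outside $\overline{S}$ are disjoint from $G_{\overline{S}}$ by the very definition of the augmented support. Since $\mathrm{supp}(\beta^*) \subseteq G_{\overline{S}}$, the triangle inequality gives $\phi(\beta^*) - \phi(\hat\beta^G) \leq \phi(\Delta_{M(\overline{S})}) - \phi(\Delta_{M^\perp(\overline{S})})$, and substitution into the basic inequality forces $\Delta$ into the cone $\{\phi(\beta_{M^\perp(\overline{S})}) \leq 3\phi(\beta_{M(\overline{S})})\}$ on which RSC applies and yields $\kappa\|\Delta\|_2^2 \leq 3\lambda_n\phi(\Delta_{M(\overline{S})})$. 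A Cauchy-Schwarz step then bounds
\[\phi(\Delta_{M(\overline{S})}) \leq \Bigl(\sum_{g \in \overline{S}} w_g^2\Bigr)^{1/2}\Bigl(\sum_{g \in \overline{S}} \|\Delta_{G_g}\|_2^2\Bigr)^{1/2} \leq \Bigl(\sum_{g \in \overline{S}} w_g^2\Bigr)^{1/2}\sqrt{h^{G_{\overline{S}}}_{\max}}\,\|\Delta\|_2,\]
since each coordinate appears in at most $h^{G_{\overline{S}}}_{\max}$ of the groups indexed by $\overline{S}$. Substituting the chosen $\lambda_n$ closes \eqref{eq:el2bound}.

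The argument for $\hat\beta^\Gcal$ runs identically with $\Gcal, \psi_{\Gcal}, \wcal$ replacing $G, \phi, w$: since $\Gcal$ is non-overlapping, all overlap degrees equal one, decomposability with respect to the group support of $\beta^*$ in $\Gcal$ is tautological, and the Cauchy-Schwarz step carries no $h_{\max}$ factor, producing \eqref{xxq1}. The main obstacle is the dual-norm concentration step: producing a high-probability bound whose denominator is the correct overlap-weighted quantity requires careful use of Proposition~\ref{pro1} together with the group-normalization geometry, and the union bound must be arranged so that the $d_{\max}$ and $\log m$ contributions combine additively rather than multiplicatively. The remaining decomposability, cone-inclusion, RSC, and Cauchy-Schwarz pieces are bookkeeping once the augmented support structure and its overlap counts are set up.
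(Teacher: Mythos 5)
Your proposal follows essentially the same route as the paper's proof: the basic inequality from optimality, Proposition~\ref{pro1} plus a $5^{d_g}$-net and union bound to get $\lambda_n \geq 2\phi^*(X^\top\varepsilon/n)$ with probability $1-e^{-2n\delta}$ (the paper's Lemma~\ref{l1}), the cone inclusion $\phi(\Delta_{M^\perp(\overline{S})})\leq 3\phi(\Delta_{M(\overline{S})})$ (Lemma~\ref{l3}), RSC, and the Cauchy--Schwarz step with the overlap factor $h^{G_{\overline{S}}}_{\max}$; part~2 is obtained exactly as you say, by specializing to $\Gcal$.

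One point in your write-up is stated incorrectly, though the step it supports survives. You claim the identity $\phi(\beta)=\phi(\beta_{M(\overline{S})})+\phi(\beta_{M^\perp(\overline{S})})$ for all $\beta$, justified by asserting that groups outside $\overline{S}$ are disjoint from $G_{\overline{S}}$. That is false in general: a group $g\notin\overline{S}$ is disjoint from $G_S$ by definition of the augmented support, but it may still intersect $G_{\overline{S}}\setminus G_S$ through some $g'\in\overline{S}\setminus S$, so $\phi$ is \emph{not} decomposable with respect to the symmetric pair $(M(\overline{S}),M^\perp(\overline{S}))$. What is true, and what the paper's Lemma~\ref{l2} proves, is decomposability for the asymmetric pair $(M(S),M^\perp(\overline{S}))$: for $a\in M(S)$ and $b\in M^\perp(\overline{S})$, every group either meets $G_S$ (hence lies in $\overline{S}$, so $b_{G_g}=0$) or does not (so $a_{G_g}=0$), giving $\phi(a+b)=\phi(a)+\phi(b)$. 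Applying this with $a=\beta^*\in M(S)$ and $b=\Delta_{M^\perp(\overline{S})}$, together with the triangle inequality, yields precisely the inequality $\phi(\hat\beta)-\phi(\beta^*)\geq \phi(\Delta_{M^\perp(\overline{S})})-\phi(\Delta_{M(\overline{S})})$ that you need, so the remainder of your argument goes through unchanged; you should just replace the symmetric decomposability claim with this asymmetric version.
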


Following the framework in \citep{aufh,wainwright2019high}, we further study the applicability of the restricted curvature conditions in terms of a random design matrix.
Given a group structure $G$, Theorem~\ref{thm:two-bound} is developed based on the assumption that the fixed design matrix X satisfies the restricted curvature condition.  In practice, verifying that a given design matrix $X$ satisfies this condition is difficult. Indeed, developing methods to “certify” design matrices this way is one line of ongoing research \citep{wainwright2019high}. However, it is possible to give high-probability results based on the following assumptions. 
%
% \begin{assu}
% \label{ass:groupstru}
%\begin{enumerate}
%\item (Pattern of group structure) The given group structure satisfies 
%\begin{equation*}
%\overline{s_g} \lesssim
%\frac{\gamma_{\min}(\Theta)}{\rho^2 (\Theta)} \cdot 
%\frac{n}{\log m + d_{\max} \log5}
%\cdot
%\frac{\min\limits_{g\in[m]}( w_g^2 h_{\min}^{g})}{\max\limits_{g\in \overline{S}} (w_g^2 h_{\max}^{g})},  
%\end{equation*}
%where $\rho (\Theta)$ is the maximum diagonal value of $\Theta$.
%\item (Normal random design)   The rows of data matrix $X$ are i.i.d from a $N(0,\Theta)$ distribution, where  $\gamma_{\max}(\Theta_{G_g, G_g}) \leqslant c_1$ for some constant $c_1$ and $\max\limits_{g \in [m]}d_g \lesssim n$. 
%\end{enumerate}
% \end{assu}

\begin{theorem}
\label{the:rsccondtion}
Under Assumptions~\ref{ass:distribution_sub_noise},\ref{ass:distribution_normal_rd}, and \ref{ass:distribution_groupstructure}, we have

\begin{enumerate}
\item  With probability at least $1 - e^{-c'n}$, $\max_{g\in [m]}\sqrt{ \gamma_{\max}(\frac{X_{G_g}^TX_{G_g}}{n}) } \leqslant c $ for some constants $c,c'>0$, as long as $\log m = o(n)$.
\item   The restricted strong convexity condition, which is  
 $$\frac{\left\|X\left(\bar{\beta}-\beta^{\ast}\right)\right\|_2^2}{n}\geqslant \kappa\left\|\bar{\beta}-\beta^{\ast}\right\|_2^2, \hspace{0.2cm}  \text{for all}\hspace{0.2cm} \bar{\beta}\in\left\{\beta \mid \phi \left(\left(\beta-\beta^{\ast}\right)_{M^{\perp}(\overline{S})}\right) \leqslant 3\phi\left(\left(\beta-\beta^{\ast}\right)_{M(\overline{S})}\right)\right\}.$$
hold with probability at least $1- \frac{e^{-\frac{n}{32}}}{1-e^{-\frac{n}{64}}}$ for some constant $\kappa > 0$.
\end{enumerate}
\end{theorem}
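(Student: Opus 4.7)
The plan is to prove the two parts in sequence, leveraging standard Gaussian concentration machinery but with careful bookkeeping for the group structure.

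For Part 1, the approach is a group-wise concentration bound followed by a union bound. Under Assumption~\ref{ass:distribution_normal_rd}, each sub-matrix $X_{G_g}\in\mathbb{R}^{n\times d_g}$ has i.i.d.\ rows from $N(0,\Theta_{G_g,G_g})$ with $\gamma_{\max}(\Theta_{G_g,G_g})\leqslant\gamma_{\max}(\Theta)\leqslant c_1$. The Davidson--Szarek bound (or Theorem 6.1 of Wainwright 2019) gives, for every $t>0$,
\begin{equation*}
\mathbb{P}\left(\sqrt{\gamma_{\max}(X_{G_g}^\top X_{G_g}/n)}\geqslant \sqrt{c_1}\bigl(1+\sqrt{d_g/n}+t\bigr)\right)\leqslant e^{-nt^2/2}.
\end{equation*}
Under Assumption~\ref{ass:distribution_groupstructure}, $d_{\max}/n\leqslant c_2$, so choosing $t$ to be an appropriate absolute constant makes the right-hand inside the probability uniformly bounded by some constant $c$. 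Taking a union bound over $g\in[m]$ yields a failure probability of at most $m\,e^{-c''n}=\exp(-c''n+\log m)$. Since $\log m=o(n)$, this is at most $e^{-c'n}$ for a new constant $c'>0$.

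For Part 2, the plan is to derive a general one-sided deviation inequality valid for all $v\in\mathbb{R}^p$, then restrict it to the RSC cone. The target form, standard for Gaussian designs regularized by a decomposable norm (see Theorem 9.36 of Wainwright 2019 and the group lasso extensions therein), is
\begin{equation*}
\frac{\|Xv\|_2}{\sqrt{n}}\;\geqslant\;\tfrac{1}{4}\sqrt{\gamma_{\min}(\Theta)}\,\|v\|_2-c_3\sqrt{\gamma_{\max}(\Theta)}\,\rho_n\,\phi(v),\qquad \forall v\in\mathbb{R}^p,
\end{equation*}
holding with probability at least $1-e^{-n/32}/(1-e^{-n/64})$, where the complexity factor is $\rho_n\asymp\sqrt{(\log m+d_{\max})/n}/\min_g w_g$. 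This is obtained via Gordon's inequality to control the Gaussian width of the weighted group lasso norm ball (whose width is of order $\sqrt{d_{\max}+\log m}$ by a covering argument over the $d_{\max}$-dimensional unit spheres within each group, together with a union bound over groups), followed by Borell--Sudakov--Tsirelson concentration, and a peeling argument over geometric scales $\|v\|_2\in[2^k r_0,2^{k+1}r_0]$. The peeling is what produces the geometric-series structure of the stated probability bound.

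Next, I specialize to vectors $v=\bar{\beta}-\beta^*$ in the cone $\mathcal{C}=\{v:\phi(v_{M^\perp(\overline{S})})\leqslant 3\phi(v_{M(\overline{S})})\}$. On $\mathcal{C}$ we have $\phi(v)\leqslant 4\phi(v_{M(\overline{S})})$, and by subspace compatibility of the overlapping group norm $\phi(v_{M(\overline{S})})\leqslant w_{\max}\sqrt{\overline{s_g}}\,\|v_{M(\overline{S})}\|_2\leqslant w_{\max}\sqrt{\overline{s_g}}\,\|v\|_2$, so that $\phi(v)^2\lesssim \overline{s_g}\|v\|_2^2$. Squaring the deviation inequality and substituting gives
\begin{equation*}
\frac{\|Xv\|_2^2}{n}\;\geqslant\;\left(\tfrac{\gamma_{\min}(\Theta)}{32}-c_4\,\frac{\overline{s_g}(\log m+d_{\max})}{n}\right)\|v\|_2^2,
\end{equation*}
and under the mild sparsity requirement $\overline{s_g}(\log m+d_{\max})=o(n)$ the bracketed quantity is bounded below by some $\kappa>0$, establishing RSC on $\mathcal{C}$ with the stated probability.

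The main obstacle will be establishing the general one-sided deviation inequality with the correct scaling $\rho_n$ for the \emph{weighted} overlapping group lasso norm. Unlike the disjoint case, in the overlapping setup one must control the Gaussian width of $\{v:\phi(v)\leqslant 1\}$, which contains not just single-group ``spikes'' but linear combinations that respect the overlap pattern; however, by duality and the upper bound $\phi(v)\geqslant w_{\min}\max_g\|v_{G_g}\|_2/h_{\max}$, the width is still of order $\sqrt{d_{\max}+\log m}/w_{\min}$, recovering the non-overlapping rate. The careful tracking of weights $w_g$ and the overlap parameters inside Gordon's comparison, together with the peeling that yields the $e^{-n/32}/(1-e^{-n/64})$ form, constitutes the bulk of the technical work.
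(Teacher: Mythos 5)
Your proposal is correct and follows essentially the same route as the paper: Part 1 via Gaussian operator-norm concentration for each $X_{G_g}$ plus a union bound over $m$ groups under $\log m = o(n)$, and Part 2 via a Gordon-comparison deviation inequality of the form $\|Xv\|_2/\sqrt{n} \geqslant \tfrac14\|\Theta^{1/2}v\|_2 - c\,\rho_n\,\phi(v)$ (established through covering, Gaussian concentration, and peeling, which produces the $e^{-n/32}/(1-e^{-n/64})$ probability), followed by restriction to the cone and the subspace compatibility bound $\phi(v)\lesssim\sqrt{\overline{s_g}}\,\|v\|_2$. The only differences are in constant/weight bookkeeping (the paper tracks the overlap degrees $h_{\min}^g$, $h_{\max}^g$ in the complexity factor where you use $w_{\min}$, $w_{\max}$), and you are in fact more explicit than the paper about the sparsity requirement $\overline{s_g}(\log m + d_{\max}) = o(n)$ needed to make the subtracted term negligible.
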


\newpage

\section{Proofs}\label{app:proof}

\subsection{Proof of Theorem \ref{addtheo}}
\begin{lemma}
\label{lem:tightbounds}
Consider a norm $||\cdot_{\{\tilde{G},\tilde{w}\}}||_{q_1,q_2}$ satisfying the conditions of Equation \eqref{twocondi}. The following two statements hold:
\begin{enumerate}
    \item For any $\mathcal{g} \in [\mathcal{m}]$, there exists a $\tilde{g} \in [|\tilde{G}|]$ such that $\mathcal{G}_{\mathcal{g}} \subseteq \tilde{G}_{\tilde{g}}$.
    \item For any $\tilde{g} \in [|\tilde{G}|]$, there exists a $\mathcal{g} \in [\mathcal{m}]$ such that $\tilde{G}_{\tilde{g}} = \mathcal{G}_{\mathcal{g}}$.
\end{enumerate}
\end{lemma}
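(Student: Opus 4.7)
The plan is to apply the sandwich \eqref{twocondi} to carefully chosen test vectors to extract algebraic constraints on $\tilde G$, $\tilde w$, and $(q_1,q_2)$. Write $N(\beta) := \|\beta_{\{\tilde G, \tilde w\}}\|_{q_1, q_2}$, let $\tilde g(j)$ and $\gcal(j)$ denote the unique blocks of $\tilde G$ and $\Gcal$ containing $j$, and set $W(j) := \sum_{g : j \in G_g} w_g$. By the construction of $\Gcal$ (Algorithm~\ref{alg2}) and the weight rule \eqref{eq:weight}, $W(j) = \wcal_{\gcal(j)}$, and two indices share a $\Gcal$-class iff they share the same $G$-membership pattern. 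Evaluating at the $j$-th standard basis vector $\beta = e_j$ gives the singleton anchor $\phi^G(e_j) = \psi^{\Gcal}(e_j) = W(j)$, so the sandwich forces $N(e_j) = \tilde w_{\tilde g(j)}^{1/q_1} = W(j)$, and consequently $W(\cdot)$ is constant on each $\tilde G$-block.

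For Part 1, I would argue by contradiction: suppose $\Gcal_{\gcal}$ admits witnesses $j_1, j_2 \in \Gcal_{\gcal}$ with $\tilde g(j_1) \neq \tilde g(j_2)$. Since $j_1, j_2$ share their $G$-pattern, $\beta = t_1 e_{j_1} + t_2 e_{j_2}$ gives $\phi^G(\beta) = \psi^{\Gcal}(\beta) = \wcal_{\gcal}\sqrt{t_1^2 + t_2^2}$ while, by the anchor, $N(\beta) = \wcal_{\gcal}(|t_1|^{q_1}+|t_2|^{q_1})^{1/q_1}$. The sandwich forces these to agree for every $(t_1, t_2)$, hence $q_1 = 2$. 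To close the contradiction, introduce an index $j_3$ sharing some but not all original $G$-groups with $\Gcal_{\gcal}$ (such $j_3$ exists generically; the edge case where none does corresponds to $\Gcal_{\gcal}$ being isolated from the rest of $G$ and is handled by a separate component-wise analysis). Probing $\beta = e_{j_1} + \alpha e_{j_3}$ as $\alpha \to \infty$, the leading-order growth of $\phi^G$ in $\alpha$ equals $W(j_3)$, matching that of $N$ under the anchor; but the subleading residual of $\phi^G$ from the shared $G$-groups (via $\sqrt{1+\alpha^2}$-type terms) is strictly positive and cannot be matched by the rigid $q_1 = 2$ shape of $N$, violating the sandwich.

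For Part 2, assume $\tilde G_{\tilde g}$ contains $j, j'$ in distinct $\Gcal$-classes. The anchor forces $W(j) = W(j')$, so decomposing the group-weight totals into $W^{11}, W^{10}, W^{01}$ over $G$-groups containing both, only $j$, and only $j'$, respectively, we obtain $W^{10} = W^{01} > 0$. Taking $\beta = t\, e_j + e_{j'}$ with $t \to \infty$, the lower sandwich asymptotically becomes $tW(j) + W^{01} + o(1) \leqslant tW(j) + O(t^{1-q_2})$, which fails for any $q_2 > 1$ since $W^{01} > 0$. Hence $q_2 = 1$; but then $N$ reduces to a weighted sum of partial $\ell_1$-masses over $\tilde G$-blocks, and testing on a $\beta$ supported in a single $\Gcal$-class of size $\geqslant 2$ forces $N > \psi^{\Gcal}$ (violating the upper sandwich), while if all $\Gcal$-classes are singletons then $N$ collapses identically to $\psi^{\Gcal}$ (violating the strict-inequality clause of \eqref{twocondi}). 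Either path yields a contradiction. The main obstacle in both parts is the asymptotic-expansion step: one must carefully identify the leading and subleading behavior of $\phi^G$ and $N$ along skewed vectors to isolate the positive residual that the sandwich cannot absorb, and handle the degenerate boundary values $q_1 = 2$ (in Part 1) or $q_2 = 1$ (in Part 2) by invoking the strict-inequality clause of \eqref{twocondi} rather than the sandwich directly.
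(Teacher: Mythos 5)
Your overall strategy---squeezing the candidate norm between $\phi^G$ and $\psi^{\Gcal}$ on carefully chosen test vectors---is the same as the paper's, and your ``anchor'' step (singleton tests forcing $\tilde w_{\tilde g(j)}^{1/q_1}=W(j)$, hence $W$ constant on $\tilde G$-blocks) is correct and is essentially what the paper uses in Case 2 of its Part 2. However, both halves of your argument have genuine gaps. In Part 1 you correctly observe that the two-index test inside a split $\Gcal$-class only pins down $q_1=2$ (a subtlety the paper's own write-up glosses over by treating the cross-block combination as additive), but your proposed repair is not actually carried out: the probe $\beta=e_{j_1}+\alpha e_{j_3}$ yields a contradiction only in some configurations. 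You do not treat the case where $j_1$ and $j_3$ land in the same $\tilde G$-block (there the constant-order residual of $N$ need not vanish, e.g.\ when $q_2=1$ it equals $W(j_1)$), you need the residual $\sum_{g:\,j_1\in G_g,\,j_3\notin G_g}w_g$ to be strictly positive (which requires choosing which of the two indices to scale, since the membership patterns could be nested), and the ``isolated class'' edge case is deferred to an unspecified ``component-wise analysis''---yet that is precisely the case where the sandwich alone cannot distinguish $\tilde G$ from $\Gcal$ (for $G=\{\{1,2\}\}$, $\tilde G=\{\{1\},\{2\}\}$, $q_1=2$ one gets $N\equiv\phi^G\equiv\psi^{\Gcal}$), so the strict-inequality clause of \eqref{twocondi} must be invoked in a way you have not specified.

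In Part 2, the probe $\beta=te_j+e_{j'}$ rules out $q_2>1$ as you say, but your displayed inequality does not rule out $q_2<1$: there the $O(t^{1-q_2})$ term diverges, the lower sandwich is satisfied, and you would instead have to compare $N$ against the \emph{upper} bound $\psi^{\Gcal}(\beta)=W(j)t+W(j')$, which you do not do. More seriously, the terminal claim that when all $\Gcal$-classes are singletons ``$N$ collapses identically to $\psi^{\Gcal}$'' is false for $q_1>1$: with the anchor one gets $N(\beta)=\bigl(\sum_{\tilde g}(W_{\tilde g}\|\beta_{\tilde G_{\tilde g}}\|_1)^{q_1}\bigr)^{1/q_1}$, which is \emph{strictly less} than $\psi^{\Gcal}(\beta)=\sum_{\tilde g}W_{\tilde g}\|\beta_{\tilde G_{\tilde g}}\|_1$ whenever two blocks are active, so the strict-inequality clause of \eqref{twocondi} is satisfied and no contradiction follows from it; one must instead exhibit a violation of the lower bound $\phi^G\leqslant N$, which requires a further argument exploiting the overlap structure. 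For reference, the paper closes Part 2 differently: after matching weights via singleton tests it splits on whether the two merged $\Gcal$-classes lie in a common original group, forces $(q_1,q_2)=(1,2)$ in that sub-case, and then produces a vector on which $\phi^G$ strictly exceeds the merged $\ell_2$ norm. You should either adopt that case split or supply the missing asymptotic arguments explicitly.
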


\begin{proof}
Based on Lemma~\ref{lem:tightbounds}, if a norm $||\beta_{\{\tilde{G},\tilde{w}\}}||_{q_1,q_2}$ exists that satisfies Equation \eqref{twocondi}, then it must be that $\tilde{G} = \mathcal{G}$. Consequently, any disparity between $||\beta_{\{\tilde{G},\tilde{w}\}}||_{q_1,q_2}$ and our proposed norm could only be due to differences in weights or the values of $q_1$ or $q_2$.

For any $\beta$ with non-zero elements solely in the $\mathcal{g}$th group $\mathcal{G}_\mathcal{g}$, we have:
\begin{equation}
    \label{eq:equalnorm}
    \sum\limits_{g\in [m]}w_g||\beta_{G_g}||_2 = \sum\limits_{g\in [m]}\Big( \sum_{g\in F(\mathcal{g}) }w_g\Big)||\mathcal{G}_\mathcal{g}||_2 \leqslant  ||\beta_{\{\mathcal{G},\tilde{w}\}}||_{q_1,q_2} \leqslant \sum\limits_{\mathcal{g} \in [\mathcal{m}]}\mathcal{w}_\mathcal{g}||\beta_{\mathcal{G}_\mathcal{g}}||_2.
\end{equation}

This implies that $(\tilde{w}_{\mathcal{g}}||\beta_{\mathcal{G}_{\mathcal{g}}}||_{q_2}^{q_1})^{\frac{1}{q_1}} = \mathcal{w}_{\mathcal{g}}||\beta_{\mathcal{G}_{\mathcal{g}}}||_2$. By setting one element in $\mathcal{G}_\mathcal{g}$ to 1, and other elements to 0, it follows that $\tilde{w}_{\mathcal{g}} = \mathcal{w}_{\mathcal{g}}$. Since this holds for any group in $\mathcal{G}$, we have $\tilde{w} = \mathcal{w}$.

From Equation \eqref{eq:equalnorm}, it is evident that $(\mathcal{w}_{\mathcal{g}}||\beta_{\mathcal{G}_{\mathcal{g}}}||_{q_2}^{q_1})^{\frac{1}{q_1}} = \mathcal{w}_{\mathcal{g}}||\beta_{\mathcal{G}_{\mathcal{g}}}||_2$ for any $\beta$ with non-zero elements only in $\mathcal{G}_{\mathcal{g}}$. This suggests that $q_1 = 1$ and $q_2 = 2$. Therefore, the existing norm $||\beta_{\{\tilde{G},\tilde{w}\}}||_{q_1,q_2}$ does not satisfy the second condition in Equation \eqref{twocondi}.
\end{proof}

\subsubsection{Proof of Lemma \ref{lem:tightbounds}}
\begin{proof}
We begin by proving the first item in Lemma \ref{lem:tightbounds}. Recall that $\mathbb{G}$ represents the space of all possible partitions of $[p]$. Given that $\tilde{G} \in \mathbb{G}$, for an arbitrary $\mathcal{g} \in [\mathcal{m}]$, suppose $\mathcal{G}_\mathcal{g} \nsubseteq \tilde{G}_{\tilde{g}}$ for any $\tilde{g}$. Then, we can identify the smallest set $T$ such that 
$$\mathcal{G}_{\mathcal{g}} \subseteq \bigcup\limits_{\tilde{g} \in T}\tilde{G}_{\tilde{g}}.$$

Let $T = \{t_1,t_2,\cdots,t_{|T|}\}$. Choose $\beta_{j} \in (\mathcal{G}_{\mathcal{g}} \cap \tilde{G}_{t_1})$ and $\beta_{k} \in (\mathcal{G}_{\mathcal{g}} \cap \tilde{G}_{t_2})$. As $\beta_j$ and $\beta_k$ are both in $\mathcal{G}_{\mathcal{g}}$, if an original group includes $\beta_j$, it also contains $\beta_k$. Consider a vector $\beta$ where only $\beta_j$ and $\beta_k$ are non-zero. We have 
\begin{equation*}
    \begin{aligned}
 \sum\limits_{g\in [m]}w_g||\beta_{G_g}||_2 &= \Big( \sum_{\{g|\beta_j \in G_g\} }w_g\Big)\sqrt{\beta_j^2 + \beta_k^2}
 \leqslant ||\beta_{\{\tilde{G},\tilde{w}\}}||_{q_1,q_2} \\&\leqslant \sum\limits_{\mathcal{g} \in [\mathcal{m}]}\mathcal{w}_\mathcal{g}||\beta_{\mathcal{G}_\mathcal{g}}||_2 =  \Big( \sum_{\{g|\beta_j \in G_g\} }w_g\Big)\sqrt{\beta_j^2 + \beta_k^2},
 \end{aligned}
\end{equation*}
leading to
$$||\beta_{\{\tilde{G},\tilde{w}\}}||_{q_1,q_2} = \left((\tilde{w}_{t_1}|\beta_j|)^{q_1} + (\tilde{w}_{t_2}|\beta_k|)^{q_1}\right)^{\frac{1}{q_1}} = \tilde{w}_{t_1}^{\frac{1}{q_1}}|\beta_j| + \tilde{w}_{t_2}^{\frac{1}{q_1}}|\beta_k| = \Big( \sum_{\{g|\beta_j \in G_g\} }w_g\Big)\sqrt{\beta_j^2 + \beta_k^2},$$  
for any $0 \leqslant q_1,q_2 \leqslant \infty$. However, by setting
$$\begin{cases}
 \beta_j = \beta_k = 1, \beta_{\{[p]\backslash\{j,k\}\}} = 0 &  \text{if $w_{t_1}^{\frac{1}{q_1}} + w_{t_2}^{\frac{1}{q_1}} \neq \sqrt{2}\left( \sum_{\{g|\beta_j \in G_g\} }w_g\right)$ } \\
 \beta_j = 2, \beta_k = 1, \beta_{\{[p]\backslash\{j,k\}\}} = 0 &  \text{if $w_{t_1}^{\frac{1}{q_1}} + w_{t_2}^{\frac{1}{q_1}} = \sqrt{2}\left( \sum_{\{g|\beta_j \in G_g\} }w_g\right)$ }
\end{cases},$$ 
we arrive at a contradiction. Thus, we demonstrate that if a norm $||\cdot_{\{\tilde{G},w\}}||_{q_1,q_2}$ exists, then each group in $\tilde{G}$ is a union of groups in $\mathcal{G}$.

Now, to prove the second item: Since the first part implies that each group in $\tilde{G}$ is a union of groups in $\mathcal{G}$, let us consider $\tilde{g} \in [|\tilde{G}|]$. Suppose there exists a index set $V \subseteq [\mathcal{m}]$ such that $\tilde{G}_{\tilde{g}} = \bigcup\limits_{\mathcal{g} \in V}\mathcal{G}_{\mathcal{g}}$ with $|V| > 1$. Denote $V = \{v_1,\cdots,v_{|V|}\}$, and consider two cases:

\textbf{Case 1:} $\nexists a \in [m]$ s.t. $(\mathcal{G}_{\mathcal{v_1}} \cup \mathcal{G}_{\mathcal{v_2}}) \subseteq G_a$. 

\textbf{Case 2:} $\exists a \in [m]$ s.t. $(\mathcal{G}_{\mathcal{v_1}} \cup \mathcal{G}_{\mathcal{v_2}}) \subseteq G_a$. 

Under Case 1, if only $\mathcal{G}_{\mathcal{v_1}}$ and $\mathcal{G}_{\mathcal{v_2}}$ have non-zero values in $\beta$, we obtain:
\begin{equation*}
    \begin{aligned}
     \sum\limits_{g\in [m]}w_g||\beta_{G_g}||_2 &= \Big( \sum_{g\in F(\mathcal{v}_1) }w_g \Big)\sqrt{\beta_{\mathcal{G}_{\mathcal{v_1}}}^2} + \Big( \sum_{g\in F(\mathcal{v}_2) }w_g \Big)\sqrt{\beta_{\mathcal{G}_{\mathcal{v_2}}}^2} \\
     &\leqslant ||\beta_{\{\tilde{G},\tilde{w}\}}||_{q_1,q_2} \leqslant \sum\limits_{\mathcal{g} \in [\mathcal{m}]}\mathcal{w}_\mathcal{g}||\beta_{\mathcal{G}_\mathcal{g}}||_2 \\
     &= \mathcal{w}_{\mathcal{v_1}}\sqrt{\beta_{\mathcal{G}_{\mathcal{v_1}}}^2} +\mathcal{w}_{\mathcal{v_2}}\sqrt{\beta_{\mathcal{G}_{\mathcal{v_2}}}^2}\\
     &=\Big( \sum_{g\in F(\mathcal{v}_1) }w_g \Big)\sqrt{\beta_{\mathcal{G}_{\mathcal{v_1}}}^2} + \Big( \sum_{g\in F(\mathcal{v}_2) }w_g \Big)\sqrt{\beta_{\mathcal{G}_{\mathcal{v_2}}}^2},
    \end{aligned}
\end{equation*}
which leads to 
$$\mathcal{w}_{\mathcal{v_1}}\sqrt{\beta_{\mathcal{G}_{\mathcal{v_1}}}^2} +\mathcal{w}_{\mathcal{v_2}}\sqrt{\beta_{\mathcal{G}_{\mathcal{v_2}}}^2} = \tilde{w}_{\tilde{g}} \Big( \sum\limits_{j \in \tilde{G}_{\tilde{g}}}|\beta_j|^{q_2} \Big)^{\frac{1}{q_2}}= \tilde{w}_{\tilde{g}} \Big( \sum\limits_{j \in \{\mathcal{G}_{\mathcal{v_1}}\cup\mathcal{G}_{\mathcal{v_2}}\}}|\beta_j|^{q_2} \Big)^{\frac{1}{q_2}}.$$

This equation does not hold by picking $j \in \mathcal{G}_{\mathcal{v_1}}, k \in \mathcal{G}_{\mathcal{v_2}}$, and setting  
$$\begin{cases}
 \beta_j = \beta_k = 1, \beta_{\{[p]\setminus\{j,k\}\}} = 0 &  \text{if $\mathcal{w}_{\mathcal{v_1}} + \mathcal{w}_{\mathcal{v_2}} \neq  \tilde{w}_{\tilde{g}}\cdot 2^{\frac{1}{q_2}}$ } \\
 \beta_j = 2, \beta_k = 1, \beta_{\{[p]\setminus\{j,k\}\}}=0 &  \text{if $\mathcal{w}_{\mathcal{v_1}} + \mathcal{w}_{\mathcal{v_2}} =  \tilde{w}_{\tilde{g}}\cdot 2^{\frac{1}{q_2}}$ } 
\end{cases}.$$ 
Therefore, $|V| > 1$ cannot happen.

Under Case 2, let $\beta_j \in \mathcal{G}_{\mathcal{v_1}}$ and $\beta_k \in \mathcal{G}_{\mathcal{v_2}}$. Define $\beta^j$ as the vector with $1$ at the $j$-th element and $0$ elsewhere, and $\beta^k$ as the vector with $1$ at the $k$-th element and $0$ elsewhere, with $j \neq k$. 

When $\beta = \beta^j$, we find:
$$\sum\limits_{g\in [m]}w_g||\beta_{G_g}||_2 = \Big( \sum_{g\in F(\mathcal{v}_1) }w_g\Big) \leqslant  \tilde{w}_{\tilde{g}} \leqslant \sum\limits_{\mathcal{g} \in [\mathcal{m}]}\mathcal{w}_\mathcal{g}||\beta_{\mathcal{G}_\mathcal{g}}||_2 = \mathcal{w}_{\mathcal{v_1}},$$
indicating that $\tilde{w}_{\tilde{g}} = \mathcal{w}_{\mathcal{v_1}}$ for all $q_1, q_2$.

Similarly, for $\beta = \beta^k$, we have:
$$\sum\limits_{g\in [m]}w_g||\beta_{G_g}||_2 = \Big( \sum_{g\in F(\mathcal{v}_2) }w_g\Big) \leqslant  \tilde{w}_{\tilde{g}} \leqslant \sum\limits_{\mathcal{g} \in [\mathcal{m}]}\mathcal{w}_\mathcal{g}||\beta_{\mathcal{G}_\mathcal{g}}||_2 = \mathcal{w}_{\mathcal{v_2}},$$
indicating that $\tilde{w}_{\tilde{g}} = \mathcal{w}_{\mathcal{v_2}}$ for all $q_1, q_2$.

If $\mathcal{w}_{\mathcal{v_1}} \neq \mathcal{w}_{\mathcal{v_2}}$, then such a weight assignment is not feasible. Assuming $\mathcal{w}_{\mathcal{v_1}} =\mathcal{w}_{\mathcal{v_2}} =  w_{\tilde{g}} = k$, then for any $\beta$ with non-zero values only in $\mathcal{G}_{\mathcal{v_1}}$, we have $w_{\tilde{g}}||\beta_{\mathcal{G}_{\mathcal{v_1}}}||_2 = (w_{\tilde{g}}||\beta_{\mathcal{G}_{\mathcal{v_1}}}||_{q_2}^{q_1})^{\frac{1}{q_1}}$, implying that if a norm satisfies \eqref{twocondi}, it must be an $\ell_1/\ell_2$ norm.

Since $\mathcal{G}_{\mathcal{v_1}}$ and $\mathcal{G}_{\mathcal{v_2}}$ are different groups, there is at least one original group that contains variables in $\mathcal{G}_{\mathcal{v_1}}$ but not in $\mathcal{G}_{\mathcal{v_2}}$, and vice versa. Taking $\beta$ with non-zero values in both $\mathcal{G}_{\mathcal{v_1}}$ and $\mathcal{G}_{\mathcal{v_2}}$, we find:

$$\sum\limits_{g\in [m]}k||\beta_{G_g}||_2 > k||\beta_{\mathcal{G}_{\mathcal{v_1}}}\cup \beta_{\mathcal{G}_{\mathcal{v_2}}}||_2 = ||\beta_{\{\tilde{G},\tilde{w}\}}||_{1,2},$$
which is a contradiction. Hence, in both cases, $|V| > 1$ is not possible, implying that there exists a $\mathcal{g} \in [\mathcal{m}]$ such that $\tilde{G}_{\tilde{g}} = \mathcal{G}_{\mathcal{g}}.$ 
\end{proof}

\subsection{Proof of Theorem~\ref{thm:two-bounds}}
\begin{proof}
We begin by examining the bound for the estimator $\hat{\beta}^G$. Considering a fixed design matrix $X$ and a group structure $G$ that comply with Assumption~\ref{ass:distr}, and selecting an appropriate $\lambda_n$, Theorem~\ref{thm:two-bound} asserts that both inequalities \eqref{eq:ell2bound} and \eqref{eq:parell2bound} hold with a probability of at least $1-e^{-2n\delta}$.

Under Assumptions~\ref{ass:distribution_sub_noise},\ref{ass:distribution_normal_rd}, and \ref{ass:distribution_groupstructure}, Theorem~\ref{the:rsccondtion} establishes that Assumption~\ref{ass:distr} is valid with a probability of at least $1-e^{-c_2n\delta^2}-\frac{e^{-\frac{n}{32}}}{1-e^{\frac{n}{64}}}$, where $c_2$ is a positive constant.

Considering these two theorems together, we conclude that under Assumptions~\ref{ass:distribution_sub_noise},\ref{ass:distribution_normal_rd}, and \ref{ass:distribution_groupstructure}, both \eqref{eq:ell2bound} and \eqref{eq:parell2bound} are satisfied with a probability of at least $1-e^{-c_2n\delta^2}- e^{-2n\delta}-\frac{e^{-\frac{n}{32}}}{1-e^{\frac{n}{64}}}$. This probability can be further bounded below by $1-e^{-c'n\delta}$ for some suitable constant $c'$.

The bound for $\hat{\beta}^{\Gcal}$ can be directly derived, noting that it represents a group lasso estimator with group $\Gcal$ and weights $\wcal$.

\end{proof}

\subsection{Proof of Corollary~\ref{cor1}}
\begin{proof}

Assuming \( \max\{d_{\max},m \}\asymp \max\{\dcal_{\max},\mcal\} \), we have $\left(\frac{\mathcal{d}_{\max}{\log{5}}}{n}+\frac{\log{\mathcal{m}}}{n}+\delta\right) \asymp \left(\frac{d_{\max}{\log{5}}}{n}+\frac{\log{m}}{n}+\delta\right)$.

With \( \mathcal{w}_\mathcal{g} =  \sum\limits_{g\in F(\mathcal{g}) }w_g \), by the Cauchy–Schwarz inequality, we have
\begin{align*}
\mathcal{w}_\mathcal{g}^2 &= \Big(\sum\limits_{g\in F(\mathcal{g}) }w_g \Big)^2
\leqslant \mathcal{h}_\mathcal{g} \Big(\sum\limits_{g\in F(\mathcal{g})}w_g^2 \Big).
\end{align*}
Therefore,
\begin{align*}
    & \sum\limits_{\mathcal{g}\in F^{-1}(S) }{\mathcal{w}_{\mathcal{g}}}^2
\leqslant \sum\limits_{\mathcal{g}\in F^{-1}(S) }{\mathcal{h}_\mathcal{g}(\sum\limits_{g\in F(\mathcal{g})}w_g^2)}
\leqslant h^{G_{\overline{S}}}_{\max}\Big(\sum\limits_{\mathcal{g}\in F^{-1}(S) }{\sum\limits_{g\in F(\mathcal{g})}w_g^2} \Big).
\end{align*}

Let's introduce \( k_g \) as the number of non-overlapping groups from \( G \) into which the \( g \)th group is partitioned in the new structure \( \Gcal \). We also define \( K \) as the maximum number of such partitions, i.e., \( K = \max_g k_g \) and \( K \leqslant \infty \).

Now we want to show that 
\begin{equation*}
\sum\limits_{\mathcal{g}\in F^{-1}(S)}{\sum\limits_{g\in F(\mathcal{g})}w_g^2} \leqslant \sum\limits_{g \in \overline{S}} k_g w_g^2.
\end{equation*}
Recall the definition of \( F^{-1}(S) \) as 
\begin{equation*}
F^{-1}(S) = \{\gcal \mid \gcal \in F^{-1}(g), g \in S\}.
\end{equation*}
For each \( \gcal \in F^{-1}(g) \) that also belongs to \( F^{-1}(S) \), we add \( w_g^2 \) to the summation. Therefore, the maximum contribution from each original group \( g \) to the sum \( \sum\limits_{\mathcal{g}\in F^{-1}(S)}{\sum\limits_{g\in F(\mathcal{g})}w_g^2} \) is \( k_g w_g^2 \).

Given that 
\begin{equation*}
\{g | g \in F(\mathcal{g}) \text{ and } \mathcal{g}\in F^{-1}(S)\} = \overline{S},
\end{equation*}
it follows that 
\begin{align*}
    h^{G_{\overline{S}}}_{\max}\Big(\sum\limits_{\mathcal{g}\in F^{-1}(S)}{\sum\limits_{g\in F(\mathcal{g})}w_g^2}\Big) &\leqslant h^{G_{\overline{S}}}_{\max}\sum\limits_{g \in \overline{S}} k_g w_g^2 \\
    &\leqslant h^{G_{\overline{S}}}_{\max}K\sum\limits_{g \in \overline{S}} w_g^2.
\end{align*}

On the other hand, we have
\begin{align*}
    {\min\limits_{\mathcal{g}\in[m]}\Big(\mathcal{w}_\mathcal{g}^2 \Big)} &	= \min\limits_{\mathcal{g}\in[\mathcal{m}]} \Big(\sum\limits_{g\in F(\mathcal{g}) }w_g \Big)^2\geqslant \min\limits_{\mathcal{g}\in[\mathcal{m}]} \bigg(\sum\limits_{g\in F(\mathcal{g}) }\min\limits_{g\in[m]}\{w_g\} \bigg)^2 \\
	& \geqslant \min\limits_{\mathcal{g}\in[\mathcal{m}]} \left(h_{\min}^{g}\min\limits_{g\in[m]}\{w_g\} \right)^2 = \left(h_{\min}^{g}\min\limits_{g\in[m]}\{w_g\} \right)^2 
	\geqslant \min\limits_{g\in[m]}\left(w_g^2 h_{\min}^{g}\right).
\end{align*}

Therefore, 

$$ \frac{\sum\limits_{\mathcal{g}\in F^{-1}(S) }{\mathcal{w}_{\mathcal{g}}}^2}{\min\limits_{\mathcal{g}\in[\mathcal{m}]}\left(w_\mathcal{g}^2 \right)} \leqslant \frac{K\Big(\sum\limits_{g\in\overline{S}}{w_g}^2\Big)\cdot h^{G_{\overline{S}}}_{\max}}{\min\limits_{g\in[m]}\left(w_g^2 h_{\min}^{g}\right)}.$$

Consequently, if $K$ is upper bounded by a constant, then
$$
\frac{ \sigma^2}{\kappa^2}\cdot\frac{\sum\limits_{\mathcal{g}\in F^{-1}(S) }{\mathcal{w}_{\mathcal{g}}}^2}{\min\limits_{\mathcal{g}\in[\mathcal{m}]}\left(w_\mathcal{g}^2 \right)}
\cdot\left(\frac{\mathcal{d}_{\max}{\log{5}}}{n}+\frac{\log{\mathcal{m}}}{n}+\delta\right) \lesssim\frac{ \sigma^2}{\kappa^2}\cdot\frac{\Big(\sum\limits_{g\in\overline{S}}{w_g}^2\Big)\cdot h^{G_{\overline{S}}}_{\max}}{\min\limits_{g\in[m]}\left(w_g^2 h_{\min}^{g}\right)}
\cdot\left(\frac{d_{\max}{\log{5}}}{n}+\frac{\log{m}}{n}+\delta\right).
$$

\end{proof}

\subsection{Proof of Proposition \ref{pro1}}
\begin{proof}
Let $H_{G_g}$ be the sub-matrix of $H$ consisting of the columns indexed by $G_g$. Let  $u_{G_g}$, $v_{G_g}$ be the sub-vectors of $u,v$ indexed by $G_g$ respectively. Given two vectors $u, v \in \mathbb{R}^p$, we have
\begin{equation*}
    \begin{aligned}
    	\phi^{\ast}(v) & =\sup\limits_{\phi(u)\leqslant 1}\left\{u^Tv\right\}
     =\sup\limits_{\phi(u)\leqslant 1}\left\{u_1v_1+u_2v_2+\cdots+u_pv_p\right\}\\
	&  =\sup\limits_{\phi(u)\leqslant 1}\left\{ \frac{v_1}{h_1}\cdot{h_1}\cdot{u_1} + \cdots+ \frac{v_p}{h_p}\cdot{h_p}\cdot{u_p}    \right\}\\
	& =\sup\limits_{\phi(u)\leqslant 1}\left\{  \sum\limits_{g=1}^{m} \left(H_{G_g}v_{G_g}\right)^T u_{G_g}   \right\}
=\sup\limits_{\phi(u)\leqslant 1}\left\{  \sum\limits_{g=1}^{m} \frac{\left((Hv)_{G_g}\right)}{w_g}\cdot w_g \cdot u_{G_g}   \right\}\\
&\leqslant \sup\limits_{\phi(u)\leqslant 1}\left\{  \sum\limits_{g=1}^{m} \frac{\left\|(Hv)_{G_g}\right\|_2}{w_g} \cdot \left\|w_g{u_{G_g}}\right\|_2 \right\}
\leqslant \left( \max\limits_{g\in[m]}\frac{1}{w_g}\cdot\left\|(Hv)_{G_g}\right\|_2\right)\cdot\phi(u)\\
& \leqslant \max\limits_{g\in[m]}\frac{1}{w_g}\cdot\left\|(Hv)_{G_g}\right\|_2,
\end{aligned}
\end{equation*}
where the first inequality is achieved by using Cauchy's inequality.

Let $g_0 =  \argmax\limits_{g\in[m]}{\frac{1}{w_g}\left\|\left(Hv\right)_{G_g}\right\|_2}$ and $h^{g_0}_{\max} = 1$. Define $u\in \mathbb{R}^p$ as
\begin{equation*}
    u_j=\left\{\begin{array}{l}
0\hspace{0.2cm}\textit{for}\hspace{0.2cm} j\notin G_{g_0}\\
\frac{1}{w_{g_0}}\cdot \frac{v_j}{{h_j}^2}\cdot\frac{1}{\left\|\left(Hv\right)_{G_{g_0}}\right\|_2}\hspace{0.2cm}\textit{for}\hspace{0.2cm} j\in G_{g_0},
\end{array}\right.
\end{equation*}
then we have
\begin{equation*}
    \begin{aligned}
    \phi\left(u\right)  &=\sum\limits_{g=1}^m{w_g}\left\|u_{G_g}\right\|_2
={w_{g_0}}\cdot\frac{1}{w_{g_0}}\cdot\frac{1}{\left\|\left(Hv\right)_{G_{g_0}}\right\|_2}\cdot\sqrt{\sum\limits_{j\in{G_{g_0}}}\frac{{v_j}^2}{{h_j}^4}}\\
	&  =\frac{1}{\left\|\left(Hv\right)_{G_{g_0}}\right\|_2}\sqrt{\sum\limits_{j\in{G_{g_0}}}\frac{{v_j}^2}{{h_j}^2}}= 1,
\end{aligned}
\end{equation*}
where the last equality holds due to the fact that $h_j = 1$ for any $j \in G_{g_0}$, and we also have
\begin{equation*}
    \begin{aligned}
   u^Tv &=\frac{1}{{w_{g_0}}}\frac{1}{\left\|\left(Hv\right)_{G_{g_0}}\right\|_2}\cdot\sum\limits_{j\in{G_{g_0}}}\frac{{v_j}^2}{{h_j}^2}
=\frac{1}{{w_{g_0}}}\frac{1}{\left\|\left(Hv\right)_{G_{g_0}}\right\|_2}\cdot{\left\|\left(Hv\right)_{G_{g_0}}\right\|_2^2}\\
&=\frac{1}{{w_{g_0}}} \left\|\left(Hv\right)_{G_{g_0}}\right\|_2
=\max\limits_{g\in[m]}\frac{1}{{w_{g_0}}}{\left\|\left(Hv\right)_{G_{g}}\right\|_2}=\phi^{\ast}\left(v\right).
\end{aligned}
\end{equation*}

Therefore, this is a sharp bound.

\end{proof}

\subsection{Proof of Theorem \ref{thm:two-bound}}

\begin{proof}
In this section, we mostly follow the proof in Chapter 14 of \cite{wainwright2019high}. By default, we take $S = S(\beta^*)$ and $\overline{S} = \overline{S}(\beta^*)$ in all settings. From the optimality of $\hat{\beta}^G$, we have
\begin{equation*}
    \begin{aligned}
0 &\geqslant \frac{1}{n}\Big\|Y-X\hat{\beta}\Big\|_2^2 - \frac{1}{n}\Big\|Y-X\beta^{\ast}\Big\|_2^2 +\lambda_n\left(\phi(\hat{\beta})-\phi(\beta^{\ast})\right)\\
&=\frac{1}{n}\left(Y^TY-2Y^TX\hat{\beta}+\hat{\beta}^TX^TX\hat{\beta}-Y^TY+2Y^TX{\beta^{\ast}}-\beta^{\ast{T}}X^TX\beta^\ast\right)+\lambda_n\left(\phi(\hat{\beta})-\phi\left(\beta^{\ast}\right)\right)\\
&=\frac{1}{n}\left((2{X^T}X{\beta^{\ast}}-2{X^T}Y)^T(\hat{\beta}-\beta^{\ast})+(\hat{\beta}-\beta^{\ast})^T{X^T}X(\hat{\beta}-\beta^{\ast})\right)+\lambda_n\left(\phi(\hat{\beta})-\phi(\beta^{\ast})\right)\\
&=\bigg\langle\bigtriangledown\frac{\Big\|Y-X{\beta^\ast}\Big\|_2^2}{n},\left(\hat{\beta}-\beta^{\ast}\right)\bigg\rangle +\frac{\Big\|X\left(\hat{\beta}-\beta^{\ast}\right)\Big\|_2}{n}+\lambda_n\left(\phi(\hat{\beta})-\phi(\beta^{\ast})\right)\\
&\geqslant\bigg\langle \bigtriangledown\frac{\Big\|Y-X\beta^\ast\Big\|_2^2}{n}, \left(\hat{\beta}-\beta^{\ast}\right)\bigg\rangle + \kappa{\Big\|\left( \hat{\beta}-\beta^\ast\right) \Big\|_2^2} + \lambda_n\left(\phi(\hat{\beta})-\phi(\beta^{\ast})\right)\\
&\geqslant-\bigg| \bigg\langle \bigtriangledown\frac{\Big\|Y-X\beta^\ast\Big\|_2^2}{n}, \left(\hat{\beta}-\beta^{\ast}\right)\bigg\rangle \bigg| + \kappa{\Big\|\left( \hat{\beta}-\beta^\ast\right) \Big\|_2^2} + \lambda_n\left(\phi(\hat{\beta})-\phi(\beta^{\ast})\right),
\end{aligned}
\end{equation*}
where the penultimate step is valid due to the assumption of restrictive strong convexity.

By applying Holder's inequality with the regularizer $\phi$ and its dual norm $\phi^{\ast}$, we have
\begin{equation}
\label{b}
    \bigg|\bigg\langle \bigtriangledown\frac{\left\|Y-X{\beta^{\ast}}\right\|_2^2}{n},\left(\hat{\beta}-\beta^{\ast}\right)\bigg\rangle  \bigg|\leqslant \phi^{\ast}\Big(\bigtriangledown\frac{\left\|Y-X{\beta^\ast}\right\|_2^2}{n}\Big)
\phi\left(\hat{\beta}-\beta^{\ast}\right).
\end{equation}

Next, we have 
\begin{equation*}
    \begin{aligned}
    &\phi(\hat{\beta})=\phi\left(\beta^{\ast}+(\hat{\beta}-\beta^{\ast})\right)
=\phi\left(\beta_{M(S)}^{\ast}+\beta_{M^{\perp}(S)}^{\ast}+(\hat{\beta}-\beta^{\ast})_{M(\overline{S})}+(\hat{\beta}-\beta^{\ast})_{M^{\perp}(\overline{S})}\right)\\
&\hspace{0.8cm}\geqslant\phi\left(\beta_{M(S)}^{\ast}+(\hat{\beta}-\beta^{\ast})_{M^{\perp}(\overline{S})}\right)-
\phi(\beta_{M^{\perp}(S)}^{\ast})
-\phi\left((\hat{\beta}-\beta^{\ast})_{M(\overline{S})}\right)\\
& \hspace{0.8cm} = \phi(\beta_{M(S)}^{\ast})+\phi\left((\hat{\beta}-\beta^{\ast})_{M^{\perp}(\overline{S})}\right)-
\phi(\beta_{M^{\perp}(S)}^{\ast})
-\phi\left((\hat{\beta}-\beta^{\ast})_{M(\overline{S})}\right).
\end{aligned}
\end{equation*}

The inequality holds by applying the triangle inequality on $\phi(\hat{\beta})$, and the last step holds by applying Lemma~\ref{l2}. Consequently, we have
\begin{equation}
\label{c}
  \begin{aligned}
    \phi(\hat{\beta})-\phi\left(\beta^{\ast}\right)&\geqslant \phi\left((\hat{\beta}-\beta^{\ast})_{M^{\perp}(\overline{S})}\right)-
\phi\left((\hat{\beta}-\beta^{\ast})_{M(\overline{S})}\right)
-2\phi(\beta_{M^{\perp}(S)}^{\ast}) \\
&= \phi\left((\hat{\beta}-\beta^{\ast})_{M^{\perp}(\overline{S})}\right)-
\phi\left((\hat{\beta}-\beta^{\ast})_{M(\overline{S})}\right),
\end{aligned}
\end{equation}
where $\phi\left(\beta_{M^{\perp}(S)}^{\ast}\right)=0$ as $\beta_{M^{\perp}\left(S\right)}^{\ast}$ is a zero vector.

Based on Equation(\ref{b}) and Equation(\ref{c}), we have
\begin{equation*}
    \begin{aligned}
&\quad \frac{1}{n}\left\|Y-X\hat{\beta}\right\|_2^2-\frac{1}{n}\left\|Y-X{\beta^{\ast}}\right\|_2^2+\lambda_n\left(\phi(\hat{\beta})-\phi(\beta^{\ast})\right)
\\&\geqslant-\bigg| \bigg\langle \bigtriangledown\frac{\left\|Y-X\beta^\ast\right\|_2^2}{n}, \left(\hat{\beta}-\beta^{\ast}\right)\bigg\rangle \bigg|+
\kappa{\left\|\left( \hat{\beta}-\beta^\ast\right) \right\|_2^2}
+\lambda_n\left(\phi(\hat{\beta})-\phi(\beta^{\ast})\right)
\\&\geqslant \kappa{\left\|\left( \hat{\beta}-\beta^\ast\right) \right\|_2^2}
+\lambda_n\left(\phi\left((\hat{\beta}-\beta^{\ast})_{M^{\perp}(\overline{S})}\right)-
\phi\left((\hat{\beta}-\beta^{\ast})_{M(\overline{S})}\right)\right)-\bigg|\bigg\langle \bigtriangledown\frac{\left\|Y-X\beta^\ast\right\|_2^2}{n}, \left(\hat{\beta}-\beta^{\ast}\right)\bigg\rangle \bigg|
\\&\geqslant \kappa{\left\|\left( \hat{\beta}-\beta^\ast\right) \right\|_2^2}
+\lambda_n\left(\phi\left((\hat{\beta}-\beta^{\ast})_{M^{\perp}(\overline{S})}\right)-
\phi\left((\hat{\beta}-\beta^{\ast})_{M(\overline{S})}\right)\right)- \phi^{\ast} \Big(\bigtriangledown\frac{\left\|Y-X{\beta^{\ast}} \right\|_2^2}{n}\Big) \phi\left(\hat{\beta}-\beta^{\ast}\right)
\\&\geqslant \kappa\left\|\left(\hat{\beta}-\beta^{\ast}\right) \right\|_2^2
+\lambda_n\left(\phi\left((\hat{\beta}-\beta^{\ast})_{M^{\perp}(\overline{S})}\right)-
\phi\left((\hat{\beta}-\beta^{\ast})_{M(\overline{S})}\right)\right)
-\frac{\lambda_n}{2}\phi\left(\hat{\beta}-\beta^{\ast}\right),
\end{aligned}
\end{equation*}

where the last step is valid because Lemma~\ref{l1} implies that we can guarantee $\lambda_n\geqslant 2\phi^{\ast} \left(\bigtriangledown\frac{\left\|Y-X{\beta^{\ast}} \right\|_2^2}{n}\right)$ with high probability by taking appropriate $\lambda_n$. Moreover, Lemma~\ref{l3} implies that $$\hat{\beta}\in\left\{\beta\in{\mathbb{R}^p} \mid \phi \left(\left(\beta-\beta^{\ast}\right)_{M^{\perp}(\overline{S})}\right) \leqslant 3\phi\left(\left(\beta-\beta^{\ast}\right)_{M(\overline{S})}\right)\right\}.$$

By the triangle inequality, we have
$$\phi(\hat{\beta}-\beta^{\ast})=\phi\left((\hat{\beta}-\beta^{\ast})_{M(\overline{S})}+(\hat{\beta}-\beta^{\ast})_{M^{\perp}(\overline{S})}\right)\leqslant \phi\left((\hat{\beta}-\beta^{\ast})_{M(\overline{S})}\right)+\phi\left((\hat{\beta}-\beta^{\ast})_{M^{\perp}(\overline{S})}\right),$$
and hence we have
\begin{equation*}
    \begin{aligned}
&\quad \frac{1}{n}\left\|Y-X\hat{\beta}\right\|_2^2-\frac{1}{n}\left\|Y-X{\beta^{\ast}}\right\|_2^2+\lambda_n\left(\phi(\hat{\beta})-\phi(\beta^{\ast})\right)
\\&\geqslant \kappa\left\|\left(\hat{\beta}-\beta^{\ast}\right) \right\|_2^2
+\lambda_n\left(\phi\left((\hat{\beta}-\beta^{\ast})_{M^{\perp}(\overline{S})}\right)-
\phi\left((\hat{\beta}-\beta^{\ast})_{M(\overline{S})}\right)\right)
-\frac{\lambda_n}{2}\phi\left(\hat{\beta}-\beta^{\ast}\right)
\\&\geqslant \kappa\left\|\left(\hat{\beta}-\beta^{\ast}\right) \right\|_2^2
+\lambda_n\left(\phi\left((\hat{\beta}-\beta^{\ast})_{M^{\perp}(\overline{S})}\right)-
\phi\left((\hat{\beta}-\beta^{\ast})_{M(\overline{S})}\right)\right)\\
&\quad~~~~~~~~~~ \quad -\frac{\lambda_n}{2}\left(\phi\left((\hat{\beta}-\beta^{\ast})_{M(S)}\right)+\phi\left((\hat{\beta}-\beta^{\ast})_{M^{\perp}(\overline{S})}\right)\right)\\
&\geqslant \kappa\left\|\hat{\beta}-\beta^{\ast}\right\|_2^2
+\frac{\lambda_n}{2}\left(\phi(\hat{\beta}-\beta^{\ast})_{M^{\perp}(\overline{S})}-3\phi(\hat{\beta}-\beta^{\ast})_{M(\overline{S})}\right)\\
&\geqslant \kappa\left\|\hat{\beta}-\beta^{\ast}\right\|_2^2
-\frac{3\lambda_n}{2}\phi\left((\hat{\beta}-\beta^{\ast})_{M(\overline{S})}\right).
\end{aligned}
\end{equation*}

By definition, we have $\phi\left((\hat{\beta}-\beta^{\ast})_{M(\overline{S})}\right)=\sum\limits_{g\in\overline{S}}{w_g}\left\|\left(\hat{\beta}-\beta^{\ast}\right)_{G_g}\right\|_2,$ and by Cauchy-Schwarz inequality, we have

\begin{equation*}
    \begin{aligned}
   \sum\limits_{g\in\overline{S}}{w_g}\left\|\left(\hat{\beta}-\beta^{\ast}\right)_{G_g}\right\|_2 
   &\leqslant \sqrt{\sum\limits_{g\in\overline{S}}{w_g}^2}\cdot\sqrt{h^{G_{\overline{S}}}_{\max}\cdot\max\limits_{g\in\overline{S}}\left\|\left(\hat{\beta}-\beta^{\ast}\right)_{G_g}\right\|^2_2}\\
&\leqslant \sqrt{\sum\limits_{g\in\overline{S}}{w_g}^2}\cdot\sqrt{h^{G_{\overline{S}}}_{\max}\cdot\left\|\left(\hat{\beta}-\beta^{\ast}\right)\right\|^2_2} \\
&= \sqrt{\sum\limits_{g\in\overline{S}}{w_g}^2}\cdot\sqrt{h^{G_{\overline{S}}}_{\max}}\left\|\left(\hat{\beta}-\beta^{\ast}\right)\right\|_2.
\end{aligned}
\end{equation*}

On the other hand, since
$\kappa\left\|\hat{\beta}-\beta^{\ast}\right\|_2^2-\frac{3\lambda_n}{2}\sqrt{\sum\limits_{g\in\overline{S}}{w_g}^2}\cdot\sqrt{h^{G_{\overline{S}}}_{\max}}\left\|\left(\hat{\beta}-\beta^{\ast}\right)\right\|_2\leqslant 0$, we have
\begin{equation*}
\begin{aligned}
    \left\|\hat{\beta}-\beta^{\ast}\right\|_2^2&\leqslant
\frac{9\lambda_n^2}{4\kappa^2}\sum\limits_{g\in\overline{S}}{{w_g}^2}\cdot h^{G_{\overline{S}}}_{\max}\\
&\leqslant\frac{9}{4\kappa^2}\cdot\frac{64{ c^2 \sigma^2} \sum \limits_{g\in\overline{S}}{w_g}^2\cdot h_{\max}(\overline{S})}{\min\limits_{g\in[m]}\left(w_g^2 h_{\min}^{g}\right)}\cdot\left(\frac{d_{\max}{\log{5}}}{n}+\frac{\log{m}}{n}+\delta\right)\\
&\leqslant \frac{144c^2 \sigma^2}{\kappa^2}\cdot\frac{\sum\limits_{g\in\overline{S}}{w_g}^2\cdot h^{G_{\overline{S}}}_{\max}}{\min\limits_{g\in[m]}\left(w_g^2 h_{\min}^{g}\right)}
\cdot\left(\frac{d_{\max}{\log{5}}}{n}+\frac{\log{m}}{n}+\delta\right)
\end{aligned}
\end{equation*}
\end{proof}

\subsection{Lemmas for the proof of Theorem \ref{thm:two-bound}}

In these lemmas, we abbreviate $\hat{\beta}^{G}$ by $\hat{\beta}$. 

\begin{lemma}
\label{l1}
Under the Assumption~\ref{ass:distr} and \eqref{glreg}, taking
$$\begin{array}{c}
\lambda_n =  \frac{8c \sigma}{\sqrt{\min\limits_{g\in[m]}\left(w_g^2 h_{\min}^{g}\right)}}
\sqrt{\frac{d_{\max}\log{5}}{n}+ 
	\frac{\log{m}}{n}+\delta} \hspace{0.5cm}\textit{for some $\delta \in [0,1]$,}
\end{array}$$

then $P \left(\lambda_n \geqslant 2 \phi^{*}(\frac{X^{\top} \varepsilon}{n})\right) \geqslant 1-e^{-2 n \delta}. $
\end{lemma}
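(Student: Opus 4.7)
The plan is to directly bound $\phi^{*}(X^{\top}\varepsilon/n)$ using Proposition \ref{pro1}, which gives the explicit form $\phi^{*}(X^{\top}\varepsilon/n) = \max_{g\in[m]} \frac{1}{w_g}\|(HX^{\top}\varepsilon/n)_{G_g}\|_2$. Since the target event is $\lambda_n/2 \geq \phi^{*}(X^{\top}\varepsilon/n)$, it suffices to control $\|(HX^{\top}\varepsilon/n)_{G_g}\|_2$ uniformly in $g$, and then apply a union bound over the $m$ groups. This reduces the problem to a standard sub-Gaussian tail bound for the $\ell_2$ norm of a random vector in $\mathbb{R}^{d_g}$, one group at a time.

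For a fixed group $g$, I would use a discretization argument: take a $\tfrac{1}{2}$-net $\mathcal{N}_g$ of the unit sphere in $\mathbb{R}^{d_g}$ with $|\mathcal{N}_g|\leqslant 5^{d_g}$, so that $\|(HX^{\top}\varepsilon/n)_{G_g}\|_2 \leqslant 2\sup_{u\in\mathcal{N}_g} u^{\top}(HX^{\top}\varepsilon/n)_{G_g}$. For each fixed unit $u$, writing $\tilde u_j=u_j/h_j$ gives $u^{\top}(HX^{\top}\varepsilon/n)_{G_g} = \frac{1}{n}(X_{G_g}\tilde u)^{\top}\varepsilon$, which is sub-Gaussian with parameter $\sigma\|X_{G_g}\tilde u\|_2/n$. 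The group normalization condition bounds $\|X_{G_g}\tilde u\|_2^2 \leqslant nc^2\|\tilde u\|_2^2 \leqslant nc^2/(h^g_{\min})^2$, so the sub-Gaussian parameter is at most $\sigma c/(\sqrt{n}\, h^g_{\min})$. A standard Hoeffding bound then gives
\begin{equation*}
\mathbb{P}\Bigl(\tfrac{1}{w_g}\|(HX^{\top}\varepsilon/n)_{G_g}\|_2 > t\Bigr)
\;\leqslant\; 2\cdot 5^{d_g}\exp\!\Bigl(-\tfrac{t^2 w_g^2 n (h^g_{\min})^2}{8\sigma^2 c^2}\Bigr).
\end{equation*}

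Taking a union bound over $g\in[m]$, applying $d_g\leqslant d_{\max}$ and $\min_g w_g^2(h^g_{\min})^2 \geqslant \min_g w_g^2 h^g_{\min}$ (since $h^g_{\min}\geqslant 1$), and setting $t=\lambda_n/2$ yields
\begin{equation*}
\mathbb{P}\bigl(\phi^{*}(X^{\top}\varepsilon/n) > \lambda_n/2\bigr)
\;\leqslant\; 2m\cdot 5^{d_{\max}}\exp\!\Bigl(-\tfrac{\lambda_n^2 n \min_g(w_g^2 h^g_{\min})}{32\sigma^2 c^2}\Bigr).
\end{equation*}
Plugging in the prescribed $\lambda_n = \tfrac{8c\sigma}{\sqrt{\min_g(w_g^2 h^g_{\min})}}\sqrt{d_{\max}\log 5/n + \log m/n + \delta}$ makes the exponent at least $2d_{\max}\log 5 + 2\log m + 2n\delta$, so the right-hand side is bounded by $e^{-2n\delta}$ (absorbing the factor $2m\cdot 5^{d_{\max}}$ into the slack between $2\log m$ and $\log(2m)$, which holds for $m\geqslant 2$; the $m=1$ case is trivial). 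Taking complements completes the argument.

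The main obstacle here is really just bookkeeping: carefully tracking how the weights $w_g$ and the overlap factors $h_j$ enter both through Proposition \ref{pro1} (via the diagonal matrix $H$) and through the covering argument (via $\|\tilde u\|_2\leqslant \|u\|_2/h^g_{\min}$), and verifying that the slightly conservative replacement $(h^g_{\min})^2 \geqslant h^g_{\min}$ is legitimate so that the exponent matches the stated form of $\lambda_n$. No deep probabilistic tool beyond a one-line Hoeffding inequality plus a covering net is needed.
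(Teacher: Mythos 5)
Your proposal is correct and follows essentially the same route as the paper's own proof: reduce via Proposition~\ref{pro1} to bounding $\max_{g}\frac{1}{w_g}\|(HX^{\top}\varepsilon/n)_{G_g}\|_2$, use a $\tfrac12$-net of size $5^{d_g}$ on each group's sphere, bound the sub-Gaussian parameter of each linear functional through the group normalization condition and $\|\tilde u\|_2\leqslant 1/h^{g}_{\min}$, replace $(h^g_{\min})^2$ by $h^g_{\min}$, and union-bound over groups. The only nitpick is that Proposition~\ref{pro1} gives $\max_{g}\frac{1}{w_g}\|(H\cdot)_{G_g}\|_2$ as a (sharp) \emph{upper bound} on $\phi^{*}$, not an identity for every vector, but only the upper-bound direction is used, so the argument is unaffected.
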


\begin{proof}[Proof of Lemma \ref{l1}]

Let $V_{i\cdot g}= - \varepsilon_i \left(\frac{X_{ig_1}}{h_{g_1}w_g},\frac{X_{ig_2}}{h_{g_2}w_g},\dots,\frac{X_{ig_{d_g}}}{h_{g_{d_g}}w_g}\right) \in \mathbb{R}^{d_g}$.
According to the variational form of $\ell_2$ norm, we have $\frac{1}{n}\left\|\sum_{i=1}^{n} V_{i \cdot g}\right\|_{2}=\underset{u\in S^{d_{g-1}}}{\operatorname{sup}} \left\langle u, \frac{1}{n} \sum_{i=1}^{n} V_{i \cdot g} \right\rangle$, where  $S^{d_{g}-1}$ is the Euclidean sphere $\operatorname{in} \mathbb{R}^{d_g}$. Also, for any vector $u \in S^{d_{g-1}}$ and $t \in \mathbb{R}$, we have
\begin{equation*}
\begin{aligned}
    \frac{1}{n} \log \mathbb{E}\bigg(e^{t\left\langle  u, \sum\limits_{i=1}^{n} V_{i\cdot g} \right\rangle } \bigg)&
= \frac{1}{n} \log \mathbb{E}\bigg(e^{t  \sum\limits_{j=1}^{d_{g}} u_{j } \sum\limits_{i=1}^nV_{i\cdot g_j}  }  \bigg)
= \frac{1}{n} \log \mathbb{E}\Bigg(e^{t  \sum\limits_{i=1}^{n} \Big( \sum\limits_{j=1}^{d_{g}} u_{j } V_{i\cdot g_j}\Big)   }  \Bigg)\\
&=\frac{1}{n}\log \mathbb{E} \bigg( e^{-t  \sum\limits_{i=1}^n\left( \sum\limits_{g=1}^{d_g} \frac{u_j X_{ig_j} \varepsilon_{i}}{h_{g_j}  w_g} \right) } \bigg)
=\frac{1}{n}\log \mathbb{E} \bigg( e^{-t  \sum\limits_{i=1}^n \varepsilon_{i} \left( \sum\limits_{j=1}^{d_g} \frac{u_j X_{ig_j} }{h_{g_j}  w_g} \right) } \bigg).
\end{aligned}
\end{equation*}
Since $\left\{\epsilon_{i}\right\}_{i=1}^{n} $ are i.i.d zero mean sub-Gaussian random variables with parameter $\sigma$, let $u = (u_1, \cdots,u_{d_g})^T \in \mathbb{R}^{d_g \times 1}$, $X_{i,g} = (X_{ig_1},\cdots X_{ig_{d_g}})^T \in \mathbb{R}^{d_g \times 1}$, then we have
$$
\begin{aligned}\frac{1}{n}\log \mathbb{E} \bigg( e^{-t  \sum\limits_{i=1}^n \varepsilon_{i} \left( \sum\limits_{j=1}^{d_g} \frac{u_j x_{ig_j} }{h_{g_j} w_g} \right) } \bigg)& =\frac{1}{n}\log \mathbb{E}\bigg(e^{-t\varepsilon_{1}\left( \sum\limits_{j=1}^{d_g} \frac{u_j X_{1g_j} }{h_{g_j}  w_g} \right)}\bigg)+\cdots+\frac{1}{n}\log \mathbb{E}\bigg(e^{-t \varepsilon_{n} \left( \sum\limits_{j=1}^{d_g} \frac{u_j X_{ng_j} }{h_{g_j} w_g} \right)}\bigg) \\ & \leqslant \frac{t^2 \sigma^2}{2n} \bigg(\sum\limits_{i=1}^{n} \Big(\sum\limits_{j=1}^{d_g} \frac{u_j X_{ig_j}}{ w_g h_{g_j}}\Big)^2\bigg)
\leqslant  \frac{t^2 \sigma^2}{2n} \frac{1}{w_g^2 \left(h_{\min}^{g}\right)^2} \bigg( \sum\limits_{i=1}^{n} \Big(\sum\limits_{j=1}^{d_g} u_j X_{ig_j}\Big)^2 \bigg)\\
&= \frac{t^2 \sigma^2}{2n} \frac{1}{w_g^2 \left(h_{\min}^{g}\right)^2} \bigg( \sum\limits_{i=1}^{n} \langle u,\ X_{i,g} \rangle^2\bigg) = \frac{t^2 \sigma^2}{2n} \frac{1}{w_g^2 \left(h_{\min}^{g}\right)^2} \bigg( \sum\limits_{i=1}^{n}( u^TX_{i,g}X^T_{i,g}u)\bigg)\\
&= \frac{t^2 \sigma^2}{2} \frac{1}{w_g^2 \left(h_{\min}^{g}\right)^2} \bigg( u^T\Big(\frac{1}{n}\sum\limits_{i=1}^{n}X_{i,g}X^T_{i,g}\Big)u\bigg)\\
&= \frac{t^2 \sigma^2}{2} \frac{1}{w_g^2 \left(h_{\min}^{g}\right)^2}\bigg( u^T\frac{X^T_{G_g}X_{G_g}}{n}u\bigg)\\
&\leqslant \frac{t^2 \sigma^2}{2} \frac{1}{w_g^2 \left(h_{\min}^{g}\right)^2} \bigg( \gamma_{\max}(\frac{X^T_{G_g}X_{G_g}}{n})\bigg)\\
\end{aligned}
$$

By Assumption~\ref{ass:distr}, we have
$ \gamma_{\max}(\frac{X^T_{G_g}X_{G_g}}{n}) \leqslant c^2$. Combining this with the previous proof, we have
$\frac{1}{n} \log \mathbb{E}\Big(e^{t\left\langle u, \sum\limits_{i=1}^{n} V_{i\cdot g}\right\rangle }\Big) \leqslant \frac{c^2 t^2 \sigma^2}{2 w_g^2 \left(h_{\min}^{g}\right)}$. Therefore, the random variable $\left\langle u, \sum\limits_{i=1}^{n} V_{i\cdot g}\right\rangle$ is the sub-Gaussian with the parameter at most $\sqrt{\frac{ c^2 \sigma^2}{w_g^2 \left(h_{\min}^{g}\right)}}$, and by properties of sub-Gaussian variables, we have
\begin{equation*}
    \log \mathbb{P}\Big(\Big\langle u, \sum\limits_{i=1}^{n} V_{i\cdot g}\Big\rangle \geqslant \frac{\lambda_n}{4}\Big) \leqslant -\frac{\lambda_n^2 w_g^2 h_{\min}^{g} }{32 C^2 \sigma^2}.
    \end{equation*}

 We can find a $\frac{1}{2}$ covering of $S^{d_g-1}$ in Euclidean norm:$\{u^1,u^2,\dots,u^N\}$ with $N \leq 5^{d_g}$, recall that
$\frac{1}{n} \left\|\sum_{i=1}^{n}V_{i\cdot g} \right\|_2
=\frac{1}{n}
\underset{u\in S^{d_{g-1}}}{\operatorname{sup}}
\left\langle u , \sum\limits_{i=1}^{n} V_{i\cdot g}\right\rangle,
$ so that for any   $u\in S^{d_{g-1}}$, we can find a $u^{q(u)} \in \left\{u^1, \ldots, u^N\right\}$, such that $\left\|u^{q(u)}-u\right\|_2\leqslant\frac{1}{2}$, and 
\begin{equation*}
    \begin{aligned}
    \frac{1}{n}
\underset{u\in S^{d_{g-1}}}{\operatorname{sup}}
\Big\langle u ,  \sum\limits_{i=1}^{n}V_{i\cdot g}\Big\rangle &=\frac{1}{n} \underset{u\in S^{d_{g-1}}}{\operatorname{sup}}
\Big(\Big\langle u-u^{q(u)},\sum_{i=1}^{n} V_{i\cdot g}\Big\rangle +\Big\langle u^{q(u)}, \sum_{i=1}^{n}V_{i\cdot g} \Big\rangle \Big)\\
& \leqslant \frac{1}{n} \underset{u\in S^{d_{g-1}}}{\operatorname{sup}} \Big\langle u-u^{q(u)},\sum_{i=1}^{n} V_{i\cdot g}  \Big\rangle + \frac{1}{n} \max\limits_{q\in[N]} \Big\langle u^q,  V_{i\cdot g} \Big\rangle 
\end{aligned}
\end{equation*}

By applying the Cauchy-Schwarz inequality, we have
\begin{equation*}
\frac{1}{n} \underset{u\in S^{d_{g-1}}}{\operatorname{sup}}
\Big\langle u-u^{q(u)}, \sum\limits_{i=1}^{n} V_{i\cdot g} \Big\rangle 
\leqslant \frac{\big\| u-u^{q(u)} \big\|_2}{n}  \Big\| \sum\limits_{i=1}^{n} V_{i\cdot g} \Big\|_2
\leqslant \frac{1}{2n} \Big\| \sum\limits_{i=1}^{n} V_{i\cdot g} \Big\|_2.
\end{equation*}
Hence, we obtain $\frac{1}{n} \Big\| \sum\limits_{i=1}^{n} V_{i\cdot g} \Big\|_2 \leqslant  \frac{1}{2n} \Big\| \sum\limits_{i=1}^{n} V_{i\cdot g} \Big\|_2 + \frac{1}{n} \max\limits_{q\in[N]} 
\Big\langle u^q, \sum\limits_{i=1}^{n} V_{i\cdot g} \Big\rangle,$
which indicates that 
\begin{equation*}
\frac{1}{n} \Big\|\sum\limits_{i=1}^{n} V_{i\cdot g}\Big\|_2 \leqslant 2 \max\limits_{q\in[N]} \Big\langle u^q, \frac{1}{n} \sum\limits_{i=1}^{n} V_{i\cdot g}\Big\rangle.
\end{equation*}
Consequently, we can express the probability as
\begin{align*}
\mathbb{P}\Big(\frac{1}{n} \Big\|\sum\limits_{i=1}^{n} V_{i\cdot g}\Big\|_2 \geqslant \frac{\lambda_n}{2}\Big) 
&\leqslant \mathbb{P} \Big( \max\limits_{q\in[N]}\Big\langle u^q, \frac{1}{n} \sum\limits_{i=1}^{n} V_{i\cdot g} \Big\rangle \geqslant \frac{\lambda_n}{4}\Big)\\
&\leqslant \sum_{q=1}^{N} \mathbb{P} \Big( \Big\langle u^q, \frac{1}{n} \sum\limits_{i=1}^{n} V_{i\cdot g} \Big\rangle \geqslant \frac{\lambda_n}{4}\Big)\\
&\leqslant N \exp \Big(- \frac{n \lambda_n^2 w_g^2 h_{\min}^{g}}{32 C^2 \sigma^2} \Big) \leqslant \exp \Big(- \frac{n \lambda_n^2 w_g^2 h_{\min}^{g}}{32 C^2 \sigma^2} + d_g \log{5}\Big),
\end{align*}
and by setting $\lambda_n =  \frac{8C \sigma}{\sqrt{\min\limits_{g\in[m]}(w_g^2 h_{\min}^{g})}}
\sqrt{\frac{d_{\max}\log{5}}{n}+ \frac{\log{m}}{n}+\delta}$, we get
\begin{equation*}
\begin{aligned}
\mathbb{P}\Big(\max\limits_{g \in [m]} \frac{1}{n} \Big\| \sum\limits_{i=1}^{n} V_{i\cdot g}\Big\|_2 \geqslant \frac{\lambda_n}{2}\Big) 
&\leqslant \sum_{g=1}^m \mathbb{P} \Big(\frac{1}{n} \Big\| \sum\limits_{i=1}^{n} V_{i\cdot g}\Big\|_2 \geqslant \frac{\lambda_n}{2}\Big) \\
&\leqslant \exp \Big(- \frac{n \lambda_n^2}{32 C^2 \sigma^2} \min\limits_{g \in [m]}(w_g^2 h_{\min}^{g}) + d_{\max} \log{5} + \log{m}\Big) \\
&\leqslant \exp\{-2n\delta\}.
\end{aligned}
\end{equation*}

From Proposition~\ref{pro1}, we have
\begin{equation*}
    \begin{aligned}
    \phi^{*}\Big(\frac{X^{\top} \varepsilon}{n}\Big)&\leqslant\max \limits_{g \in [m]} \frac{1}{w_g}\Big\|\Big(\frac{HX^{\top} \varepsilon}{n}\Big)_{G_{g}}\Big\|_{2} 
=\max \limits_{g \in [m]} \frac{1}{w_g}\Big\|\frac{1}{n}\sum\limits_{i=1}^n-\varepsilon_i\Big(\frac{X_{ig_1}}{h_{g_1}},\cdots,\frac{X_{ig_{d_g}}}{h_{g_{d_g}}}\Big)\Big\|_{2}=\max \limits_{g \in [m]}\Big\|\frac{1}{n} \sum\limits_{i=1}^n V_{i\cdot g} \Big\|_2.
\end{aligned}
\end{equation*}
Therefore,  $P \left(\lambda_n \geqslant 2 \phi^{*}(\frac{X^{\top} \varepsilon}{n})\right) \geqslant 1-e^{-2 n \delta}. $

\end{proof}

\begin{lemma}
\label{l2}
The group lasso regularizer \eqref{eq:glnorm} is decomposable with respect to the pair $\left\{M\left(S\right), M^{\perp}(\overline{S})\right\}$. That is, $
\phi(a+b)=\phi(a)+\phi(b), \hspace{0.2cm} \text{for all} \hspace{0.2cm} a\in M\left(S\right) \hspace{0.1cm}\textit{and} \hspace{0.2cm}  \text{for all} \hspace{0.2cm} b\in M^{\perp}(\overline{S})$.
\end{lemma}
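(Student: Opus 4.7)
The plan is to show that for each group index $g \in [m]$, exactly one of $a_{G_g}$ or $b_{G_g}$ is automatically the zero vector, so that no triangle-inequality slack arises in $\|(a+b)_{G_g}\|_2 = \|a_{G_g}+b_{G_g}\|_2$. Once this is established groupwise, summing over $g$ with weights $w_g$ gives the claimed additivity of $\phi$.

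Concretely, I would partition $[m]$ into $\overline{S}$ and $[m]\setminus \overline{S}$, and argue two cases. First, suppose $g \in \overline{S}$. By definition of $G_{\overline{S}}$ as a union, we have $G_g \subseteq G_{\overline{S}}$, hence $G_g \cap (G_{\overline{S}})^c = \emptyset$. Since $b \in M^{\perp}(\overline{S})$ vanishes on $G_{\overline{S}}$, this forces $b_{G_g} = 0$, so $\|(a+b)_{G_g}\|_2 = \|a_{G_g}\|_2$. Second, suppose $g \notin \overline{S}$. By the definition $\overline{S} = \{g : G_g \cap G_S \neq \emptyset\}$, this means $G_g \cap G_S = \emptyset$. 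Since $a \in M(S)$ is supported in $G_S$, we get $a_{G_g} = 0$, so $\|(a+b)_{G_g}\|_2 = \|b_{G_g}\|_2$. The same two case distinctions also show $\phi(a) = \sum_{g \in \overline{S}} w_g\|a_{G_g}\|_2$ and $\phi(b) = \sum_{g \notin \overline{S}} w_g\|b_{G_g}\|_2$, since the complementary sums are zero.

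Putting the pieces together,
\begin{equation*}
\phi(a+b) = \sum_{g \in \overline{S}} w_g\|a_{G_g}\|_2 \; + \sum_{g \notin \overline{S}} w_g\|b_{G_g}\|_2 = \phi(a) + \phi(b),
\end{equation*}
which is the decomposability claim. I do not anticipate a serious obstacle here; the only place requiring care is the containment $G_g \subseteq G_{\overline{S}}$ for $g \in \overline{S}$ and the disjointness $G_g \cap G_S = \emptyset$ for $g \notin \overline{S}$, both of which follow immediately from the set-theoretic definitions of $S$, $\overline{S}$, $M(\cdot)$, and $M^{\perp}(\cdot)$. The essential structural reason the lemma works is that $\overline{S}$ is precisely the enlargement of $S$ designed so that every group touching $G_S$ is absorbed, leaving a clean separation between the two subspaces as seen by each group norm $\|\cdot_{G_g}\|_2$.
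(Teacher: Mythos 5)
Your proposal is correct and follows essentially the same route as the paper's proof: split the sum defining $\phi(a+b)$ over $g\in\overline{S}$ versus $g\notin\overline{S}$, and observe that $b_{G_g}=0$ in the first case while $a_{G_g}=0$ in the second. Your write-up actually spells out the two groupwise vanishing facts more carefully than the paper does, but the underlying argument is identical.
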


\begin{proof}[Proof of Lemma \ref{l2}]
\begin{align*}
    \phi\left(a+b\right)	&=\sum\limits_{g=1}^m{w_g}\left\|\left(a+b\right)_{G_g}\right\|_2
	=\sum\limits_{g\in M(\overline{S})}{w_g}\left\|\left(a+b\right)_{G_g}\right\|_2+\sum\limits_{g\notin M(\overline{S})}{w_g}\left\|\left(a+b\right)_{G_g}\right\|_2\\
	&=\sum\limits_{g\in M(\overline{S})}{w_g}\left\|a_{G_g}\right\|_2+\sum\limits_{g\in M^{\perp}(\overline{S})}{w_g}\left\|b_{G_g}\right\|_2 =\sum\limits_{g\in M(S)}{w_g}\left\|a_{G_g}\right\|_2+\sum\limits_{g\in M^{\perp}(\overline{S})}{w_g}\left\|b_{G_g}\right\|_2\\
	&=\phi\left(a\right)+\phi\left(b\right) 
\end{align*}
\end{proof}

\begin{lemma}
\label{l3}
If $\lambda_n\geqslant 2 \phi^{\ast}\left(\frac{X^T\varepsilon}{n}\right)$, then  $\phi\left((\hat{\beta}-\beta^*)_{M^\perp(\overline{S})}\right)\leqslant3\phi\left((\hat{\beta}-\beta^*)_{M(\overline{S})}\right)$.
\end{lemma}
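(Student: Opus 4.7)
The plan is to invoke the basic inequality coming from the optimality of $\hat\beta$ as a minimizer of \eqref{eq:OGL-est}, combine it with a Hölder-type bound using the assumption $\lambda_n \geqslant 2\phi^{\ast}(X^\top \varepsilon/n)$, and then exploit the decomposability of $\phi$ established in Lemma~\ref{l2} to isolate the two pieces $\Delta_{M(\overline{S})}$ and $\Delta_{M^\perp(\overline{S})}$ of $\Delta := \hat\beta - \beta^*$.

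First I would write out the basic inequality: since $\hat\beta$ minimizes the objective in \eqref{eq:OGL-est}, comparing its value to that at $\beta^*$ and using $Y = X\beta^* + \varepsilon$ yields
\begin{equation*}
\frac{1}{2n}\|X\Delta\|_2^2 \;\leqslant\; \frac{1}{n}\varepsilon^\top X\Delta + \lambda_n\bigl(\phi(\beta^*) - \phi(\hat\beta)\bigr).
\end{equation*}
Dropping the nonnegative quadratic term on the left and applying Hölder's inequality with the pair $(\phi,\phi^{\ast})$ gives
\begin{equation*}
0 \;\leqslant\; \phi^{\ast}\!\Bigl(\tfrac{X^\top\varepsilon}{n}\Bigr)\phi(\Delta) + \lambda_n\bigl(\phi(\beta^*) - \phi(\hat\beta)\bigr) \;\leqslant\; \tfrac{\lambda_n}{2}\phi(\Delta) + \lambda_n\bigl(\phi(\beta^*) - \phi(\hat\beta)\bigr),
\end{equation*}
where the last step uses the standing hypothesis $\lambda_n \geqslant 2\phi^{\ast}(X^\top\varepsilon/n)$.

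Next I would lower-bound $\phi(\hat\beta)$ using decomposability. Since $\beta^*$ is supported on $G_{S}\subset G_{\overline{S}}$, we have $\beta^* \in M(\overline{S})$, so $\beta^* + \Delta_{M(\overline{S})}\in M(\overline{S})$ while $\Delta_{M^\perp(\overline{S})}\in M^\perp(\overline{S})$. Lemma~\ref{l2} (applied with the pair $\{M(\overline{S}), M^\perp(\overline{S})\}$, for which decomposability holds by the same argument as in the proof of that lemma) and the triangle inequality then give
\begin{equation*}
\phi(\hat\beta) \;=\; \phi\bigl(\beta^* + \Delta_{M(\overline{S})}\bigr) + \phi\bigl(\Delta_{M^\perp(\overline{S})}\bigr) \;\geqslant\; \phi(\beta^*) - \phi\bigl(\Delta_{M(\overline{S})}\bigr) + \phi\bigl(\Delta_{M^\perp(\overline{S})}\bigr).
\end{equation*}
Substituting this into the previous display and cancelling the common $\lambda_n\phi(\beta^*)$ yields
\begin{equation*}
\phi\bigl(\Delta_{M^\perp(\overline{S})}\bigr) - \phi\bigl(\Delta_{M(\overline{S})}\bigr) \;\leqslant\; \tfrac{1}{2}\bigl[\phi(\Delta_{M(\overline{S})}) + \phi(\Delta_{M^\perp(\overline{S})})\bigr],
\end{equation*}
after using the triangle inequality $\phi(\Delta) \leqslant \phi(\Delta_{M(\overline{S})}) + \phi(\Delta_{M^\perp(\overline{S})})$. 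Rearranging gives exactly $\phi(\Delta_{M^\perp(\overline{S})}) \leqslant 3\phi(\Delta_{M(\overline{S})})$.

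The only subtlety—and the step I would check most carefully—is the application of decomposability to $\{M(\overline{S}), M^\perp(\overline{S})\}$ rather than the pair $\{M(S), M^\perp(\overline{S})\}$ stated in Lemma~\ref{l2}. Since $\beta^* \in M(S)\subset M(\overline{S})$, this poses no obstacle: the identity $\phi(a+b) = \phi(a) + \phi(b)$ for $a \in M(\overline{S})$ and $b \in M^\perp(\overline{S})$ follows by the same group-splitting argument used in the proof of Lemma~\ref{l2}, because any group $G_g$ with $G_g \cap G_{\overline{S}}\neq\emptyset$ is by definition in $\overline{S}$ and so contains no coordinate of $b$. Everything else is straightforward manipulation of the basic inequality.
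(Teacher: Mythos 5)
Your overall architecture --- basic inequality from optimality, H\"older with the dual norm pair $(\phi,\phi^{\ast})$, a decomposability-based lower bound on $\phi(\hat\beta)$, and the final rearrangement --- is exactly the paper's, and the algebra in the first and last steps is correct. However, the step you yourself flagged as the subtle one contains a genuine error. You claim $\phi(\hat\beta)=\phi\bigl(\beta^*+\Delta_{M(\overline{S})}\bigr)+\phi\bigl(\Delta_{M^\perp(\overline{S})}\bigr)$ by asserting decomposability over the pair $\{M(\overline{S}),M^\perp(\overline{S})\}$, justified by ``any group $G_g$ with $G_g\cap G_{\overline{S}}\neq\emptyset$ is by definition in $\overline{S}$.'' That is not the definition: $\overline{S}=\{g: G_g\cap G_{S}\neq\emptyset\}$, so a group can meet $G_{\overline{S}}\setminus G_S$ while lying outside $\overline{S}$ (e.g., three interlocking groups with $\textit{supp}(\beta^*)$ in the non-overlapping part of $G_1$: then $\overline{S}=\{1,2\}$ but $G_3$ meets $G_{\overline{S}}$ through $G_2$). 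For such a straddling group, $a\in M(\overline{S})$ and $b\in M^{\perp}(\overline{S})$ have disjoint supports inside $G_g$, so $w_g\|(a+b)_{G_g}\|_2=w_g\sqrt{\|a_{G_g}\|_2^2+\|b_{G_g}\|_2^2}<w_g\|a_{G_g}\|_2+w_g\|b_{G_g}\|_2$; hence in general only $\phi(a+b)\leqslant\phi(a)+\phi(b)$ holds for this pair, which is the wrong direction for the lower bound on $\phi(\hat\beta)$ that your argument requires.

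The repair is the paper's route (its derivation of the inequality labelled (c)): first peel off $\Delta_{M(\overline{S})}$ and $\beta^*_{M^{\perp}(S)}$ by the reverse triangle inequality,
\begin{equation*}
\phi(\hat\beta)\;\geqslant\;\phi\bigl(\beta^*_{M(S)}+\Delta_{M^{\perp}(\overline{S})}\bigr)-\phi\bigl(\beta^*_{M^{\perp}(S)}\bigr)-\phi\bigl(\Delta_{M(\overline{S})}\bigr),
\end{equation*}
and only then apply Lemma~\ref{l2}, which is stated for the pair $\{M(S),M^{\perp}(\overline{S})\}$ and \emph{is} valid there: any group meeting $G_S$ belongs to $\overline{S}$ and is therefore contained in $G_{\overline{S}}$, so no group sees both $\beta^*_{M(S)}$ and $\Delta_{M^{\perp}(\overline{S})}$. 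Since $\beta^*_{M^{\perp}(S)}=0$, this yields $\phi(\hat\beta)\geqslant\phi(\beta^*)+\phi(\Delta_{M^{\perp}(\overline{S})})-\phi(\Delta_{M(\overline{S})})$, after which your substitution and rearrangement go through unchanged.
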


\begin{proof}[Proof of Lemma \ref{l3} (also see proposition 9.13 in \texorpdfstring{\cite{wainwright2019high}})]
From equation (\ref{c}), we have
\begin{equation*}
    \phi(\hat{\beta})-\phi\left(\beta^{\ast}\right)\geqslant  \phi\left((\hat{\beta}-\beta^{\ast})_{M^{\perp}(\overline{S})}\right)-
\phi\left((\hat{\beta}-\beta^{\ast})_{M(\overline{S})}\right),
\end{equation*}
On the other hand, by the convexity of the cost function, we have
\begin{align*}
     \frac{1}{n}\big\|Y-X\hat{\beta}\big\|_2^2
	-\frac{1}{n}\big\|Y-X{\beta^{\ast}}\big\|_2^2
    \geqslant\bigg\langle \bigtriangledown\frac{\big\|Y-X{\beta^\ast}\big\|_2^2}{n},\left(\hat{\beta}-\beta^{\ast}\right)\bigg\rangle 
	 \geqslant-\bigg\langle \bigtriangledown\frac{\big\|Y-X{\beta^\ast}\big\|_2^2}{n},\left(\hat{\beta}-\beta^{\ast}\right)\bigg\rangle.
\end{align*}
By applying Holder's inequality with the regularizer $\phi$ and its dual norm $\phi^{\ast}$, we have
$$\bigg|\bigg\langle \bigtriangledown\frac{\left\|Y-X{\beta^{\ast}}\right\|_2^2}{n},\left(\hat{\beta}-\beta^{\ast}\right)\bigg\rangle  \bigg|\leqslant \phi^{\ast}\bigg(\bigtriangledown\frac{\left\|Y-X{\beta^\ast}\right\|_2^2}{n}\bigg)
\phi\left(\hat{\beta}-\beta^{\ast}\right).
$$
Therefore,
\begin{align*}
      \frac{1}{n}\big\|Y-X\hat{\beta}\big\|_2^2
	-\frac{1}{n}\big\|Y-X{\beta^{\ast}}\big\|_2^2 &
    \geqslant-\bigg\langle \bigtriangledown\frac{\left\|Y-X{\beta^\ast}\right\|_2^2}{n},\left(\hat{\beta}-\beta^{\ast}\right)\bigg\rangle 
	\geqslant-\phi^{\ast}\bigg(\bigtriangledown\frac{\left\|Y-X{\beta^\ast}\right\|_2^2}{n}\bigg)\phi\left(\hat{\beta}-\beta^{\ast}\right)\\
    &\geqslant-\frac{\lambda_n}{2} \phi\left(\hat{\beta}-\beta^{\ast}\right) \geqslant-\frac{\lambda_n}{2}
\left(\phi(\hat{\beta}-\beta^{\ast})_{M(\overline{S})} + \phi(\hat{\beta}-\beta^{\ast})_{M^{\perp}(\overline{S})}\right),
\end{align*}
and
\begin{equation*}
\begin{aligned} 0 & =\frac{1}{n}\big\|Y-X\hat{\beta}\big\|_2^2-\frac{1}{n}\big\|Y-X\beta^{\ast}\big\|_2^2+\lambda_n\left(\phi(\hat{\beta})-\phi(\beta^{\ast})\right) \\
& \geq \lambda_n\left(\phi\left((\hat{\beta}-\beta^{\ast})_{M^{\perp}(\overline{S})}\right)-
\phi\left((\hat{\beta}-\beta^{\ast})_{M(\overline{S})}\right)
-2\phi(\beta_{M^{\perp}(S)}^{\ast})\right) -\frac{\lambda_n}{2}
\left(\phi(\hat{\beta}-\beta^{\ast})_{M(\overline{S})} + \phi(\hat{\beta}-\beta^{\ast})_{M^{\perp}(\overline{S})}\right) \\ &
=\frac{\lambda_n}{2}\left(\phi\left((\hat{\beta}-\beta^{\ast})_{M^{\perp}(\overline{S})}\right)-3\phi\left(\hat{\beta}-\beta^{\ast})_{M(\overline{S})}\right)\right),\end{aligned}
\end{equation*}
from which the claim follows.
\end{proof}

\subsection{Proof of Theorem \ref{the:rsccondtion}}

\begin{proof}[Proof of Theorem~\ref{the:rsccondtion} Part 1]

By Lemma~\ref{lem:lemma1inlarent}, we have 
 
 $$ \mathbb{P}\bigg(\frac{|||\frac{X_{G_g}^TX_{G_g}}{n} - \Theta_{G_g,G_g}|||_2}{|||\Theta|_{G_g,G_g}||_2} \leqslant c_5(\sqrt{\frac{d_g}{n}}+\frac{d_g}{n}) + \delta \bigg) >1 - c_4e^{-c_2n\delta^2}$$

By triangle inequality, since $X_{G_g}^TX_{G_g}$ is a positive semi-definite, we have 
\begin{equation*}
    \begin{aligned}
        \gamma_{\max}(\frac{X_{G_g}^TX_{G_g}}{n}) & = |||\frac{X_{G_g}^TX_{G_g}}{n}|||_2 = |||\frac{X_{G_g}^TX_{G_g}}{n} - \Theta_{G_g,G_g}|||_2 + |||\Theta_{G_g,G_g}|||_2\\
        &\leqslant (1+c_5(\sqrt{\frac{d_g}{n}}+\frac{d_g}{n}) + \delta) |||\Theta_{G_g,G_g}|||_2,
    \end{aligned}
\end{equation*}
with probability at least $1 - c_4e^{-c_2n\delta^2}$. Because $|||\Theta_{G_g,G_g}|||_2 \le |||\Theta|||_2 \leqslant c_1$ for some constant $c_1$ and $d_g \leqslant n$, we have $ \gamma_{\max}(\frac{X_{G_g}^TX_{G_g}}{n})  \leqslant c + \delta$ for some constant $c$, with probability at least $1 - e^{-c_2n\delta^2}$. Taking the union probability for all $m$ groups, we have
$$\max_{g\in [m]}\gamma_{\max}(\frac{X_{G_g}^TX_{G_g}}{n}) \le c+\delta$$
with probability at least $1-\exp( -c'2n\delta^2)$ for some constant $c'>0$ as long as
$$\log m \ll n\delta^2.$$
For simplicity, we take $\delta$ as a constant.
\end{proof}

\begin{proof}[Proof of Theorem~\ref{the:rsccondtion} Part 2]
First note that we must have $\rho(\Theta) \le \gamma_{\max}(\Theta) \le c_1$ by Assumptions~\ref{ass:distribution_sub_noise},\ref{ass:distribution_normal_rd}, and \ref{ass:distribution_groupstructure}. By applying Minkowski  inequality, we have 
\begin{align*}
    &\phi(\beta) =  \sum\limits_{g=1}^{m} w_g \left\|\beta_{G_g}\right\|_2
\leqslant  \sqrt{m} \sqrt{\sum\limits_{g=1}^{m} w_g^2 \left\|\beta_{G_g}\right\|^2_2}
\leqslant  \sqrt{m} \sqrt{ \max\limits_{g\in[m]}  w_g^2 h_{\max}^{g} \left\| \beta \right\|^2_2 }
\end{align*}
Let $\beta=\beta^{\ast}-\bar{\beta}$, we now want to prove that 
$\phi \left( \beta_{M^\perp(\bar{S})}\right) \leqslant 3 \phi \left( \beta_{M(\bar{S})}  \right)$ implies $\frac{\left\| X \beta \right\|_2^2}{n} \geqslant \frac{\gamma_{\min}}{64} \left\| \beta \right\|_2^2.$\\
Since $\phi \left( \beta_{M^\perp(\bar{S})}\right) \leqslant 3 \phi \left( \beta_{M(\bar{S})}  \right)$, combining with triangle inequality, we have 
\begin{align*}
    \phi(\beta) & =  \phi \left( \beta_{M(\bar{S})}\right) + \phi \left( \beta_{M^\perp(\bar{S})} \right) \leqslant 4 \phi \left( \beta_{M(\bar{S})} \right) 
\leqslant 4 \sqrt{\overline{s_g}} \sqrt{ \max\limits_{g\in\bar{S}}  w_g^2 h_{\max}^{g}} \left\| \beta_{M(\bar{S})} \right\|_2 \\
 & \leqslant 4 \sqrt{\overline{s_g}} \sqrt{ \max\limits_{g\in\bar{S}} w_g^2 h_{\max}^{g}} \left\| \beta \right\|_2 
\end{align*}
From Lemma~\ref{ll1}, we have 
\begin{equation*}
\begin{split}
\frac{\left\|X\beta\right\|_2}{\sqrt{n}} 
\geqslant & \frac{1}{4} \left\| \Theta^{\frac{1}{2}} \beta \right\|_2 -8\rho(\Theta)\max\limits_{g\in[m]} \frac{1}{w_g \sqrt{h_{\min}^{g}}}
\sqrt{\frac{2  (\log m + d_{\max} \log 5)}{n}} \phi(\beta)\\
\geqslant & \frac{1}{4\sqrt{c_1}} \left\| \beta \right\|_2 - 32\rho(\Theta)\max\limits_{g\in[m]} \frac{1}{w_g \sqrt{h_{\min}^{g}}}
\sqrt{\frac{2  (\log m + d_{\max} \log 5)}{n}} \sqrt{\overline{s_g}} \sqrt{ \max\limits_{g\in\bar{S}} w_g^2 h_{\max}^{g}} \left\| \beta \right\|_2 \\
\geqslant &  \frac{1}{64\sqrt{c_1}} \left\| \beta \right\|_2,
\end{split} 
\end{equation*} 
where the last step is valid due to Assumption \ref{ass:distribution_normal_rd} and \ref{ass:distribution_groupstructure}.
\end{proof}

\subsection{Lemmas for the proof of Theorem \ref{the:rsccondtion}}

\begin{lemma}(Theorem 6.5 in \citep{wainwright2019high})
\label{lem:lemma1inlarent}
\newline
Let $|||.|||_2$ be the spectral norm of a matrix. There are universal constants $c_2, c_3,c_4,c_5$ such that, for any matrix $A \in \mathbb{R}^{n \times p} $, if all rows are drawn i.i.d from $N(0,\Theta)$, then the sample covariance matrix $\hat{\Theta}$ satisfies the bound

\begin{equation*}
    \mathbb{E}\left(e^{t|||\hat{\Theta}-\Theta|||_2}\right) \leqslant e^{c_3\frac{t^2\theta^2}{n} + 4p} \hspace{0.3cm} \text{for all} \hspace{0.2cm} |t| <
    \frac{n}{64e^2|||\Theta|||_2},
\end{equation*}

and hence for all $\delta \in [0,1]$

\begin{equation}
    \label{eq:expspctral}
    \mathbb{P}\left(\frac{|||\hat{\Theta}-\Theta|||_2}{|||\Theta|||_2} \leqslant c_5(\sqrt{\frac{p}{n}}+\frac{p}{n}) + \delta \right) > 1 - c_4e^{-c_2n\delta^2} 
\end{equation}
\end{lemma}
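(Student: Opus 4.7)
The plan is to reduce the matrix concentration to scalar quadratic-form concentration by exploiting the variational representation of the spectral norm. Since $\hat\Theta-\Theta$ is symmetric, one has
$|||\hat\Theta-\Theta|||_2=\sup_{u\in S^{p-1}}|u^\top(\hat\Theta-\Theta)u|$.
For a fixed $u\in S^{p-1}$, the scalar $Z_i=u^\top X_i$ is $N(0,u^\top\Theta u)$ and $u^\top\hat\Theta u=\tfrac1n\sum_i Z_i^2$ is a normalized sum of sub-exponential random variables with parameters of order $u^\top\Theta u\le|||\Theta|||_2$. Standard chi-square/Bernstein-type concentration (e.g.\ the $\chi^2$ MGF identity $\mathbb E e^{t(Z_i^2-\mathbb E Z_i^2)/(u^\top\Theta u)}\le e^{2t^2}$ valid for $|t|\le 1/4$) would give, at the scalar level, an MGF bound of the form $\mathbb E\exp\bigl(s[u^\top(\hat\Theta-\Theta)u]\bigr)\le\exp(c's^2|||\Theta|||_2^2/n)$ for $|s|\lesssim n/|||\Theta|||_2$, which is exactly the scalar counterpart of the claimed MGF bound.

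To promote this pointwise bound to a uniform bound over $S^{p-1}$, I would carry out the standard $\varepsilon$-net discretization: fix $\varepsilon=1/8$ and pick a minimal $\varepsilon$-net $\mathcal N\subset S^{p-1}$; by a volume argument one has $|\mathcal N|\le(1+2/\varepsilon)^p\le 17^p$. A short Lipschitz/telescoping argument (writing $u=v+(u-v)$ for $v\in\mathcal N$ nearest to $u$ and using $\|u-v\|_2\le\varepsilon$) yields the quadratic-form net inequality
\begin{equation*}
\sup_{u\in S^{p-1}}|u^\top M u|\le\frac{1}{1-2\varepsilon}\sup_{v\in\mathcal N}|v^\top M v|\le 2\sup_{v\in\mathcal N}|v^\top M v|,
\end{equation*}
applied to $M=\hat\Theta-\Theta$. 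Combining with the scalar MGF bound and the trivial identity $\mathbb E e^{t\sup_v X_v}\le\sum_v\mathbb E e^{tX_v}$ after a symmetrization step gives
\begin{equation*}
\mathbb E\exp\bigl(t|||\hat\Theta-\Theta|||_2\bigr)\le 2|\mathcal N|\exp\!\bigl(c_3 t^2|||\Theta|||_2^2/n\bigr)\le\exp\bigl(c_3 t^2|||\Theta|||_2^2/n+4p\bigr)
\end{equation*}
for $|t|\lesssim n/|||\Theta|||_2$, which is the first claim (the cardinality bound $\log|\mathcal N|\le p\log 17\le 4p$ absorbs the factor $2$).

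Given the MGF bound, the tail probability follows from a Chernoff/Markov step. For $\delta\in[0,1]$ and $r=(\sqrt{p/n}+p/n)|||\Theta|||_2+\delta|||\Theta|||_2$, one optimizes $t\mapsto e^{-tr}\mathbb E e^{t|||\hat\Theta-\Theta|||_2}$ in the admissible range $|t|\le n/(64e^2|||\Theta|||_2)$. The quadratic piece $c_3t^2|||\Theta|||_2^2/n$ balances the linear piece $tr$ at $t^\ast\asymp nr/|||\Theta|||_2^2$; as long as $t^\ast$ stays within the admissible range (which holds precisely because of the two-regime form of $r$, the $\sqrt{p/n}$ term governing the sub-Gaussian regime and the $p/n$ term absorbing the $4p$ offset) the exponent reduces to $-c_2n\delta^2$, yielding the probability bound $1-c_4 e^{-c_2n\delta^2}$. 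The main obstacle is precisely this optimization: the two-regime rate $\sqrt{p/n}+p/n$ is forced by the sub-exponential (rather than sub-Gaussian) tails of the squared Gaussians, and one must carefully split into the cases $\delta\le\sqrt{p/n}+p/n$ and the complementary case to ensure $t^\ast$ remains inside the interval of validity of the MGF bound, so that the $4p$ offset in the log-MGF is genuinely swallowed by the deterministic $(\sqrt{p/n}+p/n)$ part of $r$ and only the $\delta$-excess produces the $e^{-c_2n\delta^2}$ tail.
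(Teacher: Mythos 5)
Your proposal is correct in outline and is essentially the standard proof of the quoted result: the paper does not prove this lemma itself but imports it verbatim as Theorem~6.5 of \cite{wainwright2019high}, and the argument you sketch (variational representation of the spectral norm of the symmetric matrix $\hat\Theta-\Theta$, a $\tfrac18$-net of $S^{p-1}$ with $|\mathcal N|\le 17^{p}$ absorbing the $4p$ offset, sub-exponential MGF control of the Gaussian quadratic forms $u^{\top}\hat\Theta u$, and a Chernoff optimization split across the $\sqrt{p/n}$ and $p/n$ regimes) is exactly the route taken in that reference. The only points worth writing out carefully in a full proof are the factor of $2$ picked up in the exponent when passing from $|||\hat\Theta-\Theta|||_2$ to $2\max_{v\in\mathcal N}|v^{\top}(\hat\Theta-\Theta)v|$, which shrinks the admissible range of $t$ by a constant, and the union over the two signs of $v^{\top}(\hat\Theta-\Theta)v$; both are absorbed into the universal constants, so there is no gap.
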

\begin{lemma}
\label{ll1}
Under Assumptions~\ref{ass:distribution_sub_noise},\ref{ass:distribution_normal_rd}, and \ref{ass:distribution_groupstructure}, and use $\rho (\Theta)$ to denote the maximum diagonal of a covariance matrix $\Theta$. For any vector $\beta \in \mathbb{R}^p$ and a given group structure with $m$ groups, we have

\begin{equation}
\label{eq:lemrsc}
\frac{\left\|X\beta\right\|_2}{\sqrt{n}} 
\geq \frac{1}{4} \left\|\Theta^{\frac{1}{2}}\beta\right\|_2
-  8\rho(\Theta) \left( \max\limits_{g\in [m]} \frac{1}{w_g \sqrt{h_{\min}^{g}}}\right)
\sqrt{\frac{2  (\log m + d_{\max} \log 5)}{n}} \phi(\beta),
\end{equation}
with probability at least $1- \frac{e^{-\frac{n}{32}}}{1-e^{-\frac{n}{64}}}.$
\end{lemma}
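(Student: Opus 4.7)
The bound is a restricted strong convexity inequality for the Gaussian random design, with the group lasso norm $\phi$ playing the role of the reference regularizer. The plan is the Raskutti--Wainwright--Yu style strategy, adapted to the overlapping group norm: combine a pointwise chi concentration bound with a uniform deviation bound on a $\phi$-shell (obtained by a dual-norm/covering scheme closely analogous to the proof of Lemma~\ref{l1}), and then upgrade to all $\beta$ via a peeling argument in the scale $\phi(\beta)/\|\Theta^{1/2}\beta\|_2$.

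Factor the design as $X = Z\,\Theta^{1/2}$ with $Z\in\mathbb{R}^{n\times p}$ having i.i.d.\ $N(0,1)$ entries, so that for each fixed $\beta$, $X\beta\sim N(0,\|\Theta^{1/2}\beta\|_2^2 I_n)$ and $\|X\beta\|_2/\sqrt{n}$ is a rescaled $\chi_n/\sqrt{n}$ variable. Standard Gaussian Lipschitz concentration then yields $\|X\beta\|_2/\sqrt{n}\geq \tfrac{1}{2}\|\Theta^{1/2}\beta\|_2$ with probability at least $1-e^{-n/32}$ for any fixed $\beta$. To upgrade to a uniform bound, fix scales $t,r>0$ and consider the shell $T(t,r)=\{\beta:\|\Theta^{1/2}\beta\|_2=t,\ \phi(\beta)\leq r\}$. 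On $T(t,r)$ the map $Z\mapsto \|X\beta\|_2/\sqrt{n}$ is $(t/\sqrt{n})$-Lipschitz, so by Borell's concentration plus a Gaussian width lower bound, the expected infimum over $T(t,r)$ is at least $\tfrac{1}{2}t - r\cdot\mathbb{E}\phi^*(X^{\top}g/\sqrt{n})$, where $g\sim N(0,I_n)$ is independent of $Z$. Using Proposition~\ref{pro1} to rewrite $\phi^*$ as $\max_{g\in[m]} \tfrac{1}{w_g}\|(H(\cdot))_{G_g}\|_2$, and then repeating the $(1/2)$-net covering of each Euclidean group ball (of cardinality at most $5^{d_g}$) followed by a union bound over the $m$ groups exactly as in the proof of Lemma~\ref{l1}, one controls the relevant Gaussian width by
\[
\mathbb{E}\phi^*(X^{\top}g/\sqrt{n}) \;\lesssim\; \rho(\Theta)\max_{g\in[m]}\frac{1}{w_g\sqrt{h_{\min}^{g}}}\sqrt{\frac{\log m + d_{\max}\log 5}{n}},
\]
where $\rho(\Theta)=\max_i \Theta_{ii}$ appears because the $j$-th column of $X$ has variance $\Theta_{jj}\leq\rho(\Theta)$. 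This gives the shell-level bound with probability at least $1-e^{-n/32}$.

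The final step is a dyadic peeling argument over $\phi(\beta)/\|\Theta^{1/2}\beta\|_2 \in [2^k,2^{k+1})$ for $k\geq 0$: applying the shell-level estimate at each scale and taking a union bound over $k$ converts the localized bound into the pointwise-in-$\beta$ statement of Lemma~\ref{ll1}. Peeling degrades the leading coefficient from $1/2$ to $1/4$ and doubles the tail coefficient from $4$ to $8$ (the precise constants visible in the displayed inequality), while the failure probabilities across scales sum to the geometric series $e^{-n/32}/(1-e^{-n/64})$. The main obstacle is the uniform deviation step: one must show that the correct scaling is $\rho(\Theta)$ (the maximum diagonal of $\Theta$) times $\max_g 1/(w_g\sqrt{h_{\min}^{g}})$, rather than the cruder $\sqrt{\gamma_{\max}(\Theta)}/\min_g w_g$. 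This requires carefully tracking both the column-wise variance of $X$ and the inverse-overlap weights $1/h_j$ that enter $\phi^*$ through Proposition~\ref{pro1}, so that the covering argument of Lemma~\ref{l1} reproduces each factor tightly.
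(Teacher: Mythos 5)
Your proposal is correct and follows essentially the same route as the paper: factor $X=Z\Theta^{1/2}$, bound the infimum of $\|X\beta\|_2/\sqrt{n}$ over a $\phi$-constrained shell via a Gaussian minimax comparison (Gordon/Sudakov--Fernique) reducing to the dual-norm Gaussian width, control that width with the $\tfrac12$-net covering and maximal-inequality argument from Lemma~\ref{l1} together with Proposition~\ref{pro1} (which is where $\rho(\Theta)$ and $\max_g 1/(w_g\sqrt{h_{\min}^{g}})$ enter), apply Gaussian Lipschitz concentration at each scale, and finish with dyadic peeling over $\phi(\beta)/\|\Theta^{1/2}\beta\|_2$ whose union bound produces the geometric series $e^{-n/32}/(1-e^{-n/64})$. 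The constants and the source of each factor match the paper's argument.
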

\begin{proof}[Proof of Lemma~\ref{ll1}]
To begin with, for a vector $\beta \in \mathbb{R}^p$ with a fixed group structure, we define the set  $S^{p-1}(\Theta)= \left\{ \beta \in \mathbb{R}^p \middle| \left\|\Theta^{\frac{1}{2}}\beta\right\|_2 =1\right\}$, the function $$
g(t) = 4\rho(\Theta)\max\limits_{g\in[m]} \frac{1}{w_g \sqrt{h_{\min}^{g}}}
\sqrt{\frac{2  (\log m + d_{\max} \log 5)}{n}} \cdot t
$$
and the event
$$
    \mathcal{E}\left( S^{p-1}(\Theta)\right) = \left\{ X\in \mathbb{R}^{n\times p} \middle| \inf\limits_{\beta\in S^{p-1} (\Theta)} \frac{ \left\| X \beta \right\|_2 }{ \sqrt{n}}+2 g (\phi(\beta)) \leqslant \frac{1}{4}  \right\}.
$$
\noindent
where $\phi(.)$ is the overlapping group lasso regularizer. In addition, given $0 \leqslant r_\ell \leqslant r_u$, we define the set $$\mathbb{K} \left( r_\ell, r_u \right) = \left\{ \beta \in S^{p-1} (\Theta) \middle| g \left( \phi(\beta)\right) \in \left[r_\ell, r_u \right] \right\},$$ and the event:
$$
 \mathcal{A} \left(r_{\ell}, r_{u}\right)=
\left\{ X \in \mathbb{R}^{n \times p} \middle|\inf\limits_{\beta \in \mathbb{K} \left(r_{\ell}, r_{u}\right)} \frac{\left\| X \beta  \right\|_2}{\sqrt{n}} \leqslant \frac{1}{2} - r_u\right\}.
$$\\

Based on lemma \ref{ll1}.1 and lemma \ref{ll1}.2, we have
$$\mathbb{P}\left( X \in \mathcal{E} \right) \leqslant \mathbb{P} \left( \mathcal{A} (0, \upsilon)\right) + \sum\limits_{\ell=1}^{\infty} \mathbb{P} \left(\mathcal{A} ( 2^{\ell-1} \upsilon, 2^{\ell} \upsilon)\right) \leqslant e^{-\frac{n}{32} }  \left\{ \sum\limits_{t=0}^{\infty} e^{ -\frac{n}{8} 2^{2\ell} \upsilon^2} \right\}.$$
Since $\upsilon=\frac{1}{4}$ and $2^{2\ell} \geqslant 2\ell$, we have $\mathbb{P} \left( X \in \mathcal{E}  \right) \leqslant e^{-\frac{n}{32}} \sum\limits_{\ell=0}^{\infty} e^{-\frac{n}{8} 2^{2\ell} \upsilon^2} \leqslant e^{-\frac{n}{32}} \sum\limits_{\ell=0}^{\infty} e^{-n \frac{\ell}{4} \upsilon^2} \leqslant \frac{e^{-\frac{n}{32}}}{1- e^{-\frac{n}{64}}}$.\\
We just get upper bound of $\mathbb{P}\left(X\in \mathcal{E}\right)$. We next show that the bound in \eqref{eq:lemrsc} always hold on the complementary set $\mathcal{E} ^c.$\\
If $ X\notin \mathcal{E}$, based on the definition of $\mathcal{E}$, we have
$
\inf\limits_{\beta \in S^{p-1} (\Theta)} \frac{\left|X\beta\right\|_2}{\sqrt{n}} \geqslant \frac{1}{4} - 2 g \left(\phi(\beta)\right).
$  That is $\forall \beta \in S^{p-1} (\Theta)$. 
$\frac{\left\|X\beta\right\|_2}{\sqrt{n}} \geqslant \frac{1}{4} - 2 g \left( \phi (\beta)\right)$.
Therefore, for any $\beta'\in \{\beta'\in \mathbb{R} | \frac{\beta'}{\left\|\Theta^{\frac{1}{2}}\beta'\right\|_2} \in S^{p-1} (\Theta)\}$, we have 
\begin{align*}
    &\frac{\Big\|X\frac{\beta'}{\big\|\Theta^{\frac{1}{2}}\beta'\big\|_2}\Big\|_2}{\sqrt{n}}
\geqslant \frac{1}{4} - 2g \bigg( \phi \bigg( \frac{ \beta' }{ \big\| \Theta^{\frac{1}{2}}\beta' \big\|_2} \bigg) \bigg) \\
 &\frac{\Big\|X\beta'\Big\|_2}{\sqrt{n}}
\geqslant
\frac{1}{4} \big\|\Theta^{\frac{1}{2}}\beta'\big\|_2
- 2g \Big( \phi (\beta')\Big),\\
\end{align*}

We finish the proof by substituting the definition of $g(\phi(\beta))$. 
\end{proof}

\noindent\textbf{Lemma \ref{ll1}.1} For $\upsilon=\frac{1}{4}$, we have
$
\mathcal{E}\subseteq
\mathcal{A} (0, \upsilon) \cup 
\left(\bigcup_{\ell=1}^{\infty} \mathcal{A} \left( 2^{\ell-1}\upsilon,2^{\ell}\upsilon\right)\right).
$\\
\textbf{Lemma \ref{ll1}.2}  For any pair $\left(r_\ell, r_u \right)$, where $0\leqslant r_\ell \leqslant r_u$, we have $\mathbb{P}\left(\mathcal{A} \left(r_\ell, r_u \right) \right) \leqslant {e^{-\frac{n}{32}} e^{-\frac{n}{8} r_u^2}}.$
\begin{proof}

\begin{proof}[Proof of Lemma \ref{ll1}.1]
By definition, $\mathbb{K}(0,\upsilon) \cup \left(\bigcup_{\ell=1}^{\infty} \mathbb{K} \left( 2^{\ell-1}\upsilon, 2^{\ell}\upsilon\right)\right)$ is a cover of  $S^{p-1} (\Theta)$. Therefore, for any $\beta$, it either belongs to $\mathbb{K}(0,\upsilon)$ or $\mathbb{K} \left( 2^{\ell-1} \upsilon, 2^{\ell} \upsilon \right)$.\\
\textbf{Case 1} If $\beta \in \mathbb{K}(0,\upsilon)$, by definition, we have $ g \left( \phi(\beta)\right) \in \left[0, \upsilon \right]$ and   
\begin{equation*}
\frac{ \left\| X \beta \right\|_2 }{ \sqrt{n} }
\leqslant \frac{1}{4} - 2 g \left(\phi(\beta)\right) 
\leqslant \frac{1}{4} = \frac{1}{2} - \upsilon.\\
\end{equation*}
Therefore, the event $\mathcal{A} (0,\upsilon)$ must happen in this case.\\
\textbf{Case 2}: If $ \beta \notin \mathbb{K} (0,\upsilon) $, we must have $ \beta \in \mathbb{K} \left( 2^{\ell-1} \upsilon, 2^{\ell} \upsilon \right) $ for some $\ell=1,2,\cdots$, and moreover
\begin{equation*}
\frac{ \left\| X \beta \right\|_2 }{ \sqrt{n} }
\leqslant \frac{1}{4} - 2 g \left(\phi(\beta)\right) 
\leqslant \frac{1}{4} - 2 \cdot \left( 2^{\ell-1}\upsilon \right)
\leqslant \frac{1}{2} - \left(2 \cdot 2^{\ell-1}\right) \upsilon
\leqslant \frac{1}{2} - 2^{\ell} \upsilon.\\
\end{equation*}
So that the event $\mathcal{A} \left( 2^{\ell-1}\upsilon, 2^{\ell}\upsilon \right)$ must happen. Therefore, $\mathcal{E}\subseteq \mathcal{A}(0,\upsilon) \cup \left(\bigcup\limits_{\ell=1}^{\infty} \mathcal{A} \left( 2^{\ell-1}\upsilon,2^{\ell}\upsilon\right)\right). $ 
\end{proof}

\begin{proof}[Proof of Lemma~\ref{ll1}.2]
To prove Lemma~\ref{ll1}.2, we define and bound the random variable $T \left(r_\ell, r_u \right) = - \inf\limits_{\beta \in \mathbb{K} \left(r_{\ell}, r_{u}\right)} \frac{\left\| X \beta\right\|_2}{\sqrt{n}}$. 
Let $S^{n-1}$ be a unit ball on $\mathbb{R}^n$, by the variational representation of the $\ell_2$-norm, we have
\begin{equation*}
\begin{split}
T \left(r_\ell, r_u \right) 
= & - \inf\limits_{\beta \in \mathbb{K} \left(r_{\ell}, r_{u}\right)} \frac{\left\| X \beta\right\|_2}{\sqrt{n}}
= - \inf\limits_{\beta \in \mathbb{K} \left(r_{\ell}, r_{u}\right)} \sup\limits_{u \in S^{n-1}}
\frac{ \left\langle u, X \beta \right\rangle }{\sqrt{n}}
= \sup\limits_{\beta \in \mathbb{K} \left(r_{\ell}, r_{u}\right)} \inf\limits_{u \in S^{n-1}}
\frac{ \left\langle u, X \beta \right\rangle }{\sqrt{n}}.\\
\end{split}
\end{equation*}

Let $ X = W \Theta^{\frac{1}{2}}$, where $W \in \mathbb{R}^{n\times p}$ is a standard Gaussian matrix, and define the transformed vector $v = \Theta^{\frac{1}{2}} \beta$, then
\begin{equation*}
T \left(r_\ell, r_u \right)= \sup\limits_{\beta \in \mathbb{K} \left(r_{\ell}, r_{u}\right)} \inf\limits_{u \in S^{n-1}}
\frac{ \left\langle u, X \beta \right\rangle }{\sqrt{n}} 
= \sup\limits_{v \in \bar{\mathbb{K}} \left(r_{\ell}, r_{u}\right)}
\inf\limits_{u \in S^{n-1}} 
\frac{ \left\langle u, Wv \right\rangle }{\sqrt{n}}, 
\end{equation*}
where $\bar{\mathbb{K}} \left( r_{\ell}, r_{u} \right) = \left\{ v \in \mathbb{R}^p \middle| \left\|v\right\|_2=1, g \left(\phi( \Theta^{-\frac{1}{2}} v) \right) \in \left[r_\ell, r_u \right] \right\}$.

Define $Z_{u,v} = \frac{ \left\langle u, Wv \right\rangle }{\sqrt{n}}$, since $(u, v)$ range over a subset of $S^{n-1} \times S^{p-1}$, each variable $Z_{u,v}$ is zero-mean Gaussian with variance $n^{-1}$. We compare the Gaussian process $Z_{u,v}$ to the zero-mean Gaussian process $Y_{u,v}$ which defined as:
\begin{equation*}
Y_{u,v}
=
\frac{ \left\langle \zeta, u \right\rangle }{\sqrt{n}}
+ \frac{ \left\langle \xi, v \right\rangle }{\sqrt{n}}
~~~~~~~~\text{where }
\zeta \in \mathbb{R}^n , \xi \in \mathbb{R}^p,\text{have i.i.d $N(0,1)$ entries.}
\end{equation*}

Next, we show that the $Y_{u,v}$ and $Z_{u,v}$ defined above satisfy conditions in Gordon's inequality. By definition, we have 
\begin{equation}\label{d}
\begin{aligned}
     \mathbb{E} \left(Z_{u,v} - Z_{u',v'} \right)^2 &=  \mathbb{E} \left( \frac{\left\langle u, Wv\right\rangle}{\sqrt{n}} - \frac{\left\langle u', Wv'\right\rangle}{\sqrt{n}} \right)^2 
     =\frac{1}{n} \sum_{i=1}^n \sum_{j=1}^p \left( u_i v_j - u'_i v'_j \right)^2 \\&
     =\frac{1}{n} \sum_{i=1}^n \sum_{j=1}^p \left( u_i v_j - u'_i v_j + u'_i v_j - u'_i v'_j \right)^2 \\&
     =\frac{1}{n} \left( \left\|v\right\|_2^2 \left \| u-u'\right\|_2^2 + \left\|u'\right\|_2^2 \left \|   v-v'\right\|_2^2 + 2\left( \left\|v\right\|_2^2 - \left\langle v, v' \right\rangle\right)\left( \left\langle u, u' \right\rangle - \left\|u\right\|_2^2 \right) \right),
     \end{aligned}
\end{equation}

On one hand, since $\left\|v\right\|_2^2 \leqslant 1$, $\left\|u'\right\|_2^2 \leqslant 1$, $(7) \leqslant \frac{1}{n} \left( \left\|u-u'\right\|_2^2 + \left\|v-v'\right\|_2^2\right)$.

On the other hand, we have
\begin{equation}
\label{e}
\begin{aligned}
     \mathbb{E}\left(Y_{u,v} - Y_{u',v'} \right)^2 &=  \mathbb{E} \left( \frac{\left\langle \zeta, u - u' \right \rangle}{\sqrt{n}} + \frac{\left\langle \xi, v - v' \right \rangle}{\sqrt{n}} \right)^2 \\&
     = \frac{1}{n} \left(\sum_{i = 1}^n\sum_{j = 1}^p (u - u')^2 + \sum_{i = 1}^n\sum_{j = 1}^p (v - v')^2 \right) 
     = \frac{1}{n} \left( \left\|u-u'\right\|_2^2 + \left\|v-v'\right\|_2^2\right).
\end{aligned}
\end{equation}

Taking  equation (\ref{d}) and (\ref{e}) together, we have $$ \mathbb{E} \left(Z_{u,v} - Z_{u',v'} \right)^2 \leqslant \frac{1}{n} \left( \left\|u-u'\right\|_2^2 + \left\|v-v'\right\|_2^2\right) =  \mathbb{E} \left(Y_{u,v} - Y_{u',v'} \right)^2.$$ 

If $V=V^{\prime}, \text { then } n  \mathbb{E}\left(\left(Z_{u, v}-Z_{u^{\prime}, v^{\prime}}\right)^{2}\right)=\left\|u-u^{\prime}\right\|_{2}=n  \mathbb{E}\left(\left(Y_{u, v}-Y_{u^{\prime}, v^{\prime}}\right)^{2}\right).$

By applying Gordon's inequality, we have $$ \mathbb{E}\left( \sup\limits_{v \in \tilde{\mathbb{K}} \left(r_{\ell}, r_{u}\right)}  \inf\limits_{u \in S^{n-1}} Z_{u,v}\right) \leqslant  \mathbb{E}\left( \sup\limits_{v \in \tilde{\mathbb{K}} \left(r_{\ell}, r_{u}\right)}  \inf\limits_{u \in S^{n-1}} Y_{u,v}\right).$$ Therefore,
\begin{equation*}
\begin{split}
 \mathbb{E} \left( T \left( r_{\ell}, r_{u} \right) \right)
&=  
 \mathbb{E} \left( \sup\limits_{v \in \tilde{\mathbb{K}} \left(r_{\ell}, r_{u}\right)}  \inf\limits_{u \in S^{n-1}} \frac{ \left\langle u, Wv \right\rangle }{\sqrt{n}} \right) 
\leqslant 
 \mathbb{E} \left( \sup\limits_{v \in \tilde{\mathbb{K}} \left(r_{\ell}, r_{u}\right)} \inf\limits_{u \in S^{n-1}} \left( \frac{ \left\langle \xi, v \right\rangle }{\sqrt{n}} + \frac{ \left\langle \zeta, u \right\rangle }{\sqrt{n}} \right)\right)
\\
& = 
 \mathbb{E} \left( \sup\limits_{\beta \in \mathbb{K} \left(r_{\ell}, r_{u}\right)} \frac{ \left\langle \Sigma^{\frac{1}{2}}\xi, \beta \right\rangle }{\sqrt{n}} \right)
 -  \mathbb{E} \left( \frac{ \left\| \zeta \right\|_2 }{\sqrt{n}} \right)
\end{split} 
\end{equation*}

Next, we bound these two terms. For the second term, we have $ \mathbb{E} \left( \frac{ \left\| \zeta \right\|_2 }{\sqrt{n}} \right) =  \mathbb{E} \left( \sqrt{\frac{\xi_1^2 + \ldots + \xi_n^2}{n}} \right) \geqslant  \mathbb{E} \left( \frac{|\xi_1| + \ldots + |\xi_n|}{n} \right)= \sqrt{\frac{2}{\pi}}$.  For the first term,  we have
$ \mathbb{E} \left( \sup\limits_{\beta \in \mathbb{K} \left(r_{\ell}, r_{u}\right)} \frac{ \left\langle \Theta^{\frac{1}{2}}\xi, \beta \right\rangle }{\sqrt{n}} \right) \leqslant 
 \mathbb{E} \left( \sup\limits_{\beta \in \mathbb{K} \left(r_{\ell}, r_{u}\right)} \frac{\phi(\beta) \phi^{\ast}(\Theta^{\frac{1}{2}}\xi) }{\sqrt{n}} \right)$, where $\phi^{\ast}(\Theta^{\frac{1}{2}}\xi)$ is the the dual norm defined before. Since $ \beta \in \mathbb{K} \left( r_{\ell}, r_{u} \right) $, $ g \left( \phi (\beta) \right) \leqslant r_u$, by the definition of $g(t)$, we have 
\begin{equation}
\label{f}
    \phi(\beta) \leqslant \frac{r_u}{ \left(4\rho(\Theta)\max\limits_{g\in[m]} \frac{1}{w_g \sqrt{h_{\min}^{g}}}
\sqrt{\frac{2  (\log m + d_{\max} \log 5)}{n}}\right)}.
\end{equation}

Let $\eta_{G_g} = (\Theta^{\frac{1}{2}}\xi)_{G_g} $, to bound $ \mathbb{E} \left( \max\limits_{g} \left\|(\Theta^{\frac{1}{2}}\xi)_{G_g}\right\|_2 \right) =  \mathbb{E} \left( \max\limits_{g} \left\|\eta_{G_g}\right\|_2 \right) $. Since $\Theta^{\frac{1}{2}} \xi \sim N(0,\Theta)$, by the properties of normal distribution, its corresponding marginal distribution of $j\text{th}$ variable $(\Theta^{\frac{1}{2}} \xi)_{j}$ also follows zero mean normal distribution with covariance matrix $\Theta_{jj}$, which is the $j\text{th}$ diagonal elements of $\Theta$. Therefore, any subset of $\Theta^{\frac{1}{2}} \xi$ is a zero-mean sub-Gaussian random sequence with parameters at most $\rho(\Theta)$. By equation (\ref{f}) and Lemma~\ref{ll1}.2.3, we have
\begin{align*}
      \mathbb{E} \bigg( \sup\limits_{\beta \in \mathbb{K} \left(r_{\ell}, r_{u}\right)} \frac{ \phi (\beta) \phi^{\ast}  \Theta^{\frac{1}{2}} \xi }{\sqrt{n}} \bigg)
&\leqslant  \mathbb{E}  \bigg( \sup\limits_{\beta \in \mathbb{K} \left(r_{\ell}, r_{u}\right)} \frac{r_u}{  \Big(4\rho(\Theta)\big(\max\limits_{g\in[m]} \frac{1}{w_g h_{\min}^{g}}\big)
\sqrt{\frac{2  (\log m + d_{\max} \log 5)}{n}} \Big)}  \frac{\phi^{\ast} \big(\Theta^{\frac{1}{2}} \xi\big)}{ \sqrt{n} }  \bigg)\\
&= \frac{r_u}{ \Big(4\rho(\Theta)\big(\max\limits_{g\in[m]} \frac{1}{ w_g h_{\min}^{g}}\big)
\sqrt{\frac{2  (\log m + d_{\max} \log 5)}{n}}\Big)}
 \mathbb{E} \bigg(  \frac{\phi^{\ast} \left(\Theta^{\frac{1}{2}} \xi \right)}{ \sqrt{n} }\bigg)\\
 &\leqslant \frac{r_u}{ \Big(4\rho(\Theta)\big(\max\limits_{g\in[m]} \frac{1}{w_g h_{\min}^{g}}\big)
\sqrt{\frac{2  (\log m + d_{\max} \log 5)}{n}}\Big)}
 \mathbb{E} \left( \max\limits_{g\in[m]} \frac{1}{\sqrt{n} w_g} \left\|H \left(\Theta^{\frac{1}{2}} \xi\right)_{G_g}\right\|_2\right)\\
&\leqslant\frac{r_u}{ \Big(4\rho(\Theta)\big(\max\limits_{g\in[m]} \frac{1}{w_g h_{\min}^{g}}\big)
\sqrt{\frac{2  (\log m + d_{\max} \log 5)}{n}}\Big)}
 \mathbb{E} \left( \max\limits_{g\in[m]} \frac{1}{\sqrt{n} w_g h_{\min}^{g}} \left\| \left(\Theta^{\frac{1}{2}} \xi\right)_{G_g}\right\|_2\right)\\
&\leqslant\frac{r_u}{ \Big(4\rho(\Theta)
\sqrt{\frac{2  (\log m + d_{\max} \log 5)}{n}}\Big)}
 \mathbb{E}\left(\left\| \max\limits_{g\in[m]} \left(\Theta^{\frac{1}{2}} \xi\right)_{G_g}\right\|_2\right)\\
& \leqslant\frac{r_u}{ \Big(4\rho(\Theta)
\sqrt{\frac{2  (\log m + d_{\max} \log 5)}{n}}\Big)}
\left( 2\rho(\Theta) \sqrt{ \left(\log m + d_{\max} \log 5\right) 2 \sigma^2}\right)\leqslant \frac{r_u}{2}
\end{align*}

Therefore, $ \mathbb{E}\left[T\left(r_{\ell},r_u\right)\right] \leqslant -\sqrt{\frac{2}{\pi}} + \frac{r_u}{2}$. Next we want to bound  $\mathbb{P} \left( T \left( r_{\ell}, r_u \right) \geqslant - \frac{1}{2} +  r_u \right)$ based on the bound of this expectation. To apply Lemma~\ref{ll1}.2.4, we first show that, the $f=T(r_l,r_u)$, a function of the random variable $W$ is a $\frac{1}{\sqrt{n}}$-Lipschitz function and without making confusion, we denote the corresponding function as $T(W)$. For any standard Gaussian matrix $W_1$ and $W_2$, we have
\begin{align*}
    \left| T(W_1) - T(W_2) \right| & = \left| \sup\limits_{v \in \tilde{\mathbb{K}} \left(r_{\ell}, r_{u}\right)}  \inf\limits_{u \in S^{n-1}} \frac{\left\langle u,W_1 v \right\rangle }{\sqrt{n}} - \sup\limits_{v \in \tilde{\mathbb{K}} \left(r_{\ell}, r_{u}\right)}  \inf\limits_{u \in S^{n-1}} \frac{\left\langle u, W_2 v\right\rangle }{\sqrt{n}} \right| \\
    & =  \left| \sup\limits_{v \in \tilde{\mathbb{K}} \left(r_{\ell}, r_{u}\right)}  \left(-\frac{\left\|W_1v\right\|_2 }{\sqrt{n}}\right) - \sup\limits_{v \in \tilde{\mathbb{K}} \left(r_{\ell}, r_{u}\right)}  \left(-\frac{\left\|W_2v\right\|_2 }{\sqrt{n}}\right) \right| \\
     & =  \left|  \left(-\inf\limits_{v \in \tilde{\mathbb{K}} \left(r_{\ell}, r_{u}\right)}\frac{\left\|W_1v\right\|_2 }{\sqrt{n}}\right) -  \left(-\inf\limits_{v \in \tilde{\mathbb{K}} \left(r_{\ell}, r_{u}\right)}\frac{\left\|W_2v\right\|_2 }{\sqrt{n}}\right) \right| \\
      & =  \left|  \inf\limits_{v \in \tilde{\mathbb{K}} \left(r_{\ell}, r_{u}\right)}\frac{\left\|W_2v\right\|_2 }{\sqrt{n}} - \inf\limits_{v \in \tilde{\mathbb{K}} \left(r_{\ell}, r_{u}\right)}\frac{\left\|W_1v\right\|_2 }{\sqrt{n}} \right|.
\end{align*}
\noindent

Suppose that  $\frac{\left\|W_1v_1\right\|_2 }{\sqrt{n}} = \inf\limits_{v \in \tilde{\mathbb{K}} \left(r_{\ell}, r_{u}\right)}\frac{\left\|W_1v\right\|_2 }{\sqrt{n}}$ and     
 $\frac{\left\|W_2v_2\right\|_2 }{\sqrt{n}} = \inf\limits_{v \in \tilde{\mathbb{K}} \left(r_{\ell}, r_{u}\right)}\frac{\left\|W_2v\right\|_2 }{\sqrt{n}}$.
 
\textbf{Case 1} If $\left\|W_1v_1\right\|_2 > \left\|W_2v_2\right\|_2$, then we have
\begin{align*}
    \left| T(W_1) - T(W_2) \right| & =  \left|  \inf\limits_{v \in \tilde{\mathbb{K}} \left(r_{\ell}, r_{u}\right)}\frac{\left\|W_2v\right\|_2 }{\sqrt{n}} - \inf\limits_{v \in \tilde{\mathbb{K}} \left(r_{\ell}, r_{u}\right)}\frac{\left\|W_1v\right\|_2 }{\sqrt{n}} \right|\\
     & =  \frac{\left\|W_1v_1\right\|_2 - \left\|W_2v_2\right\|_2}{\sqrt{n}} \leqslant
      \frac{\left\|W_1v_2\right\|_2 - \left\|W_2v_2\right\|_2}{\sqrt{n}}\\
      &\leqslant \frac{\left\|(W_1-W_2)v_2\right\|_2}{\sqrt{n}} \leqslant \frac{\left\|W_1 - W_2\right\|_F}{\sqrt{n}}\\.
\end{align*}

\textbf{Case 2} If $\left\|W_1v_1\right\|_2 \leqslant \left\|W_2v_2\right\|_2$, then we have
\begin{align*}
    \left| T(W_1) - T(W_2) \right| & =  \left|  \inf\limits_{v \in \tilde{\mathbb{K}} \left(r_{\ell}, r_{u}\right)}\frac{\left\|W_2v\right\|_2 }{\sqrt{n}} - \inf\limits_{v \in \tilde{\mathbb{K}} \left(r_{\ell}, r_{u}\right)}\frac{\left\|W_1v\right\|_2 }{\sqrt{n}} \right|\\
     & =  \frac{\left\|W_2v_2\right\|_2 - \left\|W_1v_1\right\|_2}{\sqrt{n}} \leqslant
      \frac{\left\|W_2v_1\right\|_2 - \left\|W_1v_1\right\|_2}{\sqrt{n}}\\
      &\leqslant \frac{\left\|(W_1-W_2)v_1\right\|_2}{\sqrt{n}} \leqslant \frac{\left\|W_1 - W_2\right\|_F}{\sqrt{n}}\\.
\end{align*}
where $\left\|.\right\|_F$ represent the Frobenious norm of a matrix. Thus under the Euclidean norm, $T(W)$ is a $\frac{1}{\sqrt{n}}$-Lipschitz function. Therefore, by lemma \ref{ll1}.2.3, we have
\begin{equation*}
    \mathbb{P}(T(r_l,r_u)-  \mathbb{E}(T(r_l,r_u)) \geqslant t) \leqslant e^{-nt^2/2}, \forall t\geqslant 0 
\end{equation*}
  
Set t = $\sqrt{\frac{2}{\pi}} - \frac{1}{2}+\frac{r_u}{2} \geqslant \frac{1}{4} + \frac{r_u}{2}$,
we have, $ \mathbb{E}(T(r_l, r_u)) + t \leqslant -\frac{1}{2} + r_u$ and $\mathbb{P} \left[ T \left( r_{\ell}, r_u \right) \geqslant - \frac{1}{2} +  r_u \right] \leqslant e^{-\frac{n}{32}} e^{-\frac{n}{8} r_u^2},$ which is actually the Lemma~\ref{ll1}.2 
\end{proof}

\noindent\textbf{Lemma \ref{ll1}.2.1  (Gordon's Inequality)}
Let $\left\{Z_{u, v}\right\}_{u \in U, v \in V}$ and $\left\{Y_{u, v}\right\}_{u \in U, v \in V}$ be zero-mean Gaussian process indexed by a non-empty index set $I=U \times V$. If 

1. $ \mathbb{E}\left(\left(Z_{u, v}-Z_{u^{\prime} v^{\prime}}\right)^{2}\right) \leq  \mathbb{E}\left(\left(Y_{u, v}-Y_{u^{\prime},v^{\prime}}\right)^{2}\right) \text { for all pairs }(u, v) \operatorname{and}\left(u^{\prime}\, v^{\prime}\right) \in I$

2. $ \mathbb{E}\left(\left(Z_{u, v}-Z_{u^{\prime} v}\right)^{2}\right)= \mathbb{E}\left(\left(Y_{u, v}-Y_{u^{\prime},v}\right)^{2}\right),$\\
then we have $ \mathbb{E}(\max\limits_{v \in V} \min\limits_{u \in U} Z_{u,v}) \leq  \mathbb{E}(\max\limits_{v \in V} \min\limits_{u \in U} Y_{u,v}).$\\
\textbf{Lemma \ref{ll1}.2.2} Suppose that $\alpha = (\alpha_{1},...,\alpha_{d}),$ where each $\alpha_{i}, i \in [d]$   is a zero-mean sub-Gaussian random variable with parameter at most $\sigma^2$, then for any $t\in \mathbb{R}$, we have  $ \mathbb{E} \left( \exp\left(t\left\|\alpha\right\|_2\right) \right) \leqslant 5^d \exp \left(2 t^2 \sigma^2\right).$\\
\textbf{Lemma \ref{ll1}.2.3} Suppose that $\alpha = (\alpha_{1},...,\alpha_{d}),$ where each $\alpha_{i}, i \in [d]$   is a zero-mean sub-Gaussian random variable with parameter at most $\sigma^2$, and for a given group structure $G$, let $\left\|\alpha_{G_g} \right\| $ be the corresponding group norm, $m$ be the number of groups and $d_{max}$ be the maximum group size, then 
\begin{equation*}
     \mathbb{E}\left(\max\limits_{g} \left\|\alpha_{G_g} \right\| \right) \leqslant 2 \sqrt{2 \sigma^2 \left( \log m +d_{max} \log 5\right)}\\
\end{equation*}

\textbf{Lemma \ref{ll1}.2.4 (Theorem 2.26 in \citep{wainwright2019high}):} Let $x = \left(x_1, \cdots, x_n\right)$ be a vector of i.i.d standard Gaussian variable, and $ f: \mathbb{R}^n \to \mathbb{R} $ be a $L$-Lipschitz, with respect to the Euclidean norm, then $f(x)- \mathbb{E} f(x)$ is sub-Gaussian with parameter at most $L$, and hence $ \mathbb{P}\left( \left(f(x) - { \mathbb{E}}\left[f(x)\right)\right) \geqslant t \right] \leqslant e^{-\frac{t^2}{2L^2}}$, $\forall t\geqslant 0$.

\subsubsection{Proof of Lemma~\ref{ll1}.2.2}

We can find a $\frac{1}{2}$ - cover  of $S^{d-1}$, and for any $u\in S^{d-1}$ in the Euclidean norm with cardinally at most $N \leqslant 5^d$, say there exists $u^{q(u)} \in \left\{u^1, \ldots, u^N\right\}$, such that $\left\|u^{q(u)}-u\right\|_2\leqslant\frac{1}{2}$. 

By the variational representation of the $\ell_2$ norm, we have $\left\|\alpha\right\|_2  = \max\limits_{u \in S^{d-1}} \left\langle u,\alpha\right\rangle \leqslant \max\limits_{q(u)\in [N]} \left\langle u^{q(u)},\alpha \right\rangle + \frac{1}{2} \left\|\alpha\right\|_2 $. Therefore, $\left\|\alpha\right\|_2 \leqslant 2 \max\limits_{q(u)\in [N]} \left\langle u^{q(u)},\alpha \right\rangle$.  Consequently, 
\begin{align*}
    \mathbb{E}\left(\exp\left(t\left\|\alpha\right\|_2 \right)\right)  &\leqslant  \mathbb{E} \left(\exp \left(2 t \max\limits_{q\in [N]} \left\langle u^{q},\alpha \right\rangle \right)\right) 
    =  \mathbb{E} \left(\max\limits_{q\in [N]}\exp \left(2 t  \left\langle u^{q},\alpha \right\rangle \right)\right) \\
    & \leqslant  \sum\limits_{q=1}^{N}  \mathbb{E} \left( \exp \left( 2 t \left\langle u^{q}, \alpha \right\rangle \right)\right) \leqslant 5^d \exp \left( \frac{ 4 t^2 \sigma^2 }{ 2 } \right)
      \leqslant 5^d \exp \left(2 t^2 \sigma^2\right).  \\
\end{align*} 
\subsubsection{Proof of Lemma~\ref{ll1}.2.3}

For any $t > 0$, by Jensen's inequality, we have $\exp\left(t  \mathbb{E} \left( \max\limits_{g} \left\| \alpha_{G_g} \right\| \right)\right) \leqslant  \mathbb{E} \left( \exp \left( t \max\limits_{g} \left\| \alpha_{G_g} \right\|_2 \right)\right) $\\

$ =  \mathbb{E} \left( \max\limits_{j} \exp \left( t \left\| \alpha_{G_g} \right\|_2 \right) \right)  \leqslant \sum\limits_{j=1}^{m}  \mathbb{E} \left( \exp \left( t \left\|\alpha_{G_g}\right\|_2\right)\right) \leqslant \sum\limits_{j=1}^{m} 5^{d_g} \exp \left( 2 t^2 \sigma^2 \right) \leqslant m \cdot 5^{d_{\max}} \cdot \exp(2 t^2 \sigma^2) $.\\

By taking $\log$ at both sides, we have
$  t \mathbb{E} \left( \max\limits_{g} \left\| \alpha_{G_g}\right\| \right) \leqslant \log m + d_{\max} \log 5 + 2 t^2 \sigma^2$. That is $ \mathbb{E} \left( \max\limits_{g} \left\| \alpha_{G_g}\right\| \right) \leqslant \frac{ \log m + d_{\max} \log 5 + 2t^2 \sigma^2 }{ t }$.

Let $ t = \sqrt{\frac{ \log m + d_{\max} \log 5 }{ 2 \sigma^2}} $, we have $   \mathbb{E} \left( \max\limits_{g} \left\| \alpha_{G_g}\right\| \right) \leqslant 2 \sqrt{ \left(\log m + d_{\max} \log 5\right) 2 \sigma^2}$. 

\end{proof}

\newpage

\subsection{Proof of Theorem~\ref{the:lowerbound}}

The two lemmas below are integral to the proof:
\begin{lemma}[Packing Number for Binary Sets]
\label{lem:lowerbound1}
Consider a set $A$ defined for real numbers $m, s_g$ as
\[
A=\left\{a \in\{0,1\}^{m} \mid \sum_{j=1}^{m} a_{j} \leq s_{g}\right\}.
\]
Then the $\sqrt{\frac{s_g}{2}}$-packing number of set $A \geqslant
 \frac{\left(\begin{array}{l}m \\ s_{g}\end{array}\right) - \hspace{0.2cm}2}{\left(\begin{array}{c}
	m \\
	{\left\lfloor\frac{s_{g}}{2}\right\rfloor}
	\end{array}\right) \cdot 2^{\frac{s_{g}}{2}}}
$, and $$\log \Bigg( \frac{\left(\begin{array}{l}m \\ s_{g}\end{array}\right) - \hspace{0.2cm}2}{\left(\begin{array}{c}
	m \\
	{\left\lfloor\frac{s_{g}}{2}\right\rfloor}
	\end{array}\right) \cdot 2^{\frac{s_{g}}{2}}} \Bigg) \asymp s_g \log (\frac{m}{s_g}).$$
\end{lemma}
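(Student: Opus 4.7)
My plan is a Gilbert--Varshamov style greedy packing argument, carried out on the subset of $A$ consisting of vectors of weight exactly $s_g$.

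\textbf{Setup and reduction.} For binary vectors $a,b\in\{0,1\}^m$ we have $\|a-b\|_2^2 = d_H(a,b)$, so a $\sqrt{s_g/2}$-packing in Euclidean norm is exactly a Hamming packing at radius $s_g/2$. I will work inside the smaller set $B = \{a\in\{0,1\}^m : \sum_j a_j = s_g\}\subseteq A$, which has cardinality $\binom{m}{s_g}$. Any $\sqrt{s_g/2}$-packing of $B$ is also a $\sqrt{s_g/2}$-packing of $A$, so this only loses a constant.

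\textbf{Greedy packing via volume ratio.} Run the standard greedy procedure: pick $a_1\in B$ arbitrarily, remove from $B$ every vector at Hamming distance less than $s_g/2$ from $a_1$, and repeat. The number of elements removed at each step is at most the Hamming-ball volume
\[
V \;=\; \sum_{i=0}^{\lfloor s_g/2\rfloor} \binom{m}{i} \;\leq\; \bigl(\lfloor s_g/2\rfloor+1\bigr)\binom{m}{\lfloor s_g/2\rfloor} \;\leq\; 2^{s_g/2}\binom{m}{\lfloor s_g/2\rfloor},
\]
where I use monotonicity of $i\mapsto\binom{m}{i}$ on $[0,m/2]$ and $\lfloor s_g/2\rfloor+1\leq 2^{s_g/2}$ for $s_g\geq 2$. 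This yields a packing of size at least $\binom{m}{s_g}/V$, which is the stated bound up to the harmless $-2$ in the numerator; I expect the $-2$ simply absorbs a couple of boundary/degenerate cases (e.g.\ the ball centers themselves, or two ``extremal'' vectors handled separately in the write-up).

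\textbf{Asymptotic equivalence.} For the log-order statement I will use elementary Stirling bounds $\binom{m}{s_g}\geq (m/s_g)^{s_g}$ and $\binom{m}{\lfloor s_g/2\rfloor}\leq (2em/s_g)^{s_g/2}$. Combining,
\[
\log\frac{\binom{m}{s_g}-2}{2^{s_g/2}\binom{m}{\lfloor s_g/2\rfloor}} \;\geq\; \frac{s_g}{2}\log\!\frac{m}{s_g} \;-\; O(s_g),
\]
and the matching upper bound of the same order follows from the symmetric Stirling estimates $\binom{m}{s_g}\leq (em/s_g)^{s_g}$ and $\binom{m}{\lfloor s_g/2\rfloor}\geq (m/s_g)^{s_g/2}$ (up to constants). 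Under the standing regime $s_g\ll m$ we have $\log(m/s_g)\gg 1$, so the $O(s_g)$ slack is absorbed and the ratio is $\asymp s_g\log(m/s_g)$.

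\textbf{Expected difficulty.} The argument itself is routine; the only fiddly points are (i) justifying the exact form of the ball-volume bound $V\leq 2^{s_g/2}\binom{m}{\lfloor s_g/2\rfloor}$ rather than a looser entropy bound, and (ii) pinning down where the ``$-2$'' comes from --- most likely it reflects a small accounting step in the greedy construction (excluding the center of the ball or a pair of trivial vectors). Everything else --- the equivalence of Euclidean and Hamming distances on $\{0,1\}^m$, and the Stirling asymptotics --- is standard.
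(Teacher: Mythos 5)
Your proposal is correct and follows essentially the same route as the paper: a volume-counting (greedy/maximal packing) argument with the Hamming ball of radius $\lfloor s_g/2\rfloor$, followed by binomial-coefficient asymptotics; the paper merely uses the coarser ball count $\binom{m}{\lfloor s_g/2\rfloor}\cdot 2^{\lfloor s_g/2\rfloor}$ and a telescoping-product estimate of $\binom{m}{s_g}/\binom{m}{\lfloor s_g/2\rfloor}$ in place of your Stirling bounds, and the $-2$ in the numerator is indeed just harmless slack. The only nit is the phrase ``remove every vector at Hamming distance less than $s_g/2$'': to obtain a strict $>\sqrt{s_g/2}$ packing you should delete the closed ball of radius $\lfloor s_g/2\rfloor$, which is exactly what your volume bound $\sum_{i\leq\lfloor s_g/2\rfloor}\binom{m}{i}$ already accounts for.
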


\begin{lemma}[Packing Number for Sparse Group Vectors]
\label{lem:lowerbound2}
For the set $\Omega(G,s_g)$,  the $\sqrt{\frac{2ds_g}{5}}$-packing number $\gtrsim \frac{\left(\begin{array}{l}m \\ s_{g}\end{array}\right) - 2}{\left(\begin{array}{c}
	m \\
	{\left\lfloor\frac{s_{g}}{2}\right\rfloor}
	\end{array}\right) \cdot 2^{\frac{s_{g}}{2}}} \cdot (\sqrt{2})^{ds_g},$ and $$\log \Bigg( \frac{\left(\begin{array}{l}m \\ s_{g}\end{array}\right) - 2}{\left(\begin{array}{c}
	m \\
	{\left\lfloor\frac{s_{g}}{2}\right\rfloor}
	\end{array}\right) \cdot 2^{\frac{s_{g}}{2}}} \cdot (\sqrt{2})^{ds_g} \Bigg) \asymp s_g(d + \log (\frac{m}{s_g})).$$
\end{lemma}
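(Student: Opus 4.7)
The plan is to construct an explicit $\sqrt{2ds_g/5}$-packing of $\Omega(G, s_g)$ by combining a packing of group-support indicators (obtained from Lemma~\ref{lem:lowerbound1}) with a packing of sign patterns on the selected groups. First I apply Lemma~\ref{lem:lowerbound1} to obtain a family $\mathcal{C} \subset \{0,1\}^m$ of indicator vectors with pairwise Hamming distance at least $s_g/2$ and cardinality $|\mathcal{C}| \geq (\binom{m}{s_g} - 2)/(\binom{m}{\lfloor s_g/2\rfloor}\cdot 2^{s_g/2})$. By restricting attention to weight-exactly-$s_g$ indicators---the natural regime behind the $\binom{m}{s_g}$ in the numerator---I may assume $\sum_j a_j = s_g$ for every $a \in \mathcal{C}$, so that each $a$ determines an index set $T(a) = \bigcup_{j: a_j = 1} G_j$ of size exactly $ds_g$.

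Second, I construct a binary code $\mathcal{D} \subseteq \{-1,+1\}^{ds_g}$ with pairwise Hamming distance at least $ds_g/10$ and cardinality $|\mathcal{D}| \geq (\sqrt{2})^{ds_g}$ via a Gilbert--Varshamov greedy argument: each chosen sign vector excludes at most $\sum_{k < ds_g/10} \binom{ds_g}{k} \leq 2^{H(1/10) ds_g}$ candidates from the $2^{ds_g}$ total, where $H$ denotes the binary entropy; since $H(1/10) \approx 0.469 < 1/2$, this yields $|\mathcal{D}| \geq 2^{(1-H(1/10))ds_g} \geq 2^{ds_g/2}$.

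Then I form $\mathcal{P} = \{\beta^{(a,b)} : a \in \mathcal{C},\, b \in \mathcal{D}\}$, where $\beta^{(a,b)} \in \mathbb{R}^p$ has the entries of $b$ placed on $T(a)$ under a fixed bijection and zeros elsewhere. Each $\beta^{(a,b)}$ activates exactly $s_g$ groups, so $\beta^{(a,b)} \in \Omega(G, s_g)$, and $|\mathcal{P}| = |\mathcal{C}|\cdot|\mathcal{D}|$, matching the claimed lower bound. Separation is verified by two cases: if $a_1 = a_2$ but $b_1 \neq b_2$, then $\|\beta^{(a_1,b_1)} - \beta^{(a_2,b_2)}\|_2^2 = 4\,\mathrm{Ham}(b_1,b_2) \geq 4 \cdot ds_g/10 = 2ds_g/5$; if $a_1 \neq a_2$, then $|T(a_1) \triangle T(a_2)| = d\,|a_1 \triangle a_2| \geq ds_g/2$, and each coordinate in this symmetric difference contributes exactly $1$ to the squared $\ell_2$ distance (one side is $\pm 1$, the other is $0$), so $\|\beta^{(a_1,b_1)} - \beta^{(a_2,b_2)}\|_2^2 \geq ds_g/2 \geq 2ds_g/5$. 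The log-cardinality estimate $\log|\mathcal{P}| \asymp s_g d + s_g \log(m/s_g) = s_g(d + \log(m/s_g))$ then follows from $\log\binom{m}{s_g} \asymp s_g \log(m/s_g)$ and Stirling.

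The main obstacle is producing the sign code $\mathcal{D}$ at the claimed cardinality $(\sqrt{2})^{ds_g}$; this rests on the numerical fact $H(1/10) < 1/2$, which leaves enough slack in the Gilbert--Varshamov greedy bound. A secondary technical point is that Lemma~\ref{lem:lowerbound1} nominally permits indicators of weight strictly below $s_g$; restricting to weight-exactly-$s_g$ indicators (which is consistent with the $\binom{m}{s_g}$ numerator) removes any ambiguity about the dimension of the sign vectors attached to each $a$ and keeps the cross-case separation argument uniform.
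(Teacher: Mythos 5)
Your proposal is correct and follows essentially the same route as the paper: combine the support packing from Lemma~\ref{lem:lowerbound1} with a Gilbert--Varshamov-type packing of sign patterns on the active coordinates, then check separation in the two cases (same support vs.\ different support) exactly as the paper does. The only cosmetic differences are that you use one global sign code transported to each support by a bijection while the paper builds a code per support, and you bound the Hamming-ball volume via the binary entropy $H(1/10)<1/2$ while the paper uses the explicit bound $\sum_{j\le\lfloor k/10\rfloor}\binom{k}{j}\le \frac{9}{8}(10e)^{k/10}\le\frac{9}{8}2^{k/2}$; both yield the same $(\sqrt{2})^{ds_g}$ factor.
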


\begin{proof}[Proof of Theorem~\ref{the:lowerbound}]

	First, select $N$ points $\omega^{(1)}, \ldots, \omega^{(N)}$ from $\Omega(G,s_g)$ such that $\left\|\omega^{(i)}-\omega^{(j)}\right\| > \sqrt{\frac{2ds_g}{5}}$ for all distinct $i,j$. Clearly, $\left\|\omega^{(i)}-\omega^{(j)}\right\| \leqslant \sqrt{4s_g d}$.

 Define $\beta^{(i)}=r \omega^{(i)}$ for each $i$. This results in
\[
\frac{2ks_gr^2}{5} \leq\left\|\beta^{(i)}-\beta^{(j)}\right\|_{2}^{2} \leqslant 4 s_g d r^{2}.
\]

Next, let $y^{(i)}=X \beta^{(i)}+\varepsilon$ for $1 \leqslant i \leqslant N$. Consider the Kullback-Leibler divergence between different distribution pairs: 
\[
D_{K L}\left((y^{(i)}, X),(y^{(j)}, X)\right)=\mathbb{E}_{(y^{(j)}, X)}\left[\log \left(\frac{p\left(y^{(i)}, X\right)}{p\left(y^{(j)}, X\right)}\right)\right].
\]
where $p\left(y^{(i)}, X\right)$ is the probability density of $\left(y^{(i) }, X\right)$. Conditioning on $X$, we have
\[
\mathbb{E}_{(y^{(j)},X)}\left[\log\left(\frac{p\left(y^{(i)}, X\right)}{p\left(y^{(j)}, X\right)}\right) \mid X\right]=\frac{\|X(\beta^{(i)}-\beta^{(j)}) \|_2^2}{2\sigma ^2}.
\]

Thus, for $1 \leq i \neq j \leq N,$
\[
\begin{aligned}
&D_{K L}\left(\left(y^{(i)}, X\right),\left(y^{(j)}, X\right)\right)=\mathbb{E}_{X} \frac{\left\|X\left(\beta^{(i)}-\beta^{(j)}\right)\right\|_{2}^{2}}{2 \sigma ^{2}}=\frac{n (\beta^{(i)}-\beta^{(j)})^{\top} \Sigma (\beta^{(i)}-\beta^{(j)})}{2 \sigma ^{2}} \\
&\leq \frac{3c_1\left\|\beta^{(i)}-\beta^{(j)}\right\|_{2}^{2}}{2 \sigma ^{2}} \leq \frac{2c_1 n d r^{2} s_g}{\sigma ^{2}}.
\end{aligned}
\]
	
From Lemma~\ref{lem:lowerbound2}, $\log N \asymp s_g\left(d+\log \frac{m}{s_g}\right)$. Setting $\frac{\frac{n d r^{2} s_g}{\sigma^{2}}+\log 2}{\log N} = \frac{1}{2}$, we obtain 
\[
r \gtrsim \sqrt{\frac{\left(d +\log \frac{m}{s_g}\right) \sigma^{2}}{3n d}}.
\]

By generalized Fano's Lemma, $\inf\limits_{\hat{\beta}}\sup\limits_{\beta} \mathbb{E}\|\hat{\beta}-\beta\|_{2} \geqslant \sqrt{\frac{2r^2ks_g}{5}}\left(1-\frac{\frac{n d r^{2} s_g}{\sigma^{2}}+\log 2}{\log N}\right)$. Consequently,
\[
\inf  \sup \mathbb{E}\|\hat{\beta}-\beta\|_{2}^{2} \geq \left(\inf\sup \mathbb{E}\|\hat{\beta}-\beta\|_{2}\right)^{2} \gtrsim \frac{\sigma^2\left(s_g(d +\log (\frac{m}{s_g}))\right)}{n}.
\]	

\end{proof}

\begin{proof}[Proof of Lemma~\ref{lem:lowerbound1}]
	
Notice that the cardinality of $A$ is $\left(\begin{array}{l}m \\ s_{g}\end{array}\right)$. Denote the hamming distance between any two points $x, y \in A$ by
$$
h(a, b)=|\left\{j: a_{j} \neq b_{j}\right\}|.
$$
Then, for a fixed point a $\in A$, 
$$|\left\{b \in A, h(a, b) \leq \frac{s_{g}}{2}\right\} = \left(\begin{array}{c}
	m \\
	\lfloor\frac{s_g}{2} \rfloor
	\end{array}\right)\cdot 2^{	\lfloor\frac{s_g}{2} \rfloor}|.$$

In fact, all elements $b \in A$ with $ h(a, b) \leq \frac{s_{g}}{2}$ can be obtained as follows. First, take any subset $J \subset [m]$ of cardinality  $\left\lfloor\frac{s_g}{2}\right\rfloor$, then set $a_j = b_j$ for $j \notin J$ and choose $b_j \in \{0,1\}$ for $j \in J$.

Now let $A_{s}$ be any subset of $A$ with cardinality at most
$
T=\frac{\left(\begin{array}{l}m \\ s_{g}\end{array}\right) - 2}{\left(\begin{array}{c}
	m \\
	{\left\lfloor\frac{s_{g}}{2}\right\rfloor}
	\end{array}\right) \cdot 2^{\frac{s_{g}}{2}}},
$
then we have 
$$|\left\{b \in A \mid \text{there exist}\hspace{0.2cm} a \in A_{s}\right. \text{with} \left.h(a, b) \leq \frac{s_{g}}{2}\right\} \leq (|A_s|) \cdot \left(\begin{array}{c}
	m \\
	{\left\lfloor\frac{s_{g}}{2}\right\rfloor}
	\end{array}\right) \cdot 2^{\frac{s_{g}}{2}}| <  |A|.$$
	
It implies that one can find an element $b \in  A \hspace{0.2cm} \text{with}\hspace{0.2cm}  h(a, b) > \frac{s_g}{2} \hspace{0.2cm} \text{for all}\hspace{0.2cm}  a \in A_s$.  Therefore one can construct a subset $A_s$  with $|A_s| \geq T$ and the property $h(a, b) > \frac{s_g}{2}$ for any two distinct elements $a, b \in A_s$.

On the other hand, $h(a, b)>\frac{s_{g}}{2}$ implies  $\|a-b\|>\sqrt{\frac{s_g}{2}}$. Therefore, there exist at least $T$ points in $A$ such that the distance between any two points is greater than $\sqrt{\frac{s_g}{2}}$.

Moreover, since
$
\frac{\left(\begin{array}{c}
	m \\
	s_g
	\end{array}\right)}{\left(\begin{array}{c}
	m \\
	{\left\lfloor\frac{s_g}{2}\right\rfloor}
	\end{array}\right)}=\frac{\left\lfloor\frac{s_g}{2}\right\rfloor!\left(m-\lfloor\frac{s_g}{2}\rfloor\right)!}{s_g !(m-s_g) !}=\frac{\left(m-s_{g}+1\right) \cdots m-\left\lfloor\frac{s_{g}}{2}\right\rfloor}{\left(\left\lfloor\frac{s_g}{2}\right\rfloor+1\right) \cdots s_{g}}=\prod_{j=1}^{\lceil\frac{s_g}{2}\rceil} \frac{m-s _g+j}{\left\lfloor\frac{s_g}{2}\right\rfloor+j}
$, we have $$
\left(\frac{\left. m-\lfloor\frac{s_g}{2}\right\rfloor}{2 s _g}\right)^{\left\lfloor \frac{s_g}{2}\right\rfloor} \left.\leqslant \frac{\left(\begin{array}{c}
	m \\
	s_g
	\end{array}\right)}{\left(\begin{array}{c}
	m \\
	\left\lfloor\frac{s_g}{2}\right\rfloor
	\end{array}\right) 2^{\frac{s_g}{2}}} \leqslant\left(\frac{m-s_g+1}{\lceil s_g \rceil}\right)^{\lceil \frac{s_g}{2}  \rceil}\right.,$$
	and therefore we can find $C_1, C_2,$ such that $
C_{1} s_g \log (\frac{m}{s_g}) \leqslant \log T \leqslant {C_2s_g \log (\frac{m}{s_g}})$, so that $$\log \Bigg( \frac{\left(\begin{array}{l}m \\ s_{g}\end{array}\right) - \hspace{0.2cm}2}{\left(\begin{array}{c}
	m \\
	{\left\lfloor\frac{s_{g}}{2}\right\rfloor}
	\end{array}\right) \cdot 2^{\frac{s_{g}}{2}}} \Bigg) \asymp s_g \log (\frac{m}{s_g}).$$
\end{proof}

	\begin{proof}[Proof of Lemma~\ref{lem:lowerbound2}]

	Given a group support $a \in A$,  define $k_{a} = \Bigg|\bigg\{i \mid i \in \left(\bigcup\limits_{\{g \mid a_g = 0\}} G_g \right)^{c}   \bigg\}\Bigg|$, and the set 
		$$\Omega^{(a)} =\bigg\{\omega \in \mathbb{R}^p \mid \omega_i =0 \text{ if } i \in \bigcup\limits_{\{g \mid a_g=0\}}G_g, \omega_i \in \{-1,1\} \text{ if }   i \in \Big(\bigcup\limits_{\{g \mid a_g = 0\}} G_g \Big)^{c} \bigg\}.$$
		
    Notice that $ \Omega^{(a)} \subseteq  \Omega(G,s_g)$, and  $|\Omega^{(a)}| = 2^{k_a}$. Also denote the hamming distance between $x, y \in \Omega^{(a)}$ by
	$$
	h(x, y)=|\left\{j: x_{j} \neq y_{j}\right\}|.
    $$
    Then for any fixed $x \in \Omega_G^{(a)}$, we have
	$$\left|\{y \in \Omega^{(a)}, h(x, y) \leq \frac{k_a}{10}\}\right| =\sum\limits_{j = 0}^{\lfloor\frac{k_a}{10} \rfloor}\left(\begin{array}{c}
   k_a  \\
   j 
	\end{array}\right)$$
	
	Let $\Omega_{s}^{(a)}$ be any subset of $\Omega^{(a)}$ with cardinality at most 
	$
	N^{(a)}=\frac{2^{k_a} - 2}{\sum\limits_{j = 0}^{\lfloor\frac{k_a}{10} \rfloor}\left(\begin{array}{c}
   k_a  \\
   j 
	\end{array}\right)}
	$.
 Then,
\[
\left|\{y \in \Omega^{(a)} \mid \exists x \in \Omega_{s}^{(a)} \text{ with } h(x, y) \leq  \frac{k_a}{10}  \}\right| < |\Omega^{(a)}|.
\]

	On the other hand, $h(x,y) > \frac{k_a}{10}$ implies $\|x-y\| \geq \sqrt{\frac{2k_a}{5}}$. Thus, there are at least $N^{(a)}$ points in $\Omega^{(a)}$ with pairwise distances greater than $\sqrt{\frac{2k_a}{5}}$.

 From Chapter 9 in \citet{cma},
\[
\sum\limits_{j \leq \lfloor\frac{k_a}{10} \rfloor}\left(\begin{array}{c}
   k_a  \\
   j
\end{array}\right) < \frac{9}{8}\left(\begin{array}{c}
   k_a  \\
  \lfloor\frac{k_a}{10} \rfloor
\end{array}\right) \leq \frac{9}{8}(10e)^{\frac{k_a}{10}} \leq \frac{9}{8}2^{\frac{k_a}{2}}.
\]
Consequently, we have $N^{(a)} >  \frac{8}{9}2^{\frac{k_a}{2}} \gtrsim (\sqrt{2})^{k_a}.$

The value of $k_a$ depends on the predefined groups and group support $a$ and spans a range from $0$ to $s_gd$. Lemma~\ref{lem:lowerbound2} seeks a lower bound for all conceivable overlapping patterns, necessitating an analysis of the maximum value of $k_a$.

Furthermore, according to Lemma~\ref{lem:lowerbound1}, we can identify at least $T$ points in $A$ where the distance between any two points exceeds $\sqrt{\frac{s_g}{2}}$. For $\{a_1,\cdots,a_T\}$ group supports, if there is a group structure such that we could find at least $\frac{8}{9}(\sqrt{2})^{s_gd}$ on each group support, and the distance between every pair of these points is greater than $\sqrt{\frac{2s_gd}{5}}$, then Lemma~\ref{lem:lowerbound2} is proved.

Considering $m$ non-overlapping groups, $k_a = s_gd$ for each group support $a$. In addition,  given any two group support $a, b $ with $\|a - b\|>\sqrt{\frac{s_{g}}{2}},\|x-y\|>\sqrt{\frac{ds_g}{2}}>\sqrt{\frac{2ds_g}{5}}$ for any $x \in \Omega^{(a)}$ and $y \in \Omega^{(b)}$. Thus, considering all possible overlapping patterns, we can find at least $  \frac{\left(\begin{array}{l}m \\ s_{g}\end{array}\right) - 2}{\left(\begin{array}{c}
	m \\
	{\left\lfloor\frac{s_{g}}{2}\right\rfloor}
	\end{array}\right) \cdot 2^{\frac{s_{g}}{2}}} \cdot \frac{8}{9}(\sqrt{2})^{ds_g}$ point in  $\Omega(G,s_g)$, such that the distance between every pair of points is greater than  $\sqrt{\frac{2ds_g}{5}}$.

\end{proof}

\newpage

\subsection{Proof of Theorem~\ref{pattern}}
This proof consists of  parts: Parts \rom{1}-\rom{4} dedicated to Theorem~\ref{pattern}.\ref{the6part1}, and Part \rom{5} is for Theorem~\ref{pattern}.\ref{the6part2}. To be more specific, Part \rom{1} provides some additional concepts, Part \rom{2} introduces the reduced problem, Part \rom{3} shows the successful selection of the correct pattern under favorable conditions, and Part \rom{4} establishes that certain conditions are satisfied with high probability.
\subsubsection{Part I}

Recall that $\mathbf{S} = \textit{supp}(\beta^*)$.  With \(\mathbf{S}\), we define the norm $\phi_{\mathbf{S}}$ for any $\beta \in \mathbb{R}^{p}$ as 
$$ \phi_{\mathbf{S}}(\beta_{\mathbf{S}}) = \sum_{g \in \mathsf{G}_{\mathbf{S}}} w_g \|\beta_{\mathbf{S} \cap G_g  }\|_2,$$
along with its dual norm $(\phi_{\mathbf{S}})^*[u] = \sup_{\phi_{\mathbf{S}}(\beta_{\mathbf{S}}) \leq 1} \beta_{\mathbf{S}}^\top u$. Similarly, for  $\mathbf{S}^c = [p]\setminus \mathbf{S}$, we define the norm $\phi_{\mathbf{S}}^c$ for any $\beta \in \mathbb{R}^{p}$ as
$$\phi_{\mathbf{S}}^c(\beta_{\mathbf{S}^c}) = \sum_{g \in [m] \setminus \mathsf{G}_{\mathbf{S}}} w_g \|\beta_{ \mathbf{S}^c \cap G_g}\|_2,$$ 
accompanied by its corresponding dual norm $(\phi_{\mathbf{S}}^c)^*[u] = \sup_{\phi_{\mathbf{S}}^c(\beta_{\mathbf{S}^c}) \leq 1} \beta_{\mathbf{S}^c}^\top u $.

We also introduce equivalence parameters \(a_{\mathbf{S}}, A_{\mathbf{S}}, a_{\mathbf{S}^c}, A_{\mathbf{S}^c}\) as follows:
\begin{align}
\label{eq:equalnormdef}
    \forall \beta \in \mathbb{R}^{p},\, a_{\mathbf{S}}\|\beta_{\mathbf{S}}\|_1 \leqslant \phi_{\mathbf{S}}(\beta_{\mathbf{S}}) \leqslant A_{\mathbf{S}}\|\beta_{\mathbf{S}}\|_1, \\
    \forall \beta \in \mathbb{R}^{p},\, a_{\mathbf{S}^c}\|\beta_{\mathbf{S}^c}\|_1 \leqslant \phi_{\mathbf{S}}^c(\beta_{\mathbf{S}^c}) \leqslant A_{\mathbf{S}^c}\|\beta_{\mathbf{S}^c}\|_1.
\end{align}

We now study the equivalence parameters from two aspects. First, since
\begin{equation*}
   \sup\limits_{ a_{\mathbf{S}}\|\beta_{\mathbf{S}}\|_1 \leqslant 1} \beta_{\mathbf{S}}^\top u \geqslant  \sup\limits_{ \phi_{\mathbf{S}}(\beta_{\mathbf{S}})  \leqslant 1} \beta_{\mathbf{S}}^\top u  \geqslant \sup\limits_{ A_{\mathbf{S}}\|\beta_{\mathbf{S}}\|_1 \leqslant 1} \beta_{\mathbf{S}}^\top u,
\end{equation*}
by the definition of dual norm, we have 

\begin{equation}
\label{eq:orderre1}
    \forall u \in \mathbb{R}^{|\mathbf{S}|}, A_{\mathbf{S}}^{-1}\|u\|_{\infty} \leqslant (\phi_{\mathbf{S}})^*[u] \leqslant a_{\mathbf{S}}^{-1}\|u\|_{\infty}.
\end{equation}
Similarly, by order-reversing,
\begin{equation}
\label{eq:orderre}
    \forall u \in \mathbb{R}^{|\mathbf{S}^c|}, A_{\mathbf{S}^c}^{-1}\|u\|_{\infty} \leqslant (\phi_{\mathbf{S}}^c)^*[u] \leqslant a_{\mathbf{S}^c}^{-1}\|u\|_{\infty}.
\end{equation}

Second, by the Cauchy-Schwarz inequality, for any $\beta \in \mathbb{R}^{p}$ and $g \in \mathsf{G}_{\mathbf{S}}$,
$$\frac{w_g}{\sqrt{d_g}}\|\beta_{\mathbf{S} \cap G_g  }\|_1 \leqslant w_g \|\beta_{\mathbf{S} \cap G_g  }\|_2 \leqslant \max\limits_{g \in \mathsf{G}_{\mathbf{S}}}w_g\|\beta_{\mathbf{S} \cap G_g  }\|_1. $$
Consequently, we have 
$$ \min\limits_{g \in \mathsf{G}_{\mathbf{S}}}\frac{w_g}{\sqrt{d_g}}\|\beta_{\mathbf{S}}\|_1 \leqslant  \phi_{\mathbf{S}}(\beta_{\mathbf{S}}) \leqslant  h_{\max}(\mathbf{G_{S}})\max\limits_{g \in \mathsf{G}_{\mathbf{S}}}w_g\|\beta_{\mathbf{S}}\|_1,$$
Therefore, we can set $a_{\mathbf{S}} = \min\limits_{g \in \mathsf{G}_{\mathbf{S}}}\frac{w_g}{\sqrt{d_g}}$ and $A_{\mathbf{S}} = h_{\max}(\mathbf{G_{S}})\max\limits_{g \in \mathsf{G}_{\mathbf{S}}}w_g$.
With an trivial extension, we can set $ a_{\mathbf{S}^c} = \min\limits_{g \in \mathsf{G}_{\mathbf{S^c}}}w_g/\sqrt{d_g}$.

\subsubsection{Part II}

\textbf{From the full problem to the reduced problem}

Recall that the group lasso estimator in \eqref{eq:OGL-est} is defined as

\begin{equation}
\label{eq:op1}
\hat{\beta}^G = \argmin_{\beta \in \bR^p}~~\frac{1}{2n}\norm{Y-X\beta}_2^2 + \lambda_n  \phi^G(\beta).
\end{equation}

Now we write $ \phi^G(\beta) = \phi(\beta) $ and $ L(\beta) = \frac{1}{2n}\norm{Y-X\beta}_2^2 $ for ease of notation. Following \citet{svsw,wainwright2009sharp}, we consider the following restricted problem

\begin{equation}
\label{eq:resOGL-est}
\begin{aligned}
  \hat{\beta}^R & = \argmin_{\beta \in \bR^p, \beta_{\mathbf{S}^c}=0}  L(\beta) + \lambda_n \phi (\beta) = \argmin_{\beta \in \bR^p, \beta_{\mathbf{S}^c}=0}  L(\beta) + \lambda_n \sum_{g\in \mathsf{G}_{\mathbf{S}}} w_g \left\| \beta_{\mathbf{S} \cap G_g } \right\|_2  \\
  &:= \argmin_{\beta \in \bR^p, \beta_{\mathbf{S}^c}=0}  L(\beta) + \lambda_n \phi_{\mathbf{S}}(\beta_{\mathbf{S}}) .
\end{aligned}
\end{equation}

Let $L_{\mathbf{S}}(\beta_{\mathbf{S}})  = \frac{1}{2n}
\norm{Y -X_{\mathbf{S}}\beta_{\mathbf{S}}}_2^2$. Due to the restriction of $\hat{\beta}^R$, we can obtain $\hat{\beta}^R$ by first solving the following reduced problem 
\begin{equation}
\label{eq:redOGL-est}
\begin{aligned}
    \hat{\beta}_{\mathbf{S}} &=  \argmin_{\beta_{\mathbf{S}} \in \bR^{|\mathbf{S}|}}~~\frac{1}{2n}
\norm{Y -X_{\mathbf{S}}\beta_{\mathbf{S}}}_2^2 + \lambda_n \sum_{g\in \mathsf{G}_{\mathbf{S}}} w_g \left\| \beta_{\mathbf{S} \cap G_g} \right\|_2 \\& =  \argmin_{\beta_{\mathbf{S}} \in \bR^{|\mathbf{S}|}}  L_{\mathbf{S}}(\beta_{\mathbf{S}}) + \lambda_n \phi_{\mathbf{S}}(\beta_{\mathbf{S}}) 
\end{aligned}
\end{equation}
and then padding $\hat{\beta}_{\mathbf{S}}$ with zeros on $\mathbf{S}^c$. In addition,  

\begin{equation*}
\begin{aligned}
     L_{\mathbf{S}}(\hat{\beta}_{\mathbf{S}}) &= \frac{1}{2n}\norm{Y -X_{\mathbf{S}}\hat{\beta}_{\mathbf{S}}}_2^2
     \\& =  \frac{1}{2n}\left(Y^\top Y - 2Y^\top X_{\mathbf{S}}\hat{\beta}_{\mathbf{S}} + (X_{\mathbf{S}}\hat{\beta}_{\mathbf{S}})^\top X_{\mathbf{S}}\hat{\beta}_{\mathbf{S}}\right)\\
     & =  \frac{1}{2n}\left(Y^\top Y - 2(X\beta^* + \epsilon)^\top X_{\mathbf{S}}\hat{\beta}_{\mathbf{S}} + (X_{\mathbf{S}}\hat{\beta}_{\mathbf{S}})^\top X_{\mathbf{S}}\hat{\beta}_{\mathbf{S}}\right)\\
     & =  \frac{1}{2n}\left(Y^\top Y - 2(X_{\mathbf{S}}\beta^*_{\mathbf{S}})^\top X_{\mathbf{S}}\hat{\beta}_{\mathbf{S}} - 2 \epsilon^\top X_{\mathbf{S}}\hat{\beta}_{\mathbf{S}} + (X_{\mathbf{S}}\hat{\beta}_{\mathbf{S}})^\top X_{\mathbf{S}}\hat{\beta}_{\mathbf{S}}\right),
\end{aligned}
\end{equation*}

and consequently,

\begin{equation}
\label{eq:derivation}
\begin{aligned}
 \nabla L_{\mathbf{S}}(\hat{\beta}_{\mathbf{S}})&= \frac{1}{n}X_{\mathbf{S}}^\top X_{\mathbf{S}} \hat{\beta}_{\mathbf{S}} - \frac{1}{n}X_{\mathbf{S}}^\top X_{\mathbf{S}} \beta^*_{\mathbf{S}} - \frac{1}{n}\epsilon^\top X_{\mathbf{S}}\\
 &:= Q_{\mathbf{S}\mathbf{S}}(\hat{\beta}_{\mathbf{S}}-\beta^*_{\mathbf{S}})-q_{\mathbf{S}},
\end{aligned}
\end{equation}
where  $Q=\frac{1}{n} X^{\top}X$, $q  = \frac{1}{n} \sum\limits_{i=1}^n \epsilon_i x_i$.

\subsubsection{Part III}
Part \rom{3} mostly follows the proof in Theorem 7 of \cite{svsw}. Here we aim to show that $\textit{supp}(\hat{\beta}^G) = \mathbf{S}$ under certain conditions.

To begin with, Given \(\beta \in \mathbb{R}^p\), we define \(J^G(\beta)\)  as:
\[ J^G(\beta) = [p] \setminus \Big\{\bigcup_{G_g \cap \textit{supp}(\beta) = \emptyset} G_g\Big\}. \]
\(J^G(\beta)\) is called the adapted hull of the support of $\beta$ in \citet{svsw}. For simplicity, we write $J^G(\beta) = J(\beta)$. Notice that by assumption we have 
$$J(\beta^*) =  [p] \setminus \Big\{\bigcup_{G_g \cap \textit{supp}(\beta^*) = \emptyset} G_g\Big\} = \mathbf{S}.$$

Now we consider the reduced problem \eqref{eq:redOGL-est}, and we want to show that for all $g \in \mathsf{G}_{\mathbf{S}}$, $\left\|\hat{\beta}_{\mathbf{S}\cap G_g} \right\|_{\infty}>0$. That is, no active group is missing.

\begin{lemma}(Lemma 14 of \citet{svsw})
\label{lem:jenatton14}

For the loss $L(\beta)$ and norm $\phi$ in \eqref{eq:op1}, $\hat{\beta} \in \bR^p$ is a solution of 
\begin{equation}
\label{eq:jenatton14}
    \min\limits _{\beta \in \mathbb{R}^p} L(\beta)+\lambda_n\phi(\beta)
\end{equation}
if and only if
\begin{equation}
\label{ref1}
    \begin{aligned}
    \left\{\begin{array}{l}
        \nabla L(\hat{\beta})_{J(\hat{\beta})}+\lambda_n r(\hat{\beta})_{J(\hat{\beta})} = \mathbf{0} \\
        (\phi_{J(\hat{\beta})}^{c})^*\left[\nabla L(\hat{\beta})_{J(\hat{\beta})^c} \right] \leqslant \lambda_n.
        \end{array}\right.
\end{aligned}
\end{equation}

In addition, the solution $\hat{\beta}$ satisfies

\begin{equation}
\label{eq:jenatton141}
    \phi^*[\nabla L(\hat{\beta})] \leqslant \lambda_n.
\end{equation}

\end{lemma}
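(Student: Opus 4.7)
The plan is to prove this optimality characterization by a direct subdifferential calculus argument. Since $L$ is convex and differentiable and $\phi$ is convex, the optimality condition is $\hat\beta$ is a minimizer of \eqref{eq:jenatton14} if and only if $0 \in \nabla L(\hat\beta) + \lambda_n \partial \phi(\hat\beta)$. So the whole task reduces to characterizing $\partial\phi(\hat\beta)$ and recognizing the two pieces of \eqref{ref1} as the two natural ``coordinates'' of that condition.

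First, I would write out $\partial\phi(\hat\beta)$ explicitly. Because $\phi=\sum_g w_g\|\cdot_{G_g}\|_2$ is a sum of composed Euclidean norms, its subdifferential at $\hat\beta$ consists of vectors $v=\sum_g w_g\, u^g$, where each $u^g$ is supported on $G_g$ with $u^g_{G_g}=\hat\beta_{G_g}/\|\hat\beta_{G_g}\|_2$ when $\|\hat\beta_{G_g}\|_2>0$, and $u^g_{G_g}$ arbitrary with $\|u^g_{G_g}\|_2\leqslant 1$ otherwise. The key combinatorial fact I would then establish is: $j\in J(\hat\beta)$ if and only if every group $g\ni j$ satisfies $G_g\cap\textit{supp}(\hat\beta)\neq\emptyset$, i.e., $\|\hat\beta_{G_g}\|_2>0$. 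Equivalently (and this is the dual statement I would verify), $[m]\setminus \mathsf G_{J(\hat\beta)}=\{g:G_g\cap\textit{supp}(\hat\beta)=\emptyset\}$, and for any such $g$, $G_g\subseteq J(\hat\beta)^c$. This partitions the sum $v=\sum_g w_g u^g$ into two independent blocks: the $g$ such that $\|\hat\beta_{G_g}\|_2>0$, which contribute only to coordinates in $J(\hat\beta)$ with a uniquely determined value; and the $g$ with $\|\hat\beta_{G_g}\|_2=0$, which contribute only to coordinates in $J(\hat\beta)^c$ with free subgradient components inside the corresponding ball.

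Next I would evaluate both blocks. On $J(\hat\beta)$, the subgradient component on coordinate $j$ is forced to be $\hat\beta_j\sum_{g:G_g\ni j} w_g/\|\hat\beta_{G_g}\|_2$, which equals $r(\hat\beta)_j$ by its definition (using that $\|\hat\beta_{G_g\cap\textit{supp}(\hat\beta)}\|_2=\|\hat\beta_{G_g}\|_2$ because $\hat\beta$ vanishes off its support, and noting $r(\hat\beta)_j=0$ automatically on $J(\hat\beta)\setminus\textit{supp}(\hat\beta)$ since then $\hat\beta_j=0$). Hence on $J(\hat\beta)$, $0\in\nabla L(\hat\beta)+\lambda_n\partial\phi(\hat\beta)$ translates exactly to $\nabla L(\hat\beta)_{J(\hat\beta)}+\lambda_n r(\hat\beta)_{J(\hat\beta)}=\mathbf 0$. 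On $J(\hat\beta)^c$, the subgradient block ranges over $\{v_{J(\hat\beta)^c}:v_{J(\hat\beta)^c}=\sum_{g:G_g\cap\textit{supp}(\hat\beta)=\emptyset}w_g u^g,\ u^g\text{ supp in }G_g,\ \|u^g\|_2\leqslant 1\}$, which by duality is precisely the unit ball of the dual norm $(\phi_{J(\hat\beta)}^c)^*$ (applied to vectors on $J(\hat\beta)^c$). Thus $-\nabla L(\hat\beta)_{J(\hat\beta)^c}/\lambda_n$ lying in this set is equivalent to $(\phi_{J(\hat\beta)}^c)^*[\nabla L(\hat\beta)_{J(\hat\beta)^c}]\leqslant \lambda_n$. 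Combining the two coordinates yields both lines of \eqref{ref1}, in both directions.

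Finally, \eqref{eq:jenatton141} is a consequence: optimality gives $-\nabla L(\hat\beta)\in\lambda_n\partial\phi(\hat\beta)$, and it is a standard fact about dual pairs of norms that $\partial\phi(\beta)$ is contained in the unit ball of $\phi^*$ for every $\beta$, so $\phi^*[\nabla L(\hat\beta)]\leqslant\lambda_n$. The main obstacle I expect is not any analytic step but the bookkeeping in the combinatorial partition $\{J(\hat\beta),J(\hat\beta)^c\}$ and in identifying the restricted-penalty ball with the free block of the subgradient; once the equivalence $[m]\setminus\mathsf G_{J(\hat\beta)}=\{g:G_g\cap\textit{supp}(\hat\beta)=\emptyset\}$ is pinned down, the rest is direct.
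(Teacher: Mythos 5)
Your argument is correct. Note first that the paper does not prove this lemma at all: it is imported verbatim as Lemma~14 of the cited reference \citep{svsw}, so there is no in-paper proof to compare against. Your self-contained derivation is essentially the standard one (and the one used in the original source): reduce optimality to $0\in\nabla L(\hat\beta)+\lambda_n\partial\phi(\hat\beta)$, compute $\partial\phi(\hat\beta)$ as the Minkowski sum of the subdifferentials of the individual terms $w_g\|\cdot_{G_g}\|_2$, and split the groups into those meeting $\textit{supp}(\hat\beta)$ (differentiable terms, forced component equal to $r(\hat\beta)$ on $J(\hat\beta)$) and those disjoint from it (whose Minkowski sum of balls is exactly the unit ball of $(\phi^c_{J(\hat\beta)})^*$ on $J(\hat\beta)^c$). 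The two combinatorial facts you flag are the right ones and both check out: $[m]\setminus\mathsf G_{J(\hat\beta)}=\{g:G_g\cap\textit{supp}(\hat\beta)=\emptyset\}$ because $\textit{supp}(\hat\beta)\subseteq J(\hat\beta)$, and the active groups contribute identically zero on $J(\hat\beta)^c$ since $\hat\beta_j=0$ there — this last point is what makes the two blocks genuinely independent and the "if and only if" go through in both directions. The final inequality \eqref{eq:jenatton141} via $\partial\phi(\beta)\subseteq\{v:\phi^*(v)\leqslant 1\}$ is also standard and correctly invoked. In short: the proof is complete and supplies a justification the paper itself omits.
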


As $\hat{\beta}_{\mathbf{S}}$ is the solution of  \eqref{eq:redOGL-est},  Equation~\eqref{eq:jenatton141} in Lemma~\ref{lem:jenatton14} implies that 

\begin{equation}
\label{ref2}
     (\phi_{\mathbf{S}})^*\left[\nabla L_{\mathbf{S}}(\hat{\beta}_{\mathbf{S}})\right]\stackrel{\text{\eqref{eq:derivation}}}{=} (\phi_{\mathbf{S}})^*\left[Q_{\mathbf{S}\mathbf{S}}\left(\hat{\beta}_{\mathbf{S}}-\beta_{\mathbf{S}}\right)-q_{\mathbf{S}}\right] \leqslant \lambda_n.
\end{equation}

By the property of the equivalent parameters,  we have
\begin{equation}
\label{eq:ineq1}
A_{\mathbf{S}}^{-1}\left\|Q_{\mathbf{S}\mathbf{S}}\left(\hat{\beta}_{\mathbf{S}}-\beta_{\mathbf{S}}\right)-q_{\mathbf{S}}\right\|_{\infty} \stackrel{\text{\eqref{eq:orderre1}}}{\leqslant}(\phi_{\mathbf{S}})^*\left[Q_{\mathbf{S}\mathbf{S}}\left(\hat{\beta}_{\mathbf{S}}-\beta_{\mathbf{S}}\right)-q_{\mathbf{S}}\right]  \stackrel{\text{\eqref{ref2}}}{\leqslant} \lambda_n.
\end{equation}

If 
\begin{equation}
\label{eq:fc1}
    \lambda_n \leqslant \frac{\gamma_{\min}\left(Q_{\mathbf{S}\mathbf{S}}\right) \beta^*_{\min}}{3|\mathbf{S}|^{\frac{1}{2}} A_{\mathbf{S}}},
\end{equation}
and 
\begin{equation}
\label{eq:fc2}
   \left\|q_{\mathbf{S}}\right\|_{\infty} \leqslant \frac{\gamma_{\min}\left(Q_{\mathbf{S}\mathbf{S}}\right) \beta^*_{\min}}{3|\mathbf{S}|^{\frac{1}{2}}},
\end{equation}
then we have 
\begin{equation}
    \label{eq:active}
    \begin{aligned}
       \left\|\hat{\beta}_{\mathbf{S}}-\beta_{\mathbf{S}}^*\right\|_{\infty} &= \left\|Q_{\mathbf{S}\mathbf{S}}^{-1} Q_{\mathbf{S}\mathbf{S}} \left(\hat{\beta}_{\mathbf{S}}-\beta_{\mathbf{S}}^*\right)\right\|_{\infty} \\
    &\leqslant \left\|Q_{\mathbf{S}\mathbf{S}}^{-1}\right\|_{\infty,\infty}\left\| Q_{\mathbf{S}\mathbf{S}}\left(\hat{\beta}_{\mathbf{S}}-\beta_{\mathbf{S}}^*\right)\right\|_{\infty} \\
    &\leqslant |\mathbf{S}|^{\frac{1}{2}} \gamma_{\max }\left(Q_{\mathbf{S}\mathbf{S}}^{-1}\right)\left\|Q_{\mathbf{S}\mathbf{S}}\left(\hat{\beta}_{\mathbf{S}}-\beta_{\mathbf{S}}^*\right)\right\|_{\infty} \\
    &\leqslant |\mathbf{S}|^{\frac{1}{2}} \gamma^{-1}_{\min} \left(Q_{\mathbf{S}\mathbf{S}}\right)\left(\left\|Q_{\mathbf{S}\mathbf{S}}\left(\hat{\beta}_{\mathbf{S}}-\beta_{\mathbf{S}}\right)-q_{\mathbf{S}}\right\|_{\infty}+\left\|q_{\mathbf{S}}\right\|_{\infty}\right) \\
    &\stackrel{\text{\eqref{eq:ineq1}}}{\leqslant } |\mathbf{S}|^{\frac{1}{2}} \gamma^{-1}_{\min} \left(Q_{\mathbf{S}\mathbf{S}}\right)\left(\lambda_n A_{\mathbf{S}}+\left\|q_{\mathbf{S}}\right\|_{\infty}\right) \\
    &\leqslant  |\mathbf{S}|^{\frac{1}{2}} \gamma^{-1}_{\min} \left(Q_{\mathbf{S}\mathbf{S}}\right) \lambda_n A_{\mathbf{S}}+ |\mathbf{S}|^{\frac{1}{2}} \gamma^{-1}_{\min} \left(Q_{\mathbf{S}\mathbf{S}}\right)\left\|q_{\mathbf{S}}\right\|_{\infty}
    \\
    &\leqslant \frac{2}{3} \beta^*_{\min}.  
    \end{aligned}
\end{equation}

If there exist a group $g \in \mathsf{G}_{\mathbf{S}}$ such that $\left\|\hat{\beta}_{\mathbf{S} \cap G g}\right\|_{\infty} < \frac{\beta^*_{\min}}{3} $, then  $$\left\|\hat{\beta}_{\mathbf{S}}-\beta^*_{\mathbf{S}}\right\|_{\infty}>\beta^*_{\min}-\frac{\beta^*_{\min}}{3}=\frac{2\beta^*_{\min}}{3}.$$ 

Thus, Equation \eqref{eq:active} implies that for all $g \in \mathsf{G}_{\mathbf{S}}$, 
\begin{equation}
    \label{eq:grp}
    \left\|\hat{\beta}_{\mathbf{S} \cap G g}\right\|_{\infty} > \frac{\beta^*_{\min}}{3} > 0.
\end{equation}

Secondly, we want to show that $\hat{\beta}^R$ solves problem \eqref{eq:op1}. As $\hat{\beta}^R$
is obtained by padding  $\hat{\beta}_{\mathbf{S}}$  with zeros on $\mathbf{S}^c$,

\begin{equation*}
    \begin{aligned}
        J(\hat{\beta}^R) &= [p]\setminus\bigg\{\bigcup\limits_{G_g \cap \textit{supp}(\hat{\beta}^R)=\emptyset} G_g \bigg\}= [p]\setminus\bigg\{\bigcup\limits_{G_g \cap \textit{supp}(\hat{\beta}_{\mathbf{S}})=\emptyset} G_g \bigg\} \\
        & \stackrel{\text{\eqref{eq:active}}}{=} [p]\setminus\bigg\{\bigcup\limits_{G_g \cap \mathbf{S} = \emptyset} G_g \bigg\} =  \mathbf{S}.
    \end{aligned}
\end{equation*}
From Lemma~\ref{lem:jenatton14} we know  that $\hat{\beta}^R$ is the optimal for problem \eqref{eq:op1} if and only if
\begin{equation}
\label{eq:cond1}
\nabla L(\hat{\beta}^R)_{\mathbf{S}}+\lambda_n r(\hat{\beta}^R)_{\mathbf{S}} = \mathbf{0},
\end{equation}
and
\begin{equation}
\label{eq:cond2}
 (\phi_{\mathbf{S}}^{c})^*\left[\nabla L(\hat{\beta}^R)_{\mathbf{S}^c} \right] \leqslant \lambda_n.
\end{equation}

We now verify the condition in Equation \eqref{eq:cond1}. Since
\begin{equation*}
\begin{aligned}
     L(\hat{\beta}^R) &= \frac{1}{2n}\norm{Y -X\hat{\beta}^R}_2^2
    % \\& =  \frac{1}{2n}\left(Y^\top Y - 2Y^\top X\hat{\beta}^R + (X\hat{\beta}^R)^\top X\hat{\beta}^R\right)\\
    % & =  \frac{1}{2n}\left(Y^\top Y - 2(X\beta^* + \epsilon)^\top X\hat{\beta}^R + (X\hat{\beta}^R)^\top X\hat{\beta}^R\right)\\
   \\  & =  \frac{1}{2n}\left(Y^\top Y - 2(X\beta^*)^\top X\hat{\beta}^R - 2 \epsilon^\top X\hat{\beta}^R + (X\hat{\beta}^R)^\top X\hat{\beta}^R\right),
\end{aligned}
\end{equation*}
we have

\begin{equation}
\label{eq:eq1}
    \begin{aligned}
             \nabla L(\hat{\beta}^R)_{\mathbf{S}} &= 
        \Big[\frac{1}{n}X^\top X\left(\hat{\beta}^R-\beta^*\right) - \frac{1}{n}\epsilon^\top X\Big]_{\mathbf{S}}
        \\& = \left[Q\left(\hat{\beta}^R-\beta^*\right)\right]_{\mathbf{S}} - q_{\mathbf{S}}  = Q_{\mathbf{S}\mathbf{S}}\left(\hat{\beta}^R-\beta^*\right)_\mathbf{S} - q_{\mathbf{S}}
        \\& = Q_{\mathbf{S}\mathbf{S}}\left(\hat{\beta}_\mathbf{S}^R-\beta_\mathbf{S}^*\right) - q_{\mathbf{S}} = Q_{\mathbf{S}\mathbf{S}}\left(\hat{\beta}_\mathbf{S}-\beta_\mathbf{S}^*\right) - q_{\mathbf{S}} \\
        &=  \nabla L_{\mathbf{S}}(\hat{\beta}_{\mathbf{S}}) .
    \end{aligned}
\end{equation}

On the other hand, as $\hat{\beta}^R$
is obtained by padding  $\hat{\beta}_{\mathbf{S}}$  with zeros on $\mathbf{S}^c$, we have

\begin{equation*}
    \begin{aligned}
        \lambda_n r(\hat{\beta}^R)_{\mathbf{S}} = \lambda_n r_{\mathbf{S}}(\hat{\beta}_{\mathbf{S}}). 
    \end{aligned}
\end{equation*}

Because $\hat{\beta}_{\mathbf{S}}$ is the optimal for problem \eqref{eq:redOGL-est}, Equation \eqref{ref1} in Lemma~\ref{lem:jenatton14} implies that 
\begin{equation}
\label{eq:condi11}
    \nabla L_{\mathbf{S}}(\hat{\beta}_{\mathbf{S}})+\lambda_n r_{\mathbf{S}}(\hat{\beta}_{\mathbf{S}})  \stackrel{\text{\eqref{eq:derivation}}}{=}  Q_{\mathbf{S}\mathbf{S}}(\hat{\beta}_{\mathbf{S}}-\beta^*_{\mathbf{S}})-q_{\mathbf{S}} + \lambda_n r_{\mathbf{S}}(\hat{\beta}_{\mathbf{S}}) = \mathbf{0}.
\end{equation}

Thus, Equation \eqref{eq:cond1} holds as
\begin{equation}
\label{eq:inv}
\nabla L(\hat{\beta}^R)_{\mathbf{S}}+\lambda_n r_{\mathbf{S}}(\hat{\beta}^R) =  \nabla L_{\mathbf{S}}(\hat{\beta}_{\mathbf{S}})+\lambda_n r_{\mathbf{S}}(\hat{\beta}_{\mathbf{S}})   \stackrel{\text{\eqref{eq:condi11}}}{=} 
 \mathbf{0}.
\end{equation}

Now we continue to show Equation \eqref{eq:cond2}. Notice that 
\begin{equation}
    \label{eq:invv}
    \left(\hat{\beta}^R-\beta^*\right)_\mathbf{S}\stackrel{\text{\eqref{eq:eq1}}}{=}  \left(\hat{\beta}_\mathbf{S}-\beta_\mathbf{S}^*\right) \stackrel{\text{\eqref{eq:condi11}}}{=} Q_{\mathbf{S}\mathbf{S}}^{-1}( q_{\mathbf{S}} - \lambda_n r_{\mathbf{S}}(\hat{\beta}_{\mathbf{S}}) ).
\end{equation}

Let $q_{\mathbf{S}^c \mid \mathbf{S} } = q_{\mathbf{S}^c} - Q_{\mathbf{S}^c\mathbf{S}}Q_{\mathbf{S}\mathbf{S}}^{-1}q_{\mathbf{S}} $, we have

\begin{equation}
\label{eq:deriveR}
    \begin{aligned}
        \nabla L(\hat{\beta}^R)_{\mathbf{S}^c} &\stackrel{\text{\eqref{eq:eq1}}}{=} \left(Q(\hat{\beta}^R-\beta^*)\right)_{\mathbf{S}^c} - q_{\mathbf{S}^c}  = Q_{\mathbf{S}^c\mathbf{S}}(\hat{\beta}^R-\beta^*)_\mathbf{S} - q_{\mathbf{S}^c}\\
        & \stackrel{\text{\eqref{eq:invv}}}{=} Q_{\mathbf{S}^c\mathbf{S}}Q_{\mathbf{S}\mathbf{S}}^{-1}\left( q_{\mathbf{S}} - \lambda_n r_{\mathbf{S}}(\hat{\beta}_{\mathbf{S}}) \right) - q_{\mathbf{S}^c}\\
        & = -Q_{\mathbf{S}^c\mathbf{S}}Q_{\mathbf{S}\mathbf{S}}^{-1} \lambda_n r_{\mathbf{S}}(\hat{\beta}_{\mathbf{S}})  + Q_{\mathbf{S}^c\mathbf{S}}Q_{\mathbf{S}\mathbf{S}}^{-1}q_{\mathbf{S}} - q_{\mathbf{S}^c}\\
        &= -\lambda_n Q_{\mathbf{S}^c\mathbf{S}}Q_{\mathbf{S}\mathbf{S}}^{-1}\left(r_{\mathbf{S}}(\hat{\beta}_{\mathbf{S}}) -r_{\mathbf{S}}(\beta^*_{\mathbf{S}}) \right) - \lambda_n Q_{\mathbf{S}^c\mathbf{S}}Q_{\mathbf{S}\mathbf{S}}^{-1}r_{\mathbf{S}}(\beta^*_{\mathbf{S}})  - q_{\mathbf{S}^c \mid \mathbf{S} }.
    \end{aligned}.
\end{equation}

The previous expression leads us to study the difference of $r_{\mathbf{S}}(\hat{\beta}_{\mathbf{S}}) -r_{\mathbf{S}}(\beta^*_{\mathbf{S}})$. We now introduce the following lemma.

\begin{lemma}(Lemma 12 of \citet{svsw})
\label{lem:jenatton12}

 For any $J \subset [p]$, let $u_{J}$ and  $v_{J}$ be two nonzero vectors in $\mathbb{R}^{|J|}$, and define the mapping $ r_J : \mathbb{R}^{|J|} \mapsto \mathbb{R}^{|J|}$ such that 
\begin{align*}
    r_J\left(\beta_{J}\right)_j = \beta_j \mathop{\Sigma}\limits_{g \in \mathsf{G}_J , G_g \cap j \neq \phi} \dfrac{\omega_g}{\left\|\beta_{J \cap G_g }\right\|_2}.
\end{align*}
Then there exists $\xi_J=t_0 u_J + (1 - t_0) v_J$ for some $t_0 \in (0,1)$, such that 
\begin{align*}
    \left\| r_J\left(u_J\right)-r_J(v_J) \right\|_1 \leqslant \left\|u_J-v_J\right\|_{\infty} \left(\sum_{j \in J} \sum_{g \in \mathsf{G}_J} \frac{w_g \mathbbm{1}_{\{j \in G_g\}}}{\left\|\xi_{J \cap G_g}\right\|_2} + \sum_{j \in J}\left(\sum_{k\in J}\sum_{g \in \mathsf{G}_J}\frac{|\xi_j| |\xi_k| w_g^4 \mathbbm{1}_{\{j,k \in G_g\}}}{\left\|\xi_{J \cap G_g}\right\|_2^3}\right)\right).
\end{align*}

\end{lemma}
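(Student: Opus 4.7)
The plan is to recognize that $r_J$ is the gradient $\nabla_{\beta_J}\phi_J(\beta_J)$ of the overlapping group-lasso penalty $\phi_J(\beta_J)=\sum_{g\in\mathsf{G}_J}w_g\|\beta_{J\cap G_g}\|_2$ on the open domain $D$ where every relevant group norm $\|\beta_{J\cap G_g}\|_2$ is strictly positive; on $D$ the map $r_J$ is $C^1$ and its Jacobian equals the Hessian of $\phi_J$. The overall strategy is a standard integral mean value theorem applied to $r_J$ along the affine segment from $v_J$ to $u_J$, followed by a single-point reduction via the mean value theorem for continuous integrands, and finally a direct entrywise computation of the Hessian.

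Concretely, I would first write, on $D$,
\begin{equation*}
r_J(u_J)-r_J(v_J)=\Bigl(\int_0^1\nabla r_J(\xi(t))\,dt\Bigr)(u_J-v_J),\qquad \xi(t)=tu_J+(1-t)v_J.
\end{equation*}
Passing to $\ell_1$ norms, using the triangle inequality under the integral, and applying the elementary bound $\|Aw\|_1\le\|w\|_\infty\sum_{j,k}|A_{jk}|$ gives
\begin{equation*}
\|r_J(u_J)-r_J(v_J)\|_1\le\|u_J-v_J\|_\infty\int_0^1\sum_{j,k\in J}\bigl|[\nabla r_J(\xi(t))]_{jk}\bigr|\,dt.
\end{equation*}
The integrand is continuous in $t$, so the classical mean-value theorem for integrals produces some $t_0\in(0,1)$ at which the integral coincides with the integrand; setting $\xi_J=\xi(t_0)$ supplies the required single reference point on the segment.

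The remaining, purely computational step is to evaluate the Jacobian via the chain rule applied to $\partial_{\beta_k}\bigl(\beta_j/\|\beta_{J\cap G_g}\|_2\bigr)$, which yields
\begin{equation*}
[\nabla r_J(\xi)]_{jk}=\delta_{jk}\sum_{g}\frac{w_g\mathbbm{1}_{\{j\in G_g\}}}{\|\xi_{J\cap G_g}\|_2}-\xi_j\xi_k\sum_{g}\frac{w_g\mathbbm{1}_{\{j,k\in G_g\}}}{\|\xi_{J\cap G_g}\|_2^3}.
\end{equation*}
Splitting the absolute value by $|a-b|\le|a|+|b|$ inside the double sum and separating the diagonal piece from the rank-one piece, then summing over $j,k\in J$, reproduces the two summands on the right-hand side of the lemma.

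The main technical obstacle is differentiability along the segment: $D=\bigcap_g\{\beta:\beta_{J\cap G_g}\ne 0\}$ is open but not convex, so $\xi(t)$ may cross a degenerate point at which some group norm vanishes and $r_J$ fails to be $C^1$. However, at such a $\xi$ the bound on the right-hand side is $+\infty$, so the claimed inequality is vacuous; we therefore only need the conclusion when the full segment lies inside $D$, in which case the integral mean value theorem above applies verbatim. A brief density-and-continuity argument in $(u_J,v_J)$ then extends the inequality to the general case.
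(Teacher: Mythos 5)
Your proposal is essentially correct, and it reconstructs the standard argument: the paper itself does not prove this statement but imports it verbatim as Lemma~12 of \citet{svsw}, whose proof is exactly the route you take --- recognize $r_J$ as the gradient of $\phi_J(\beta_J)=\sum_{g\in\mathsf{G}_J}w_g\|\beta_{J\cap G_g}\|_2$ on the set where all group norms are positive, apply a mean value argument along the segment to obtain a single $t_0$, and bound the entrywise $\ell_1$ mass of the Hessian. Your use of the integral form plus the first mean value theorem for integrals is the right way to get one common $\xi_J$ rather than a coordinate-dependent one, and your handling of the degenerate case (the right-hand side blows up at any point where a group norm vanishes, so the inequality is vacuous there) is legitimate; the closing ``density-and-continuity'' sentence is then superfluous. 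One discrepancy worth flagging: your Hessian computation produces $w_g$ in the numerator of the second term, whereas the lemma as transcribed here shows $w_g^4$ over $\|\xi_{J\cap G_g}\|_2^3$ with no compensating $w_g^3$ in the denominator. This appears to be a transcription artifact: in the paper's actual application (the display following the lemma) the second term is written as $w_g^4/\bigl(w_g^3\|\tilde\beta_{\mathbf{S}\cap G_g}\|_2^3\bigr)=w_g/\|\tilde\beta_{\mathbf{S}\cap G_g}\|_2^3$, which matches what your chain-rule calculation yields. So your constant is the one actually used downstream, and the only caveat is that you are proving the corrected form of the inequality rather than the literal (mistyped) statement.
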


Lemma~\ref{lem:jenatton12} implies that
\begin{equation}
\label{eq:lem4i}
    \begin{aligned}
    \left\|r_{\mathbf{S}}(\hat{\beta}_{\mathbf{S}}) -r_{\mathbf{S}}(\beta^*_{\mathbf{S}}) \right\|_1 \leqslant \left\|\hat{\beta}_{\mathbf{S}}-\beta^*_{\mathbf{S}}\right\|_{\infty} \left(\sum_{j\in \mathbf{S}} \sum_{g \in \mathsf{G}_\mathbf{S}} \frac{w_g \mathbbm{1}_{\{j \in G_g\}}}{\left\|\Tilde{\beta}_{\mathbf{S} \cap G_g}\right\|_2} + \sum_{j\in \mathbf{S}} \sum_{k \in \mathbf{S}} \sum_{g \in \mathsf{G}_\mathbf{S}} \frac{(w_g)^4\mathbbm{1}_{\{j,k \in G_g\}}|\Tilde{\beta}_j||\Tilde{\beta}_k|}{w_g^3\left\|\Tilde{\beta}_{\mathbf{S} \cap G_g}\right\|_2^3}\right),
\end{aligned}
\end{equation}
where $\Tilde{\beta}=t_0 \hat{\beta}_{\mathbf{S}} + \left(1-t_0\right) \beta^*_{\mathbf{S}}$.

To find an upper bound of the right-hand side. Recall that Equation \eqref{eq:active} implies that 
$ \left\|\hat{\beta}_{\mathbf{S}}-\beta_{\mathbf{S}}^*\right\|_{\infty}\leqslant \frac{2}{3} \beta^*_{\min}$, so we have

\begin{equation*}
    \begin{aligned}
    \left\|\Tilde{\beta}_{\mathbf{S} \cap G_g}\right\|_2 &\geqslant \sqrt{|\mathbf{S} \cap G_g|}\min\{|\Tilde{\beta}|_j \mid \Tilde{\beta}_j \neq 0\} \\& \geqslant  \sqrt{|\mathbf{S} \cap G_g|}(\beta_{\min}^* - t_0\left\|\hat{\beta}_{\mathbf{S}}-\beta_{\mathbf{S}}^*\right\|_{\infty})\\&  \geqslant \sqrt{|\mathbf{S} \cap G_g|}(\beta_{\min}^* - \left\|\hat{\beta}_{\mathbf{S}}-\beta_{\mathbf{S}}^*\right\|_{\infty}) \\& \geqslant\sqrt{|\mathbf{S} \cap G_g|}\frac{\beta^*_{\min}}{3}.
    \end{aligned}
\end{equation*}

Consequently, the first term could be upper bounded by
\begin{align*}
  \sum_{j\in \mathbf{S}} \sum_{g \in \mathsf{G}_\mathbf{S}} \frac{w_g\mathbbm{1}_{\{j \in G_g\}}}{\left\|\Tilde{\beta}_{\mathbf{S} \cap G_g}\right\|_2} =  \sum_{g \in \mathsf{G}_\mathbf{S}} \frac{w_g|\mathbf{S} \cap G_g|}{\left\|\Tilde{\beta}_{\mathbf{S} \cap G_g}\right\|_2} \leqslant \frac{3}{\beta^*_{\min}} \sum_{g \in \mathsf{G}_\mathbf{S}}w_g \sqrt{|\mathbf{S} \cap G_g|}
\end{align*}

On the other hand, the Cauchy-Schwarz inequality gives
$$\left\|\Tilde{\beta}_{\mathbf{S} \cap G_g}\right\|_1^2 \leqslant |\mathbf{S} \cap G_g|\left\|\Tilde{\beta}_{\mathbf{S} \cap G_g}\right\|_2^2.$$

Thus,  the second term could also be upper bounded by
\begin{align*}
 \sum_{j\in \mathbf{S}} \sum_{k \in \mathbf{S}} \sum_{g \in \mathsf{G}_\mathbf{S}} \frac{(w_g)^4\mathbbm{1}_{\{j,k \in G_g\}}|\Tilde{\beta}_j||\Tilde{\beta}_k|}{w_g^3\left\|\Tilde{\beta}_{\mathbf{S} \cap G_g}\right\|_2^3} &= \sum_{g \in \mathsf{G}_\mathbf{S}} \frac{w_g^4\left\|\Tilde{\beta}_{\mathbf{S} \cap G_g}\right\|_1^2}{w_g^3\left\|\Tilde{\beta}_{\mathbf{S} \cap G_g}\right\|_2^3} \\&\leqslant 
 \sum_{g \in \mathsf{G}_\mathbf{S}} \frac{w_g|\mathbf{S} \cap G_g|}{\left\|\Tilde{\beta}_{\mathbf{S} \cap G_g}\right\|_2} \\&\leqslant \frac{3}{\beta^*_{\min}} \sum_{g \in \mathsf{G}_\mathbf{S}}w_g \sqrt{|\mathbf{S} \cap G_g|}.
\end{align*}

 Let $c_2 = \frac{6}{\beta^*_{\min}} \sum_{g \in \mathsf{G}_\mathbf{S}}w_g \sqrt{|\mathbf{S} \cap G_g|}$, then Equation \eqref{eq:lem4i} implies $$ \left\|r_{\mathbf{S}}(\hat{\beta}_{\mathbf{S}}) -r_{\mathbf{S}}(\beta^*_{\mathbf{S}}) \right\|_1 \leqslant c_2\left\|\hat{\beta}_{\mathbf{S}}-\beta^*_{\mathbf{S}}\right\|_{\infty}.$$ 

 If 
 \begin{equation}
     \label{eq:fc3}
     \|Q_{\mathbf{S}^c\mathbf{S}}Q_{\mathbf{S}\mathbf{S}}^{-\frac{1}{2}}\|_{2,\infty} \leqslant 3,
 \end{equation}
then we have

\begin{equation*}
    \begin{aligned}
\left\|Q_{\mathbf{S}^c\mathbf{S}}Q_{\mathbf{S}\mathbf{S}}^{-1} \left( r_{\mathbf{S}}(\hat{\beta}_{\mathbf{S}}) -r_{\mathbf{S}}(\beta^*_{\mathbf{S}})  \right)\right\|_{\infty}
         ={}& \left\| Q_{\mathbf{S}^c\mathbf{S}}Q_{\mathbf{S}\mathbf{S}}^{-\frac{1}{2}}Q_{\mathbf{S}\mathbf{S}}^{-\frac{1}{2}} \left( r_{\mathbf{S}}(\hat{\beta}_{\mathbf{S}}) -r_{\mathbf{S}}(\beta^*_{\mathbf{S}})  \right) \right\|_{\infty}\\
\leqslant {}& \left\|Q_{\mathbf{S}^c\mathbf{S}}Q_{\mathbf{S}\mathbf{S}}^{-\frac{1}{2}}\right\|_{\infty,2} \left\|Q_{\mathbf{S}\mathbf{S}}^{-\frac{1}{2}} \right\|_{2} \left\| r_{\mathbf{S}}(\hat{\beta}_{\mathbf{S}}) -r_{\mathbf{S}}(\beta^*_{\mathbf{S}})\right\|_{2}\\
\leqslant {}& 3 \gamma_{\max} (Q_{\mathbf{S}\mathbf{S}}^{-\frac{1}{2}}) \left\| r_{\mathbf{S}}(\hat{\beta}_{\mathbf{S}}) -r_{\mathbf{S}}(\beta^*_{\mathbf{S}})\right\|_{\infty}\\
        \leqslant{}& 3 \gamma^{-\frac{1}{2}}_{\min} (Q_{\mathbf{S}\mathbf{S}}) c_2\left\|\hat{\beta}_{\mathbf{S}}-\beta^*_{\mathbf{S}}\right\|_{\infty}\\
 \stackrel{\text{\eqref{eq:active}}}{\leqslant}& 3c_2 \gamma^{-\frac{1}{2}}_{\min} \left(Q_{\mathbf{S}\mathbf{S}}\right)|\mathbf{S}|^{\frac{1}{2}} \gamma^{-1}_{\min} \left(Q_{\mathbf{S}\mathbf{S}}\right)\left(\lambda_n A_{\mathbf{S}}+\left\|q_{\mathbf{S}}\right\|_{\infty}\right)\\
         ={}& 3\frac{6}{\beta^*_{\min}} \sum_{g \in \mathsf{G}_\mathbf{S}}w_g \sqrt{|\mathbf{S} \cap G_g|} \gamma^{-\frac{3}{2}}_{\min} \left(Q_{\mathbf{S}\mathbf{S}}\right) |\mathbf{S}|^{\frac{1}{2}} \left(\lambda_n A_{\mathbf{S}}+\left\|q_{\mathbf{S}}\right\|_{\infty}\right).
\end{aligned}
\end{equation*}

If the following conditions are satisfied:
\begin{equation}
    \label{eq:fc4}
    a_{\mathbf{S}^c}^{-1}\frac{6}{\beta^*_{\min}} \sum_{g \in \mathsf{G}_\mathbf{S}}w_g \sqrt{|\mathbf{S} \cap G_g|} \gamma^{-\frac{3}{2}}_{\min} \left(Q_{\mathbf{S}\mathbf{S}}\right) |\mathbf{S}|^{\frac{1}{2}} \lambda_n A_{\mathbf{S}}\leqslant \frac{\tau}{12},
\end{equation}

\begin{equation}
    \label{eq:fc5}
  a_{\mathbf{S}^c}^{-1} \frac{6}{\beta^*_{\min}} \sum_{g \in \mathsf{G}_\mathbf{S}}w_g \sqrt{|\mathbf{S} \cap G_g|} \gamma^{-\frac{3}{2}}_{\min} \left(Q_{\mathbf{S}\mathbf{S}}\right) |\mathbf{S}|^{\frac{1}{2}} \left\|q_{\mathbf{S}}\right\|_{\infty} \leqslant \frac{\tau}{12},
\end{equation}

\begin{equation}
    \label{eq:fc6}
(\phi_{\mathbf{S}}^c)^*[Q_{\mathbf{S}^c\mathbf{S}} Q_{\mathbf{S}\mathbf{S}}^{-1} \mathbf{r}_\mathbf{S}] \leqslant 1 - \tau, 
\end{equation}

\begin{equation}
    \label{eq:fc7}
(\phi_{\mathbf{S}}^c)^*[q_{\mathbf{S}^c \mid \mathbf{S} }] \leqslant \frac{\lambda_n\tau}{2},
\end{equation}

then we have

\begin{equation*}
    \begin{aligned}
    (\phi_{\mathbf{S}}^{c})^*\left[\nabla L(\hat{\beta}^R)_{\mathbf{S}^c} \right] & \stackrel{\text{\eqref{eq:deriveR}}}{=} 
 (\phi_{\mathbf{S}}^{c})^*\left[\lambda_n Q_{\mathbf{S}^c\mathbf{S}}Q_{\mathbf{S}\mathbf{S}}^{-1} \left( r_{\mathbf{S}}(\hat{\beta}_{\mathbf{S}}) -r_{\mathbf{S}}(\beta^*_{\mathbf{S}})  \right) + \lambda_n Q_{\mathbf{S}^c\mathbf{S}}Q_{\mathbf{S}\mathbf{S}}^{-1}r_{\mathbf{S}}(\beta^*_{\mathbf{S}})  - q_{\mathbf{S}^c \mid \mathbf{S} }\right] \\
    &\leqslant (\phi_{\mathbf{S}}^{c})^*\left[\lambda_n Q_{\mathbf{S}^c\mathbf{S}}Q_{\mathbf{S}\mathbf{S}}^{-1} \left( r_{\mathbf{S}}(\hat{\beta}_{\mathbf{S}}) -r_{\mathbf{S}}(\beta^*_{\mathbf{S}})  \right)\right]  + (\phi_{\mathbf{S}}^{c})^*\left[\lambda_n Q_{\mathbf{S}^c\mathbf{S}}Q_{\mathbf{S}\mathbf{S}}^{-1}r_{\mathbf{S}}(\beta^*_{\mathbf{S}})  \right]  + (\phi_{\mathbf{S}}^{c})^*\left[- q_{\mathbf{S}^c \mid \mathbf{S} }\right] \\
    &\leqslant  \lambda_n\left(\phi_{\mathbf{S}}^c\right)^{*} \left[ Q_{\mathbf{S}^c\mathbf{S}}Q_{\mathbf{S}\mathbf{S}}^{-1} \left( r_{\mathbf{S}}(\hat{\beta}_{\mathbf{S}}) -r_{\mathbf{S}}(\beta^*_{\mathbf{S}})  \right)\right] + \lambda_n(1-\tau) +\frac{\lambda_n\tau}{2}
    \\
    & \stackrel{\text{\eqref{eq:orderre}}}{\leqslant }  \lambda_n a \left(\mathbf{S}^c\right)^{-1} \left\|Q_{\mathbf{S}^c\mathbf{S}}Q_{\mathbf{S}\mathbf{S}}^{-1} \left( r_{\mathbf{S}}(\hat{\beta}_{\mathbf{S}}) -r_{\mathbf{S}}(\beta^*_{\mathbf{S}})  \right)\right\|_{\infty} + \lambda_n - \frac{\lambda_n\tau}{2} \\
   & \leqslant \frac{\lambda_n\tau}{4} + \frac{\lambda_n\tau}{4} + \lambda_n - \frac{\lambda_n\tau}{2} \leqslant \lambda_n,
    \end{aligned}
\end{equation*}
which is Equation \eqref{eq:cond2}.

Because Equation \eqref{eq:cond1} and Equation \eqref{eq:cond2} are satisfied, Lemma \ref{lem:jenatton14} implies that $\hat{\beta}^R$ is the optimal.  Thus, 
$$\textit{supp}(\hat{\beta}^G) = \textit{supp}(\hat{\beta}^R)  = \mathbf{S}.$$

\subsubsection{Part IV}

The results in Part \rom{3} depend on conditions \eqref{eq:fc1}, \eqref{eq:fc2}, \eqref{eq:fc3}, \eqref{eq:fc4}, \eqref{eq:fc5}, \eqref{eq:fc6}, and \eqref{eq:fc7}, which are summarized as follows:

\begin{equation}
    \label{da0}
   \|Q_{\mathbf{S}^c\mathbf{S}}Q_{\mathbf{S}\mathbf{S}}^{-\frac{1}{2}}\|_{2,\infty} \leqslant 3,
\end{equation}

\begin{equation}
    \label{da1}
    \lambda_n|\mathbf{S}|^{\frac{1}{2}} \leqslant \min\left\{\frac{\gamma_{\min}\left(Q_{\mathbf{S}\mathbf{S}}\right) \beta^*_{\min}}{3 A_{\mathbf{S}}},\frac{\tau  \gamma^{\frac{3}{2}}_{\min}(Q_{\mathbf{S}\mathbf{S}}) a_{\mathbf{S}^c} \beta^*_{\min}}{72 A_{\mathbf{S}} \sum\limits_{g\in \mathsf{G}_{\mathbf{S}}} w_g \sqrt{\left|G_g \cap \mathbf{S}\right|}}\right\},
\end{equation}

\begin{equation}
    \label{da2}
    (\phi_{\mathbf{S}}^c)^*[Q_{\mathbf{S}^c\mathbf{S}} Q_{\mathbf{S}\mathbf{S}} \mathbf{r}_\mathbf{S}] \leqslant 1 - \tau,
\end{equation}

\begin{equation}
    \label{da3}
    (\phi_{\mathbf{S}}^c)^*[q_{\mathbf{S}^c \mid \mathbf{S} }] \leqslant \frac{\lambda_n\tau}{2},
\end{equation}

\begin{equation}
    \label{da4}
    \left\|q_{\mathbf{S}}\right\|_{\infty} \leqslant  \min\left\{\frac{\gamma_{\min}\left(Q_{\mathbf{S}\mathbf{S}}\right) \beta^*_{\min}}{3 A_{\mathbf{S}}},\frac{\tau  \gamma^{\frac{3}{2}}_{\min}(Q_{\mathbf{S}\mathbf{S}})  a_{\mathbf{S}^c} \beta^*_{\min}}{72 A_{\mathbf{S}} \sum\limits_{g\in \mathsf{G}_{\mathbf{S}}} w_g \sqrt{\left|G_g \cap \mathbf{S}\right|}}\right\}.
\end{equation}

In Part \rom{4}, we want to make sure that these conditions hold with high probability. 

\noindent\textbf{Condition \eqref{da0}}

To begin with, for any matrix $A \in \mathbb{R}^{m \times n}$, the Cauchy-Schwarz inequality implies that 
\begin{equation*}
    \begin{aligned}
         \|A\|_{2,\infty} &= \sup\limits_{\|u\|_{2} \leqslant 1} \|Au\|_{\infty} = \sup\limits_{\|u\|_{2} \leqslant 1}\max\limits_{i \in [m]} \Big(\sqrt{\sum\limits_{j \in [n]}A_{ij}u_j}\Big) \\ 
        & \leqslant  \sup\limits_{\|u\|_{2} \leqslant 1}\max\limits_{i \in [m]} \Big(\sqrt{\sum\limits_{j \in [n]}A^2_{ij}}\sqrt{\sum\limits_{j \in [n]}u^2_j}\Big) \\& \leqslant  \max\limits_{i \in [m]} \Big(\sqrt{\sum\limits_{j \in [n]}A^2_{ij}}\Big) \leqslant \max\limits_{i \in [m]} \Big\{\sqrt{\operatorname{diag}(AA^\top)}\Big\}.
    \end{aligned}
\end{equation*}

Recall that $Q=\frac{1}{n} X^{\top}X$. Let $A = Q_{\mathbf{S}^c\mathbf{S}}Q_{\mathbf{S}\mathbf{S}}^{-\frac{1}{2}}$, we have

\begin{equation*}
    \begin{aligned}
        \|Q_{\mathbf{S}^c\mathbf{S}}Q_{\mathbf{S}\mathbf{S}}^{-\frac{1}{2}}\|_{2,\infty}  \leqslant \max \{\sqrt{\operatorname{diag}(Q_{\mathbf{S}^c\mathbf{S}}Q_{\mathbf{S}\mathbf{S}}^{-1}Q_{\mathbf{S}\mathbf{S}^c})}\}.
    \end{aligned}
\end{equation*}

Using the Schur complement of $Q$ on the block matrices $Q_{\mathbf{S}\mathbf{S}}$ and $Q_{\mathbf{S}^c\mathbf{S}^c}$, the positiveness of $Q$ implies the positiveness of $Q_{\mathbf{S}^c\mathbf{S}^c} - Q_{\mathbf{S}^c\mathbf{S}}Q_{\mathbf{S}\mathbf{S}}^{-1}Q_{\mathbf{S}\mathbf{S}^c}$. Thus,
$$\max\operatorname{diag}(Q_{\mathbf{S}^c\mathbf{S}}Q_{\mathbf{S}\mathbf{S}}^{-1}Q_{\mathbf{S}\mathbf{S}^c}) \leqslant \max \operatorname{diag}(Q_{\mathbf{S}^c\mathbf{S}^c}) \leqslant \max\limits_{j \in \mathbf{S}^c} Q_{jj}.$$

\begin{lemma}(Lemma 1 of \citet{laurent2000adaptive})
\label{lem:laurent2000adaptive1}
 
Suppose that the random variable $U$ follows $\chi^2$ distribution with $d$ degrees of freedom, then for any positive $x$,
$$
\begin{array}{r}
 \mathbb{P}(U-d \geq 2 \sqrt{d x}+2 x) \leqslant \exp (-x), \\
 \mathbb{P}(d-U \geq 2 \sqrt{d x}) \leqslant \exp (-x) .
\end{array}
$$
\end{lemma}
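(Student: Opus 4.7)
The plan is to apply a Chernoff bound using the explicit moment generating function of a $\chi^2$ random variable. Writing $U = \sum_{i=1}^d Z_i^2$ for independent standard Gaussian $Z_i$, one has $\mathbb{E}[e^{\lambda U}] = (1-2\lambda)^{-d/2}$ for $\lambda < 1/2$, so that
\begin{equation*}
\psi(\lambda) := \log \mathbb{E}[e^{\lambda(U-d)}] = -\lambda d - \frac{d}{2}\log(1-2\lambda).
\end{equation*}
Both tail bounds reduce to estimating $\psi$ on either side of $0$ and then optimizing the Chernoff exponent $-\lambda t + \psi(\lambda)$ over $\lambda$.

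For the upper tail, I would first derive the bound $\psi(\lambda) \leqslant d\lambda^2/(1-2\lambda)$ for $\lambda \in [0,1/2)$ by Taylor-expanding $-\log(1-2\lambda) = \sum_{k \geqslant 1}(2\lambda)^k/k$ and collecting the $k\geqslant 2$ terms into a geometric series. The remaining step is to pick $\lambda^\ast$ so that Chernoff gives exactly $\exp(-x)$ at $t = 2\sqrt{dx}+2x$; a convenient choice is $\lambda^\ast = \sqrt{x/d}/(1+2\sqrt{x/d})$, for which $1-2\lambda^\ast = 1/(1+2\sqrt{x/d})$ and one verifies that $\lambda^\ast/(1-2\lambda^\ast) = \sqrt{x/d}$, so that $-\lambda^\ast t + d(\lambda^\ast)^2/(1-2\lambda^\ast) = -\lambda^\ast(t - \sqrt{dx}) = -x$ after telescoping.

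The lower tail is cleaner: for $\mu > 0$, the elementary inequality $\log(1+v) \geqslant v - v^2/2$ with $v = 2\mu$ yields the sub-Gaussian bound $\log \mathbb{E}[e^{-\mu(U-d)}] = \mu d - (d/2)\log(1+2\mu) \leqslant d\mu^2$, valid for all $\mu \geqslant 0$. Optimizing $-\mu t + d\mu^2$ at $\mu = t/(2d)$ gives $\mathbb{P}(d-U \geqslant t) \leqslant \exp(-t^2/(4d))$; plugging in $t = 2\sqrt{dx}$ yields $\exp(-x)$. The main obstacle is the upper tail: a $\chi^2$ has a Gamma-like rather than Gaussian right tail, so the bound must interpolate between a Gaussian piece $2\sqrt{dx}$ and an exponential piece $2x$. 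Finding the $\lambda^\ast$ that makes the Chernoff minimization land exactly on $-x$ (rather than some looser constant) is the only real algebraic subtlety, and the $1-2\lambda$ denominator in the MGF estimate is precisely what permits that tuning.
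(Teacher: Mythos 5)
Your proof is correct: the MGF bound $\log\mathbb{E}[e^{\lambda(U-d)}]\leqslant d\lambda^2/(1-2\lambda)$, the choice $\lambda^\ast=\sqrt{x/d}/(1+2\sqrt{x/d})$ for the upper tail, and the sub-Gaussian bound $d\mu^2$ for the lower tail all check out, and the Chernoff exponents evaluate to exactly $-x$ as claimed. The paper does not prove this lemma itself but cites Lemma 1 of Laurent and Massart (2000), whose argument is the same Cram\'er--Chernoff computation on the $\chi^2$ log-moment-generating function, so your route coincides with the standard one.
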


As $X$ follows multivariate normal, $\Tilde{Q}_{jj} = \frac{nQ_{jj}}{\Theta_{jj}^2} \sim \chi^2_n$. Then by Lemma~\ref{lem:laurent2000adaptive1}, we have 

\begin{equation}
\label{maxdiag}
        \begin{aligned}
   \mathbb{P}(\max\limits_{j \in \mathbf{S}^c} \sqrt{Q_{jj}} > 3) & \leqslant  \mathbb{P}(\max\limits_{j \in \mathbf{S}^c} Q_{jj} > 5) \leqslant  \mathbb{P}(\bigcup\limits_{j \in \mathbf{S}^c} Q_{jj} > 5) \leqslant  \sum\limits_{j \in \mathbf{S}^c} \mathbb{P}(Q_{jj} > 5) \\
   & \leqslant  \sum\limits_{j \in \mathbf{S}^c} \mathbb{P}(Q_{jj} > 5\Theta^2_{jj}) =  \sum\limits_{j \in \mathbf{S}^c} \mathbb{P}(n\frac{Q_{jj}}{\Theta^2_{jj}} > 5n) \\
   & \leqslant  \sum\limits_{j \in \mathbf{S}^c} \mathbb{P}(\Tilde{Q}_{jj} > n + 2n + 2n) \leqslant (p-|\mathbf{S}|) \exp(-n) \\ &= \exp(-n + \log (p-|\mathbf{S}|))  \\ &\leqslant \exp(-\frac{n}{2}),
    \end{aligned}
\end{equation}
where the last inequality holds as $n >  2\log(p-|\mathbf{S}|)$. Thus,
 $$\mathbb{P}( \|Q_{\mathbf{S}^c\mathbf{S}}Q_{\mathbf{S}\mathbf{S}}^{-\frac{1}{2}}\|_{2,\infty} > 3) \leqslant \mathbb{P}(\max\limits_{j \in \mathbf{S}^c} \sqrt{Q_{jj}} > 3) \leqslant   \exp(-\frac{n}{2}).$$

 Similarly, let $Q_{\mathbf{S}^c \mathbf{S}^c \mid \mathbf{S}}=Q_{\mathbf{S}^c \mathbf{S}^c}-Q_{\mathbf{S}^c \mathbf{S}} Q_{\mathbf{S} \mathbf{S}}^{-1} Q_{\mathbf{S}\mathbf{S}^c}$. The diagonal terms of $Q_{\mathbf{S}^c \mathbf{S}^c \mid \mathbf{S}}$ is less than the diagonal terms of $Q_{\mathbf{S}^c \mathbf{S}^c}$, which implies 
 $$\mathbb{P}(\|Q_{\mathbf{S}^c \mathbf{S}^c  \mid \mathbf{S}}^{1 / 2}\|_{2,\infty} > 3) \leqslant \mathbb{P}(\max\limits_{j \in \mathbf{S}^c} \sqrt{Q_{jj}} > 3) \leqslant   \exp(-\frac{n}{2}).$$\\[0.2in]
 
\noindent\textbf{Condition \eqref{da1}}

\begin{lemma}(Lemma 9 of \citet{wainwright2009sharp})
\label{lem:wainwright9}

 Suppose that $d \leqslant n$ and $X \in \mathbb{R}^{n \times d}$ have i.i.d rows $X_i \sim N(0, \Theta)$, then 
\begin{equation*}
 \mathbb{P}\left(\gamma_{\max}\left(\frac{1}{n}X^\top X\right) \geqslant 9\gamma_{\max}(\Theta)\right) \leqslant 2 \exp(-\frac{n}{2}),
\end{equation*}

\begin{equation*}
 \mathbb{P}\left(\gamma_{\max}\left((\frac{1}{n}X^\top X)^{-1}\right) \geqslant \frac{9}{\gamma_{\min}(\Theta)}\right) \leqslant 2 \exp(-\frac{n}{2}).
\end{equation*}

\end{lemma}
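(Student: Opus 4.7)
The plan is to reduce to a standard Gaussian matrix via the factorization $X = W\Theta^{1/2}$, where $W \in \mathbb{R}^{n\times d}$ has i.i.d.\ $N(0,1)$ entries, and then apply concentration for singular values of Gaussian random matrices. Writing $\frac{1}{n}X^\top X = \Theta^{1/2}\bigl(\frac{1}{n}W^\top W\bigr)\Theta^{1/2}$ and using the Courant--Fischer characterization together with submultiplicativity, one obtains
$$\gamma_{\max}\bigl(\tfrac{1}{n}X^\top X\bigr) \leqslant \gamma_{\max}(\Theta)\cdot\gamma_{\max}\bigl(\tfrac{1}{n}W^\top W\bigr),\qquad \gamma_{\min}\bigl(\tfrac{1}{n}X^\top X\bigr) \geqslant \gamma_{\min}(\Theta)\cdot\gamma_{\min}\bigl(\tfrac{1}{n}W^\top W\bigr),$$
so it suffices to bound the extreme eigenvalues of the standardized Wishart matrix $\frac{1}{n}W^\top W$ away from the corresponding universal constants with probability at least $1-2e^{-n/2}$.

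For the largest eigenvalue, the map $W\mapsto\sigma_{\max}(W)$ is $1$-Lipschitz with respect to the Frobenius norm, and the Gordon/Slepian--type inequality yields $\mathbb{E}[\sigma_{\max}(W)] \leqslant \sqrt{n}+\sqrt{d}$. Gaussian concentration then gives
$$\mathbb{P}\bigl(\sigma_{\max}(W) \geqslant \sqrt{n}+\sqrt{d}+t\bigr) \leqslant e^{-t^2/2}.$$
Setting $t=\sqrt{n}$ and invoking $d\leqslant n$ produces $\sigma_{\max}(W)\leqslant 3\sqrt{n}$ with probability at least $1-e^{-n/2}$, hence $\gamma_{\max}(W^\top W/n)\leqslant 9$, which combined with the first displayed inequality gives the first claim.

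For the smallest eigenvalue I would repeat the same recipe: $\sigma_{\min}(W)$ is also $1$-Lipschitz in $W$, and the Davidson--Szarek lower bound $\mathbb{E}[\sigma_{\min}(W)] \geqslant \sqrt{n}-\sqrt{d}$ combined with Gaussian concentration yields
$$\mathbb{P}\bigl(\sigma_{\min}(W) \leqslant \sqrt{n}-\sqrt{d}-t\bigr) \leqslant e^{-t^2/2},$$
and choosing $t$ of order $\sqrt{n}$ delivers $\sigma_{\min}(W)\geqslant\sqrt{n}/3$ with probability at least $1-e^{-n/2}$, giving $\gamma_{\min}(W^\top W/n)\geqslant 1/9$ and hence $\gamma_{\max}((W^\top W/n)^{-1})\leqslant 9$. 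Taking the union of the two failure events accounts for the factor of $2$ in $2e^{-n/2}$.

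The main obstacle is the lower tail: the crude inequality $\sqrt{n}-\sqrt{d}-t\geqslant\sqrt{n}/3$ with $t\propto\sqrt{n}$ is only genuinely informative when $d$ is a sufficiently small fraction of $n$, so the absolute constant $9$ in the statement should be read as holding once $d/n$ is bounded by a small enough universal constant (the paper's Assumption~\ref{ass:distribution_groupstructure} ensures $d_{\max}\leqslant c_2 n$, so the argument is applied groupwise with constants rescaled accordingly). Apart from this constant bookkeeping, the proof is a direct two-sided Gaussian concentration for singular values.
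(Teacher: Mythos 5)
The paper does not actually prove this lemma --- it imports it verbatim as Lemma 9 of \cite{wainwright2009sharp} and uses it as a black box --- so the only meaningful comparison is with the proof in that source, and your argument is essentially that proof: whiten via $X = W\Theta^{1/2}$, reduce to the extreme singular values of a standard Gaussian matrix, and apply the Davidson--Szarek concentration bounds $\mathbb{P}(\sigma_{\max}(W)\geqslant \sqrt{n}+\sqrt{d}+t)\leqslant e^{-t^2/2}$ and its lower-tail analogue. Your treatment of the first claim is complete and correct: with $t=\sqrt{n}$ and $d\leqslant n$ you get $\sigma_{\max}(W)\leqslant 3\sqrt{n}$, hence $\gamma_{\max}(X^\top X/n)\leqslant 9\gamma_{\max}(\Theta)$ with probability $1-e^{-n/2}$.

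Your caveat about the second claim is not mere ``constant bookkeeping'' --- it is a genuine defect of the lemma as stated, and you are right to flag it. Under only $d\leqslant n$ the lower-tail bound $\mathbb{P}(\sigma_{\min}(W)\leqslant\sqrt{n}-\sqrt{d}-t)\leqslant e^{-t^2/2}$ is vacuous for $t=\sqrt{n}$, and indeed the claim is false at $d=n$, $\Theta=I$: the smallest singular value of a square Gaussian matrix is of order $n^{-1/2}$, so $\gamma_{\max}((X^\top X/n)^{-1})$ diverges. Even when $d/n$ is bounded by a small constant, requiring $\sqrt{n}-\sqrt{d}-t\geqslant\sqrt{n}/3$ forces $t\leqslant\tfrac{2}{3}\sqrt{n}-\sqrt{d}<\sqrt{n}$, so this route yields a failure probability $e^{-cn}$ with $c<1/2$ rather than the advertised $e^{-n/2}$; one can only recover the displayed form by absorbing the discrepancy into the constant in the exponent or into the constant $9$. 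Since the lemma is inherited from \cite{wainwright2009sharp} and is applied in this paper in a regime where $d_{\max}\leqslant c_2 n$ (Assumption~\ref{ass:distribution_groupstructure}), your reading --- that the constant $9$ and the rate should be understood up to universal constants once $d/n$ is small enough --- is the correct way to make the statement rigorous, and your proof is otherwise sound.
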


As we assume that $|\mathbf{S}| \leqslant n$ and $X_{\mathbf{S}\mathbf{S}} \sim N\left( 0, \Theta_{\mathbf{S}\mathbf{S}}\right)$,
then  Lemma~\ref{lem:wainwright9} implies
\begin{equation*}
  \mathbb{P}\left(\gamma_{\max}(Q_{\mathbf{S}\mathbf{S}}) \geqslant 9\gamma_{\max}(\Theta_{\mathbf{S}\mathbf{S}})\right) \leqslant 2\exp(-\frac{n}{2}),
\end{equation*}
and also
\begin{equation*}
\mathbb{P}\left(\gamma_{\min}(\Theta_{\mathbf{S}\mathbf{S}}) \geqslant 9\gamma_{\min}(Q_{\mathbf{S}\mathbf{S}})\right) \leqslant 2\exp(-\frac{n}{2}).
\end{equation*}

Thus, by assuming that
\begin{equation*}
    \lambda_n|\mathbf{S}|^{\frac{1}{2}} \leqslant \min\bigg\{\frac{3\gamma_{\min}(\Theta)\beta_{\min}^*}{A_{\mathbf{S}}},\frac{\tau  \gamma^{\frac{3}{2}}_{\min}(\Theta) a_{\mathbf{S}^c} \beta_{\min}^*}{8A_{\mathbf{S}} \sum\limits_{g\in \mathsf{G}_{\mathbf{S}}} w_g \sqrt{\left|G_g \cap \mathbf{S}\right|}}\bigg\},
\end{equation*}
we have 
\begin{equation*}
    \lambda_n|\mathbf{S}|^{\frac{1}{2}} \leqslant \min\bigg\{\frac{\gamma_{\min}\left(Q_{\mathbf{S}\mathbf{S}}\right) \beta^*_{\min}}{3 A_{\mathbf{S}}},\frac{\tau  \gamma^{\frac{3}{2}}_{\min}(Q_{\mathbf{S}\mathbf{S}}) a_{\mathbf{S}^c} \beta^*_{\min}}{72 A_{\mathbf{S}} \sum\limits_{g\in \mathsf{G}_{\mathbf{S}}} w_g \sqrt{\left|G_g \cap \mathbf{S}\right|}}\bigg\}
\end{equation*}
holds with high probability.\\[0.2in]

\noindent\textbf{Condition \eqref{da2}}

For any $j\in \mathbf{S}^c$, $X_j \in \mathbb{R}^n$ is zero-mean Gaussian. Following the decomposition in \citet{wainwright2009sharp}, we have
\begin{equation}
\label{eq:wainde}
    X_j^\top = \Theta_{j\mathbf{S}} \Theta_{\mathbf{S}\mathbf{S}}^{-1}
     X_{\mathbf{S}}^\top
    + E_j^\top,
\end{equation}
where $E_j$ are i.i.d from $N\left(0, \left[\Theta_{\mathbf{S}^c\mathbf{S}^c|\mathbf{S}}\right]_{jj}\right)$ with
$\Theta_{\mathbf{S}^c\mathbf{S}^c|\mathbf{S}}=\Theta_{\mathbf{S}^c\mathbf{S}^c} - \Theta_{\mathbf{S}^c\mathbf{S}}\left(\Theta_{\mathbf{S}\mathbf{S}}\right)^{-1}\Theta_{\mathbf{S}\mathbf{S}^c}$. Let $ E_{\mathbf{S}^c}$ be an $|S^c| \times n$ matrix, with each row representing $E_j$ for an element $j\in \mathbf{S}^c$, then we have

\begin{equation}
\label{eq:xirecall}
    \begin{aligned}
        Q_{\mathbf{S}^c\mathbf{S}} Q_{\mathbf{S}\mathbf{S}}^{-1} \mathbf{r}_\mathbf{S} &= X^\top_{\mathbf{S}^c}  X_{\mathbf{S}}(X^\top_{\mathbf{S}}  X_{\mathbf{S}})^{-1} \mathbf{r}_\mathbf{S}\\
        &\stackrel{\text{\eqref{eq:wainde}}}{=}  \left(\Theta_{\mathbf{S}^c\mathbf{S}} \Theta_{\mathbf{S}\mathbf{S}}^{-1}
     X_{\mathbf{S}}^\top
    +  E^\top_{\mathbf{S}^c} \right)X_{\mathbf{S}}(X^\top_{\mathbf{S}}  X_{\mathbf{S}})^{-1} \mathbf{r}_\mathbf{S}\\&=\Theta_{\mathbf{S}^c\mathbf{S}} \Theta_{\mathbf{S}\mathbf{S}}^{-1}\mathbf{r}_\mathbf{S} +
       E^\top_{\mathbf{S}^c}X_{\mathbf{S}}(X^\top_{\mathbf{S}}  X_{\mathbf{S}})^{-1} \mathbf{r}_\mathbf{S}\\
       & := \Theta_{\mathbf{S}^c\mathbf{S}} \Theta_{\mathbf{S}\mathbf{S}}^{-1}\mathbf{r}_\mathbf{S} + \eta.
    \end{aligned}
\end{equation}

The preceding expression prompts us to establish an upper bound for the dual norm of $\eta$. To achieve this, we begin by examining the scenario in which \underline{$X_{\mathbf{S}}$ is fixed}. Our objective now is to derive the covariance matrix of $\eta$. For any $j\in \mathbf{S}^c$, we have $$\mathbb{E}[\eta_j] = \mathbb{E}\left[  E^\top_{j}X_{\mathbf{S}}(X^\top_{\mathbf{S}}  X_{\mathbf{S}})^{-1} \mathbf{r}_\mathbf{S}\right] = 0 .$$

For any pair of $j, k \in \mathbf{S}^c$, we have
\begin{align*}
\mathbb{E}[\eta_j\eta_k]
={}& \mathbb{E}\left[ E_j^\top X_{\mathbf{S}}(X^\top_{\mathbf{S}}  X_{\mathbf{S}})^{-1}\mathbf{r}_\mathbf{S} E_k^\top X_{\mathbf{S}}(X^\top_{\mathbf{S}}  X_{\mathbf{S}})^{-1}\mathbf{r}_\mathbf{S} \right] \\
={}& \mathbb{E}\left[ \mathbf{r}_\mathbf{S}^\top (X^\top_{\mathbf{S}}  X_{\mathbf{S}})^{-1} X_{\mathbf{S}}^\top E_jE_k^\top  X_{\mathbf{S}} (X^\top_{\mathbf{S}}  X_{\mathbf{S}})^{-1} \mathbf{r}_\mathbf{S}  \right] \\
={}& \mathbf{r}_\mathbf{S}^\top (X^\top_{\mathbf{S}}  X_{\mathbf{S}})^{-1} X_{\mathbf{S}}^\top \mathbb{E}\left[ E_j E_k^\top  \right]  X_{\mathbf{S}}(X^\top_{\mathbf{S}}  X_{\mathbf{S}})^{-1} \mathbf{r}_\mathbf{S},
\end{align*}
where
\begin{align*}
&\mathbb{E}\left[ E_j E_k^\top  \right]
\stackrel{\text{\eqref{eq:wainde}}}{=}{} \mathbb{E} \left[ \left(X_j - X_{\mathbf{S}} \Theta_{\mathbf{S}\mathbf{S}}^{-1}
    \Theta_{j\mathbf{S}}^\top\right)\left(  X_k^\top - \Theta_{k\mathbf{S}} \Theta_{\mathbf{S}\mathbf{S}}^{-1}
     X_{\mathbf{S}}^\top \right)\right]\\
={}& \mathbb{E}\left[X_j X_k^\top \right] - \mathbb{E}\left[ X_{\mathbf{S}}\Theta_{\mathbf{S}\mathbf{S}}^{-1} \Theta_{j\mathbf{S}} X_k^\top \right] - \mathbb{E}\left[ X_j\Theta_{k\mathbf{S}}\Theta_{\mathbf{S}\mathbf{S}}^{-1} X_{\mathbf{S}}^\top\mid  X_{\mathbf{S}}\right] + \mathbb{E}\left[  X_{\mathbf{S}} \Theta_{\mathbf{S}\mathbf{S}}^{-1} \Theta_{j\mathbf{S}}^\top \Theta_{k\mathbf{S}} \Theta_{\mathbf{S}\mathbf{S}}^{-1}  X_{\mathbf{S}}^\top \right] \\
={}& \mathbb{E}\left[X_j X_k^\top \right] -  X_{\mathbf{S}} \Theta_{\mathbf{S}\mathbf{S}}^{-1} \Theta_{j\mathbf{S}} \mathbb{E}\left[ X_k^\top \right] - \mathbb{E}\left[ X_j \right] \Theta_{k\mathbf{S}} \Theta_{\mathbf{S}\mathbf{S}}^{-1}  X_{\mathbf{S}}^\top +  X_{\mathbf{S}} \Theta_{\mathbf{S}\mathbf{S}}^{-1} \Theta_{j\mathbf{S}} \Theta_{k\mathbf{S}} \Theta_{\mathbf{S}\mathbf{S}}^{-1}  X_{\mathbf{S}}^\top \\
={}& \mathbb{E}\left[  X_j X_k^\top \right] - \mathbb{E}\left[ X_j \right] \mathbb{E}\left[ X_k^\top \right] = \operatorname{Cov}\left[X_j, X_k^\top \right] = \left(\Theta_{\mathbf{S}^c\mathbf{S}^c|\mathbf{S}} \right)_{jk} \mathbf{I}_{n\times n}.
\end{align*}
Consequently,
\begin{align*}
\mathbb{E}[\eta_j\eta_k]
={}&\mathbf{r}_\mathbf{S}^\top (X^\top_{\mathbf{S}}  X_{\mathbf{S}})^{-1} X_{\mathbf{S}}^\top \mathbb{E}\left[ E_j E_k^\top  \right]  X_{\mathbf{S}}(X^\top_{\mathbf{S}}  X_{\mathbf{S}})^{-1} \mathbf{r}_\mathbf{S} \\
={}& \mathbf{r}_\mathbf{S}^\top (X^\top_{\mathbf{S}}  X_{\mathbf{S}})^{-1} X_{\mathbf{S}}^\top \left(\Theta_{\mathbf{S}^c\mathbf{S}^c|\mathbf{S}} \right)_{jk} \mathbf{I}_{n\times n} X_{\mathbf{S}}(X^\top_{\mathbf{S}}  X_{\mathbf{S}})^{-1} \mathbf{r}_\mathbf{S}\\
={}& \mathbf{r}_\mathbf{S}^\top (X^\top_{\mathbf{S}}  X_{\mathbf{S}})^{-1}\mathbf{r}_\mathbf{S}\cdot \left(\Theta_{\mathbf{S}^c\mathbf{S}^c|\mathbf{S}} \right)_{jk} = \frac{\mathbf{r}_\mathbf{S}^\top (Q_{\mathbf{S}\mathbf{S}})^{-1}\mathbf{r}_\mathbf{S}}{n} \cdot \left(\Theta_{\mathbf{S}^c\mathbf{S}^c|\mathbf{S}} \right)_{jk}.
\end{align*}

And we have $\operatorname{Cov}(\eta) = \frac{\mathbf{r}_\mathbf{S}^\top (Q_{\mathbf{S}\mathbf{S}})^{-1}\mathbf{r}_\mathbf{S}}{n} \cdot \left(\Theta_{\mathbf{S}^c\mathbf{S}^c|\mathbf{S}} \right) := \Xi$.

\begin{lemma}(Theorem 2.26 in \citet{wainwright2019high})
\label{lem:masset}

Let  $\left(X_{1}, \ldots, X_{n}\right)$  be a vector of i.i.d. standard Gaussian variables, and let  $f: \mathbb{R}^{n} \mapsto \mathbb{R}$  be a Lipschitz function with respect to the Euclidean norm and Lipschitz constant L. Then the variable  $f(X)-\mathbb{E}[f(X)]$  is sub-Gaussian with parameter at most  $L$, and hence

\begin{equation*}
 \mathbb{P}[|f(X)-\mathbb{E}[f(X)]| \geqslant t] \leqslant 2 \exp(-\frac{t^{2}}{2 L^{2}}) \quad \text { for all } t \geqslant 0 . 
\end{equation*} 

\end{lemma}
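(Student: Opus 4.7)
The plan is to prove this by the classical Herbst argument, starting from the Gaussian logarithmic Sobolev inequality and then applying the Chernoff bound. The target is to show that the centered random variable $Z := f(X) - \mathbb{E} f(X)$ has a sub-Gaussian moment generating function with parameter $L$, i.e., $\mathbb{E}[e^{\lambda Z}] \le e^{\lambda^2 L^2/2}$ for all $\lambda \in \mathbb{R}$; the tail bound then follows by a standard Chernoff argument and a union bound over the two tails.

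First I would reduce to a smooth, bounded setting. Since $f$ is $L$-Lipschitz, Rademacher's theorem gives that $f$ is differentiable almost everywhere with $\|\nabla f\|_2 \le L$. By convolving with a standard Gaussian mollifier of small bandwidth $\epsilon$, I obtain a smooth $f_\epsilon$ that is still $L$-Lipschitz and converges pointwise (and in $L^1(\gamma_n)$) to $f$. Proving the sub-Gaussian bound for each $f_\epsilon$ and taking $\epsilon \downarrow 0$ via Fatou/dominated convergence will suffice, so I may assume $f \in C^\infty(\mathbb{R}^n)$ with $\|\nabla f\|_2 \le L$ pointwise. I would also truncate if needed to ensure the moment generating function is finite, then remove the truncation at the end.

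Next I would invoke the Gaussian logarithmic Sobolev inequality of Gross: for any smooth $g$ with $\mathbb{E}[g^2] < \infty$,
\begin{equation*}
\mathbb{E}\bigl[g(X)^2 \log g(X)^2\bigr] - \mathbb{E}[g(X)^2]\,\log \mathbb{E}[g(X)^2] \;\le\; 2\,\mathbb{E}\bigl[\|\nabla g(X)\|_2^2\bigr].
\end{equation*}
Applying this to $g = e^{\lambda f/2}$ and noting $\|\nabla g\|_2^2 = (\lambda^2/4)\|\nabla f\|_2^2\,e^{\lambda f} \le (\lambda^2 L^2/4)\,e^{\lambda f}$ yields, with $H(\lambda) := \mathbb{E}[e^{\lambda f(X)}]$,
\begin{equation*}
\lambda H'(\lambda) - H(\lambda)\log H(\lambda) \;\le\; \tfrac{\lambda^2 L^2}{2}\,H(\lambda).
\end{equation*}
This is Herbst's differential inequality. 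Dividing by $\lambda^2 H(\lambda)$ turns the left-hand side into $\frac{d}{d\lambda}\bigl(\frac{1}{\lambda}\log H(\lambda)\bigr)$, so integrating from $0$ to $\lambda > 0$ (using the boundary value $\lim_{\lambda \to 0^+} \lambda^{-1}\log H(\lambda) = \mathbb{E} f(X)$ obtained by L'Hôpital) gives
\begin{equation*}
\log \mathbb{E}[e^{\lambda f(X)}] \;\le\; \lambda\, \mathbb{E} f(X) + \tfrac{\lambda^2 L^2}{2},
\end{equation*}
which is exactly the sub-Gaussian bound for $Z = f(X) - \mathbb{E} f(X)$. The same argument applied to $\lambda < 0$ (or by replacing $f$ with $-f$, which is also $L$-Lipschitz) handles the other sign.

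Finally, the Chernoff bound $\mathbb{P}(Z \ge t) \le \inf_{\lambda > 0} e^{-\lambda t} \mathbb{E}[e^{\lambda Z}] \le \inf_{\lambda > 0} e^{-\lambda t + \lambda^2 L^2/2}$, optimized at $\lambda = t/L^2$, yields $\mathbb{P}(Z \ge t) \le \exp(-t^2/(2L^2))$; doubling accounts for the two-sided tail. The main obstacle in this plan is the Gaussian log-Sobolev inequality itself, which is nontrivial: the cleanest route is the two-point (Bernoulli) log-Sobolev inequality combined with the central limit theorem (tensorization then passage to the limit), or alternatively a direct semigroup proof using the Ornstein--Uhlenbeck generator and its carré du champ. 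Either route is a well-known but substantial piece of Gaussian analysis; once it is in hand, Herbst's argument and the Chernoff step are short. The smoothing/approximation step is routine but must be stated carefully because $f$ is only almost-everywhere differentiable.
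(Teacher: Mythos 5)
The paper does not prove this lemma at all: it is imported verbatim as Theorem~2.26 of \citet{wainwright2019high}, so there is no in-paper argument to compare against. Judged on its own terms, your proposal is a correct and complete plan: it is the classical Herbst argument. The entropy computation is right (with $g=e^{\lambda f/2}$ the Gaussian log-Sobolev inequality gives $\lambda H'(\lambda)-H(\lambda)\log H(\lambda)\le \tfrac{\lambda^2L^2}{2}H(\lambda)$, which after dividing by $\lambda^2H(\lambda)$ integrates to $\log H(\lambda)\le \lambda\,\mathbb{E}f(X)+\tfrac{\lambda^2L^2}{2}$), the boundary value at $\lambda\to 0^+$ is justified because a Lipschitz function of a Gaussian vector has all exponential moments, the mollification step correctly reduces to smooth $L$-Lipschitz $f$, and the Chernoff optimization at $\lambda=t/L^2$ with a two-sided union bound gives exactly the stated tail. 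You are also right to flag that the entire weight of the proof rests on the Gaussian log-Sobolev inequality, which is a substantial standalone result (provable by tensorizing the two-point inequality and invoking the CLT, or by the Ornstein--Uhlenbeck semigroup argument); this is in fact the route that yields the sharp constant $2L^2$ in the exponent, whereas the more elementary Maurey--Pisier interpolation argument only gives a worse constant. So your proof buys a self-contained derivation of a result the paper simply cites, at the cost of having to establish log-Sobolev; nothing in the plan would fail.
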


To apply the concentration bound in Lemma \ref{lem:masset}, we define function $\Psi(u) = \left( \phi_{\mathbf{S}^c}^{*} \right) \left[ \Xi^{\frac{1}{2}} u \right]$. As  $\eta=\Xi^{\frac{1}{2}} W$ where $W \sim N(0,I_{|\mathbf{S}^c|\times |\mathbf{S}^c|})$, $(\phi_{\mathbf{S}}^c)^{*}(\eta)$ has the same distribution  as $\Psi(W)$  . We continue to show that $\Psi$ is a Lipschitz function given fixed $ X_{\mathbf{S}}$.

\begin{equation*}
    \begin{aligned}
\left| \Psi(u)-\Psi(v)\right|
\leqslant {}& \Psi(u-v) = \left( \phi_{\mathbf{S}}^c \right)^{*} \left[\Xi^{\frac{1}{2}} (u-v)\right] \\
\leqslant {}& a_{\mathbf{S}}^{-1} \left\| \Xi^{\frac{1}{2}} (u-v) \right\|_{\infty}\\
=         {}& a_{\mathbf{S}}^{-1} \bigg\|\left[\frac{\mathbf{r}_\mathbf{S}^\top (Q_{\mathbf{S}\mathbf{S}})^{-1}\mathbf{r}_\mathbf{S}}{n} \cdot \left(\Theta_{\mathbf{S}^c\mathbf{S}^c|\mathbf{S}} \right)\right]^{\frac{1}{2}}(u-v) \bigg\|_{\infty} \\
\leqslant {}& a_{\mathbf{S}}^{-1} \left\|\mathbf{r}_\mathbf{S}\right\|_2  n^{-\frac{1}{2}} \gamma^{\frac{1}{2}}_{\max} \left(Q_{\mathbf{S}\mathbf{S}}^{-1}\right) \gamma^{\frac{1}{2}}_{\max} \left(\Theta_{\mathbf{S}^c\mathbf{S}^c|\mathbf{S}}\right) \left\|u-v\right\|_2.
\end{aligned}
\end{equation*}

Thus, the corresponding Lipstichiz constant is
$$L_{\eta} = a_{\mathbf{S}}^{-1} \left\|\mathbf{r}_\mathbf{S}\right\|_2  n^{-\frac{1}{2}} \gamma^{\frac{1}{2}}_{\max} \left(Q_{\mathbf{S}\mathbf{S}}^{-1}\right)\gamma^{\frac{1}{2}}_{\max} \left(\Theta_{\mathbf{S}^c\mathbf{S}^c|\mathbf{S}}\right).$$

On the other hand, suppose that $    \mathbb{E}\left[(\phi_{\mathbf{S}}^c)^{*}(\eta)\right] \leqslant  \frac{\tau}{4}$, since $\Psi$ is a Lipschitiz function, by applying $t = \frac{\tau}{4}$ in  concentration Lemma~\ref{lem:masset} on Lipschitz functions of multivariate standard random variables, we have 

\begin{equation*}
    \begin{aligned}
    \mathbb{P}\left(\left(\phi_{\mathbf{S}}^c\right)^{*} [\eta] > \frac{\tau}{2} \right) &= \mathbb{P} \left(\Psi(W) > \frac{\tau}{2} \right) = \mathbb{P} \left(\Psi(W) - \frac{\tau}{4}  > \frac{\tau}{4} \right)\\& \leqslant  
    \mathbb{P} \left(\Psi(W) - E\left[ \left(\phi_{\mathbf{S}}^c\right)^{*} (\eta) \right]  > \frac{\tau}{4} \right)\\& =
    \mathbb{P} \left(\Psi(W) - E\left[ \Psi(W)\right]  > \frac{\tau}{4} \right) \leqslant  \exp\left(-\frac{\tau^2}{4L^2_{\eta}}\right).
    \end{aligned}
\end{equation*}

Now we further assume that $\{\gamma_{\max}(Q_{\mathbf{S}\mathbf{S}}^{-1}) \leqslant \frac{9}{\gamma_{\min}(\Theta_{\mathbf{S}\mathbf{S}})}\}$. Under this condition, we have
\begin{equation}
\label{eq:expx3}
    L_{\eta} = a_{\mathbf{S}}^{-1} \left\|\mathbf{r}_\mathbf{S}\right\|_2  n^{-\frac{1}{2}} \gamma^{\frac{1}{2}}_{\max} \left(Q_{\mathbf{S}\mathbf{S}}^{-1}\right) \gamma^{\frac{1}{2}}_{\max} \left(\Theta_{\mathbf{S}^c\mathbf{S}^c|\mathbf{S}}\right)
\leqslant  \frac{3a_{\mathbf{S}}^{-1} \left\|\mathbf{r}_\mathbf{S}\right\|_2  \gamma^{\frac{1}{2}}_{\max}\left(\Theta_{\mathbf{S}^c\mathbf{S}^c|\mathbf{S}}\right)}{\left(n\gamma_{\min}(\Theta_{\mathbf{S}\mathbf{S}})\right)^{\frac{1}{2}}} .
\end{equation}

\begin{lemma}(Sudakov inequality, Theorem 5.27 in \citet{wainwright2019high})
\label{lem:suda}

 If $X$ and $Y$ are a.s. bounded, centered Gaussian processes on $T$ such that
$$
\mathbb{E}\left(X_t-X_s\right)^2 \leq \mathbb{E}\left(Y_t-Y_s\right)^2
$$
then
$$
\mathbb{E} \sup _T X_t \leq \mathbb{E} \sup _T Y_t .
$$
\end{lemma}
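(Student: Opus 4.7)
The plan is to prove the Sudakov--Fernique inequality via \emph{Gaussian interpolation} combined with a smooth surrogate for the maximum, which is the standard route in Wainwright's textbook. By separability and a monotone approximation argument, I first reduce to the case $|T|=N<\infty$, where $X=(X_1,\ldots,X_N)$ and $Y=(Y_1,\ldots,Y_N)$ may be taken independent of each other without loss of generality (since only their covariances enter the hypothesis and conclusion). The endpoint distributions I care about are then $X$ and $Y$, and the goal is to compare $\mathbb{E}\max_i X_i$ with $\mathbb{E}\max_i Y_i$.

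Next, I introduce two devices. First, the interpolant $Z(\theta)=\sqrt{1-\theta}\,X+\sqrt{\theta}\,Y$ for $\theta\in[0,1]$, which is centered Gaussian for each $\theta$, agrees with $X$ at $\theta=0$ and with $Y$ at $\theta=1$, and has covariance $\mathrm{Cov}(Z(\theta))=(1-\theta)\mathrm{Cov}(X)+\theta\,\mathrm{Cov}(Y)$. Second, the log-sum-exp smoothing $f_\beta(z)=\beta^{-1}\log\sum_{i=1}^N e^{\beta z_i}$, which increases monotonically to $\max_i z_i$ as $\beta\to\infty$. The two properties of $f_\beta$ that drive the whole argument are: the gradient is a probability vector (nonnegative entries summing to one), and the Hessian is row-sum-zero with off-diagonal entries $\partial_i\partial_j f_\beta\le 0$ for $i\ne j$.

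The core computation is to differentiate $\varphi(\theta)=\mathbb{E}[f_\beta(Z(\theta))]$ and apply Gaussian integration by parts (Stein's identity) on the independent pair $(X,Y)$. This converts $\varphi'(\theta)$ into an expectation of the Hessian of $f_\beta$ weighted by the covariance increment $c_{ij}=\mathbb{E}[Y_iY_j]-\mathbb{E}[X_iX_j]$. Using the row-sum-zero identity to symmetrize, the result rearranges into a sum over pairs $i\ne j$ of $-(c_{ii}+c_{jj}-2c_{ij})\,\mathbb{E}\partial_i\partial_j f_\beta(Z(\theta))$. The key observation is that $c_{ii}+c_{jj}-2c_{ij}=\mathbb{E}(Y_i-Y_j)^2-\mathbb{E}(X_i-X_j)^2\ge 0$ by hypothesis, while $\partial_i\partial_j f_\beta\le 0$ off-diagonal, so $\varphi'(\theta)\ge 0$. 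This yields $\mathbb{E}[f_\beta(X)]\le\mathbb{E}[f_\beta(Y)]$; sending $\beta\to\infty$ by monotone/dominated convergence (enabled by the a.s.\ boundedness hypothesis) gives the finite-$T$ inequality, and taking a supremum over finite subsets of $T$ finishes the general case.

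The hard part will not be the algebra, which is encoded cleanly in the sign pattern of the softmax Hessian; rather, it is the measure-theoretic bookkeeping: justifying the interchange of $d/d\theta$ with the expectation, verifying Stein's identity on the joint vector $(X,Y)\in\mathbb{R}^{2N}$, and carefully lifting the finite-dimensional inequality to a general separable index set $T$ under a.s.\ boundedness. These steps are routine but require care with dominating integrable envelopes supplied by the boundedness assumption.
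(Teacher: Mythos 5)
Your proposal is a correct sketch of the standard Gaussian-interpolation proof of the Sudakov--Fernique inequality: the sign bookkeeping (softmax Hessian off-diagonals nonpositive, increments $c_{ii}+c_{jj}-2c_{ij}=\mathbb{E}(Y_i-Y_j)^2-\mathbb{E}(X_i-X_j)^2\geq 0$, hence $\varphi'(\theta)\geq 0$) is right, and the reduction to finite $T$ and the $\beta\to\infty$ limit are handled appropriately. Note, however, that the paper does not prove this lemma at all --- it is imported verbatim as Theorem 5.27 of \citet{wainwright2019high} --- and the argument you give is precisely the one in that cited source, so there is nothing to compare beyond observing that you have correctly reconstructed the textbook proof.
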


\begin{lemma} (Exercise 2.12 in \citet{wainwright2019high})
\label{lem:Upper bounds for sub-Gaussian maxima}
    Let \( X_1, \ldots, X_n \) be independent \(\sigma^2\)-subgaussian random variables. Then
\[
\mathbb{E}[\max_{1 \leq i \leq n} |X_i|] \leq 2\sqrt{\sigma^2\log n}.
\]
\end{lemma}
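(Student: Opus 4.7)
The plan is to use the standard Chernoff/MGF approach combined with Jensen's inequality to bound the expected maximum in terms of the sub-Gaussian parameter.

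First, I would convert the one-sided sub-Gaussian MGF bound to a bound on $\mathbb{E}[e^{\lambda |X_i|}]$. Starting from the defining inequality $\mathbb{E}[e^{\lambda X_i}] \leq e^{\lambda^2 \sigma^2/2}$ (valid for all $\lambda \in \mathbb{R}$), I use the pointwise bound $e^{\lambda |X_i|} \leq e^{\lambda X_i} + e^{-\lambda X_i}$, which gives $\mathbb{E}[e^{\lambda |X_i|}] \leq 2 e^{\lambda^2 \sigma^2 / 2}$ for every $\lambda \geq 0$.

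Next, I apply Jensen's inequality to $x \mapsto e^{\lambda x}$ (which is convex) and exchange the maximum with the exponential: for any $\lambda > 0$,
\begin{equation*}
\exp\!\Bigl(\lambda \, \mathbb{E}[\max_{1 \leq i \leq n} |X_i|]\Bigr)
\leq \mathbb{E}\bigl[\exp(\lambda \max_i |X_i|)\bigr]
= \mathbb{E}[\max_i e^{\lambda |X_i|}]
\leq \sum_{i=1}^n \mathbb{E}[e^{\lambda |X_i|}]
\leq 2n \, e^{\lambda^2 \sigma^2 / 2}.
\end{equation*}
Taking logarithms and dividing by $\lambda$ yields
\begin{equation*}
\mathbb{E}[\max_i |X_i|] \leq \frac{\log(2n)}{\lambda} + \frac{\lambda \sigma^2}{2}.
\end{equation*}

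Finally, I optimize the right-hand side over $\lambda > 0$. Setting $\lambda = \sqrt{2\log(2n)/\sigma^2}$ makes the two terms equal, giving $\mathbb{E}[\max_i |X_i|] \leq \sqrt{2 \sigma^2 \log(2n)}$. To land on the stated form, I use $\log(2n) = \log 2 + \log n \leq 2 \log n$ for $n \geq 2$, so $\sqrt{2 \sigma^2 \log(2n)} \leq 2\sqrt{\sigma^2 \log n}$. The edge case $n = 1$ is handled directly since $\mathbb{E}|X_1| \leq \sigma\sqrt{2/\pi}$, which is consistent with interpreting the bound as $0$ when $\log n = 0$ only if one strengthens it slightly; more simply, the statement as written is standardly taken for $n \geq 2$ in Wainwright's exposition.

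The only mild obstacle is the factor-of-two accounting to arrive at the clean constant $2$ in front of $\sqrt{\sigma^2 \log n}$: the raw Chernoff argument naturally produces $\sqrt{2\sigma^2 \log(2n)}$, and one must absorb $\log 2$ using $n \geq 2$. No deeper structure is needed; the argument is entirely a one-shot application of Jensen plus union-bounding the MGF.
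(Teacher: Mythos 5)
Your argument is correct: the Jensen--plus--union-bound on the MGF, followed by optimizing $\lambda$ and absorbing $\log 2$ via $n\ge 2$, is the standard proof of this fact. The paper itself does not prove this lemma (it is cited as Exercise 2.12 of Wainwright), but your technique is exactly the one the paper deploys for its analogous Lemma~\ref{ll1}.2.3 on maxima of group norms, so there is nothing genuinely different to compare. One trivial aside: your claim that $\mathbb{E}|X_1|\le \sigma\sqrt{2/\pi}$ for a general $\sigma^2$-sub-Gaussian variable is not quite right (a scaled Rademacher variable violates it), but since the $n=1$ case is excluded by convention this does not affect the proof.
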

On the other hand, for any $u_t,u_s$, we have 
\begin{equation*}
\begin{aligned}
      &\mathbb{E}(u_t^\top \eta - u_s^\top  \eta )^2 = \mathbb{E}(u_t^\top \Xi^{\frac{1}{2}} W - u_s^\top \Xi^{\frac{1}{2}} W )^2 = (u_t-u_s)^\top \Xi (u_t-u_s) \\
     \leqslant & ||u_t-u_s||^2_2\gamma_{\max} \left(\Xi\right) = \mathbb{E}( \gamma^{\frac{1}{2}} _{\max} \left(\Xi\right)u_t^\top W -  \gamma^{\frac{1}{2}} _{\max} \left(\Xi\right)u_s^\top  W )^2
\end{aligned}
\end{equation*}
By using Sudakov-Fernique inequality in Lemma~\ref{lem:suda},  we have
\begin{equation*}
\mathbb{E}\Big[\sup_{\phi_{\mathbf{S}}^c(u)\leqslant 1} u^\top \Xi^{\frac{1}{2}} W \Big]
\leqslant \mathbb{E}\Big[\sup_{\phi_{\mathbf{S}}^c(u)\leqslant 1}\gamma^{\frac{1}{2}} _{\max} \left(\Xi\right)u^\top W \Big]
\end{equation*}
Consequently,
\begin{equation}
    \label{eq:expxi}
    \begin{aligned}
\mathbb{E}\Big[(\phi_{\mathbf{S}}^c)^{*}(\eta)\Big]
={}&  \mathbb{E}\Big[\sup_{\phi_{\mathbf{S}}^c(u)\leqslant 1}u^\top \eta \Big] = \mathbb{E}\Big[\sup_{\phi_{\mathbf{S}}^c(u)\leqslant 1} u^\top \Xi^{\frac{1}{2}} W \Big]
\\
\leqslant{}& \gamma^{\frac{1}{2}} _{\max} \left(\Xi\right)\mathbb{E}\Big[\sup_{\phi_{\mathbf{S}}^c(u)\leqslant 1}u^\top W \Big] = \gamma_{\max} \left(\Xi\right)^{\frac{1}{2}} \mathbb{E}\Big[(\phi_{\mathbf{S}}^c)^{*}(W)\Big].
\end{aligned}
\end{equation}

 Notice that
\begin{equation}
\label{eq:mapping}
    \begin{aligned}
      \left\|\mathbf{r}_\mathbf{S}\right\|^2_2  & \leqslant |\mathbf{S}|\max_{j \in \mathbf{S}}\mathbf{r}^2_j =  |\mathbf{S}| \Big(\max_{j \in \mathbf{S}}\{\beta^*_j \cdot \sum\limits_{g \in \mathsf{G}^G_{ \mathbf{S}}, G_g \cap j \neq \emptyset} \frac{w_g}{\|\beta^*_{G_g \cap \mathbf{S}}\|_2}\}\Big)^2 \\ & \leqslant  |\mathbf{S}| \Big(\max_{j \in \mathbf{S}}\{|\beta^*_j|\}\cdot \max\{\sum\limits_{g \in \mathsf{G}^G_{ \mathbf{S}}, G_g \cap j \neq \emptyset} \frac{w_g}{\|\beta^*_{G_g \cap \mathbf{S}}\|_2}\}\Big)^2 \\ & \leqslant 
      |\mathbf{S}| \Big(\frac{\max\limits_{j \in \mathbf{S}}\{|\beta^*_j|\}}{\beta_{\min}^*}\cdot \max\{\sum\limits_{g \in \mathsf{G}^G_{ \mathbf{S}}, G_g \cap j \neq \emptyset} \frac{w_g}{\sqrt{|G_g \cap \mathbf{S}}}\}\Big)^2 \\ & \leqslant 
      |\mathbf{S}| \Big(\frac{\max\limits_{j \in \mathbf{S}}\{|\beta^*_j|\}}{\beta_{\min}^*}\cdot \max\{\sum\limits_{g \in \mathsf{G}^G_{ \mathbf{S}}, G_g \cap j \neq \emptyset} w_g\}\Big)^2  \\ & \leqslant 
      |\mathbf{S}| \Big(\frac{\max\limits_{j \in \mathbf{S}}\{|\beta^*_j|\}}{\beta_{\min}^*}\cdot h_{\max}(\mathbf{G_{S}}) \max\limits_{g \in \mathsf{G}^G_{\mathbf{S}}}w_g\}\Big)^2
       \\ & \leqslant   \Big(\frac{\max\limits_{j \in \mathbf{S}}\{|\beta^*_j|\}}{\beta_{\min}^*}\Big)^2  |\mathbf{S}| A^2_{\mathbf{S}} =   \max\limits_{j \in \mathbf{S}}\{(\beta^*_j)^2\}|\mathbf{S}| \Big(\frac{A_{\mathbf{S}}}{\beta_{\min}^*}\Big)^2
       \\& \lesssim \frac{ \max\limits_{j \in \mathbf{S}}\{(\beta^*_j)^2\}}{\lambda^2_n}
    \end{aligned}
\end{equation}

Thus,  if $ X_{\mathbf{S}}$ satisfies $\gamma_{\max}(Q_{\mathbf{S}\mathbf{S}}^{-1}) \leqslant \frac{9}{\gamma_{\min}(\Theta_{\mathbf{S}\mathbf{S}})}$, we have
\begin{equation}
\label{eq:exp4}
   \begin{aligned}
    \mathbb{E}\left[(\phi_{\mathbf{S}}^c)^{*}(\eta)\right] & \stackrel{\text{\eqref{eq:expxi}}}{\leqslant } \gamma_{\max} \left(\Xi\right)^{\frac{1}{2}} \mathbb{E}\left[(\phi_{\mathbf{S}}^c)^{*}(W)\right] \\
    &\leqslant  \frac{\left\|\mathbf{r}_\mathbf{S}\right\|_2 \gamma^{-\frac{1}{2}}_{\min} \left(Q_{\mathbf{S}\mathbf{S}}\right)\gamma^{\frac{1}{2}}_{\max} \left(\Theta_{\mathbf{S}^c\mathbf{S}^c|\mathbf{S}}\right)}{n^{\frac{1}{2}}}
     \mathbb{E}\left[(\phi_{\mathbf{S}}^c)^{*}(W)\right]\\
    &\leqslant  \frac{\left\|\mathbf{r}_\mathbf{S}\right\|_2 3 \gamma^{\frac{1}{2}}_{\max} \left(\Theta_{\mathbf{S}^c\mathbf{S}^c|\mathbf{S}}\right)}{(n\gamma_{\min}(\Theta_{\mathbf{S}\mathbf{S}}))^{\frac{1}{2}}}\mathbb{E}\left[(\phi_{\mathbf{S}}^c)^{*}(W)\right]\\
    &\stackrel{\text{\eqref{eq:orderre}}}{\leqslant } \frac{\left\|\mathbf{r}_\mathbf{S}\right\|_2 3 \gamma^{\frac{1}{2}}_{\max} \left(\Theta_{\mathbf{S}^c\mathbf{S}^c|\mathbf{S}}\right)}{(n\gamma_{\min}(\Theta_{\mathbf{S}\mathbf{S}}))^{\frac{1}{2}}}\mathbb{E}\left[ a_{\mathbf{S}^c}^{-1}\|W\|_{\infty}\right]\\
    &\leqslant   \frac{\left\|\mathbf{r}_\mathbf{S}\right\|_2 3 \gamma^{\frac{1}{2}}_{\max} \left(\Theta_{\mathbf{S}^c\mathbf{S}^c|\mathbf{S}}\right)}{ a_{\mathbf{S}^c}(n\gamma_{\min}(\Theta_{\mathbf{S}\mathbf{S}}))^{\frac{1}{2}}}\mathbb{E}\left[\|W\|_{\infty}\right]\\
    &\stackrel{\text{Lemma~\ref{lem:Upper bounds for sub-Gaussian maxima}}}{\leqslant } \frac{6\left\|\mathbf{r}_\mathbf{S}\right\|_2  \gamma^{\frac{1}{2}}_{\max} \left(\Theta_{\mathbf{S}^c\mathbf{S}^c|\mathbf{S}}\right)}{ a_{\mathbf{S}^c}(n\gamma_{\min}(\Theta_{\mathbf{S}\mathbf{S}}))^{\frac{1}{2}}}\sqrt{\log(p-|\mathbf{S}|)}\leqslant  \frac{\tau}{4},
\end{aligned}
\end{equation}
where the last inequality holds as Assumption \ref{ass:Irrepresentable_condition} implies that
$$ n \gtrsim \frac{ \max\limits_{j \in \mathbf{S}}\{(\beta^*_j)^2\}\log(p-|\mathbf{S}|)}{a^2_{\mathbf{S}^c}\lambda^2_n} \stackrel{\text{\eqref{eq:mapping}}}{\gtrsim }\frac{\left\|\mathbf{r}_\mathbf{S}\right\|^2_2\log(p-|\mathbf{S}|)}{a^2_{\mathbf{S}^c}}  \geqslant \frac{576\left\|\mathbf{r}_\mathbf{S}\right\|^2_2\log(p-|\mathbf{S}|)\gamma_{\max} \left(\Theta_{\mathbf{S}^c\mathbf{S}^c|\mathbf{S}}\right)}{a^2_{\mathbf{S}^c}\gamma_{\min}(\Theta_{\mathbf{S}\mathbf{S}})\tau^2}.$$
Consequently, Equation \eqref{eq:expx3} and \eqref{eq:exp4} together implies

\begin{equation}
\label{eq:condip}
    \begin{aligned}
      & \mathbb{P}\Big(\left(\phi_{\mathbf{S}}^c\right)^{*} [\eta] > \frac{\tau}{2} \mid X_{\mathbf{S}}, \gamma_{\max}(Q_{\mathbf{S}\mathbf{S}}^{-1}) \leqslant \frac{9}{\gamma_{\min}(\Theta_{\mathbf{S}\mathbf{S}})}\Big) \\&\leqslant \exp\Big(-\frac{\tau^2}{4L^2_{\eta}}\Big) \leqslant \exp\Big( - \frac{\tau^2 n a_{\mathbf{S}}^2 \gamma_{\min}(\Theta_{\mathbf{S}\mathbf{S}})}{12\left\|\mathbf{r}_\mathbf{S}\right\|^2_2  \gamma_{\max}\left(\Theta_{\mathbf{S}^c\mathbf{S}^c|\mathbf{S}}\right)} \Big).
    \end{aligned}
\end{equation}

Thus, let $\mathcal{A}$ be the event $\{X_{\mathbf{S}} \mid \gamma_{\max}(Q_{\mathbf{S}\mathbf{S}}^{-1}) \leqslant \frac{9}{\gamma_{\min}(\Theta_{\mathbf{S}\mathbf{S}})}\}$. We have

\begin{equation*}
    \begin{aligned}
        \mathbb{P}\left(\left(\phi_{\mathbf{S}}^c\right)^{*} [\eta] > \frac{\tau}{2}\mid X_{\mathbf{S}}  \right) &= 
       \mathbb{P}\Big(\left(\phi_{\mathbf{S}}^c\right)^{*} [\eta] > \frac{\tau}{2} \mid X_{\mathbf{S}}, \gamma_{\max}(Q_{\mathbf{S}\mathbf{S}}^{-1}) \leqslant \frac{9}{\gamma_{\min}(\Theta_{\mathbf{S}\mathbf{S}})}\Big) \\ +&  \mathbb{P}\Big(\left(\phi_{\mathbf{S}}^c\right)^{*} [\eta] > \frac{\tau}{2} \mid X_{\mathbf{S}}, \gamma_{\max}(Q_{\mathbf{S}\mathbf{S}}^{-1}) > \frac{9}{\gamma_{\min}(\Theta_{\mathbf{S}\mathbf{S}})}\Big) \\
        & \leqslant \exp\left( - \frac{\tau^2 n a_{\mathbf{S}}^2 \gamma_{\min}(\Theta_{\mathbf{S}\mathbf{S}})}{4\left\|\mathbf{r}_\mathbf{S}\right\|^2_2  \gamma_{\max}\left(\Theta_{\mathbf{S}^c\mathbf{S}^c|\mathbf{S}}\right)} \right)  +   \mathbb{P}\left( \mathcal{A}^c \right) \\
        &\leqslant \exp\left( - \frac{\tau^2 n a_{\mathbf{S}}^2 \gamma_{\min}(\Theta_{\mathbf{S}\mathbf{S}})}{4\left\|\mathbf{r}_\mathbf{S}\right\|^2_2  \gamma_{\max}\left(\Theta_{\mathbf{S}^c\mathbf{S}^c|\mathbf{S}}\right)} \right) + 2\exp(-\frac{n}{2}).
    \end{aligned}
\end{equation*}
\newpage
\noindent\textbf{Condition \eqref{da3}}

Now we are going to study condition \eqref{da3}. Recall that
$q_{\mathbf{S}^c \mid \mathbf{S} } = q_{\mathbf{S}^c} - Q_{\mathbf{S}^c\mathbf{S}}Q_{\mathbf{S}\mathbf{S}}^{-1}q_{\mathbf{S}} $ and $Q_{\mathbf{S}^c \mathbf{S}^c \mid \mathbf{S}}=Q_{\mathbf{S}^c \mathbf{S}^c}-Q_{\mathbf{S}^c \mathbf{S}} Q_{\mathbf{S} \mathbf{S}}^{-1} Q_{\mathbf{S}\mathbf{S}^c}$.
Given $X$, $q_{\mathbf{S}^c \mid \mathbf{S}}$ is a centered Gaussian random vector with covariance matrix
\begin{equation*}
   \begin{aligned}
 \mathbb{E}\left[q_{\mathbf{S}^c \mid \mathbf{S}} q_{\mathbf{S}^c \mid \mathbf{S}}^{\top}\right] &=\mathbb{E}\left[q_{\mathbf{S}^c} q_{\mathbf{S}^c}^{\top}-q_{\mathbf{S}^c} q_{\mathbf{S}}^{\top} Q_{\mathbf{S}\mathbf{S}}^{-1} Q_{\mathbf{SS}^c}-Q_{\mathbf{S}^c \mathbf{S}} Q_{\mathbf{S} \mathbf{S}}^{-1} q_{\mathbf{S}} q_{\mathbf{S}^c}^{\top}+Q_{\mathbf{S}^c \mathbf{S}} Q_{\mathbf{S}\mathbf{S}}^{-1} q_{\mathbf{S}} q_{\mathbf{S}}^{\top} Q_{\mathbf{SS}}^{-1} Q_{\mathbf{S}\mathbf{S}^c}\right] \\& =\mathbb{E}\left[q_{\mathbf{S}^c} q_{\mathbf{S}^c}^{\top}-Q_{\mathbf{S}^c \mathbf{S}} Q_{\mathbf{S}\mathbf{S}}^{-1} q_{\mathbf{S}} q_{\mathbf{S}}^{\top} Q_{\mathbf{SS}}^{-1} Q_{\mathbf{S}\mathbf{S}^c}\right] \\&= \mathbb{E}\left[q_{\mathbf{S}^c} q_{\mathbf{S}^c}^{\top}\right]-\mathbb{E}\left[Q_{\mathbf{S}^c \mathbf{S}} Q_{\mathbf{S}\mathbf{S}}^{-1} q_{\mathbf{S}} q_{\mathbf{S}}^{\top} Q_{\mathbf{SS}}^{-1} Q_{\mathbf{S}\mathbf{S}^c}\right] \\
 &=\frac{\sigma^2}{n}Q_{\mathbf{S}^c \mathbf{S}^c} - \frac{\sigma^2}{n} Q_{\mathbf{S}^c \mathbf{S}} Q_{\mathbf{S} \mathbf{S}}^{-1} Q_{\mathbf{S}\mathbf{S}^c} :=\frac{\sigma^2}{n} Q_{\mathbf{S}^c \mathbf{S}^c \mid \mathbf{S}}.
\end{aligned} 
\end{equation*}

Next, we define $\psi(u) = \left(\phi_{\mathbf{S}}^c\right)^*\left(\sigma n^{-1 / 2} Q_{\mathbf{S}^c \mathbf{S}^c \mid \mathbf{S}}^{1 / 2} u\right)$ so that $\left(\phi_{\mathbf{S}^c}^c\right)^*\left[q_{\mathbf{S}^c \mid \mathbf{S}}\right]$ has the same distribution as $\psi(W)$. Now we want to show that $\psi$ is a Lipschitz function

\begin{equation*}
    \begin{aligned}
        |\psi(u)-\psi(v)| &\leqslant \psi(u-v)   = \left(\phi_{\mathbf{S}}^c\right)^*\left(\sigma n^{-1 / 2} Q_{\mathbf{S}^c \mathbf{S}^c \mid \mathbf{S}}^{1 / 2} (u - v)\right) \\ & \leqslant \sigma n^{-1 / 2} a_{\mathbf{S}^c}^{-1}\left\|Q_{\mathbf{S}^c \mathbf{S}^c \mid \mathbf{S}}^{\frac{1}{2}}(u-v)\right\|_{\infty} \\
        & \leqslant \sigma n^{-1 / 2} a_{\mathbf{S}^c}^{-1}\left\|Q_{\mathbf{S}^c \mathbf{S}^c \mid \mathbf{S}}^{\frac{1}{2}}\right\|_{2,\infty }\left\|(u-v)\right\|_{\infty}
         \\
        & \leqslant \sigma n^{-1 / 2} a_{\mathbf{S}^c}^{-1}\left\|Q_{\mathbf{S}^c \mathbf{S}^c \mid \mathbf{S}}^{\frac{1}{2}}\right\|_{2,\infty }\left\|(u-v)\right\|_{2}
    \end{aligned}
\end{equation*}

Suppose that $\left\|Q_{\mathbf{S}^c \mathbf{S}^c  \mid \mathbf{S}}^{1 / 2}\right\|_{2,\infty} \leqslant 3$, then $\psi$ is a Lipschitz function with  Lipschitz  constant $3\sigma n^{-1 / 2} a_{\mathbf{S}^c}^{-1}$. In addition, if 
$\mathbb{E} [(\phi_{\mathbf{S}}^c)^*(q_{\mathbf{S}^c \mid \mathbf{S}})] \leqslant \frac{\lambda_n \tau}{4} $, then  by Lemma~\ref{lem:masset} , we have for $t = \frac{\lambda_n \tau}{4}$,

\begin{equation*}
    \begin{aligned}
    \mathbb{P}\left(\left(\phi_{\mathbf{S}}^c\right)^*\left[q_{\mathbf{S}^c \mid \mathbf{S}}\right] \geqslant \frac{\lambda_n\tau}{2}  \right) &= \mathbb{P} \left(\psi(W) >\frac{\lambda_n\tau}{2}  \right) = \mathbb{P} \left(\psi(W) - \frac{\lambda_n\tau}{4}  > \frac{\lambda_n\tau}{4}  \right)\\& \leqslant  
    \mathbb{P} \left(\psi(W) -\mathbb{E} [(\phi_{\mathbf{S}}^c)^*(q_{\mathbf{S}^c \mid \mathbf{S}})]  > \frac{\lambda_n\tau}{4}  \right)\\& =
    \mathbb{P} \left(\psi(W) - \mathbb{E}\left[ \psi(W)\right]  > \frac{\lambda_n\tau}{4}  \right) \leqslant  \exp \left(-\frac{\tau^2   \lambda_n^2 na_{\mathbf{S}^c}^2}{144 \sigma^2}\right)  .
    \end{aligned}
\end{equation*}

Now, we consider random $X$. For any  $u_t, u_s$, we have 
\begin{equation*}
\begin{aligned}
     \mathbb{E}\big[(u_t-u_s)^{\top}  q_{\mathbf{S}^c \mid \mathbf{S}} \big]^2 &=\frac{\sigma^2}{n}(u_t-u_s)^{\top} Q_{\mathbf{S}^c \mathbf{S}^c \mid \mathbf{S}}  (u_t-u_s) \leqslant \frac{\sigma^2}{n}\big\|Q_{\mathbf{S}^c \mathbf{S}^c \mid \mathbf{S}} ^{\frac{1}{2}}\big\|_2^2\big\| (u_t-u_s)\big\|_2^2  \\& = \mathbb{E} \big[\sigma n^{-\frac{1}{2}}\|Q_{\mathbf{S}^c \mathbf{S}^c \mid \mathbf{S}} \|_2^{\frac{1}{2}} (u_t-u_s)^{\top} W\big]^2
\end{aligned}
\end{equation*}

By using Sudakov-Fernique inequality, if $\|Q_{\mathbf{S}^c \mathbf{S}^c \mid \mathbf{S}}\|_2 \leqslant 9$, we get
\begin{equation}
    \begin{aligned}
       \mathbb{E} [(\phi_{\mathbf{S}}^c)^*(q_{\mathbf{S}^c \mid \mathbf{S}})] & =\mathbb{E} \sup_{\phi_{\mathbf{S}}^c(u) \leq 1} u^{\top} q_{\mathbf{S}^c \mid \mathbf{S}} \\
       &\leqslant \sigma n^{-1 / 2}\|Q_{\mathbf{S}^c \mathbf{S}^c \mid \mathbf{S}}\|_2^{\frac{1}{2}} \mathbb{E} \sup _{\phi_{\mathbf{S}}^c(u) \leq 1} u^{\top} W \\
       & \leqslant  \sigma n^{-\frac{1}{2}}\|Q_{\mathbf{S}^c \mathbf{S}^c \mid \mathbf{S}}\|_2^{\frac{1}{2}} \mathbb{E}\left[\left(\phi_{\mathbf{S}}^c\right)^*(W)\right]\\
       & \leqslant  3\sigma n^{-\frac{1}{2}}\mathbb{E}\left[\left(\phi_{\mathbf{S}}^c\right)^*(W)\right]\\
       & \leqslant \frac{\lambda_n \tau}{4} .
    \end{aligned}
\end{equation}

On the other hand, Assumption \ref{ass:distribution_normal noise} and \ref{ass:Irrepresentable_condition} imply that
$$\frac{9\sigma^2\mathbb{E}^2\left[\left(\phi_{\mathbf{S}}^c\right)^*(W)\right]}{n} \leqslant \frac{9\sigma^2\log(p-|\mathbf{S}|)}{a^2_{\mathbf{S}^c}n} \leqslant \frac{\lambda_n^2\tau^2}{16}.$$

Therefore, we have 
\begin{equation*}
    \begin{aligned}
    \mathbb{P}\left(\left(\phi_{\mathbf{S}}^c\right)^*\left[q_{\mathbf{S}^c \mid \mathbf{S}}\right]  \geqslant \frac{\lambda_n\tau}{2}  \mid X, \left\|Q_{\mathbf{S}^c \mathbf{S}^c \mid \mathbf{S}}^{1 / 2}\right\|_{2,\infty} \leqslant 3\right) \leqslant  \exp \left(-\frac{\tau^2  n \lambda_n^2a_{\mathbf{S}^c}^2}{144 \sigma^2}\right)  .
    \end{aligned}
\end{equation*}

Let $\mathcal{B}$ be the event $\{X \mid \left\|Q_{\mathbf{S}^c \mathbf{S}^c \mid \mathbf{S}}^{1 / 2}\right\|_{2,\infty} \leqslant 3\}$. We have
\begin{equation*}
    \begin{aligned}        \mathbb{P}\left(\left(\phi_{\mathbf{S}}^c\right)^*\left[q_{\mathbf{S}^c \mid \mathbf{S}}\right]  \geqslant \frac{\lambda_n\tau}{2}  \mid X\right) &=  \mathbb{P}\left(\left(\phi_{\mathbf{S}}^c\right)^*\left[q_{\mathbf{S}^c \mid \mathbf{S}}\right]  \geqslant \frac{\lambda_n\tau}{2}  \mid X, \left\|Q_{\mathbf{S}^c \mathbf{S}^c \mid \mathbf{S}}^{1 / 2}\right\|_{2,\infty} \leqslant 3\right) \\+&  \mathbb{P}\left(\left(\phi_{\mathbf{S}}^c\right)^*\left[q_{\mathbf{S}^c \mid \mathbf{S}}\right]  \geqslant \frac{\lambda_n\tau}{2}  \mid X, \left\|Q_{\mathbf{S}^c \mathbf{S}^c \mid \mathbf{S}}^{1 / 2}\right\|_{2,\infty} > 3\right)\\
        & \leqslant \exp \left(-\frac{\tau^2  n \lambda_n^2a_{\mathbf{S}^c}^2}{144 \sigma^2}\right)+   \mathbb{P}\left( \mathcal{B}^c \right)\\
        &\stackrel{\text{\eqref{da0}}}{\leqslant }  \exp \left(-\frac{\tau^2  n \lambda_n^2a_{\mathbf{S}^c}^2}{144 \sigma^2}\right)  + \exp(-\frac{n}{2}).
    \end{aligned}
\end{equation*}

\noindent\textbf{Condition \eqref{da4}}

The last condition \eqref{da4} lead us to control the term $\mathbb{P}\left(\left\|q_{\mathbf{S}}\right\|_{\infty} \geqslant c'(\mathbf{S},G)\right)$, with
$$
c'(\mathbf{S},G) =  \min\left\{\frac{\gamma_{\min}\left(Q_{\mathbf{S}\mathbf{S}}\right) \beta^*_{\min}}{3 A_{\mathbf{S}}},\frac{\tau  \gamma^{\frac{3}{2}}_{\min}(Q_{\mathbf{S}\mathbf{S}}) a_{\mathbf{S}^c} \beta^*_{\min}}{72 A_{\mathbf{S}} \sum\limits_{g\in \mathsf{G}_{\mathbf{S}}} w_g \sqrt{\left|G_g \cap \mathbf{S}\right|}}\right\}.
$$

For any given $X$, \cite{svsw} showed that for any $\delta > 0$,
 
$$
\mathbb{P}\left(\left\|q_{\mathbf{S}}\right\|_{\infty} \geqslant \delta  \right) \leqslant 2|\mathbf{S}| \exp \left(-\frac{n \delta^2}{2 \sigma^2}\right)  .
$$

Recall under the event $\mathcal{A}$, we have 

$$ \frac{\gamma_{\min}(\Theta_{\mathbf{S}\mathbf{S}})}{9} \leqslant \gamma_{\min}(Q_{\mathbf{S}\mathbf{S}}).$$

Which implies that 
\begin{equation*}
    \begin{aligned}
        c'(\mathbf{S},G) &\geqslant  \min\left\{\frac{\gamma_{\min}\left(\Theta_{\mathbf{S}\mathbf{S}}\right) \beta^*_{\min}}{27 A_{\mathbf{S}}},\frac{\tau  \gamma_{\min}(\Theta_{\mathbf{S}\mathbf{S}})^{\frac{3}{2}} a_{\mathbf{S}^c} \beta^*_{\min}}{648 A_{\mathbf{S}} \sum\limits_{g\in \mathsf{G}_{\mathbf{S}}} w_g \sqrt{\left|G_g \cap \mathbf{S}\right|}}\right\} \\&\geqslant  \min\left\{\frac{\beta^*_{\min}}{27c_1 A_{\mathbf{S}}},\frac{\tau   a_{\mathbf{S}^c} \beta^*_{\min}}{648c_1^{\frac{3}{2}} A_{\mathbf{S}} \sum\limits_{g\in \mathsf{G}_{\mathbf{S}}} w_g \sqrt{\left|G_g \cap \mathbf{S}\right|}}\right\}
        := c(\mathbf{S},G)  .
    \end{aligned}
\end{equation*}

Thus, consider random $X$, we have 
\begin{equation*}
      \mathbb{P}\left(\left\|q_{\mathbf{S}}\right\|_{\infty} \geqslant c'(\mathbf{S},G) \mid \mathcal{A}\right) \leqslant \mathbb{P}\left(\left\|q_{\mathbf{S}}\right\|_{\infty} \geqslant c(\mathbf{S},G) \mid \mathcal{A} \right) \leqslant 2|\mathbf{S}| \exp \left(-\frac{n c^2(\mathbf{S},G)}{2 \sigma^2}\right)
\end{equation*}

Thus,

\begin{equation*}
    \begin{aligned}
        \mathbb{P}\left(\left\|q_{\mathbf{S}}\right\|_{\infty} \geqslant c'(\mathbf{S},G) \right) &= 
        \mathbb{P}\left(\left\|q_{\mathbf{S}}\right\|_{\infty} \geqslant c'(\mathbf{S},G)  \cap \mathcal{A} \right) +   \mathbb{P}\left(\left\|q_{\mathbf{S}}\right\|_{\infty} \geqslant c'(\mathbf{S},G) \cap \mathcal{A}^c \right) \\
        & \leqslant \mathbb{P}\left(\left\|q_{\mathbf{S}}\right\|_{\infty} \geqslant c'(\mathbf{S},G)  \cap \mathcal{A} \right) +   \mathbb{P}\left( \mathcal{A}^c \right) \\
        & = \mathbb{P}\left(\left\|q_{\mathbf{S}}\right\|_{\infty} \geqslant c'(\mathbf{S},G) \mid \mathcal{A} \right)\mathbb{P}\left( \mathcal{A} \right) +   \mathbb{P}\left( \mathcal{A}^c \right) \\
        & \leqslant \mathbb{P}\left(\left\|q_{\mathbf{S}}\right\|_{\infty} \geqslant c'(\mathbf{S},G)  \mid \mathcal{A} \right) +   \mathbb{P}\left( \mathcal{A}^c \right)\\
        &\leqslant 2|\mathbf{S}| \exp \left(-\frac{n c^2(\mathbf{S},G)}{2 \sigma^2}\right) + 2\exp(-n/2).
    \end{aligned}
\end{equation*}

In summary, the probability of one of the conditions being violated is upper bound by 
$$8\exp(-\frac{n}{2}) +  \exp\left( - \frac{n a_{\mathbf{S}}^2\tau^2 \gamma_{\max}(\Theta_{\mathbf{S}\mathbf{S}})}{4\left\|\mathbf{r}_\mathbf{S}\right\|^2_2  \gamma_{\max}\left(\Theta_{\mathbf{S}^c\mathbf{S}^c|\mathbf{S}}\right)} \right)
   +\exp \left(-\frac{n \lambda_n^2\tau^2 a_{\mathbf{S}^c}^2}{32 \sigma^2c_2^4}\right) + 2|\mathbf{S}| \exp \left(-\frac{n c^2(\mathbf{S},G)}{2 \sigma^2}\right).$$

\quad

\subsubsection{Part V}

First, given the original group structure $G$ and its induced counterpart $\Gcal$, along with their respective weights  $w$ and $\wcal$, we consider the scenario where  $\mathbf{J} = \mathbf{S}$. For all $ \beta \in \mathbb{R}^{p}$, we have
\begin{equation}
\label{eq:ineqnorm1}
    \begin{aligned}
        \phi^G_{\mathbf{S}}(\beta_{\mathbf{S}}) & = \sum_{g \in \mathsf{G}^G_{\mathbf{S}}} w_g \|\beta_{\mathbf{S} \cap G_g  }\|_2 \leqslant
\sum_{g \in \mathsf{G}_{\mathbf{S}}} w_g  \big(\sum\limits_{\gcal : \gcal \in  F^{-1}(g), \Gcal_\gcal \subset \mathbf{S}  }\|\beta_{\mathbf{S} \cap \Gcal_\gcal  }\|_2\big) \\& =
    \sum\limits_{\gcal :  \Gcal_\gcal \subset \mathbf{S}  } \big( \sum\limits_{g : g \in  F(\gcal),g \in \mathsf{G}_{\mathbf{S}} } w_g \big)\|\beta_{\mathbf{S} \cap \Gcal_\gcal  }\|_2 \\& =
       \sum\limits_{\gcal :  \Gcal_\gcal \subset \mathbf{S}  } \big( \sum\limits_{g : g \in  F(\gcal) } w_g  \big)\|\beta_{\mathbf{S} \cap \Gcal_\gcal  }\|_2 \\& =
       \sum\limits_{\gcal \in \mathsf{G}^\Gcal_{\mathbf{S}} }  w_\gcal \|\beta_{\mathbf{S} \cap \Gcal_\gcal  }\|_2 = \phi^\Gcal_{\mathbf{S}}(\beta).
    \end{aligned}
\end{equation}

Since  $\phi^G_{\mathbf{S}}(\beta) \leqslant \phi^\Gcal_{\mathbf{S}}(\beta)$, we can set 
$  a^\Gcal_{\mathbf{S}} = a^G_{\mathbf{S}} = \min\limits_{g \in \mathsf{G}^G_{\mathbf{S}}}\frac{w_g}{\sqrt{d_g}}$.  Since
$$  \max\limits_{\gcal \in \mathsf{G}^\Gcal_{\mathbf{S}} }w_\gcal = \max\limits_{\gcal: \Gcal_\gcal \cap \mathbf{S} \neq \emptyset }\sum_{g\in F(\mathcal{g}) }w_g \leqslant h_{\max}(\mathbf{G_{S}})\max\limits_{g \in \mathsf{G}^G_{\mathbf{S}}}w_g,$$
we can set $ A^\Gcal_{\mathbf{S}} =  A^G_{\mathbf{S}}$. On the other hand, for all $ \beta \in \mathbb{R}^{p}$, we have

\begin{equation}
\label{eq:ineqnorm2}
    \begin{aligned}
        (\phi^G_{\mathbf{S}})^c(\beta_{\mathbf{S}}^c) & = \sum_{g \in [m] \setminus \mathsf{G}^G_{\mathbf{S}}} w_g \|\beta_{\mathbf{S}^c \cap G_g  }\|_2 \leqslant
\sum_{g \in [m] \setminus \mathsf{G}_{\mathbf{S}}} w_g  \big(\sum\limits_{\gcal : \gcal \in  F^{-1}(g), \Gcal_\gcal \subset \mathbf{S}^c  }\|\beta_{\mathbf{S}^c \cap \Gcal_\gcal  }\|_2\big) \\& =
    \sum\limits_{\gcal :  \Gcal_\gcal \subset \mathbf{S}^c  } \big( \sum\limits_{g : g \in  F(\gcal),g \in [m] \setminus \mathsf{G}^G_{\mathbf{S}} } w_g \big)\|\beta_{\mathbf{S}^c \cap \Gcal_\gcal  }\|_2 \\& =
       \sum\limits_{\gcal :  \Gcal_\gcal \subset \mathbf{S}^c  } \big( \sum\limits_{g : g \in  F(\gcal) } w_g  \big)\|\beta_{\mathbf{S}^c \cap \Gcal_\gcal  }\|_2 \\& =
       \sum\limits_{\gcal \in  [m] \setminus \mathsf{G}^\Gcal_{\mathbf{S}} }  w_\gcal \|\beta_{\mathbf{S}^c \cap \Gcal_\gcal  }\|_2 = (\phi^\Gcal_{\mathbf{S}})^c(\beta).
    \end{aligned}
\end{equation}

Consequently, with an trivial extension, we can set $  a^\Gcal_{\mathbf{S}^c} = a^G_{\mathbf{S}^c} \leqslant\min\limits_{g \in \mathsf{G}^G_{\mathbf{S^c}}}w_g/\sqrt{d_g}$.

Based on the result of Theorem~\ref{pattern}.\ref{the6part1}, Equation \eqref{pattern2}
holds if
    \begin{equation*}
       \lambda_n|\mathbf{S}|^{\frac{1}{2}} \lesssim \min\Big\{\frac{\beta_{\min}^* }{A_{\mathbf{S}}},\frac{\beta_{\min}^*a_{\mathbf{S^c}} }{A_{\mathbf{S}}\sum\limits_{\gcal \in \mathsf{\Gcal}_{\mathbf{S}}} w_\gcal \sqrt{\left|\Gcal_\gcal \cap \mathbf{S}\right|}}\Big\}.
\end{equation*}

By the Cauchy–Schwarz inequality, we have
\begin{equation*}
\begin{aligned}
     \sum\limits_{g\in \mathsf{G}_{\mathbf{S}}} w_g \sqrt{\left|G_g \cap \mathbf{S}\right|} &\leqslant  \sum\limits_{g\in \mathsf{G}_{\mathbf{S}}} w_g \sum\limits_{{\gcal \in F^{-1}(g) } }\sqrt{\left|\Gcal_\gcal \cap \mathbf{S}\right|}  \\&= \sum\limits_{\gcal \in F^{-1}(g), g\in \mathsf{G}_{\mathbf{S}}} \sqrt{\left|\Gcal_\gcal \cap \mathbf{S}\right|}  \big(\sum\limits_{g\in F(\mathcal{\gcal}) }w_g\big)
     \\&= \sum\limits_{\gcal \in \mathsf{\Gcal}_{\mathbf{S}}} w_\gcal \sqrt{\left|\Gcal_\gcal \cap \mathbf{S}\right|} 
\end{aligned}
\end{equation*}

If $ F^{-1}(g) = O(1)$ for every $g \in \mathsf{G}_{\mathbf{S}}$, we have
$$|G_g \cap \mathbf{S}| =  \sum\limits_{{\gcal \in F^{-1}(g) } }|\Gcal_\gcal \cap \mathbf{S}| \asymp  \big( \sum\limits_{{\gcal \in F^{-1}(g) } }\sqrt{|\Gcal_\gcal \cap \mathbf{S}|}\big)^2.$$
Consequent, we have $\sqrt{|G_g \cap \mathbf{S}|} \asymp \sum\limits_{{\gcal \in F^{-1}(g) } }\sqrt{|\Gcal_\gcal \cap \mathbf{S}|} $,
$$\sum\limits_{g\in \mathsf{G}_{\mathbf{S}}} w_g \sqrt{\left|G_g \cap \mathbf{S}\right|} \asymp  \sum\limits_{\gcal \in \mathsf{\Gcal}_{\mathbf{S}}} w_\gcal \sqrt{\left|\Gcal_\gcal \cap \mathbf{S}\right|},$$
and
$$\min\bigg\{\frac{\beta_{\min}^* }{A^G_{\mathbf{S}}},\frac{\beta_{\min}^*a^G_{\mathbf{S^c}} }{A^G_{\mathbf{S}}\sum\limits_{g\in \mathsf{G}_{\mathbf{S}}} w_g \sqrt{\left|G_g \cap \mathbf{S}\right|}}\bigg\} \asymp \min\bigg\{\frac{\beta_{\min}^* }{A^\Gcal_{\mathbf{S}}},\frac{\beta_{\min}^*a^\Gcal_{\mathbf{S^c}} }{A^\Gcal_{\mathbf{S}}\sum\limits_{\gcal \in \mathsf{\Gcal}_{\mathbf{S}}} w_\gcal \sqrt{\left|\Gcal_\gcal \cap \mathbf{S}\right|}}\bigg\}.$$

\end{document}